\PassOptionsToPackage{dvipsnames}{xcolor}
\documentclass{article}

\usepackage{microtype}
\usepackage{graphicx}
\usepackage[export]{adjustbox}
\usepackage{subcaption}
\usepackage{booktabs} %
\usepackage{multirow}

\usepackage{hyperref}

\usepackage[accepted]{icml2026}

\usepackage{amsmath}
\usepackage{amssymb}
\usepackage{mathtools}
\usepackage{amsthm}

\usepackage{bm}
\usepackage{tcolorbox}
\usepackage{csquotes}
\usepackage[dvipsnames]{xcolor}

\usepackage[capitalize,noabbrev]{cleveref}
\usepackage{aliascnt}

\newtheorem{thm}{Theorem}[section]

\newaliascnt{lemma}{thm}
\newtheorem{lemma}[lemma]{Lemma}
\aliascntresetthe{lemma}
\crefname{lemma}{Lemma}{Lemmas}
\Crefname{lemma}{Lemma}{Lemmas}

\newaliascnt{proposition}{thm}
\newtheorem{proposition}[proposition]{Proposition}
\aliascntresetthe{proposition}
\crefname{proposition}{Prop.}{Prop.}
\Crefname{proposition}{Prop.}{Prop.}

\newaliascnt{definition}{thm}
\newtheorem{definition}[definition]{Definition}
\aliascntresetthe{definition}
\crefname{definition}{Def.}{Def.}
\Crefname{definition}{Def.}{Def.}

\newaliascnt{assumption}{thm}
\newtheorem{assumption}[assumption]{Assumption}
\aliascntresetthe{assumption}
\crefname{assumption}{Asm.}{Assumptions}
\Crefname{assumption}{Asm.}{Assumptions}

\newaliascnt{corollary}{thm}
\newtheorem{corollary}[corollary]{Corollary}
\aliascntresetthe{corollary}
\crefname{corollary}{Cor.}{Cor.}
\Crefname{corollary}{Cor.}{Cor.}

\crefname{equation}{Eq.}{Eqs.}
\crefname{table}{Tab.}{Tables}
\crefname{section}{Sec.}{Sec.}

\usepackage{selectp}

\usepackage[textsize=tiny]{todonotes}

\icmltitlerunning{Why DDIM Hallucinates More Than DDPM: A Theoretical Analysis of Reverse Dynamics}

\begin{document}

\twocolumn[
  \icmltitle{Why DDIM Hallucinates More Than DDPM: \\ A Theoretical Analysis of Reverse Dynamics}

  \icmlsetsymbol{equal}{*}

  \begin{icmlauthorlist}

    \icmlauthor{Muhammad H. Ashiq}{equal,uw}
    \icmlauthor{Samanyu Arora}{equal,uw}
    \icmlauthor{Abhinav N. Harish}{uw}
    \icmlauthor{Ishaan Kharbanda}{uw}
    \icmlauthor{Hung Yun Tseng}{uw}
    \icmlauthor{Grigorios G. Chrysos}{uw}
    
  \end{icmlauthorlist}

  \icmlaffiliation{uw}{University of Wisconsin-Madison, Madison, WI, USA}

  \icmlcorrespondingauthor{Grigorios G. Chrysos}{chrysos@wisc.edu}

  \icmlkeywords{Machine Learning, ICML}
  \vskip 0.3in
]

 \printAffiliationsAndNotice{\icmlEqualContribution}

\begin{abstract}

We theoretically study the hallucination phenomena in two canonical diffusion samplers: the stochastic Denoising Diffusion Probabilistic Model (DDPM) and the deterministic Denoising Diffusion Implicit Model (DDIM). We analyze the reverse ODE (DDIM) and SDE (DDPM) for a Gaussian mixture target, proving that after a critical time $\tau$, (a) DDIM can become stuck on the segment connecting the two nearest modes and (b) DDPM \textit{stochasticity} helps it become unstuck from this region, thus avoiding hallucination. Our empirical validation verifies that DDPM has a significantly lower hallucination rate than DDIM when this region is entered. Building on our observations, we exhibit how using additional stochastic steps can help DDIM avoid hallucinations and offer new insights on how to design improved samplers.

\end{abstract}

\section{Introduction}
Despite significant progress, state-of-the-art diffusion models still exhibit hallucinations which do not abide by the structural, semantic constraints present in their training distribution. This includes distorted hands, implausible object boundaries, and temporal inconsistencies in video. Concretely, hallucinations can be characterized by generations which arise when the reverse diffusion trajectory is guided toward regions outside the set of constraints implied by the target distribution. 

To address this issue, prior work has focused on a variety of different strategies. In the domain of text generation, \citet{lu2025towards} identify a local generation bias in image diffusion models by highlighting their inability to model global context in MNIST digits. ~\citet{aithal2024modeinterp} characterize hallucinations as mode interpolation in Gaussian mixture models and propose a metric for  detecting interpolations. \citet{triaridis2025dynamicguidance} explore dynamic guidance as a means of mitigating hallucination. %
Although these works provide strategies to detect hallucinations, none of them explore \emph{why} hallucinations arise by systematically studying the design and mechanics of the reverse diffusion process. 

We focus on the distinction between two popular samplers: the stochastic Denoising Diffusion Probabilistic Models (DDPM) \citep{ho2020ddpm} and the deterministic Denoising Diffusion Implicit Models (DDIM) \citep{song2020ddim}. These samplers are trained identically; however, they differ in the sampling strategy they employ during the reverse process. DDPM conducts an equal number of forward and reverse steps, matching the introduction of noise in the forward diffusion SDE. DDIM, on the other hand, does \textit{not} include this noise in the reverse process. DDIM introduces step skipping through a non-Markovian reverse process, offering faster inference. We identify that removing the noise in the reverse process contributes significantly to hallucinations.  This observation has been made empirically by prior works in the context of counting hallucinations~\cite{fu2025countinghallucinations}. However,  to the best of our knowledge, a solid theoretical characterization of this phenomenon remains absent. We fill this gap and address hallucination as a \textit{structural artifact} of the reverse process. 
Concretely, we ask the following question:\looseness-1

\begin{center}
    \textit{Can we formally characterize why DDIM hallucinates more than DDPM?} 
\end{center}

To answer this question, we study the reverse Probability Flow (PF) ODE (DDIM) and the reverse Variance Preserving (VP) SDE (DDPM) when the data distribution is an $N$-mode Gaussian mixture\footnote{Under mild regularity conditions, the marginal densities are identical for DDIM and DDPM.}. This setup lends itself to rigorous theoretical analysis. Additionally, we study hallucinations formulated as mode interpolations: a phenomenon where a sampled diffusion trajectory ends up at the line connecting modes of the data distribution \citep{aithal2024modeinterp}. 

As demonstrated in \cref{fig:regimes}, our analysis reveals that DDIM/DDPM convergence occurs in two phases:  (i) an early convergence phase dominated by attraction to the nearest line segment between two data modes and (ii) a late convergence phase governing motion along the line connecting these two modes.  We demonstrate that the final steps of a trajectory under the late convergence phase determine the outcome of sampling and are a crucial difference between these two samplers. Specifically, in this final phase, the noise component of DDPM helps it escape regions on this line segment where DDIM gets stuck, thus avoiding hallucinations. Our theory demonstrates that this is the foundational reason that DDPM has an edge over DDIM, \textit{regardless of the number of reverse steps employed}.\looseness-1

These findings rule out several previously held beliefs on DDIM hallucinations. For example, although practitioners often skip DDIM steps and thus incur numerical error when sampling, we demonstrate that this does not explain the hallucination rate gap. We find empirically that the DDIM and DDPM hallucination rate gap persists under the same discretization, and our theoretical results are derived under zero numerical error. Additionally, \citet{aithal2024modeinterp} attribute mode interpolation to the smoothing of the score function due to approximation with a neural network. Our results, however, are derived under the \textit{exact score function}, i.e, the expected score upon using infinite samples from the distribution. We find that, with the exact score, the mechanism causing mode interpolation can be cleanly characterized.

In summary, our contributions are as follows:

\begin{enumerate}
    \item We rigorously study the source of DDIM hallucinations as observed in an $N$-mode Gaussian mixture, demonstrating that after a critical time $\tau$, DDIM trajectories converge to the nearest line segment joining two modes and then can become stuck near the midpoint, thus hallucinating. This is illustrated in \cref{fig:row_three}. 
    \item We leverage this to provide a theoretical justification for why DDIM hallucinates more than DDPM: the noise of DDPM can help it become unstuck from this hallucination region around the midpoint. This is illustrated in \cref{fig:ddpm_escape}.
    \item Empirically, we invalidate that the  DDIM hallucination rate gap can be explained by step skipping and demonstrate that adding a few DDPM steps after DDIM converges near the midpoint neighborhood can help the trajectory escape, lowering hallucination rate. 
    
\end{enumerate}

 Code for our experiments is available for reproducibility at \url{https://github.com/diffusion-hallucination/ddim_vs_ddpm_hallucinations}.

\begin{figure*}[t]
\centering

\newlength{\boxH}
\setlength{\boxH}{4cm} 

\begin{subfigure}[b]{0.32\textwidth}
    \centering
    \begin{minipage}[c][\boxH][c]{\linewidth}
        \centering
        \includegraphics[width=0.48\linewidth]{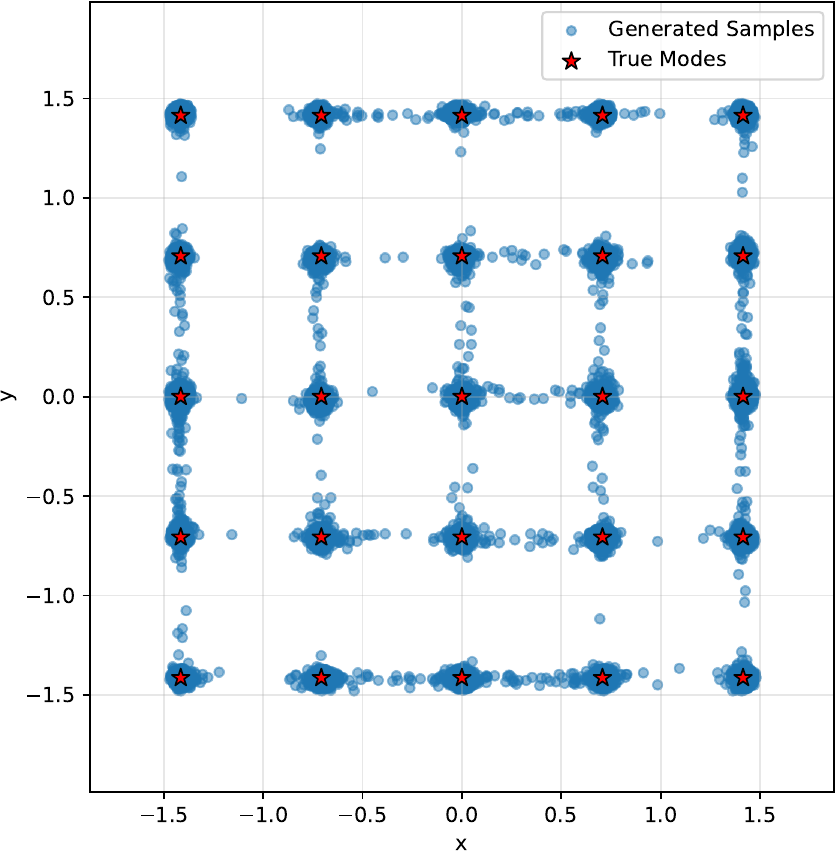}\hfill
        \includegraphics[width=0.48\linewidth]{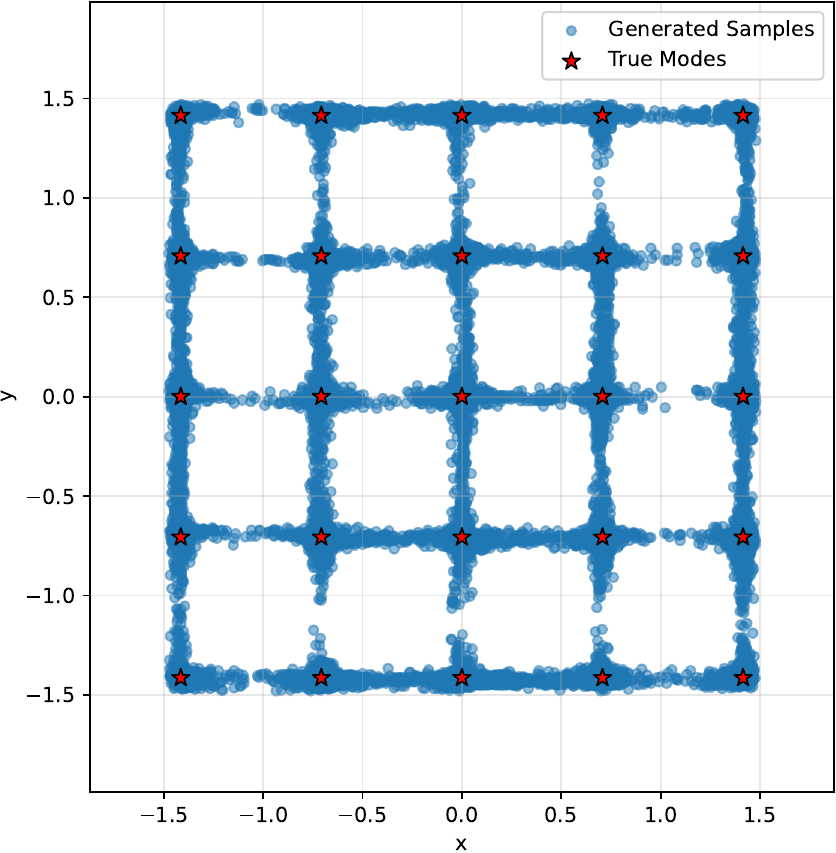}
    \end{minipage}
    \caption{}
    \label{fig:samples}
\end{subfigure}
\hfill
\begin{subfigure}[b]{0.28\textwidth}
    \centering
    \begin{minipage}[c][\boxH][c]{\linewidth}
        \centering
        \includegraphics[width=\linewidth, height=\boxH, keepaspectratio]{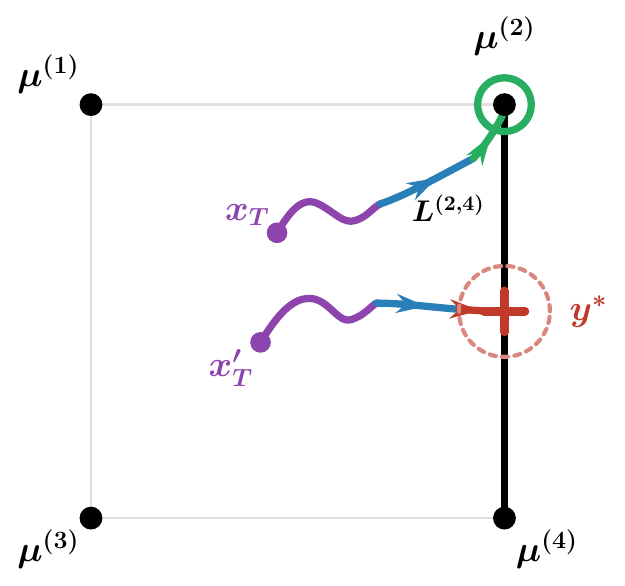}
    \end{minipage}
    \caption{}
    \label{fig:regimes}
\end{subfigure}
\hfill
\begin{subfigure}[b]{0.38\textwidth}
    \centering
    \begin{minipage}[c][\boxH][c]{\linewidth}
        \centering
        \includegraphics[width=\linewidth, height=\boxH, keepaspectratio]{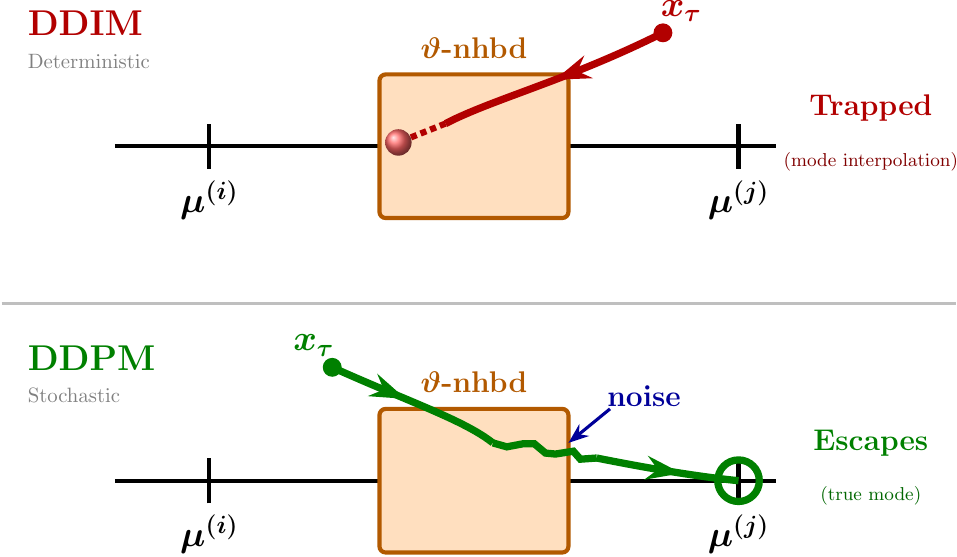}
    \end{minipage}
    \caption{}
    \label{fig:ddpm_escape}
\end{subfigure}

\caption{(a) In 100,000 generated samples for a 25-mode Gaussian mixture target, despite using the \textit{same pretrained model}, DDPM (left) hallucinates significantly less than  DDIM (right). (b) Towards the \textcolor{RoyalPurple}{beginning} of the reverse process, the trajectory selects a line segment to converge to. After that, the trajectory \textcolor{RoyalBlue}{converges rapidly} to the nearest line segment: either the \textcolor{ForestGreen}{true mode} or the \textcolor{BrickRed}{midpoint neighborhood}. (c) Within the midpoint neighborhood, DDIM can \textcolor{BrickRed}{get stuck} and hallucinate, while DDPM can \textcolor{ForestGreen}{escape} with added noise and avoid hallucination.}
\label{fig:row_three}
\end{figure*}

\section{Problem Setting} \label{section:problem_setting}

\textbf{Notation}: Bolded lowercase letters $\bm{x} \in \mathbb{R}^{\varpi}$ denote vectors of dimension $\varpi$. $[N]$, for scalar $N \in \mathbb{N}$, denotes the integer subset $\{1, 2, ..., N\}$. For a function $f$, we use $\nabla_{\bm{x}} f$ to denote its gradient if $f : \mathbb{R}^{\varpi} \to \mathbb{R}$ and its Jacobian if $f : \mathbb{R}^{\varpi} \to \mathbb{R}^o$. We denote by $x_t : [0,T] \to \mathbb{R}$ and $\bm{x}_t : [0,T] \to \mathbb{R}^{\varpi}$ time-dependent scalar-valued and vector-valued functions, respectively, from scalar time $T$ to $0$ unless otherwise specified. We use $\dot{x}_t$ to denote the time derivative of $x_t$ and similarly use $\dot{\bm{x}}_t$ for $\bm{x}_t$; for all other derivatives, we use the notation $\frac{d}{dx}f(x)$. $\mathcal{N}(\bm{x}; \bm{\mu}, \sigma^2 \bm{I})$ is used to denote the probability density function of a $\varpi$-dimensional Gaussian centered at $\bm{\mu}$ with variance $\sigma^2 \bm{I}$. We write  stochastic differential equations (SDE) as $d\bm{x}_t = b(t,\bm{x}_t) dt + \sigma(t,\bm{x}_t) d\bm{W}_t$ where $\bm{W}_t \in \mathbb{R}^{{\varpi} \times 1}$ is the standard Brownian motion. We provide a full list of notation in \cref{sec:full_notation} and a symbol table in \cref{appendix:symbol_table}.

\textbf{Diffusion Models}: The goal of diffusion models is to learn to transport a simple base distribution $p_{\text{base}}$ to a more complicated target distribution $p_{\text{data}}$. Throughout, we consider $p_{\text{base}}$ to be the standard Gaussian $\mathcal{N} (0,\bm{I})$ \citep{sohl2015deep}. We first define the DDPM Variance Preserving (VP) forward SDE \citep{song2021sde}: 

\begin{equation}
d\bm{x}_t \;=\; -\tfrac{1}{2}\beta_t\,\bm{x}_t\,dt \;+\; \sqrt{\beta_t}\,d\bm{W}_t,
\label{eq:vp_forward_sde}
\end{equation}

where $\beta_t$ is a noise schedule and $\bm{W}_t$ is the standard Brownian motion. We then denote $\bar{\alpha}_t := \exp(-\int_0^t \beta(s)ds)$. We consider the standard setting where $\bar{\alpha}_t  \in C^2$, $\bar{\alpha}_0 = 1$, and $\bar{\alpha}_T  = \gamma$ for some $\gamma \approx 0$. By Anderson's theorem \citep{anderson1982reverse}, \cref{eq:vp_forward_sde} can be reversed as: 
\begin{equation}
d\bm{x}_t \;=\;  -\frac{1}{2}\beta_t\left(\bm{x}_t + 2\nabla_{\bm{x}_t} \log p_t(\bm{x}_t)\right)dt \;+\; \sqrt{\beta_t}\,d\bar {\bm{W}}_t, 
\label{eq:anderson_reverse_sde}
\end{equation}
for reversed Brownian motion $\bar{\bm{W}}_t$. Here, the marginals $p_t$ are given by convolving the target distribution $p_{\text{data}}$ with a standard Gaussian:  $p_t(\bm{x}) = \int p_{\text{data}}(\bm{y}) \mathcal{N}(\bm{x}; \sqrt{\bar{\alpha}_t}\bm{y}, (1 - \bar{\alpha}_t)\bm{I})d\bm{y}$. Since access to the true score $\nabla_{\bm{x}_t} \log p_t(\bm{x}_t)$ is usually unavailable in practice, one instead learns the score field $s_{\bm{\theta}}(\bm{x}_t)$. Then, discretizing \cref{eq:anderson_reverse_sde} with $s_\theta$, using Euler-Maruyama's method, provides a stochastic algorithm to sample from target distribution $p_{\text{data}}$, beginning with a base instance $\bm{x}_T \sim \mathcal{N}(0, \bm{I})$.

 For DDIM, we define the (reverse-time) probability flow ordinary differential equation (PF ODE): 

\begin{equation}
    \dot{\bm{x}_t} = -\frac{1}{2}\beta_t(\bm{x}_t + \nabla_{\bm{x}_t} \log p_t(\bm{x}_t)), 
    \label{eq:full_pf_ode}
\end{equation}

where the marginals $p_t$ are the same as those in \cref{eq:anderson_reverse_sde}. Notice that \cref{eq:full_pf_ode} is exactly \cref{eq:anderson_reverse_sde} without the noise term, up to a constant multiplication of the score. Similarly, one estimates the score field $s_{\bm{\theta}}(\bm{x}_t)$ and discretizes \cref{eq:full_pf_ode} with a first-order exponential integrator \citep{lu2022dpmsolver} to obtain a \textit{deterministic} sampling method for $p_{\text{data}}$ \citep{song2020ddim}. In the main paper, we theoretically study the dynamics of  \cref{eq:anderson_reverse_sde} and \cref{eq:full_pf_ode} assuming access to the exact score function. We discuss this regularity condition in \cref{section:exact_score_use}. We also provide an empirical study under the inexact score function in the main paper experiments (\cref{sec:experiments}) and theoretically characterize the inexact score function dynamics in \cref{sec:inexact_score}.

Finally, we consider a Gaussian mixture target distribution of $N$ components, $ p_{\text{data}}(\bm{x}) := \sum_{i = 1}^N \pi_i \mathcal{N}(\bm{x}; \bm{\mu}^{(i)}, \sigma^2\bm{I})$. Applying linearity of convolution yields that $p_t(\bm{x}) = \sum_{i = 1}^N \pi_i \mathcal{N}(\bm{x}; \bm{\mu}_t^{(i)}, \sigma_t^2 \bm{I})$, where $\bm{\mu}_t^{(i)} := \sqrt{\bar{\alpha}} \bm{\mu}^{(i)}$ and $\sigma_t^2 = \sigma^2 \bar{\alpha}_t + (1-\bar{\alpha}_t)$. We then denote $\tilde{\sigma}_t^2 := \frac{\sigma_t^2}{\bar{\alpha}_t}$ as the effective variance; note that this starts large at the beginning of the reverse process, but becomes $\sigma^2$ at the end, i.e, the variance of an individual Gaussian mode. In the main paper, we consider the equal weights case where $\pi_i = \pi_j \; \forall i, j \in [N]$; we extend this to the unequal weights case in \cref{section:saddle_exist_nontriv} of the appendix. \looseness-1

\section{Related Work}

\textbf{Diffusion Models}: Diffusion models generate samples by reversing a noise corruption process, with DDPM \citep{ho2020ddpm} and its deterministic fewer-step counterpart DDIM \citep{song2020ddim}. Score-based generative modeling unifies DDPM-style reverse SDE and DDIM-style PF ODE \citep{song2021sde}; recent theory studies the gap between ODE- and SDE- sampling distributions \citep{deveney2025odesdegap},  convergence guarantees for DDIM vs. DDPM \citep{beylerbach2025wasserstein}, and statistical guarantees on the PF ODE \citep{chen2023pfode,cai2025minimaxoptimalityprobabilityflow}. Recent work also explores differences between the ODE and SDE in terms of performance \citep{cao2023exploring} and proposes hybrid samplers \citep{xu2023restart}. However, these works do not explicitly study hallucinations. Additionally, instead of studying how ODE/SDE trajectories evolve, they compare statistical distances (e.g., Wasserstein distance \citep{deveney2025odesdegap}). These analyses do not provide insight into \textit{where} and \textit{when} hallucinations arise. To our knowledge, our work is the first to mathematically study the differences in hallucination mechanisms between DDIM and DDPM from the lens of the reverse ODE and SDE.  

In our work, we also isolate key times in the reverse process trajectories for DDIM and DDPM. Several recent works isolate key times in the reverse trajectory as well: semantic features emerge in narrow time windows \citep{li2024criticalwindows} and, given the memorizing score function, there are distinct regimes of feature emergence and memorization \citep{biroli2024dynamical}. Notably, these times and reverse process regimes are obtained under different regularity conditions and differ from ours. \citet{bonnaire2025why} study critical timesteps during diffusion \textit{training} (e.g., the relevant steps of SGD), but this differs from our study of relevant steps in the reverse process. 

Aside from quantifying key timesteps in the reverse process, $N$-mode Gaussian mixtures remain a central controlled testbed for understanding diffusion sampler behavior, with recent theory directly analyzing diffusion guidance \citep{wu2024guidancegmm}, smoothness \citep{liang2024unravelingsmoothnesspropertiesdiffusion}, convergence \citep{li2025dimensionfree}, memorization \citep{buchanan2026on}, and generalization \citep{zhang2026generalization} properties in $N$-Gaussian-mixture regimes. Our work adds to this thread of research, filling a gap in the mathematical understanding of hallucinations across different types of diffusion inference procedures. 

\textbf{Hallucinations in Diffusion Models}: A growing line of work defines diffusion hallucinations as ``mode interpolations'', or sampled instances that are interpolations between modes of the target distribution \citep{aithal2024modeinterp}. For example, in our Gaussian mixture setting, modes are the means of the Gaussian components and mode interpolations are instances which lie on the line segment joining any two modes. This is distinct from \enquote{mode collapse} \citep{Zhang2018OnTC,thanhtung2020catastrophicforgettingmodecollapse}, which describes a loss of diversity due to collapsing onto a subset of the true modes. In line with these recent works, we use mode interpolation as our working definition of hallucination, finding both empirically and theoretically that DDIM interpolates more often than DDPM. In this work, we assume a \textit{well-separated Gaussian mixture} \citep{shah2023learning}. This assumption is natural for any rigorous analysis of mode interpolation hallucinations; if regions of high probability mass around the modes overlap, then there are no mode interpolations since all samples are in these valid regions. Furthermore, while \citet{aithal2024modeinterp} attribute mode interpolation solely to the smoothing of the learned diffusion score field, we find that the mechanism underlying mode interpolation can arise even with the exact, non-smoothed score field. 

Furthermore, many papers study detection and mitigation of hallucinations \citep{betti2026earlydetect,chandran2025laplacian,triaridis2025dynamicguidance,somepalli2023understandingmitigatingcopyingdiffusion}; in contrast, we study how hallucinations arise specifically due to the choice of inference procedure. This emphasis is further supported by concurrent work showing higher counting hallucination rates for DDIM than DDPM \citep{fu2025countinghallucinations}, distributional differences in ODE- and SDE-based samplers \citep{deveney2025odesdegap,beylerbach2025wasserstein}, and empirical evidence that stochastic samplers outperform deterministic ones in scientific generation tasks \citep{shehata2025improved}. Additionally, \citet{triaridis2025dynamicguidance} highlight the need to understand the difference in hallucination mechanisms between DDIM and DDPM for future mitigation methods, rather than treating mitigation purely as a post-hoc problem.

\section{DDIM Hallucinations}\label{sec:Nmode}

In what follows, we aim to study how DDIM hallucinates. To do so, we first demonstrate that after the variance of $p_t$ becomes sufficiently small, DDIM in \cref{eq:full_pf_ode} converges to the $\varepsilon$-tube around the nearest $i, j$-mode segment, which is the line segment joining the two nearest modes $\bm{\mu}_t^{(i)}$ and $\bm{\mu}_t^{(j)}$ illustrated in \cref{fig:epsilon_tube}. Then, the rest of the DDIM dynamics remain within this tube, parallel to the nearest $i, j$-mode segment. Thus, if DDIM does not converge directly to one of the modes, then mode interpolation can happen on the $i, j$-mode segment. Next, we demonstrate that there exists a radius around the midpoint of the nearest $i, j$-mode segment such that DDIM cannot escape, thus hallucinating. In \cref{sec:Hallucination_DDPM}, we move to studying DDPM, demonstrating that the noise term in \cref{eq:anderson_reverse_sde} leads to DDPM escaping the midpoint neighborhood, thus avoiding hallucination. 

To begin, we prove that DDIM with \cref{eq:full_pf_ode} converges to the tube around the nearest $i, j$-mode segment after sufficient time.  Firstly, we define an $\varepsilon$-extended $a, b$-mode segment for $a,b \in [N]$ as $L_{t, \varepsilon}^{(a,b)} := \{ \bm{\mu}_t^{(b)} + v(\bm{\mu}_t^{(a)} - \bm{\mu}_t^{(b)}) : v \in [-\varepsilon,1+\varepsilon]\}$. When $\varepsilon = 0$, this refers to just the line segment joining the two modes; we denote this by $L_t^{(a,b)}$. Next, we define the orthogonal distance of a trajectory to an $\varepsilon$-extended $a, b$-mode segment as $d_{\perp}(\bm{x}_t, L_{t, \varepsilon}^{(a,b)}) := \inf_{\bm{y} \in L_{t, \varepsilon}^{(a,b)}} ||\bm{y} - \bm{x}_t||_2$ and a $\zeta$-tube around $L_t^{(a,b)}$ as $\mathrm{Tube}_{t, \zeta}^{(a,b)} := \{ \bm{y} \in \mathbb{R}^{\varpi} : \exists  \bm{k} \in L_{t}^{(a,b)} \; \text{s.t.} \; ||\bm{y} - \bm{k}||_2 \leq \zeta \}.$ We similarly define the $\varepsilon$-ball around a mode. These key objects are illustrated in \cref{fig:epsilon_tube}. We then state our primary assumption.

\begin{figure}
    \centering
    \includegraphics[width=0.99\linewidth]{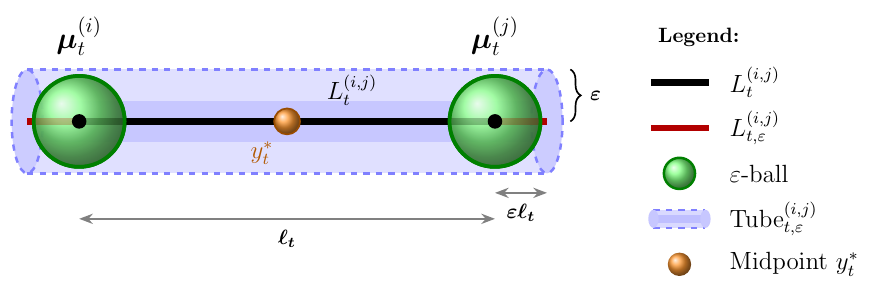}
    \caption{\textbf{(1)} In black, we have the line segment $L_t^{(i,j)}$ joining two modes. \textcolor{BrickRed}{(2)} Together with the red portion, this forms $L_{t,\text{\textcolor{BrickRed}{$\varepsilon$}}}^{(i,j)}$. \textcolor{ForestGreen}{(3)} We then have the \textcolor{ForestGreen}{$\varepsilon$}-ball surrounding modes $i$ and $j$.  \textcolor{Periwinkle}{(4)} Next, we have \textcolor{Periwinkle}{Tube}$_{t,\text{\textcolor{ForestGreen}{$\varepsilon$}}}^{(i,j)}$. \textcolor{Bittersweet}{(5)} We also illustrate the midpoint of the line segment \textcolor{Bittersweet}{$\bm{y}_t^*$} (where $\bm{w}_t = 0$), discussed in \cref{prop:true_ddim_persistence}. This provides a high-level description of key objects used throughout \cref{sec:Nmode}, and is not intended to describe the boundary between true and interpolated samples.} 
    \label{fig:epsilon_tube}
\end{figure}

\begin{assumption}[Two-Mode Dominance] Consider a reverse diffusion process governed by \cref{eq:full_pf_ode} with Gaussian mixture target distribution $p_{\text{data}}$. Assume that there exists a time $\tau_1 \in (0,T)$ such that for $t \leq \tau_1$ there exists $i, j \in [N]$, $i \neq j$, where, for any $k \notin \{i ,j\}$: 

\begin{equation*}
      ||\bm{x}_t - \bm{\mu}_t^{(k)}||_2^2 \geq \min\left( ||\bm{x}_t - \bm{\mu}_t^{(i)}||_2^2, ||\bm{x}_t - \bm{\mu_t}^{(j)}||_2^2 \right) + \Delta_t,
\end{equation*}

for some margin $\Delta_t \geq 2\sigma_t^2 \kappa$ where $\kappa > 1$. 

\label{assump:two_mode_dominance}
\end{assumption}

\textbf{Remark}: In the reverse process, the  variance of $p_t$ begins large but decays to $\sigma^2$ by $t = 0$. Thus, we assume a critical time such that $\sigma_t$ is sufficiently small with respect to the distances from the modes. Note that in the case that the trajectory is very close to one particular mode, \cref{assump:two_mode_dominance} can hold for several pairs of modes, for example modes $i, j$ and modes $i, k$. In this case, as we demonstrate later, the instance will simply converge to mode $i$, thus converging to the line segment between $i$ and any associated mode, including $i, j$. Otherwise, \cref{assump:two_mode_dominance} says that, after sufficient time in the reverse process, the trajectory is nearer to a pair of modes than any other modes. Next, the bound on the margin is saying that this distance is sufficiently large with respect to the variance of the mixture. This well-separation assumption is critical for any analysis of mode interpolations, ensuring that the rest $N - 2$ modes do not have regions of high probability mass which overlap with modes $i$ and $j$. If this assumption does not hold, a mode interpolation between modes $i$ and $j$ may not be a hallucination and may instead be a high probability instance under $p_{\mathrm{data}}$. Such well-separation conditions are common in diffusion analyses \citep{shah2023learning}. We empirically verify this assumption in \cref{fig:ddim_ddpm_asms}, finding that it holds for large $\kappa \gg 1$.

Given this assumption, we prove our first core result, demonstrating that DDIM converges fast to the $\varepsilon$-tube around the $i,j$-mode segment after $t < \tau_1$ in \cref{assump:two_mode_dominance}. 

\begin{thm}[Convergence to $\mathrm{Tube}_{t,\varepsilon}^{(i,j)}$]
    Suppose \cref{assump:two_mode_dominance} holds. Then, there exists a time $\tau \leq \tau_1$ such that for all $t \in [0, \tau)$, we have exponential decay of the orthogonal distance to the $\varepsilon$-tube of the nearest $i, j$-mode segment, i.e.: 
        \begin{align*}
            \text{d}_{\perp}(\bm{x}_t, L_{t, \varepsilon}^{(i,j)}) \leq \mathcal{O}\left(\text{d}_{\perp}(\bm{x}_T, L_{T, \varepsilon}^{(i,j)})\exp(-u(t)) + \varepsilon\right),
        \end{align*}

        where $\varepsilon \in \mathcal{O}(N \exp(-\kappa))$ and $u(t) := \int_t^T \frac{\beta_s}{2\sigma_s^2} \; ds$, which is monotonically increasing as $t \to 0$. This yields that, for any $\phi \in (0,1)$, with probability at least $1 - 2\exp(-(\varpi)\phi^2 / 8)$, we have that: 
        \begin{align*}
            \text{d}_{\perp}(\bm{x}_t, L_{t, \varepsilon}^{(i,j)}) \leq \mathcal{O}\left(\sqrt{(\varpi)(1+\phi)}\exp(-u(t)) + \varepsilon\right).
        \end{align*}
                
    \label{thm:core_gaussian_mixture}
\end{thm}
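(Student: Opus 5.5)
We work directly with the explicit score of the mixture. With equal weights, $\nabla \log p_t(\bm{x}) = -\sigma_t^{-2}\sum_k w_k(\bm{x})(\bm{x}-\bm{\mu}_t^{(k)})$, where $w_k$ are the softmax posterior weights, $w_k \propto \exp(-\|\bm{x}-\bm{\mu}_t^{(k)}\|^2/(2\sigma_t^2))$. Substituting into \cref{eq:full_pf_ode} gives
\begin{equation*}
\dot{\bm{x}}_t = -\tfrac{1}{2}\beta_t \bm{x}_t + \frac{\beta_t}{2\sigma_t^2}\Bigl(\bm{x}_t - \sum_k w_k \bm{\mu}_t^{(k)}\Bigr),
\end{equation*}
with time running backward from $T$.

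\textbf{Step 1: decompose the drift.} We write $\sum_k w_k\bm{\mu}_t^{(k)} = \bm{m}_t + \bm{r}_t$. Here $\bm{m}_t := (w_i\bm{\mu}_t^{(i)}+w_j\bm{\mu}_t^{(j)})/(w_i+w_j)$ lies on $L_t^{(i,j)}$, and $\bm{r}_t$ collects the contributions of the $k\notin\{i,j\}$ modes together with the renormalization error. By \cref{assump:two_mode_dominance}, each $w_k/\max(w_i,w_j) \le \exp(-\Delta_t/(2\sigma_t^2)) \le e^{-\kappa}$. Hence $\sum_{k\ne i,j} w_k \le N e^{-\kappa}$, and $\|\bm{r}_t\| \lesssim N e^{-\kappa}\max_k\|\bm{\mu}_t^{(k)}\|$. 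The resulting bound is $\mathcal{O}(Ne^{-\kappa})$ after normalizing by the mode scale, which is where $\varepsilon$ comes from.

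\textbf{Step 2: orthogonal dynamics.} Let $P_t$ be the projection onto the orthogonal complement of the direction $\bm{\mu}^{(i)}-\bm{\mu}^{(j)}$, and set $\bm{z}_t := P(\bm{x}_t - \bm{\mu}_t^{(j)})$. The direction is fixed in time, since $\bm{\mu}_t^{(k)}=\sqrt{\bar\alpha_t}\bm{\mu}^{(k)}$. Differentiating, the term $\dot{\bm{\mu}}_t^{(j)}$ contributes $-\tfrac12\beta_t\bm{\mu}_t^{(j)}$, which matches the linear term. Because $P\bm{m}_t$ equals $P\bm{\mu}_t^{(j)}$, we obtain
\begin{equation*}
\frac{d}{d(-t)}\bm{z}_t = -\frac{\beta_t}{2\sigma_t^2}\bm{z}_t + \frac{\beta_t}{2\sigma_t^2}P\bm{r}_t.
\end{equation*}
The sign convention needs care: in reverse time the score term gives the contracting coefficient $\beta_t/(2\sigma_t^2)$, while the $-\tfrac12\beta_t\bm{x}$ part cancels against the motion of the segment. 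Grönwall then gives
\begin{equation*}
\|\bm{z}_t\| \le \|\bm{z}_T\|e^{-u(t)} + \sup_s\|P\bm{r}_s\|\,(1-e^{-u(t)}).
\end{equation*}
The forcing term contributes at most $\mathcal{O}(\varepsilon)$.

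\textbf{Step 3: pass from $d_\perp$ to the extended segment.} The distance to $L_{t,\varepsilon}^{(i,j)}$ equals $\|\bm{z}_t\|$ when the parallel coordinate $v_t$ lies in $[-\varepsilon,1+\varepsilon]$. Otherwise it also includes the overshoot past an endpoint. To control this, we analyze the parallel coordinate similarly. Outside $[0,1]$, both $\bm{x}-\bm{\mu}^{(i)}$ and $\bm{x}-\bm{\mu}^{(j)}$ push $v$ back toward $[0,1]$ with rate $\beta_t/(2\sigma_t^2)$, up to the $\bm{r}_t$ error. So the overshoot also decays like $e^{-u(t)}$ plus $\mathcal{O}(\varepsilon)$, and the $\varepsilon$-extension absorbs the residual. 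The time $\tau\le\tau_1$ is chosen so that $i,j$ are fixed on $[0,\tau)$, and we initialize the Grönwall bound from the state at $\tau$, bounded by the state at $T$ via the same contraction. The remaining constants are absorbed in $\mathcal{O}(\cdot)$.

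\textbf{Step 4: high-probability version.} Since $\bm{x}_T\sim\mathcal{N}(0,\bm{I})$ and $\bar\alpha_T=\gamma\approx0$ makes the segment near the origin, $d_\perp(\bm{x}_T,L_{T,\varepsilon})\le\|\bm{x}_T\|+\mathcal{O}(\sqrt\gamma)$. A standard $\chi^2$ concentration (Laurent–Massart) gives $\|\bm{x}_T\|^2\le\varpi(1+\phi)$ with probability at least $1-2\exp(-\varpi\phi^2/8)$.

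The main obstacle is Step 1 combined with the time-varying choice of $(i,j)$. The assumption only says that some pair dominates at each $t\le\tau_1$, so the pair could change and $\tau$ must be chosen to freeze it. In addition, the bound on $\|\bm{r}_t\|$ must be made uniform in $t$ relative to $\sigma_t$. I would first try to show that the pair is locked in by the contraction itself, so that once the trajectory is within the tube the margin only grows, and then define $\tau$ as the first such time.
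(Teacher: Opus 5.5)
Your outline follows the paper's route: decompose the score by responsibilities with an $\mathcal{O}(Ne^{-\kappa})$ remainder, project orthogonally, solve a linear ODE with Gr\"onwall, and use a $\chi^2$ tail at initialization. The difference is that you work in the unrescaled $\bm x$-coordinates, and that is where Step~2 goes wrong.

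The segment motion $\dot{\bm\mu}_t^{(j)}=-\tfrac12\beta_t\bm\mu_t^{(j)}$ does not cancel the linear term. It only turns $-\tfrac12\beta_t P\bm x_t$ into $-\tfrac12\beta_t\bm z_t$. The correct equation is
\[
\frac{d}{d(-t)}\bm z_t=\Bigl(\tfrac12\beta_t-\frac{\beta_t}{2\sigma_t^2}\Bigr)\bm z_t+\frac{\beta_t}{2\sigma_t^2}P\bm r_t ,
\]
so the homogeneous factor from $T$ to $t$ is $\exp\bigl(\tfrac12\int_t^T\beta_s\,ds\bigr)e^{-u(t)}=\sqrt{\bar\alpha_t/\bar\alpha_T}\,e^{-u(t)}$, not $e^{-u(t)}$. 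Your displayed Gr\"onwall bound is therefore false as written, off by a factor as large as $\gamma^{-1/2}$. Already for $N=2$, where $\bm r_t\equiv 0$, it fails for every $t<T$.

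The repair is the paper's \cref{lemma:time_rescaling}. Set $\bm w_t:=\bm z_t/\sqrt{\bar\alpha_t}$, the orthogonal part of $\bm y_t=\bm x_t/\sqrt{\bar\alpha_t}$. In $u$-time this satisfies exactly $\frac{d}{du}\bm w=-\bm w+P\bm r_t/\sqrt{\bar\alpha_t}$. Since $\bm r_t=\sqrt{\bar\alpha_t}\sum_{k\ne i,j}w_k\bigl(\bm\mu^{(k)}-\bm m_t/\sqrt{\bar\alpha_t}\bigr)$ scales with the modes, the forcing is $\mathcal{O}(Ne^{-\kappa})$ uniformly in $t$. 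Gr\"onwall then gives $\|\bm w_t\|\le e^{-u(t)}\|\bm w_T\|+\varepsilon$, and multiplying back by $\sqrt{\bar\alpha_t}$ yields
\[
d_\perp(\bm x_t,L^{(i,j)}_{t,\varepsilon})\le\sqrt{\bar\alpha_t/\gamma}\,e^{-u(t)}\,d_\perp(\bm x_T,L^{(i,j)}_{T,\varepsilon})+\sqrt{\bar\alpha_t}\,\varepsilon .
\]
The paper absorbs $\sqrt{\bar\alpha_t/\gamma}\le\gamma^{-1/2}$ into the $\mathcal{O}$. So the theorem survives, but only because $\mathcal{O}$ hides a $\gamma$-dependent constant that your version never produces. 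The same factor then multiplies $\sqrt{\varpi(1+\phi)}$ in Step~4. Your Step~3 is correct for exactly the reason Step~2 is not: the parallel coordinate $v$ is normalized by the current segment length $\sqrt{\bar\alpha_t}\,\ell$, which is the normalization missing from $\bm z_t$.

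Two smaller points. First, the paper avoids your ``main obstacle'' by reading \cref{assump:two_mode_dominance} as fixing one pair $(i,j)$ on all of $[0,\tau_1]$; this is how \cref{lemma:par_orth_dynamics} uses it. No lock-in argument is needed, and your sketched claim that the margin ``only grows'' inside the tube is unproven, so do not rely on it. Second, bridging from $T$ down to $\tau$ ``via the same contraction'' has $\varepsilon$-forcing only for $t\le\tau_1$. On $(\tau_1,T]$ the forcing is bounded only by $\max_k\|\bm\mu^{(k)}-\bm\mu^{(i)}\|$, which adds a term of order $e^{-(u(t)-u(\tau_1))}\max_k\|\bm\mu^{(k)}-\bm\mu^{(i)}\|$. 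The paper glosses over the same point, since it integrates $g'\le -g+\varepsilon$ from $u=0$. That term is a constant times $e^{-u(t)}$. It is absorbed in the high-probability bound, but strictly not in the first bound when $d_\perp(\bm x_T,L^{(i,j)}_{T,\varepsilon})$ is small.
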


\textit{Proof Sketch}:  Rescale  \cref{eq:full_pf_ode} and then compute the dynamics around the nearest $i, j$-mode segment. Then, decouple the dynamics into parallel and orthogonal components. Finally, solve a first-order linear ODE for the orthogonal component and show that the parallel component remains restricted to only the  $\varepsilon$-extended nearest $i, j$-mode segment. A full proof is provided in \cref{section:proof_of_core_gaussian_mixture}. 

Note that the rate of convergence to $\mathrm{Tube}_{t,\varepsilon}^{(i,j)}$ is damped by the initial distance from the segment, which scales with $\sqrt{\varpi}$ with high probability. Next, note that after $\tau$, if the trajectory enters $\mathrm{Tube}_{t, \varepsilon}^{(i,j)}$, the trajectory stays there.

\begin{corollary}[Trapping in $\mathrm{Tube}_{t,\varepsilon}^{(i,j)}$] For any $t \in [0,\tau]$, where $\tau$ is defined in \cref{thm:core_gaussian_mixture}, if $\bm{x}_t$ governed by the dynamics of \cref{eq:full_pf_ode} enters $\mathrm{Tube}_{t, \varepsilon}^{(i,j)}$, where $\varepsilon \in \mathcal{O}(N\exp(-\kappa))$, it stays there.
\label{cor:epsilon_tube}
\end{corollary}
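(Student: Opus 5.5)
We would prove \cref{cor:epsilon_tube} as a forward-invariance statement, reusing the decomposition from the proof of \cref{thm:core_gaussian_mixture}. First rescale to $\bm{y}_t := \bm{x}_t/\sqrt{\bar{\alpha}_t}$, so that the modes $\bm{\mu}^{(i)},\bm{\mu}^{(j)}$ are fixed and the segment $L^{(i,j)}$ no longer moves. For $t\le\tau$, \cref{assump:two_mode_dominance} bounds the posterior weights of all modes $k\notin\{i,j\}$ by $\mathcal{O}(\exp(-\kappa))$. The rescaled PF ODE then reads
\[
\dot{\bm{y}}_t = \frac{\beta_t}{2\sigma_t^2}\bigl(\bm{y}_t - w_t\bm{\mu}^{(i)} - (1-w_t)\bm{\mu}^{(j)}\bigr) + \bm{r}_t,
\]
with $w_t\in[0,1]$ the two-mode posterior weight and $\|\bm{r}_t\|$ of order $N\exp(-\kappa)$ times the mode spread. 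Since time runs from $T$ down to $0$, this field contracts $\bm{y}_t$ toward the point $w_t\bm{\mu}^{(i)}+(1-w_t)\bm{\mu}^{(j)}$, which lies on the segment.

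Next, split $\bm{y}_t - \bm{\mu}^{(j)}$ into a component $v_t$ along $\bm{e}=(\bm{\mu}^{(i)}-\bm{\mu}^{(j)})/\|\bm{\mu}^{(i)}-\bm{\mu}^{(j)}\|$ and an orthogonal component $\bm{z}_t$.

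\textbf{Orthogonal part.} The main term has no orthogonal component, so we get the differential inequality $\frac{d}{d(-t)}\|\bm{z}_t\| \le -\frac{\beta_t}{2\sigma_t^2}\|\bm{z}_t\| + \|\bm{r}_t\|$. On the boundary $\|\bm{z}_t\|=\varepsilon$, the right-hand side is $\le 0$ as soon as $\varepsilon \ge 2\sigma_t^2\sup\|\bm{r}_t\|/\beta_t$. This is exactly the choice $\varepsilon\in\mathcal{O}(N\exp(-\kappa))$ made in \cref{thm:core_gaussian_mixture}, after absorbing constants. A standard comparison (Grönwall/barrier) argument then shows that $\|\bm{z}_t\|\le\varepsilon$ is preserved for all later reverse times.

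\textbf{Parallel part.} The drift $-\frac{\beta_t}{2\sigma_t^2}(v_t - w_t\ell)$, with $\ell$ the segment length, pushes $v_t$ toward $[0,\ell]$. So if $v_t<0$ it increases and if $v_t>\ell$ it decreases, up to the $\bm{r}_t$ perturbation. Hence $v_t$ stays in $[-\varepsilon\ell,(1+\varepsilon)\ell]$, as already shown in the proof of \cref{thm:core_gaussian_mixture}. Combining the two parts, the trajectory remains within distance $\varepsilon$ of $L_{\varepsilon}^{(i,j)}$. Rescaling back by $\sqrt{\bar{\alpha}_t}\le 1$ only shrinks distances, so membership in $\mathrm{Tube}_{t,\varepsilon}^{(i,j)}$ is preserved, with the endcap regions handled by the $\varepsilon$-ball picture around each mode.

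The main obstacle is the barrier argument at the tube boundary. It requires the pair $(i,j)$ from \cref{assump:two_mode_dominance} not to change once the trajectory is inside the tube, so that the remainder bound on $\bm{r}_t$ holds uniformly on $[0,\tau]$. We would handle this by noting that inside the tube the squared distance to any $\bm{\mu}^{(k)}$, $k\notin\{i,j\}$, exceeds the two-mode minimum by at least $\Delta_t - \mathcal{O}(\varepsilon)$. The margin therefore persists, and the same pair stays dominant throughout.
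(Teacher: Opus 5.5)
Your core mechanism is the paper's. In rescaled coordinates the orthogonal part obeys $\frac{d}{du}\|\bm{w}_u\|_2\le-\|\bm{w}_u\|_2+\varepsilon$, this is nonpositive on $\|\bm{w}_u\|_2=\varepsilon$, and \cref{lemma:parallel_restriction} confines the parallel coordinate. Your comparison step is cleaner than the paper's ``would require a strictly positive derivative'' remark. (Your $\bm{r}_t$ inherits the prefactor $\beta_t/(2\sigma_t^2)$. That prefactor is what makes $2\sigma_t^2\|\bm{r}_t\|_2/\beta_t$ uniformly $\mathcal{O}(N\exp(-\kappa))$; $\bm{r}_t$ is not itself of that order.)

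\textbf{Return to $\bm{x}$-space.} This step fails as written. Since $d_\perp(\bm{x}_t,L_t^{(i,j)})=\sqrt{\bar\alpha_t}\,d_\perp(\bm{y}_t,L^{(i,j)})$, entering $\mathrm{Tube}_{t_0,\varepsilon}^{(i,j)}$ only gives $\|\bm{z}_{t_0}\|_2\le\varepsilon/\sqrt{\bar\alpha_{t_0}}$, which can be far larger than $\varepsilon$. So your invariance of the radius-$\varepsilon$ tube in $\bm{y}$-space does not apply. The fact that $\sqrt{\bar\alpha_t}\le 1$ shrinks distances controls your conclusion, not your hypothesis.

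Running the barrier at radius $\varepsilon/\sqrt{\bar\alpha_{t_0}}$ yields only $d_\perp(\bm{x}_t,L_t^{(i,j)})\le\sqrt{\bar\alpha_t/\bar\alpha_{t_0}}\,\varepsilon$ for $t\le t_0$. This bound exceeds $\varepsilon$, because $\bar\alpha_t$ grows along the reverse trajectory. A fixed-radius barrier directly in $\bm{x}$-space does not close either: at radius $\varepsilon$, the differential inequality only bounds the $u$-derivative of $d_\perp(\bm{x}_t,L_t^{(i,j)})$ by $\sqrt{\bar\alpha_t}\bigl(1-\sqrt{\bar\alpha_t}(1-\sigma^2)\bigr)\varepsilon>0$.

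The statement therefore holds only after inflating the radius by a constant and absorbing it into $\varepsilon\in\mathcal{O}(N\exp(-\kappa))$. The constant is at most $1/\sqrt{\bar\alpha_\tau}$. Keeping the decay factor in the Gr\"onwall bound and using $\frac{d}{du}\log\sqrt{\bar\alpha_t}=\sigma_t^2\le 1$ (valid for $\sigma\le 1$) even gives the factor $2$. This is exactly the paper's remark that $\sqrt{\bar\alpha}$ and its reciprocal are bounded. The end caps need the same treatment: the overshoot $\varepsilon\ell$ past $\bm{\mu}^{(i)},\bm{\mu}^{(j)}$ allowed by \cref{lemma:parallel_restriction} must be absorbed as a constant, not handled by an appeal to the $\varepsilon$-balls.

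\textbf{Persistence of the pair.} The step you flag as the main obstacle is either unnecessary or not established.

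\emph{Under the paper's reading} it is unnecessary. The paper reads \cref{assump:two_mode_dominance} as supplying a single pair $(i,j)$ for all $t\le\tau_1$ along the trajectory; this is how \cref{lemma:par_orth_dynamics} and \cref{thm:core_gaussian_mixture} use it. Then $\|\bm{\delta}_{\bm{w}}\|_2\le\varepsilon$ holds on all of $[0,\tau]$ with no persistence argument.

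\emph{If the pair may depend on $t$}, your $\mathcal{O}(\varepsilon)$-perturbation claim fails. Tube membership controls only the orthogonal offset, while the trajectory can travel a distance of order $\ell$ along the segment. For each $k$, the margin $\|\bm{y}-\bm{\mu}^{(k)}\|_2^2-\|\bm{y}-\bm{\mu}^{(i)}\|_2^2$ is affine in the arclength coordinate $\xi\ell$ with slope $2\langle\bm{\mu}^{(i)}-\bm{\mu}^{(k)},\bm{u}\rangle$. It can therefore change by $\mathcal{O}(\ell\,\|\bm{\mu}^{(k)}-\bm{\mu}^{(i)}\|_2)$, not $\mathcal{O}(\varepsilon)$. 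A margin at the entry point says nothing about the rest of the segment; consider a third mode sitting near the midpoint of $L^{(i,j)}$.

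To close this you would need one of the following:
\begin{itemize}
\item an extra hypothesis, such as a margin holding uniformly along $L^{(i,j)}$; or
\item an argument that $\xi_t$ only moves toward the nearer of $\bm{\mu}^{(i)},\bm{\mu}^{(j)}$, so that the affine margin stays above the smaller of its values at the entry point and at that mode.
\end{itemize}
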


 As such, once the trajectory is in $\mathrm{Tube}_{t,\varepsilon}^{(i,j)}$, the only nontrivial component of the dynamics are the  parallel dynamics to the nearest $i, j$-mode segment $L_{t,\varepsilon}^{(i,j)}$ within $\mathrm{Tube}_{t,\varepsilon}^{(i,j)}$. We thus study these parallel dynamics; however, before doing so, we introduce a new assumption. 

\begin{assumption}[Modes Sufficiently Far Apart] Consider a reverse diffusion process governed by \cref{eq:full_pf_ode} with Gaussian mixture target distribution $p_{\text{data}}$. Then, consider time $\tau_2 \in (0, T)$, such that for $t \leq \tau_2$: 

    \begin{equation*}
    \ell^2 := ||\bm{\mu}^{(i)} - \bm{\mu}^{(j)}||_2^2 \geq 4\kappa \tilde{\sigma}_t^2\varpi,
    \end{equation*}

    where $\kappa > 1$ from \cref{assump:two_mode_dominance} and $\tilde{\sigma}_t^2 = \frac{\sigma_t^2}{\bar{\alpha}_t}$. 
\label{assump:modes_far_apart}
\end{assumption}

\textbf{Remark}: Here, we assume that the effective variance is small enough such that the \textit{distance between the two modes of interest}, $\bm{\mu}^{(i)}$ and $\bm{\mu}^{(j)}$, is larger. Furthermore, $\ell$ is \textit{not} time-dependent. While $\kappa$ in \cref{assump:modes_far_apart} could be different, we take it to be the same as \cref{assump:two_mode_dominance} for simplicity. \cref{assump:modes_far_apart} scales with dimension $\varpi$ in order to ensure regions of high probability mass do not overlap between modes $i$ and $j$ in high dimensions \citep{saremineb2019}, so that mode interpolation remains a valid definition of hallucination. We empirically verify this assumption in \cref{fig:ddim_ddpm_asms}, finding that it holds for large $\kappa \gg 1$. 

Next, we find that there are stable equilibria near the true modes: 

\begin{proposition}[Mode Stability]
  Suppose \cref{assump:two_mode_dominance} and \cref{assump:modes_far_apart} hold.  Consider $t \leq \hat{\tau}$ where $\hat{\tau} = \min\{\tau, \tau_2\}$. Then, with respect to the parallel dynamics to $L_{t, \varepsilon}^{(i,j)}$ while the trajectory is in $\mathrm{Tube}_{t, \varepsilon}^{(i,j)}$, there exist instantaneous stable equilibria in a $\varepsilon$-ball around modes $\bm{\mu}_t^{(i)}$ and $\bm{\mu}_t^{(j)}$, where $\varepsilon \in \mathcal{O}(N\exp(-\kappa))$, for $\kappa \geq 2\log N + \Theta(1)$. 
  \label{prop:stability_of_modes}
\end{proposition}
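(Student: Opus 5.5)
We reduce the parallel dynamics inside $\mathrm{Tube}_{t,\varepsilon}^{(i,j)}$ to a one-dimensional ODE and then locate and classify its zeros near the two endpoints. Parametrize the position along the segment by $\bm{x}_t = \bm{\mu}_t^{(j)} + v_t(\bm{\mu}_t^{(i)} - \bm{\mu}_t^{(j)}) + \bm{r}_t$, where $\|\bm{r}_t\|_2 \le \varepsilon$ is the orthogonal residual controlled by \cref{thm:core_gaussian_mixture} and \cref{cor:epsilon_tube}. Write the score of $p_t$ as
\[
\nabla \log p_t(\bm{x}) = -\frac{1}{\sigma_t^2}\Bigl(\bm{x} - \sum_k w_k(\bm{x})\bm{\mu}_t^{(k)}\Bigr),
\]
where the $w_k$ are softmax posterior weights. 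By \cref{assump:two_mode_dominance}, every $k\notin\{i,j\}$ has $w_k \le e^{-\Delta_t/(2\sigma_t^2)} \le e^{-\kappa}$ relative to the dominant weight, so these modes together contribute an $\mathcal{O}(N e^{-\kappa})$ perturbation. We then project the rescaled version of \cref{eq:full_pf_ode} (the same rescaling $\bm{y}_t=\bm{x}_t/\sqrt{\bar\alpha_t}$ used for \cref{thm:core_gaussian_mixture}, which makes the modes static) onto the direction $\bm{e}=(\bm{\mu}^{(i)}-\bm{\mu}^{(j)})/\ell$. This gives a scalar ODE
\[
\dot v_t = c_t\, F_t(v_t) + \mathcal{O}(Ne^{-\kappa}),
\]
with $c_t \propto \beta_t/\sigma_t^2>0$ and $F_t(v) = w_i(v) - v$.

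In the exact two-mode reduction, $w_i(v) = \bigl(1+\exp(-\tfrac{\ell^2}{\tilde\sigma_t^2}(v-\tfrac12))\bigr)^{-1}$. The equilibria therefore solve $v = \mathrm{sigmoid}(\lambda_t(v-\tfrac12))$ with $\lambda_t = \ell^2/\tilde\sigma_t^2 \ge 4\kappa\varpi$ by \cref{assump:modes_far_apart}.

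The argument proceeds in three steps.

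\textbf{Step 1 (existence near the modes).} Evaluate $F_t$ at $v=1\pm\delta$ and $v=\pm\delta$. At $v=1-\delta$, $w_i \ge 1-e^{-\lambda_t/4}$, so $F_t(1-\delta)\ge \delta - e^{-\kappa\varpi}>0$ once $\delta$ exceeds $e^{-\kappa}$. At $v=1+\delta$, $F_t<-\delta+0<0$. The intermediate value theorem then gives a zero $v^*\in(1-\delta,1+\delta)$, and symmetrically one near $0$. Choosing $\delta=\Theta(Ne^{-\kappa})$ absorbs the perturbation from the other $N-2$ modes, because the sign of $F_t$ at these test points is robust to an additive $\mathcal{O}(Ne^{-\kappa})$ error. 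Converting back, the equilibria lie in an $\varepsilon$-ball around $\bm{\mu}_t^{(i)}$ and $\bm{\mu}_t^{(j)}$ (after rescaling by $\ell\sqrt{\bar\alpha_t}$, which is bounded).

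\textbf{Step 2 (stability).} Compute
\[
F_t'(v) = \lambda_t\, w_i(1-w_i) - 1 .
\]
Near $v^*\approx 1$ we have $w_i(1-w_i)\le e^{-\lambda_t/4}$, so $F_t'(v^*) \le \lambda_t e^{-\lambda_t/4}-1<0$, since $x e^{-x/4}$ is tiny for $x\ge 4\kappa\varpi$ with $\kappa$ large. The derivative of the perturbation from the other modes is of order $N e^{-\kappa}\cdot(\text{distance}/\sigma_t^2)$, which still has to be shown to be below $1/2$. Together these give a strictly negative linearization, so $v^*$ is an instantaneous (frozen-$t$) stable equilibrium. Because the sign of $\dot v$ is correct on both sides of $v^*$, stability also holds in the Lyapunov sense within the tube.

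\textbf{Main obstacle.} The hard part is controlling the non-dominant modes' contribution to both $F_t$ and $F_t'$ uniformly in the tube. The derivative of $w_k$ brings out a factor of $\|\bm{\mu}^{(k)}-\bm{x}\|/\tilde\sigma_t^2$, which must be beaten by the exponential weight $e^{-\Delta_t/(2\sigma_t^2)}$. This is where the condition $\kappa\ge 2\log N+\Theta(1)$ enters: it makes $N e^{-\kappa}\le e^{-\kappa/2}/\Theta(1)$, so a polynomial prefactor times $N e^{-\kappa}$ is still small. The plan is to bound $x e^{-x}$ over the margin range, using $\Delta_t\ge 2\kappa\sigma_t^2$, and then sum over the $N-2$ modes to obtain a total error of order $N^2e^{-\kappa}\cdot\mathrm{poly}(\kappa)$.
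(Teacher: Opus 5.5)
Your route is the paper's route. You reduce to the scalar parallel drift, bracket a zero within $\Theta(Ne^{-\kappa})$ of each mode coordinate by the intermediate value theorem, and show the slope there is negative, treating the other $N-2$ modes as a perturbation. Step 1 and the two-mode part of Step 2 are fine; your $\lambda_t e^{-\lambda_t/4}$ bound is the paper's term $T_1$.

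\textbf{The gap.} It is the step you leave open, the derivative of the other-mode terms, and the plan you sketch for it does not work as stated. Along the segment, $\frac{d w_k}{dv}=\frac{\ell\, w_k}{\tilde\sigma_t^2}\langle\bm\mu^{(k)}-\hat{\bm\mu},\bm e\rangle$. So the derivative of your perturbation contains terms of size $\frac{\ell M}{\tilde\sigma_t^2}w_k$ and $\frac{M^2}{\tilde\sigma_t^2}w_k$, where $M=\max_{k\notin\{i,j\}}\|\bm\mu^{(k)}-\bm\mu^{(i)}\|_2$. Against this, \cref{assump:two_mode_dominance} only gives $w_k\le\frac{\pi_k}{\pi_{d^*}}e^{-\Delta_t/(2\sigma_t^2)}\le\frac{\pi_k}{\pi_{d^*}}e^{-\kappa}$.

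The margin $\Delta_t$ is only a lower bound on a squared-distance gap, and it says nothing about these prefactors. Nothing in the hypotheses prevents $\Delta_t=2\kappa\sigma_t^2$ while $\ell M/\tilde\sigma_t^2$ is very large; \cref{assump:modes_far_apart} bounds $\ell^2/\tilde\sigma_t^2$ only from below. So there is no $xe^{-x}$ structure in $x=\Delta_t/(2\sigma_t^2)$ to exploit.

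\textbf{How the paper closes it.} Since $\tilde\sigma_t^2=\sigma^2+(1-\bar\alpha_t)/\bar\alpha_t\ge\sigma^2$, the paper uses $1/\tilde\sigma_t^2\le 1/\sigma^2$. The other-mode contribution to $F_t'$ is then at most $Ae^{-\kappa}$, where $A=\Theta(N^2)$ is a constant depending on $\sigma$, $M$, $\ell$ and the weight ratios. Requiring $Ae^{-\kappa}<\tfrac13$ is exactly where $\kappa\ge 2\log N+\Theta(1)$ comes from, with quantities like $\log(M^2/\sigma^2)$ hidden in the $\Theta(1)$.

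An $xe^{-x}$ argument could be made to work, but it would need two changes. You would have to replace $\Delta_t$ with the actual gaps $\|\bm y-\bm\mu^{(k)}\|_2^2-\|\bm y-\bm\mu^{(i)}\|_2^2\approx\|\bm\mu^{(k)}-\bm\mu^{(i)}\|_2^2$. You would also have to track the cancellation of the $\ell M/\tilde\sigma_t^2$ terms between the two pieces of your split (the part $w_i-w_i^{(ij)}$ and the sum referenced at $\bm\mu^{(j)}$). That is more work than the statement needs.

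\textbf{The threshold bookkeeping.} Your closing count is inconsistent with the threshold you are aiming for. A total error of order $N^2\kappa^p e^{-\kappa}$ is not uniformly small under $\kappa\ge 2\log N+C$. At $\kappa=2\log N+C$ it equals $e^{-C}(2\log N+C)^p$, which grows with $N$, so you would only get $\kappa\ge 2\log N+\mathcal{O}(\log\log N)$. Your remark that $Ne^{-\kappa}\le e^{-\kappa/2}/\Theta(1)$ controls a total of order $N\,\mathrm{poly}(\kappa)e^{-\kappa}$, not $N^2\,\mathrm{poly}(\kappa)e^{-\kappa}$. In the paper, the only $N^2$ term is $Ae^{-\kappa}$, whose prefactor does not depend on $\kappa$. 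The $\kappa$-dependent terms, $8\frac{\pi_j}{\pi_i}\kappa e^{-\kappa}$ and one of order $N\kappa e^{-2\kappa}$, carry at most one factor of $N$.
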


\textit{Proof Sketch}: Apply the intermediate value theorem to show existence, then bound the drift derivative to check stability. See \cref{section:proof_of_stability_of_modes} for a full proof.  

\textbf{Remark}: Note that these parallel dynamics are the \textit{exact} parallel dynamics in \cref{eq:parallel_dynamics}, incorporating an additional term due to the influence of the $N-2$ other modes, which is bounded by $\varepsilon$.

Furthermore, trajectories in a neighborhood of these instantaneous equilibria converge to a \textit{stable trajectory} around the equilibria.  

\begin{corollary}[Tracking Stability]
  With the same conditions as \cref{prop:stability_of_modes}, there exists a stable trajectory in a neighborhood around $\xi_t^{*,i}$ such that any two trajectories, in that same neighborhood, converge to the stable trajectory. This holds similarly for $\xi_t^{*,j}$. 
  \label{corollary:exponential_tracking_stability}
\end{corollary}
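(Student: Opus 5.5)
The plan is to reduce the statement to a one-dimensional, non-autonomous scalar ODE for the parallel coordinate inside $\mathrm{Tube}_{t,\varepsilon}^{(i,j)}$. We then apply a standard contraction (one-sided Lipschitz) argument near the instantaneous equilibria given by \cref{prop:stability_of_modes}.

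\textbf{Setting up the coordinate.} By \cref{cor:epsilon_tube}, once the trajectory is in the tube it stays there. We write the rescaled position (as in the proof sketch of \cref{thm:core_gaussian_mixture}) as $\bm{x}_t/\sqrt{\bar\alpha_t} = \bm{\mu}^{(j)} + \xi_t(\bm{\mu}^{(i)}-\bm{\mu}^{(j)}) + \bm{r}_t$ with $\|\bm{r}_t\|\le\varepsilon$. The parallel dynamics then take the form
\[\dot\xi_t = F(t,\xi_t) + e_t,\]
where $F$ is the exact two-mode parallel drift: a multiple $\frac{\beta_t}{2\tilde\sigma_t^2}$ times a logistic-weighted pull toward $0$ or $1$. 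The perturbation $e_t$ collects the $N-2$ other modes and the orthogonal residual, and is $\mathcal{O}(\varepsilon)$ uniformly.

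\textbf{Contraction near $\xi_t^{*,i}$.} \cref{prop:stability_of_modes} gives an instantaneous equilibrium $\xi_t^{*,i}$ within $\varepsilon$ of $1$ with $\partial_\xi F(t,\xi_t^{*,i})<0$. Its proof bounds the drift derivative, and we strengthen that bound to a uniform one on a neighborhood $U_t=[1-\rho,1+\rho]$. Writing the logistic weight $w=\sigma(\ell^2(\xi-\tfrac12)/\tilde\sigma_t^2)$, the derivative is $-c_t\bigl(1-\tfrac{\ell^2}{\tilde\sigma_t^2}w(1-w)\bigr)$ up to $\mathcal{O}(\varepsilon)$. Near $\xi=1$ we have $w(1-w)\lesssim e^{-\ell^2/(2\tilde\sigma_t^2)}$. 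By \cref{assump:modes_far_apart} this quantity is at most $e^{-2\kappa\varpi}$, so the correction $\tfrac{\ell^2}{\tilde\sigma_t^2}w(1-w)$ is negligible. Hence $\partial_\xi F\le -\lambda_t:=-c_t/2<0$ on $U_t$, where $c_t=\beta_t/(2\sigma_t^2)$ up to rescaling constants.

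For any two solutions $\xi_t,\eta_t$ in $U_t$, the mean value theorem gives $\frac{d}{dt}|\xi_t-\eta_t|$ bounded by $-\lambda_t|\xi_t-\eta_t|$ in reverse time. Both solutions see the same $e_t$, since the perturbation is a smooth function of the state whose Lipschitz constant is $\mathcal{O}(\varepsilon)$ and can be absorbed into $\lambda_t$. Grönwall then yields
\[|\xi_t-\eta_t|\le|\xi_s-\eta_s|\exp\Bigl(-\int_t^s\lambda_r\,dr\Bigr),\]
which is exponential convergence in the same rate function $u(t)$ as in \cref{thm:core_gaussian_mixture}.

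\textbf{Existence of the stable trajectory.} We need a solution that stays in $U_t$ for all $t\le\hat\tau$. The plan is to show $U_t$ is forward invariant in reverse time. At the boundary $\xi=1\pm\rho$, the drift $F$ points inward with magnitude of order $c_t\rho$, which dominates $\mathcal{O}(\varepsilon)$ once $\rho\gg\varepsilon$; this uses $\kappa\ge 2\log N+\Theta(1)$. The equilibria $\xi_t^{*,i}$ also move slowly, since $\dot\xi_t^{*,i}=-\partial_tF/\partial_\xi F$ is small: near the mode the $t$-dependence enters only through exponentially small logistic terms. Therefore a solution started at $\xi_{\hat\tau}^{*,i}$ stays within $\mathcal{O}(\varepsilon+\sup|\dot\xi^*|/\lambda)$ of the equilibrium curve. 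We define this solution as the stable trajectory, and contraction makes every other trajectory in $U_t$ converge to it. The same argument by symmetry ($\xi\mapsto1-\xi$) handles $\xi_t^{*,j}$.

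\textbf{Main obstacle.} The hardest step is obtaining a \emph{uniform} negative bound on $\partial_\xi F$ over a neighborhood of fixed width, rather than only at the equilibrium. This requires controlling the logistic cross-term $w(1-w)$ and the $N-2$ residual-mode contribution together, uniformly in $t\le\hat\tau$, even where $\beta_t$ or $\tilde\sigma_t$ vary. I would first try the bound $w(1-w)\le e^{-\ell^2|\xi-1/2|/\tilde\sigma_t^2}$ combined with \cref{assump:modes_far_apart}. If that fails near $t=\hat\tau$, I would shrink the neighborhood to width $\rho=\tfrac14$.
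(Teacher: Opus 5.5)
Your proof is correct and has the same skeleton as the paper's. Both use an interval that is invariant because the drift points inward at its endpoints, a uniform negative bound on $\partial_\xi F$ on that interval, the mean value theorem plus Gr\"onwall in $u$-time, and a distinguished solution taken as the stable trajectory.

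Where you differ is in how you get the contraction region. The paper takes only the pointwise bound $F_t'(\xi_t^{*,i})\le -m_0$ from \cref{prop:stability_of_modes}. It then uses the implicit function theorem and continuity/compactness in $t$ to get some unquantified radius $\varkappa$ on which $F_t'\le -m_0/2$. Finally it enlarges $\kappa$ until the tiny invariant interval $[-a,a]$ fits inside that radius, where $a=2(\varepsilon_{\delta_\xi}+\varepsilon_{\tilde\gamma})\in\mathcal{O}(Ne^{-\kappa})$; its reference solution is the one with $\bar\xi_{\hat\tau}=0$.

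You instead bound $\partial_\xi F$ explicitly on an interval of fixed width $\rho\le\tfrac14$, using the logistic tail $w(1-w)\le e^{-\ell^2|\xi-1/2|/\tilde\sigma_t^2}$ together with \cref{assump:modes_far_apart}. This gives an order-one basin of contraction rather than one of width $\mathcal{O}(Ne^{-\kappa})$, with explicit constants. It also removes the implicit-function-theorem branch and the hidden $\log(1/\varkappa)$ in the paper's condition $\kappa\ge\log(2CN/\varkappa)$. The paper's route is shorter, since it reuses the proposition only at the equilibrium.

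Your worry about the bound failing near $t=\hat\tau$ is unnecessary. \cref{assump:modes_far_apart} holds on all of $[0,\tau_2]$, so with $\rho\le\tfrac14$ and $x:=\ell^2/\tilde\sigma_t^2\ge 4\kappa\varpi$, the cross term is at most $xe^{-x/4}\le 4\kappa\varpi e^{-\kappa\varpi}$ uniformly in $t$.

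Three points need tightening. First, the sentence ``both solutions see the same $e_t$'' is wrong, because the residual depends on $\xi$. The right move, as in the paper, is to fold the residual into the drift and bound its $\xi$-derivative on the interval. This is done for a fixed orthogonal component, which the paper also implicitly assumes.

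Second, that derivative is not controlled by the sup bound $|\delta_\xi|\le\varepsilon$ of \cref{lemma:par_orth_dynamics}. Differentiating the responsibilities introduces a factor $1/\tilde\sigma_t^2\le 1/\sigma^2$ times squared inter-mode distances (cf.\ the $T_2$, $T_3$ terms in the proof of \cref{prop:stability_of_modes}). So your threshold on $\kappa$ must depend on $\sigma$ and on the mode geometry, not only on $\varepsilon$.

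Third, your claim that the stable trajectory stays within $\mathcal{O}(\varepsilon+\sup|\dot\xi^*|/\lambda)$ of the equilibrium curve is unproved. It is also unnecessary: invariance of the interval already gives the statement.
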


\textbf{Remark}: In particular, this means that trajectories which end up near $\xi_t^{*,i}$ stay near $\xi_t^{*,i}$, and same for $\xi_t^{*,j}$.

Next, we find that there is a neighborhood around the midpoint for which the parallel dynamics are \textit{trapped}. Effectively, if an instance is trapped around the midpoint, this yields mode interpolation. 

\begin{proposition}[Trapping in Midpoint Neighborhood]
 Suppose \cref{assump:two_mode_dominance} and \cref{assump:modes_far_apart} hold. Let  $\hat{\tau} = \min\{\tau, \tau_2\}$. Furthermore, suppose $\pi_i = \pi_j$. Then: 

    \begin{enumerate}
        \item For $t \leq \hat{\tau}$, consider the approximate parallel dynamics in $\mathrm{Tube}_{t,\varepsilon}^{(i,j)}$, and denote these dynamics by $\xi_t$. Then, the point $\bm{y}^*_t = \frac{\bm{\mu}^{(i)} + \bm{\mu}^{(j)}}{2} + \bm{w}_t$, where $\bm{w}_t$ is the component of \cref{eq:full_pf_ode} orthogonal to $L_{t,\varepsilon}^{(i,j)}$, is a hyperbolic (instantaneously) unstable equilibrium with respect to these dynamics, with unstable eigenvalue $ \lambda_t := \frac{\ell^2}{4\tilde{\sigma}_t^2} - 1$. Note that $\bm{y}^*_t$ is exactly where $\xi_t = \frac{1}{2}$.

        \item Suppose $\tau_3 \leq \hat{\tau}$. Consider $\xi(\cdot)$ as  the approximate parallel dynamics in $\mathrm{Tube}_{t,\varepsilon}^{(i,j)}$. Choose a (nondimensionalized) radius $\vartheta \in (0, \frac{1}{2}]$ satisfying $0 < \vartheta < \frac{\underline{\lambda}}{2(1 + \bar{\lambda})^2}$, where $\underline{\lambda} := \min_{\{t \in [0, \tau_3]\}} \lambda_t$ and $\bar{\lambda} := \max_{\{t \in [0, \tau_3]\}} \lambda_t$. Let $\Lambda_{+} := \bar{\lambda} + 2(1 + \bar{\lambda})^2\vartheta$. Then, if $0 < |\xi_{\tau_3} - \frac{1}{2}| < \vartheta\exp(-\Lambda_{+}\tau_3)$, then:
        
        \begin{equation*}
            |\xi_t - \frac{1}{2}| < \vartheta \; \text{for every} \;  t \in [0, \tau_3]. 
        \end{equation*}
        
        Equivalently, to possibly exit the midpoint neighborhood of radius $\vartheta$, a necessary condition is: 
        
        \begin{equation*}
            \tau_3 \geq \frac{1}{\Lambda_+} \log(\frac{\vartheta}{|\xi_{\tau_3} - \frac{1}{2}|}).
        \end{equation*}

    \end{enumerate}
\label{prop:true_ddim_persistence}
\end{proposition}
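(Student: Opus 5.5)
We first write down the approximate parallel dynamics explicitly. Inside $\mathrm{Tube}_{t,\varepsilon}^{(i,j)}$, decompose $\bm{x}_t = \sqrt{\bar\alpha_t}\big(\bm{\mu}^{(j)} + \xi_t(\bm{\mu}^{(i)}-\bm{\mu}^{(j)})\big) + \bm{w}_t$, as in the proof of \cref{thm:core_gaussian_mixture}. We drop the $\mathcal{O}(N e^{-\kappa})$ contribution of the other $N-2$ modes; this is what "approximate" means in the statement. With $\pi_i=\pi_j$, the score of the two-component mixture reduces to a logistic weight on the parallel coordinate. After the rescaling used in \cref{thm:core_gaussian_mixture} (so that the time variable is the nondimensionalized one in which the eigenvalue is $\lambda_t$), we expect a scalar equation of the form
\begin{equation*}
\dot\xi_t = -\Big(\xi_t - \tfrac12\Big) + \tfrac12\tanh\!\Big(\tfrac{\ell^2}{2\tilde\sigma_t^2}\big(\xi_t-\tfrac12\big)\Big),
\end{equation*}
up to the sign convention for reverse time. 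Here the weight difference $r_i-r_j$ is a $\tanh$ of $\frac{\ell^2}{\tilde\sigma_t^2}(\xi_t-\frac12)$ divided by two. The orthogonal component $\bm{w}_t$ cancels in the difference of squared distances, so the parallel equation decouples from it.

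\textbf{Part 1.} By symmetry, $\xi=\frac12$ (the point $\bm{y}^*_t$) is a zero of the drift. Differentiating at $\frac12$ gives the linearization coefficient $-1 + \frac{\ell^2}{4\tilde\sigma_t^2} = \lambda_t$. By \cref{assump:modes_far_apart}, $\frac{\ell^2}{4\tilde\sigma_t^2} \ge \kappa\varpi > 1$, so $\lambda_t>0$. The equilibrium is therefore hyperbolic, and it is unstable in the forward-time sense of the drift. That is, it repels in the direction in which the statement measures escape.

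\textbf{Part 2.} Set $e_t = \xi_t - \frac12$ and write the equation as $\dot e = \lambda_t e + R_t(e)$. Taylor's theorem with the bounded derivatives of $\tanh$ gives $|R_t(e)| \le C(1+\lambda_t)^2 e^2$. The constant is set so that, for $|e|\le\vartheta$, we get $|\lambda_t e + R_t(e)| \le \big(\bar\lambda + 2(1+\bar\lambda)^2\vartheta\big)|e| = \Lambda_+|e|$. Grönwall's inequality applied to $\frac{d}{dt}|e_t| \le \Lambda_+ |e_t|$ over the interval between $\tau_3$ and $t$, of length at most $\tau_3$, then gives $|e_t| \le |e_{\tau_3}| e^{\Lambda_+\tau_3}$. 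This bound holds as long as $|e|\le\vartheta$ on that interval.

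A continuity/bootstrap argument closes the loop. Let $t^\ast$ be the first time, moving from $\tau_3$ toward $0$, at which $|e|=\vartheta$. On the interval before $t^\ast$ the bound applies and gives $|e_{t^\ast}| \le |e_{\tau_3}|e^{\Lambda_+\tau_3} < \vartheta$, which is a contradiction. So no such exit time exists and $|e_t|<\vartheta$ on all of $[0,\tau_3]$. Taking the contrapositive and logarithms yields the stated necessary exit time $\tau_3 \ge \Lambda_+^{-1}\log(\vartheta/|e_{\tau_3}|)$.

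The condition $\vartheta < \underline\lambda/(2(1+\bar\lambda)^2)$ guarantees $|R_t(e)| < \lambda_t|e|$, so the nonlinear term cannot reverse the sign of the drift. This ensures $\bm{y}^*_t$ is the unique equilibrium in the neighborhood, and it is also the matching lower rate $\underline\lambda - 2(1+\bar\lambda)^2\vartheta$, which is presumably why the assumption appears.

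\textbf{Main obstacle.} The step I expect to be hardest is deriving the remainder constant $2(1+\lambda)^2$ cleanly. This requires bounding the second and third derivatives of $\tanh(a e)$ with $a = \frac{\ell^2}{2\tilde\sigma_t^2} = 2(1+\lambda_t)$, uniformly in $t$. The $e^2$ term vanishes by oddness, so the honest remainder is cubic. I would bound $|R_t(e)| \le a^2|e|\cdot|e|$ using $|\tanh(x) - x| \le x^2$, and then absorb constants into the stated form.

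A second concern is keeping the time and sign convention consistent with the rescaled time of \cref{thm:core_gaussian_mixture}. I would fix it by matching to the stated eigenvalue $\lambda_t$.
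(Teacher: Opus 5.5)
Your proposal is correct and follows essentially the same route as the paper. The shared steps are: the logistic (equivalently $\tanh$) form of the two-mode responsibility; $\lambda_t=\frac{\ell^2}{4\tilde\sigma_t^2}-1$ from the derivative at $\xi=\tfrac12$; a second-order remainder bound giving $|F_t(\tfrac12+e)|\le\Lambda_+|e|$; then Gr\"onwall, a first-exit contradiction, and the contrapositive.

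On the obstacle you flagged, keep the factor $\tfrac12$ in front of $\tanh$. Your own inequality then gives $|R_t(e)|\le\tfrac12 a^2e^2=2(1+\lambda_t)^2e^2$ exactly, which is the constant the paper obtains from $|\varsigma''|\le\tfrac14$ and the Lagrange remainder. So nothing needs to be absorbed, and dropping the $\tfrac12$ would give a constant twice too large.
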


\textit{Proof Sketch}: Compute the root of the approximate parallel dynamics and check its derivative's positivity. Compute its unstable Jacobian eigenvalue $\lambda_t$. Bound the second derivative of the parallel dynamics and use this to bound the derivative of $|\xi_t - \frac{1}{2}|$; applying Grönwall's and leveraging our choice of $\vartheta$ yields the result. A full proof is provided in \cref{section:proof_of_true_ddim_persistence}. %

\textbf{Discussion}: This proposition demonstrates that, for a sufficiently small time budget $\tau_3$, DDIM trajectories cannot leave a $\vartheta$-neighborhood of the midpoint. Even though the midpoint repels the instance along the line as an unstable equilibria, it does not do so sufficiently fast enough to reach the true modes. Additionally, note that if $\bm{w}_t$ is nonzero, this becomes a hyperbolic saddle: the negative eigenvalues represent attraction to $L_t^{(i,j)}$. This corresponds with the fact that the component of \cref{eq:full_pf_ode} orthogonal to $L_t^{(i,j)}$ vanishes in sufficient time per \cref{thm:core_gaussian_mixture}. We empirically demonstrate this in \cref{fig:2d_eigs}. 

Note that we use the ``approximate'' parallel dynamics given in \cref{eq:simplified_parallel} to prove this result. The approximate dynamics ignore the $\varepsilon$ term induced by the $N-2$ modes. In \cref{prop:saddle_loc}, we prove that using the exact parallel dynamics in \cref{eq:parallel_dynamics} only shifts the location of the point which trajectories get stuck near. Empirically, as demonstrated in \cref{fig:2d_eigs}, we find that the location shift is negligible. We provide a comprehensive theoretical study of this in \cref{section:saddle_exist_nontriv}. Additionally, the saddle still exists under score estimation error, as demonstrated in \cref{sec:inexact_score}.

In particular, this defines two regimes of convergence in the diffusion reverse process for the $N$ component Gaussian mixture, which we illustrate in \cref{fig:row_three}: (a) After $\tau$, $\bm{x}_t$ moves fast towards $\mathrm{Tube}_{t,\varepsilon}^{(i,j)}$ (b) If $\bm{x}_t$ converges outside a $\vartheta$-neighborhood of the midpoint, near one of the true modes $\bm{\mu}_t^{(i)}$ or $\bm{\mu}_t^{(j)}$, it converges fast to this true mode. Otherwise, if it converges near the midpoint $\bm{y}_t^*$, the trajectory gets ``stuck'' and requires large remaining time $\tau_3$ to leave.

\section{DDPM Hallucinations}
\label{sec:Hallucination_DDPM}

Recall that DDPM follows the reverse SDE \cref{eq:anderson_reverse_sde}, which shares the same drift as the PF ODE \cref{eq:full_pf_ode} (up to a constant factor) but includes an additional noise term. Consequently, DDPM exhibits the same two qualitative convergence stages as DDIM, described in \cref{sec:Nmode}, which we also verify empirically in \cref{sec:experiments}. Here we focus on the final stage, when the trajectory is governed by the parallel dynamics within $\mathrm{Tube}_{t,\varepsilon}^{(i,j)}$. In this regime, we show that DDPM is substantially less likely than DDIM to terminate in the midpoint neighborhood responsible for mode interpolation hallucinations.

For modes $\bm{\mu}^{(i)}$ and $\bm{\mu}^{(j)}$, we define their bisector hyperplane as $H^{(i,j)} := \Big\{\bm{y}\in\mathbb{R}^{\varpi}:\ \|\bm{y}-\bm{\mu}^{(i)}\|_2^2=\|\bm{y}-\bm{\mu}^{(j)}\|_2^2\Big\}$. Inside $\mathrm{Tube}^{(i,j)}_{t,\varepsilon}$, \cref{thm:core_gaussian_mixture} implies that after $\tau$, the relevant motion reduces to a one-dimensional displacement from $H^{(i,j)}$ along its unit normal. In what follows, we make this reduction explicit and then bound the probability of ending near the midpoint. 

Formally, let $\bm{u}:=\frac{\bm{\mu}^{(j)}-\bm{\mu}^{(i)}}{\|\bm{\mu}^{(j)}-\bm{\mu}^{(i)}\|_2}$ and define the signed bisector coordinate $A_t := \Big\langle \bm{x}_t-\tfrac12(\bm{\mu}^{(i)}+\bm{\mu}^{(j)}),\,\bm{u}\Big\rangle
$ and note that $A_t=0$ if and only if $\bm{x}_t\in H^{(i,j)}$, ranging from $-\frac{(\ell+\varepsilon)}{2}$ and $\frac{\ell + \varepsilon}{2}$ along the $\epsilon$-extended $i,j$-mode segment. Applying It\^o's formula to $A_t$ and defining the one-dimensional Brownian motion $dB_t:=\langle \bm{u}, d\bar{\bm{W}}_t\rangle$ yields: 
\begin{equation}
dA_t \;=\; b(A_t,t)\,dt \;+\; \sqrt{2\eta(t)}\,dB_t,
\qquad
A_T=0,
\label{eq:At_SDE}
\end{equation}
where $\eta(t)=\tfrac12\beta_t$ and $b(A_t,t)=\big\langle -\tfrac12\beta_t\,\bm{x}_t-\beta_t\nabla_{\bm{x}_t}\log p_t(\bm{x}_t),\,\bm{u}\big\rangle$. We then leverage this SDE to bound the probability of DDPM terminating in a $\vartheta$-neighborhood of $\bm{y}_t^*$.

\begin{proposition}[DDPM Terminal Midpoint Bound]
\label{prop:ddpm_terminal_midpoint}
Let $A_t$ satisfy \cref{eq:At_SDE} on $t\in[0,\tau_3]$, with $\eta\in L^1([0,\tau_3])$ and $\eta(t)\le \eta_{\max}$ for all $t\in[0,\tau_3]$.
Fix $\vartheta>0$. Assume there exists a measurable function $K:[0,\tau_3]\to[0,\infty)$ such that for every $t\in[0,\tau_3]$ and every $a\in[-2\vartheta,2\vartheta]$,
\begin{equation}
|b(a,t)|\le K(t)\,|a|.
\label{eq:drift_local_linear_bound_2v}
\end{equation}
Assume further there exists $\lambda_{\mathrm{rep}}>0$ such that for all $t\in[0,\tau_3]$,
\begin{equation}
\begin{aligned}
b(a,t)&\leq -\lambda_{\mathrm{rep}}\, a \qquad \forall\, a\geq \vartheta ,\\
b(a,t)&\geq -\lambda_{\mathrm{rep}}\, a \qquad \forall\, a \leq -\vartheta.
\end{aligned}
\label{eq:repulsion_assumption_merged}
\end{equation}
Then the terminal midpoint event satisfies
\begin{equation}
\begin{aligned}
\mathbb P\big(|A_0|\le \vartheta\big)
\le\;&
\frac{4}{\pi}\exp\!\left(
-
\frac{\pi^2\int_0^{\tau_3}\eta(s)\,ds}
{16\Big(1+\int_0^{\tau_3} K(s)\,ds\Big)^2\,\vartheta^2}
\right) \\
&+
2\exp\!\Big(-\frac{\lambda_{\mathrm{rep}}\,\vartheta^2}{2\eta_{\max}}\Big).
\end{aligned}
\label{eq:ddpm_terminal_midpoint_bound}
\end{equation}
\end{proposition}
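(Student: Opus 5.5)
Our plan is to split the terminal event by whether the process ever leaves the enlarged window $[-2\vartheta,2\vartheta]$ during $[0,\tau_3]$ (running time in the reverse direction, from $\tau_3$ down to $0$). We write
\[
\mathbb P(|A_0|\le\vartheta)\le \mathbb P\big(|A_s|<2\vartheta\ \forall s\in[0,\tau_3]\big)+\mathbb P\big(|A_0|\le \vartheta,\ \exists s:\ |A_s|\ge 2\vartheta\big).
\]
The first term is a survival (confinement) probability: we will bound it by the first exponential, using the local linear drift bound \cref{eq:drift_local_linear_bound_2v}. The second term is a return probability: after reaching distance $2\vartheta$, the process has to come back to $[-\vartheta,\vartheta]$ against the repulsion \cref{eq:repulsion_assumption_merged}. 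We will bound it by the second exponential.

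\textbf{Confinement term.} We first change time to the intrinsic clock $\rho(t)=\int_t^{\tau_3}\eta(s)\,ds$. Under this clock the martingale part becomes a standard Brownian motion scaled by $\sqrt2$. Next we remove the drift with a scale-type transformation. Inside the window, $|b|\le K|a|\le 2\vartheta K$, so over the whole interval the drift moves the process by at most $2\vartheta\int K$. The $(1+\int K)^2$ in the denominator suggests a cleaner route: rescale space by the factor $1+\int_0^{\tau_3}K$. Concretely, on the confinement event we compare $A$ with the driftless process $M_t=\int\sqrt{2\eta}\,dB$. Since $|A-M|\le\int K|A|\le 2\vartheta\int K$, confinement of $A$ in $(-2\vartheta,2\vartheta)$ forces $M$ to stay in an interval of half-width $2\vartheta(1+\int K)$.

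The probability that a time-changed Brownian motion with quadratic variation $Q=2\int_0^{\tau_3}\eta$ stays in an interval of half-width $h$ is bounded by the classical eigenfunction expansion,
\[
\frac{4}{\pi}\exp\!\left(-\frac{\pi^2 Q}{8h^2}\right).
\]
Taking $h=2\vartheta(1+\int K)$ gives exactly $\frac4\pi\exp\big(-\pi^2\int\eta/(16(1+\int K)^2\vartheta^2)\big)$. This matches the first term, which tells us the intended constants. (Since $A_T=0$ and $A$ is continuous, the process starts inside the window; it starts at the centre if we also restart at $\tau_3$.)

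\textbf{Return term.} Let $\sigma$ be the last time the process is at $\pm2\vartheta$ before terminal time. To end in $[-\vartheta,\vartheta]$ it must travel from $|a|=2\vartheta$ to $|a|=\vartheta$ entirely inside the region $|a|\ge\vartheta$, where the drift pushes outward at rate at least $\lambda_{\mathrm{rep}}|a|$. By symmetry (the factor $2$ covers both signs), we treat $a\ge\vartheta$. By comparison, $A$ dominates the Ornstein–Uhlenbeck-type process $dY=\lambda_{\mathrm{rep}}Y\,dt+\sqrt{2\eta}\,dB$ started at $2\vartheta$, until the exit from $[\vartheta,\infty)$. We then apply an exponential supermartingale / Lyapunov function $V(a)=\exp(-\lambda_{\mathrm{rep}}a^2/(2\eta_{\max}))$. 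The generator gives $\mathcal L V\le0$ for $a\ge\vartheta$, using $\eta\le\eta_{\max}$ (up to controlling the $\eta V''$ correction, possibly by tightening constants). Optional stopping then bounds the probability of ever hitting $\vartheta$ from $2\vartheta$ by a ratio of the form $V(2\vartheta)/V(\vartheta)$, or by $V$ evaluated suitably, yielding $\exp(-\lambda_{\mathrm{rep}}\vartheta^2/(2\eta_{\max}))$.

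\textbf{Main obstacle.} The main obstacle is this Lyapunov step. A naive $V$ gives the exponent $\frac{3}{2}\lambda\vartheta^2/\eta_{\max}$ with a nonpositive generator only for large $a$. We would first try the scale function of the comparison OU process, $s'(a)=\exp(-\lambda a^2/(2\eta_{\max}))$ with the diffusion coefficient frozen at $\eta_{\max}$. We would then use that $\eta\le\eta_{\max}$ only strengthens the relative drift, so the hitting probability is at most $\int_{2\vartheta}^\infty s'/\int_{\vartheta}^\infty s'$. This ratio is bounded by $\exp(-\lambda(4\vartheta^2-\vartheta^2)/(2\eta_{\max}))$, which is at most the stated bound.
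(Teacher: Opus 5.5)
Your proposal follows the paper's structure. The split is the same: a confinement event in the $2\vartheta$-window versus an exit-then-return event. The confinement argument is also the same: on that event $|M_t|\le 2\vartheta(1+\int_0^{\tau_3}K)$, and the first-eigenvalue bound for Brownian motion run for quadratic-variation time $2\int_0^{\tau_3}\eta$ gives exactly the first term.

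You depart from the paper in the return step. The paper uses the linear exponential $e^{-rA}$ with $r=\lambda_{\mathrm{rep}}\vartheta/(2\eta_{\max})$. This uses only the constant lower bound $\lambda_{\mathrm{rep}}\vartheta$ on the outward drift over $\{a\ge\vartheta\}$, and yields exactly $e^{-\lambda_{\mathrm{rep}}\vartheta^2/(2\eta_{\max})}$. Your Gaussian-type test function uses the full linear repulsion and gives the sharper $e^{-3\lambda_{\mathrm{rep}}\vartheta^2/(2\eta_{\max})}$, which implies the claim.

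You can drop both the OU comparison and your hedges about the generator. In forward time the outward drift satisfies $b\ge\lambda_{\mathrm{rep}}a$; take $V(a)=e^{-\lambda_{\mathrm{rep}}a^2/(2\eta_{\max})}$. Then directly $\mathcal{L}V=\eta V''+bV'\le V\big(\tfrac{\lambda_{\mathrm{rep}}^2a^2}{\eta_{\max}}(\tfrac{\eta}{\eta_{\max}}-1)-\tfrac{\eta\lambda_{\mathrm{rep}}}{\eta_{\max}}\big)\le 0$ for every $a\ge\vartheta$. So your ``naive'' $V$ already works, not just for large $a$, and optional stopping gives the hitting bound $V(2\vartheta)/V(\vartheta)$. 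Likewise, the scale function $f(a)=\int_a^\infty e^{-\lambda_{\mathrm{rep}}x^2/(2\eta_{\max})}\,dx$ satisfies $\mathcal{L}f\le 0$ directly. The ratio bound you assert for it follows from the substitution $y=x+\vartheta$ and the inequality $(x+\vartheta)^2\ge x^2+3\vartheta^2$ for $x\ge\vartheta$.

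One step needs repair. You restart at the \emph{last} time the path is at $\pm2\vartheta$. That is not a stopping time, so neither the strong Markov property nor optional stopping applies there; after a last exit the path is conditioned never to return. Restart instead at the first exit time $\tau=\inf\{t:|A_t|\ge2\vartheta\}$, as the paper does. If $A_\tau=2\vartheta$ and the terminal value lies in $[-\vartheta,\vartheta]$, continuity forces a later hit of $\vartheta$, with the path staying in $(\vartheta,\infty)$ until then. That is precisely the event your supermartingale controls.
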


\noindent
\textit{Proof Sketch}:  We upper bound the probability of terminating in the midpoint neighborhood by splitting trajectories into two cases and combining their bounds: (i) those that remain within a $2\vartheta$-slab around the bisector for the entire reverse-time interval, and (ii) those that exit this slab. On the \textit{confinement event} (i), \cref{eq:drift_local_linear_bound_2v} implies that staying trapped forces the stochastic martingale term to also remain trapped. Using \citet{DubinsSchwarz1965}, we view this martingale as a time-changed Brownian motion with time horizon given by its quadratic variation $2\int_0^T \eta(s)\,ds$, and a standard Brownian confinement estimate yields the first exponential term. On the \textit{exit event} (ii), to still terminate near the midpoint, the path must leave to distance $2\vartheta$ and later return to within $\vartheta$. The strong Markov property at the exit time reduces this to a return probability from $\pm2\vartheta$. Under \cref{eq:repulsion_assumption_merged}, an exponential supermartingale and optional stopping bound this return, giving the second exponential term. A full proof can be found in  \cref{subsec:proof_ddpm_terminal_midpoint}.

\textbf{Discussion.}
\cref{prop:ddpm_terminal_midpoint} is the DDPM analogue of the DDIM midpoint persistence result in \cref{prop:true_ddim_persistence}. After trajectories enter $\mathrm{Tube}_{t,\varepsilon}^{(i,j)}$, DDIM’s remaining failure mode is to terminate near the bisector, producing mode interpolation samples. This proposition shows that DDPM makes this terminal midpoint event exponentially unlikely, so trajectories in the stage where they are within $\mathrm{Tube}_{t,\varepsilon}^{(i,j)}$ overwhelmingly tend to end near one of the two true modes instead. We validate this empirically in \cref{sec:experiments}. The assumptions of \cref{prop:ddpm_terminal_midpoint} are empirically verified in \cref{section:verif_ddpm_term}.

\begin{figure}[t]
    \centering
    \includegraphics[width=\linewidth]{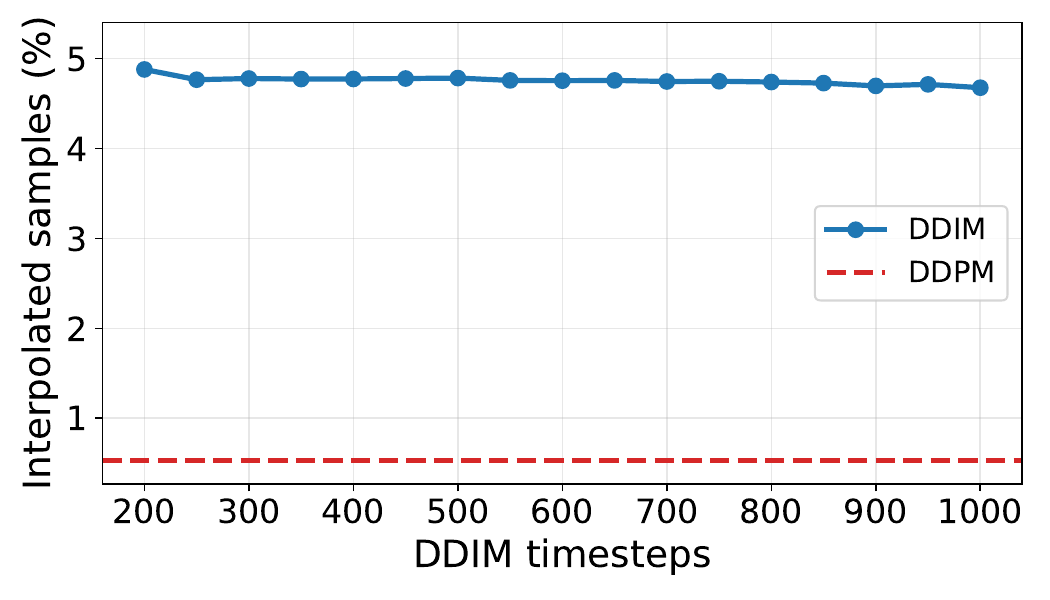}
    \caption{Hallucination rate for varying number of DDIM steps used in the reverse process. Notice that the number of DDIM interpolated samples is consistently larger than that of DDPM. Thus, this invalidates the idea that the gap between DDIM and DDPM hallucination rates arises due to skipping steps.}
    \label{fig:ddim_skipping_steps}
\end{figure}

\begin{figure*}[t]
    \centering
    \begin{subfigure}[c]{0.39\linewidth}
        \centering
        \includegraphics[width=\linewidth]{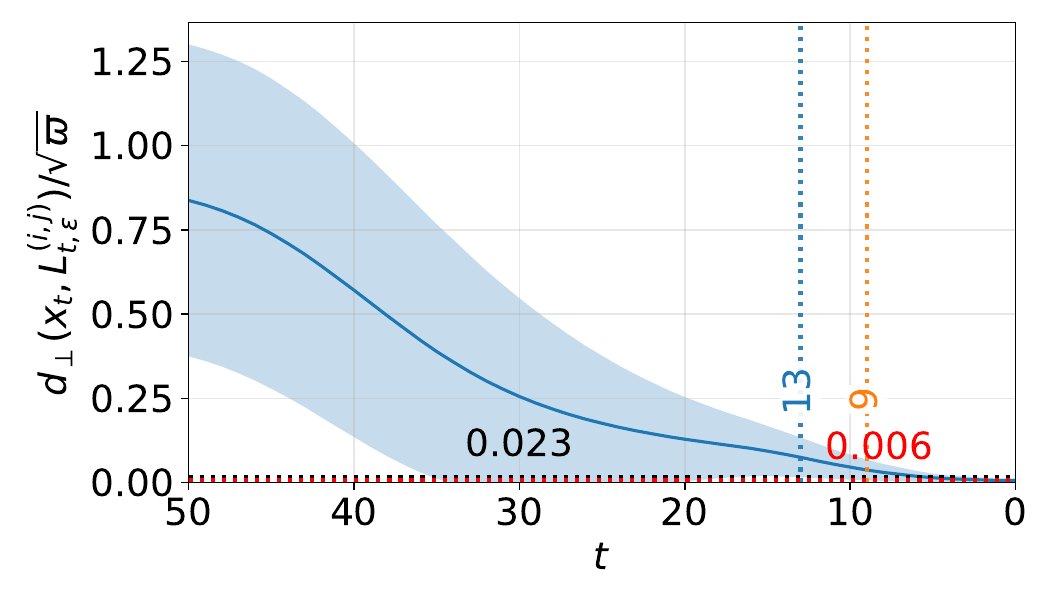}
        \caption{DDIM convergence to the $i, j$-mode segment.}
        \label{fig:ddim_exp}
    \end{subfigure}
    \hfill
    \begin{subfigure}[c]{0.39\linewidth}
        \centering
        \includegraphics[width=\linewidth]{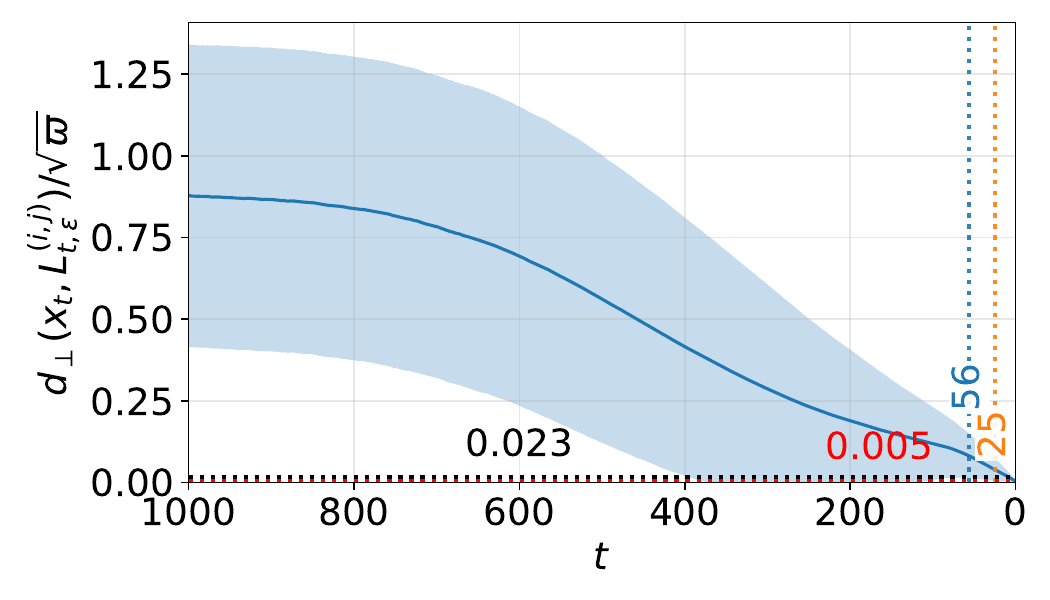}
        \caption{DDPM convergence to the $i, j$-mode segment.}
        \label{fig:ddpm_exp}
    \end{subfigure}
    \hfill
    \begin{subfigure}[c]{0.2\linewidth}
        \centering
        \includegraphics[width=\linewidth]{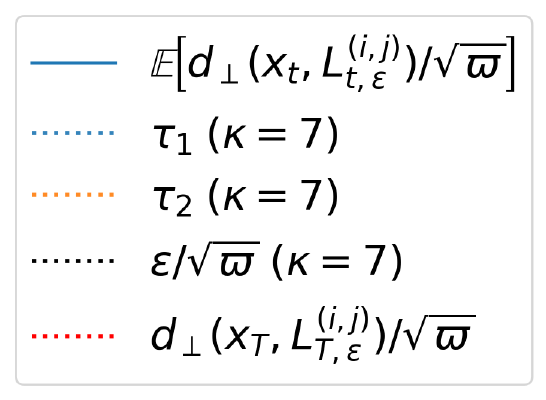}
    \end{subfigure}

    \caption{For both DDIM (\cref{fig:ddim_exp}) and DDPM (\cref{fig:ddpm_exp}), we plot the convergence rate to the nearest $i,j$-mode segment across 100,000 trajectories, finding that convergence occurs after $\tau_1$ and thus validating \cref{thm:core_gaussian_mixture}.  Note that $i, j$ change across time in these figures; however, as expected, after $\tau_1$ they become fixed. We plot $\varepsilon/\varpi$ as a dotted black line, finding that convergence to $\mathrm{Tube}_{t,\varepsilon}^{(i,j)}$ is after $\tau_2$; thus, only the parallel dynamics become relevant at this stage. We also find that DDPM ends up closer to the nearest $i, j$-mode segment (dotted red line). Note that DDIM has 50 steps due to quadratic skipping. Uncertainty is quantified in the blue region as the $95\%$ confidence interval across trajectories.}
    \label{fig:ddim_ddpm_exp}
\end{figure*}

\section{Experiments}\label{sec:experiments}

We provide experiments that justify our theoretical assumptions; demonstrate our theoretical results; and validate our subsequent interpretations. We primarily experiment with a Gaussian mixture consisting of 25 modes arranged on a uniform grid in 2 dimensions; we study this dataset with higher dimensions in \cref{sec:higher_dim_exp}. Unless otherwise stated, we use 1000 DDPM steps ($T = 1000$) and 50 DDIM sampling steps with quadratic skipping. Firstly, we show that mode interpolation is a primary source of hallucinations during sampling. We also demonstrate that the high hallucination rate gap between DDIM and DDPM is \textit{not} due to DDIM skipping steps (i.e., PF ODE discretization) in the reverse process. Finally, we demonstrate that DDPM noise helps escape the $\vartheta$-neighborhood around the midpoint, thus avoiding hallucinations, as predicted by our theoretical results. Further experimental details are provided in \cref{sec:exp_details}; we also provide additional experiments on images as well as ablations in \cref{sec:addtnl_exp}.

\textbf{DDIM Hallucinates More Than DDPM}: In \cref{fig:samples}, we find that DDIM hallucinates more than DDPM. 

We classify a sample as a \textit{mode interpolation} if it lies more than 5$\sigma$ from every true mode but within 5$\sigma$ of a line segment joining two modes. Note that we omit $<0.01\%$ of instances which land outside $5\sigma$ of any line segment; these instances arise due to score error, and we do not focus on them for the purposes of this work. We expand on our choice of threshold and characterization of these samples in \cref{sec:exp_details}. Additionally, we reproduce the recent results of \citet{fu2025countinghallucinations} in \cref{sec:ddim-ddpm-image}, demonstrating that DDIM has more counting hallucinations than DDPM as well for an image dataset.

Crucially, in \cref{fig:ddim_skipping_steps}, we demonstrate that the significantly higher hallucination rate of DDIM is \textit{not due to skipping steps}. Specifically, from coarse ODE discretizations to fine ODE discretizations, the hallucination rate of DDIM remains larger than that of DDPM. 

Finally, we observe that for an axis-aligned grid, hallucinations rarely happen on the diagonals joining two modes. This is also observed by \citet{aithal2024modeinterp}. We provide theoretical justification for this in \cref{section:diagonals}.

\textbf{Convergence to Nearest $i, j$-Mode Segment}: In \cref{fig:ddim_ddpm_exp}, we validate \cref{thm:core_gaussian_mixture} empirically, firstly finding that \cref{assump:modes_far_apart,assump:two_mode_dominance} hold for large $\kappa = 7$. In \cref{sec:validating_assump}, we find that these assumptions hold for a range of other large $\kappa$ as well. We then find that DDIM and DDPM converge to the $\varepsilon$-extended nearest $i, j$-mode segment after $\tau_1$, where $\varepsilon = N\exp(-\kappa)$. Note that this line segment varies with time, and $i, j$ can change over the reverse process. Importantly, to verify \cref{thm:core_gaussian_mixture}, we compute the distance (and $\varepsilon$) rescaled by dividing $\sqrt{\varpi}$. We also find that the trajectories pass into the $\varepsilon$-tube after $\tau_2$, empirically verifying that the phenomena described in \cref{prop:true_ddim_persistence} are relevant as soon as the trajectory enters the $\varepsilon$-tube. These results are for $\varpi = 2$; results for higher dimensions are in \cref{sec:higher_dim_exp}.

\textbf{DDPM Noise Helps Avoid Hallucinations}: In what follows, we aim to demonstrate that \cref{prop:true_ddim_persistence} and \cref{prop:ddpm_terminal_midpoint} hold for a nontrivial radius $\vartheta$. That is, for $\tau_3 = 9$ (since $\tau_2 = 9$ in \cref{fig:ddim_ddpm_exp}) remaining steps, DDIM trajectories get stuck within a $\vartheta$-neighborhood of the midpoint $\bm{y}_t^*$ while DDPM trajectories escape. Specifically, we start DDIM and DDPM (with appropriately adjusted $\beta_t$) along $L_{t}^{(a,b)}$ at time $\tau_3$ and evaluate the hallucination rate after starting at this point. We do so for 100,000 trajectories across all $a, b\in [N]$. Critically, in \cref{fig:ddim_ddpm_hall_with_radius}, we find that there exists a $\vartheta(\tau_3) > 0$ such that if a trajectory starts within $\vartheta(\tau_3)$ of the midpoint, it hallucinates. Next, the figure demonstrates that within this radius, DDPM has a significantly lower hallucination rate than DDIM. We thus conclude that the noise term in DDPM helps it escape the $\vartheta$-neighborhood around the midpoint, thus avoiding hallucinations. In \cref{fig:ddim_ddpm_hall_with_radius}, we leverage this insight to evaluate the following DDIM-DDPM hybrid procedure: we add $z$ DDPM steps to DDIM after $\tau_3$, and find that this helps avoid hallucinations as well. In \cref{sec:tau_3_ablation}, we demonstrate that our results hold across various choices of $\tau_3$. Note that $\tau_3$ is taken with respect to DDIM steps throughout our experiments.

\begin{figure}[t]
    \centering
    \includegraphics[width=\columnwidth]{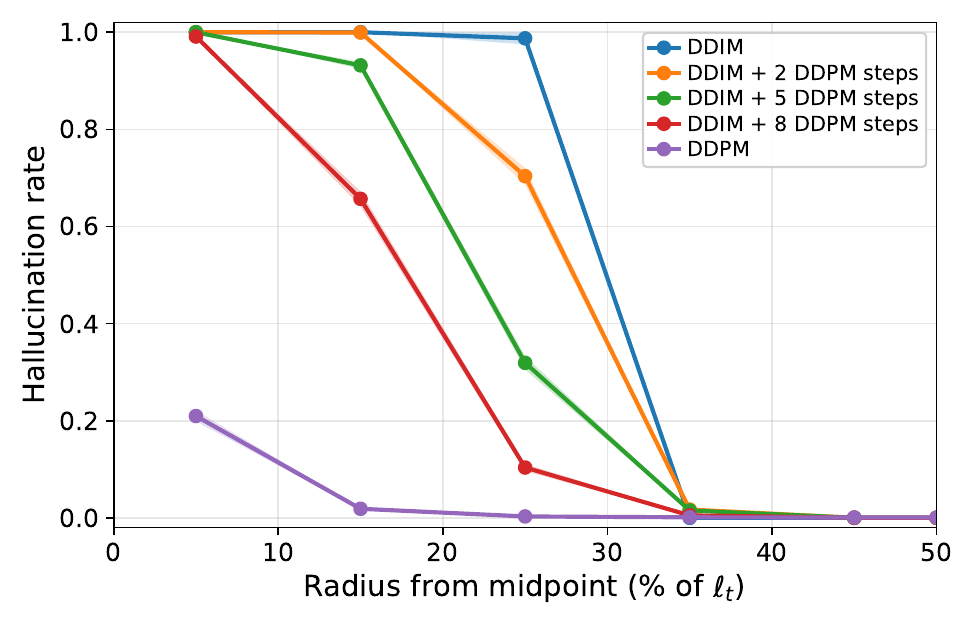}
    \caption{Starting DDIM at $\tau_3 = 9$, we find that for $\vartheta = 0.15\ell_t$, DDIM gets stuck before it can reach the true modes, i.e., it hallucinates, as predicted by \cref{prop:true_ddim_persistence}. Furthermore, DDPM has a lower hallucination rate within this same $\vartheta$. Thus, we conclude that DDPM noise helps escape the $\vartheta$-neighborhood around the midpoint, as predicted by \cref{prop:ddpm_terminal_midpoint}. Given this, we find that adding $z$ DDPM steps after starting DDIM at $\tau_3$ helps mitigate hallucination. Uncertainty is quantified by the standard error across mode pairs. }
    \label{fig:ddim_ddpm_hall_with_radius}
\end{figure}

\section{Discussion}\label{sec:discussion}

We present a geometric analysis of hallucinations in diffusion models through the lens of Gaussian mixtures. In particular, after sufficient time, diffusion trajectories undergo two key convergence phases: rapid collapse onto a nearby line segment connecting two modes, followed by dynamics along this line. Next, if a trajectory does not converge to one of the modes, it can get stuck near a point; in the case that Gaussian mixture weights are equal, this is at the midpoint. This provides insight into the core mechanisms which differ between DDIM and DDPM: stochastic noise can help DDPM escape the neighborhood of the midpoint to one of the true modes and avoid hallucination. 

Our results thus imply that the noise term is beneficial for reducing mode interpolation hallucinations. Empirically, we find this to be the case as well: adding only a few diffusion steps at the end of the reverse process significantly lowers the hallucination rate. We hope that these results highlight to the community the pitfalls of preferring DDIM over DDPM and inspire future work on designing flow-based generative models which avoid the midpoint neighborhood entirely, thus mitigating hallucinations.

\textbf{Limitations and Future Work}: Our primary analysis assumes a Gaussian mixture target distribution with well-separated modes. While we prove some results under an inexact score in \cref{sec:inexact_score}, a full characterization of what changes under score error is left to future work. Additionally, \citet{aithal2024modeinterp} highlight that mode interpolation can also happen in image latent space. Thus, a natural extension of our theoretical analysis to images is studying latent diffusion \citep{rombach2022high} with Gaussian mixture latents. Furthermore, we believe that our work provides a first step towards designing adaptive samplers or models which detect when trajectories are stuck and inject DDPM noise accordingly, thus mitigating hallucinations while keeping the computational benefits of DDIM.

\section*{Acknowledgments}
We are truly thankful to the reviewers for their thoughtful and constructive feedback. We are also thankful to Zulip for supporting  our online communication. 

\section*{Impact Statement}

Hallucinations, generations that violate structural or semantic constraints of the training distribution, are a key obstacle in the adoption of state-of-the-art generative models, like diffusion, in safety critical contexts. Our work contributes to removing this obstacle by giving a mechanistic account of one category of hallucination: mode interpolation, where trajectories terminate between modes of the target distribution. We identify why DDIM, a commonly used diffusion inference procedure, has a higher hallucination rate than its stochastic counterpart, DDPM. Concretely, the noise in the DDPM reverse process can help it get unstuck from a region around the midpoint, where DDIM would otherwise hallucinate. Empirically, we highlight that even mild noise injection can be beneficial, guiding practitioners to safer generative modeling. Altogether, we believe our work has positive societal impact and might lead to a better understanding of the generalization of generative models as well \cite{vedantam2018generative, zhao2018bias, georgopoulos2020multilinear, deschenaux2024going}.

\bibliography{main}
\bibliographystyle{icml2026}

\newpage
\appendix
\onecolumn 

\label{section:appendix}

\renewcommand{\thefigure}{\thesection.\arabic{figure}} %
\renewcommand{\thetable}{\thesection.\arabic{table}} %
\renewcommand{\theequation}{\thesection.\arabic{equation}}

\newpage

\section*{Contents of the Appendix}

\begin{itemize}
    \item In \cref{sec:full_notation} we provide a full list of notation. 
    \item In \cref{section:saddle_exist_nontriv}, we provide theoretical results characterizing how our analysis changes if we assume unequally weighted modes (i.e., $\pi_i \neq \pi_j$). 
    \item In \cref{sec:inexact_score}, we provide theoretical results characterizing how our analysis changes if we assume the learned inexact score, rather than the exact score in our main paper. 
    \item In \cref{section:diagonals}, we provide a theoretical result characterizing why mode interpolations rarely happen on the diagonal lines joining two modes in the axis-aligned grid we use throughout our experiments. 
    \item In \cref{sec:addtnl_exp}, we provide additional experiments, including verifying assumptions,  empirically evaluating higher dimensional Gaussian mixtures and images, and running ablations on several hyperparameters in our experiments. 
    \item In \cref{sec:exp_details}, we provide implementation details for our experiments.  
    \item In \cref{sec:lemmas} we provide helpful lemmas, which are then used in our proofs in \cref{sec:proofs}. 
    \item In \cref{section:exact_score_use}, we discuss the use of the exact score throughout our paper. 
    \item In \cref{appendix:symbol_table}, we provide a table of symbols used throughout our paper. 
\end{itemize}

\newpage

\section{Full Notation}\label{sec:full_notation}

We let $x \in \mathbb{R}$ denote a scalar. Bolded lowercase letters $\bm{x} \in \mathbb{R}^{\varpi}$ denote vectors of dimension $\varpi$. Bolded upper case letters $\bm{X} \in \mathbb{R}^{\varpi \times \varpi}$ denote matrices with $\varpi$ rows and columns, unless dimensionalities are explicitly specified. $[N]$, for scalar $N \in \mathbb{R}$, denotes the integer subset $\{1, 2, ..., N\}$. We use $||\bm{x}||_2$ to denote the $\ell_2$ norm of a vector $\bm{x}$. We use $\langle \bm{x}_1, \bm{x}_2\rangle$ to denote the inner product between vectors $\bm{x}_1$, $\bm{x}_2$, which is the dot product $\bm{x}^T\bm{x}$ unless explicitly specified otherwise. For a function $f$, we use $\nabla_{\bm{x}} f$ to denote its gradient if $f : \mathbb{R}^{\varpi} \to \mathbb{R}$ and its Jacobian if $f : \mathbb{R}^{\varpi} \to \mathbb{R}^o$. We denote by $x_t : [0,T] \to \mathbb{R}$ and $\bm{x}_t : [0,T] \to \mathbb{R}^{\varpi}$ time-dependent scalar-valued and vector-valued functions, respectively, from time scalar time $T$ to $0$ unless otherwise specified. We use $\dot{x}_t$ to denote the time derivative of $x_t$ and similarly use $\dot{\bm{x}}_t$ for $\bm{x}_t$; for all other derivatives, we use the notation $\frac{d}{dx}f(x)$. $\mathcal{N}(\bm{x}; \bm{\mu}, \sigma^2 \bm{I})$ is used to denote the probability density function of a $\varpi$-dimensional Gaussian centered at $\bm{\mu}$ with variance $\sigma^2 \bm{I}$, where $\bm{I}$ is the identity matrix. We write  stochastic differential equations (SDE) as $d\bm{x}_t = b(t,\bm{x}_t) dt + \sigma(t,\bm{x}_t) d\bm{W}_t$ where $\bm{W}_t \in \mathbb{R}^{{\varpi} \times 1}$ is the standard Brownian motion, which is equivalent to the weak form $\bm{x}_t = \bm{x}_0 + \int_0^t b(s, \bm{x}_s) ds + \int_0^t\sigma(s, \bm{x}_s) d\bm{W}_t$. We say that, for $f, g : \mathbb{R}^{\varpi} \to \mathbb{R}^o$, $f \in \mathcal{O}(g)$ if $\exists c_1 > 0$ s.t. $f(\bm{x}) \leq c_1g(\bm{x})$ for $x$ sufficiently large; $f \in \Omega(g)$ if $\exists c_2 > 0$ s.t. $f(x) \geq c_2 g(x)$ for $x$ sufficiently large; and $f \in \Theta(g)$ if $f \in \mathcal{O}(g)$ and $f \in \Omega(g)$.

\section{Saddle Existence for Unequal Weights and Exact Parallel Dynamics}\label{section:saddle_exist_nontriv}

In \cref{prop:true_ddim_persistence}, we demonstrated that, after sufficient time in the reverse process, the midpoint was an equilibrium with respect to the approximate parallel dynamics, under equal weights of the $i, j$-mode segment $\pi_i = \pi_j$. Here:  
\begin{enumerate}
    \item Under unequal weights, we demonstrate a sufficient condition for an equilibrium to exist.
    \item Under the exact dynamics, we demonstrate that there exists an equilibrium, and that it does not differ greatly from the midpoint. This is also empirically justified by \cref{fig:2d_eigs}. 
\end{enumerate}

\begin{proposition}[Equilibria Location under Unequal Weights and Exact Dynamics]
    Suppose \cref{assump:two_mode_dominance} and \cref{assump:modes_far_apart} hold and fix $t\le \hat{\tau}$ (with $\hat{\tau}$ same as in \cref{assump:modes_far_apart}). Consider the induced 1D parallel dynamics inside the tube.
    \begin{enumerate}
\item \textbf{(Unequal weights).}
Let $\ell:=\|\bm{\mu}^{(i)}-\bm{\mu}^{(j)}\|_2$ . For the approximate parallel dynamics in Tube$_{t,\varepsilon}^{(i,j)}$, there exists a point parallel to $L_{t,\varepsilon}^{(i,j}$, which we denote by $\xi^\star_{ij}(t)\in(0,1)$,  which satisfies: 
\begin{equation}
\log\!\Big(\frac{\xi^\star_{ij}(t)}{1-\xi^\star_{ij}(t)}\Big)
=
\log\!\Big(\frac{\pi_j}{\pi_i}\Big)
+
\frac{\ell^2}{\tilde{\sigma}_t^2}\Big(\xi^\star_{ij}(t)-\frac12\Big).
\label{eq:pairwise_saddle_logodds_short}
\end{equation}

In particular, $\pi_i=\pi_j$ implies $\xi^\star_{ij}(t)=\tfrac12$, and $\pi_j<\pi_i$ implies $\xi^\star_{ij}(t)>\tfrac12$ (and vice versa).

Moreover,
\begin{equation}
F'_{ij,t}\!\left(\xi^\star_{ij}(t)\right)
=
\frac{\ell^2}{\tilde{\sigma}_t^2}\,\xi^\star_{ij}(t)\bigl(1-\xi^\star_{ij}(t)\bigr)-1,
\label{eq:pairwise_saddle_slope_short}
\end{equation}
so $\xi^\star_{ij}(t)$ is hyperbolic unstable whenever $F'_{ij,t}(\xi^\star_{ij}(t))>0$. 

\item \textbf{(Exact Parallel Dynamics).}
Under the exact parallel dynamics characterized in \cref{eq:parallel_dynamics}, between the two stable equilibria near modes $\bm{\mu}^{(i)}$ and $\bm{\mu}^{(j)}$ discussed in  \cref{prop:stability_of_modes}, there exists an  equilibrium
$\xi^\star_N(t)$ of the exact parallel dynamics. Furthermore, for $\kappa$ sufficiently large, assume there is an interval $I$ containing  $\xi^\star_{ij}(t)$ such that, for some $m > 0$:  
\begin{equation}
F'_{ij,t}(\xi)\ge m>0
\qquad \forall\,\xi\in I,
\label{eq:pairwise_lower_slope_short}
\end{equation}

where $F_{ij, t}$ is the drift of the approximate parallel dynamics. Then,  $\xi^\star_N(t)\in I$ satisfies
\begin{equation}
\big|\xi^\star_N(t)-\xi^\star_{ij}(t)\big|
\le
\varepsilon/m.
\label{eq:pairwise_eps_over_m_short}
\end{equation}

where $\varepsilon \in \mathcal{O}(N\exp(-\kappa))$.
\end{enumerate}
\label{prop:saddle_loc}
\end{proposition}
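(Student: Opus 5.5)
We will work with the rescaled parallel coordinate $\xi\in[-\varepsilon,1+\varepsilon]$ along $L_{t,\varepsilon}^{(i,j)}$, writing $\bm{x}=\bm{\mu}_t^{(i)}+\xi(\bm{\mu}_t^{(j)}-\bm{\mu}_t^{(i)})+\bm{w}_t$ with $\bm{w}_t$ orthogonal. The aim is to make the approximate parallel drift $F_{ij,t}$ fully explicit. Keeping only modes $i,j$ in the score, $\nabla\log p_t=-\big(\bm{x}-r_i\bm{\mu}_t^{(i)}-r_j\bm{\mu}_t^{(j)}\big)/\sigma_t^2$, with posterior responsibilities $r_j(\xi)=\pi_j\mathcal N_j/(\pi_i\mathcal N_i+\pi_j\mathcal N_j)$. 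The orthogonal part cancels in the ratio of Gaussians, so
\[
\log\frac{r_j}{r_i}=\log\frac{\pi_j}{\pi_i}+\frac{\ell^2}{\tilde\sigma_t^2}\Big(\xi-\tfrac12\Big).
\]
After the rescaling used in the proof of \cref{thm:core_gaussian_mixture} (time change and division by $\sqrt{\bar\alpha_t}\ell$), the parallel drift should reduce, up to a positive time factor, to
\[
F_{ij,t}(\xi)=\xi-\sigma(z(\xi)),\qquad z(\xi):=\log\frac{\pi_j}{\pi_i}+\frac{\ell^2}{\tilde\sigma_t^2}\Big(\xi-\tfrac12\Big),
\]
where $\sigma$ is the logistic function. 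I would verify the sign convention against \cref{eq:simplified_parallel}, so that the modes are stable and $F'$ at the interior root equals the unstable eigenvalue $\lambda_t$ of \cref{prop:true_ddim_persistence} in the equal-weights case.

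\textbf{Part 1.} An equilibrium satisfies $\xi=\sigma(z(\xi))$. Taking the logit of both sides gives exactly \cref{eq:pairwise_saddle_logodds_short}. For existence in $(0,1)$, set $G(\xi):=\mathrm{logit}(\xi)-z(\xi)$. It tends to $-\infty$ as $\xi\to0^+$ and to $+\infty$ as $\xi\to1^-$, so the IVT gives a root. To get an interior root between the stable roots, I would rather evaluate $G$ at the points produced in \cref{prop:stability_of_modes}. For equal weights, $G(1/2)=0$ by inspection. If $\pi_j<\pi_i$, then $G(1/2)=\log(\pi_i/\pi_j)>0$. Meanwhile $G<0$ just to the right of the stable root near $0$, because there $G'=1/(\xi(1-\xi))-\ell^2/\tilde\sigma_t^2<0$. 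Hence the middle root lies strictly between $1/2$ and the stable root near $1$, which gives $\xi^\star>1/2$; the reverse case is symmetric. The slope formula is a direct differentiation: $F'=\frac{\ell^2}{\tilde\sigma_t^2}\sigma'(z)-1$ with $\sigma'(z)=\sigma(z)(1-\sigma(z))=\xi^\star(1-\xi^\star)$ at the fixed point. This yields \cref{eq:pairwise_saddle_slope_short}, and hyperbolic instability is immediate when it is positive. Under \cref{assump:modes_far_apart}, $\ell^2/(4\tilde\sigma_t^2)\ge\kappa\varpi>1$, which recovers positivity at $\tfrac12$.

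\textbf{Part 2.} Write the exact parallel drift as $F^N_t=F_{ij,t}+E_t$ with $\sup|E_t|\le\varepsilon$ on the tube, using the bound on the influence of the $N-2$ other modes from \cref{thm:core_gaussian_mixture} and the remark after \cref{prop:stability_of_modes}.

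For existence, \cref{prop:stability_of_modes} gives stable equilibria $\xi^{*,i}<\xi^{*,j}$ of $F^N_t$ with $(F^N_t)'<0$ there. Just to the right of $\xi^{*,i}$, $F^N_t<0$, and just to the left of $\xi^{*,j}$, $F^N_t>0$. By the IVT there is a zero $\xi^\star_N$ in between.

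For localization, on $I$ the function $F_{ij,t}$ is strictly increasing with slope at least $m$. Let $I=[a,b]$ be large enough that $F_{ij,t}(a)\le-\varepsilon$ and $F_{ij,t}(b)\ge\varepsilon$; this holds once $m|I|\gtrsim2\varepsilon$, which is guaranteed for $\kappa$ large since $\varepsilon\to0$. Then $F^N_t(a)\le0\le F^N_t(b)$, so there is a zero in $I$. At any zero $\xi^\star_N\in I$ we have $F_{ij,t}(\xi^\star_N)=-E_t(\xi^\star_N)$. The mean value theorem then gives
\[
m|\xi^\star_N-\xi^\star_{ij}|\le|F_{ij,t}(\xi^\star_N)-F_{ij,t}(\xi^\star_{ij})|\le\varepsilon,
\]
which is \cref{eq:pairwise_eps_over_m_short}.

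\textbf{Main obstacle.} The delicate step is reconciling the interior zero found between the stable equilibria with the one located in $I$. These must be the same zero, or at least one zero must lie in both, so I would choose $I$ inside the gap between the stable equilibria. The other delicate point is pinning down the precise form and normalization of the parallel drift, including the time-rescaling factor and the sign convention, so that it matches \cref{eq:simplified_parallel}.
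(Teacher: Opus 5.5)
Your route is essentially the paper's. You take the logit of the fixed-point condition for \cref{eq:pairwise_saddle_logodds_short} and differentiate the logistic for \cref{eq:pairwise_saddle_slope_short}. You get existence of $\xi^\star_N(t)$ from the intermediate value theorem between the stable equilibria $\xi^{*,i}_t<\xi^{*,j}_t$ of \cref{prop:stability_of_modes}, and the $\varepsilon/m$ bound from a mean-value argument. Your version of the last step says every zero of the exact drift in $I$ lies within $\varepsilon/m$ of $\xi^\star_{ij}(t)$. This is a slightly cleaner form of the paper's bracketing at $\xi^\star_{ij}(t)\pm\varepsilon/m$, and it largely dissolves your worry about reconciling the two zeros. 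One correction: $I$ is given, so you cannot enlarge it. The condition you actually need is $\varepsilon/m\le\operatorname{dist}(\xi^\star_{ij}(t),\partial I)$, which is the paper's $\varepsilon/m<\rho$ and is obtained by taking $\kappa$ large. Your bare IVT on $(0,1)$ in Part 1 also proves existence of a root, which the paper does not; it simply posits an interior equilibrium.

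The step that fails is your derivation of $\xi^\star_{ij}(t)>\tfrac12$ when $\pi_j<\pi_i$. First fix the sign. By \cref{eq:simplified_parallel} the drift is $F_{ij,t}(\xi)=\varsigma(z(\xi))-\xi$, with $\varsigma$ the logistic function, not $\xi-\varsigma(z(\xi))$; your own slope formula already uses the former. Write $\xi^{s}_0<\xi^{s}_1$ for the stable roots of $F_{ij,t}$ near $0$ and $1$. At any zero, $F'_{ij,t}=-\xi(1-\xi)\,G'(\xi)$, so stable roots are exactly those with $G'>0$. Concretely, at $\xi^{s}_0\approx 0$ the term $1/(\xi(1-\xi))$ dwarfs $\ell^2/\tilde\sigma_t^2$. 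Hence $G>0$, not $G<0$, just to the right of $\xi^{s}_0$. Even granting your premise, $G<0$ there together with $G(\tfrac12)>0$ would put a root in $(\xi^{s}_0,\tfrac12)$, the opposite of your conclusion.

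The repair uses the other stable root. $G'>0$ at $\xi^{s}_1$, so $G<0$ just to its left. Since $G(\tfrac12)=\log(\pi_i/\pi_j)>0$, there is a root in $(\tfrac12,\xi^{s}_1)$. When $\ell^2/\tilde\sigma_t^2>4$, $G'=1/(\xi(1-\xi))-\ell^2/\tilde\sigma_t^2$ is negative exactly on a central interval. So $G$ is increasing--decreasing--increasing with at most three zeros, and this root is the unique middle one, on the decreasing branch, i.e.\ the unstable equilibrium with $F'_{ij,t}>0$.

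This is also why the bare IVT on $(0,1)$ cannot carry the ordering claim. For strongly unequal weights the saddle merges with the lighter-mode stable root and disappears. The only remaining root then sits near the heavier mode, on the wrong side of $\tfrac12$. What excludes this is the existence of $\xi^{s}_1$, the two-mode analogue of \cref{prop:stability_of_modes}, whose $\kappa$ threshold depends on $\pi_i/\pi_j$. The paper reads the ordering directly off \cref{eq:pairwise_saddle_logodds_short}, which tacitly relies on the same decreasing-branch fact.
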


\textit{Proof sketch}: 
Item (1) follows by analyzing the approximate parallel dynamics for a single pair of components: the equilibrium condition yields the stated log-odds relation, and the location and slope follow from monotonicity and differentiation of the logistic responsibility. For item (2), under two-mode dominance the exact parallel dynamics are a uniformly small perturbation of the approximate dynamics. Existence of an interior unstable equilibrium then follows by continuity and the intermediate value theorem, and a mean-value argument yields the stated displacement bound. A full proof is given in \cref{subsec:saddle_pos_unequal}.

\textbf{Discussion}: The key contribution of this result is twofold. Firstly, we show that the exact parallel dynamics admit an interior equilibria rather than assuming its existence. Second, we show that this equilibrium is robust: its location does not deviate significantly under unequal mixture weights or when passing from the approximate to the exact dynamics. Item (1) characterizes how unequal weights bias the saddle along the $(i,j)$-line toward the lower-weight component. Item (2) shows that under two-mode dominance, the exact parallel dynamics differ from the approximate dynamics by only a small perturbation, so the  saddle persists and remains close to the equal weight location. Finally, since an equilibrium exists between two stable (attracting) ones (which exist and are stable for the exact parallel dynamics per \cref{prop:stability_of_modes}), and the exact parallel dynamics are one-dimensional, intuitively this is an equilibrium which repels, i.e. it is unstable. As such, it can also admit the confinement in a radius as discussed in \cref{prop:true_ddim_persistence}.

\section{Results under Inexact Score}
\label{sec:inexact_score}

The preceding analysis leverages the closed-form score of Gaussian mixtures. In practice, the score $\nabla_{\bm{x}} \log p_t(\bm{x})$ is approximated by a neural network $s_\theta(\bm{x}, t)$. In this section, we characterize how score estimation error affects the mechanism established in \cref{prop:true_ddim_persistence}.

\begin{definition}[Score Error Field]
\label{def:score_error}
Define the score error field as
\begin{equation}
    \psi(\bm{x}, t) := s_\theta(\bm{x}, t) - \nabla_{\bm{x}} \log p_t(\bm{x}).
    \label{eq:score_error_field}
\end{equation}
The learned reverse dynamics become
\begin{equation}
    \dot{\bm{x}}_t = -\frac{1}{2}\beta_t \left( \bm{x}_t + \nabla_{\bm{x}_t} \log p_t(\bm{x}_t) + \psi(\bm{x}_t, t) \right).
    \label{eq:perturbed_pf_ode}
\end{equation}
\end{definition}

\begin{assumption}[Bounded Score Error]
\label{asm:bounded_score}
There exists $\varrho : [0,T] \to \mathbb{R}_{\geq 0}$ such that $\|\psi(\bm{x}, t)\|_2 \leq \varrho(t)$ for all $\bm{x} \in \mathrm{Tube}^{(i,j)}_{t,\varepsilon}$.
\end{assumption}

\begin{assumption}[Lipschitz Score Error]
\label{asm:lipschitz_score}
There exists $L_\psi : [0,T] \to \mathbb{R}_{\geq 0}$ such that for all $\bm{x}, \bm{y} \in \mathrm{Tube}^{(i,j)}_{t,\varepsilon}$:
\begin{equation}
    \|\psi(\bm{x}, t) - \psi(\bm{y}, t)\|_2 \leq L_\psi(t) \|\bm{x} - \bm{y}\|_2.
    \label{eq:lipschitz_score_error}
\end{equation}
\end{assumption}

Under these assumptions, we establish that (1) the saddle location shifts under score error, and (2) the unstable eigenvalue can decrease, amplifying the slowdown effect.

\begin{proposition}[Saddle Perturbation]
\label{prop:saddle_perturbation}
Suppose \cref{assump:two_mode_dominance},  \cref{assump:modes_far_apart}, \cref{asm:bounded_score}, and \cref{asm:lipschitz_score} hold. Let $\xi^*_{ij}(t)$ denote the unstable equilibrium of the approximate parallel dynamics, with unstable eigenvalue $\lambda_t > 0$.

Then, for $\bar{\varrho}(t)$  and $L_{\psi}$ sufficiently small, the perturbed parallel dynamics admit an equilibrium $\xi^*_\theta(t)$ satisfying
\begin{equation}
    |\xi^*_\theta(t) - \xi^*_{ij}(t)| \leq \frac{2\bar{\varrho}(t)}{\ell \lambda_t} + O(\bar{\varrho}(t)^2),
    \label{eq:saddle_displacement}
\end{equation}
where $\ell = \|\bm{\mu}^{(j)} - \bm{\mu}^{(i)}\|_2$ and $\bar{\varrho}(t) = \frac{\sigma_t^2}{\sqrt{\bar{\alpha}_t}}\varrho(t)$. 
\end{proposition}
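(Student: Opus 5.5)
We project the perturbed dynamics \cref{eq:perturbed_pf_ode} onto the $(i,j)$-segment exactly as for the exact dynamics, and then run a perturbation argument for a simple root. Inside $\mathrm{Tube}^{(i,j)}_{t,\varepsilon}$ we write $\bm{x}_t = \sqrt{\bar\alpha_t}\bigl(\bm{\mu}^{(i)} + \xi_t(\bm{\mu}^{(j)}-\bm{\mu}^{(i)})\bigr) + \bm{w}_t$, and use the same rescaling as in the proof of \cref{thm:core_gaussian_mixture}. The approximate parallel drift is then $F_{ij,t}(\xi)$, the function whose root $\xi^*_{ij}(t)$ and slope $F'_{ij,t}(\xi^*_{ij}(t)) = \lambda_t$ were computed in \cref{prop:true_ddim_persistence} and \cref{prop:saddle_loc}. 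The perturbation contributes an additive term
\[
G_t(\xi) := -c_t\,\bigl\langle \psi(\bm{x}(\xi),t), \bm{u}\bigr\rangle ,
\]
where $c_t$ collects the scaling from the change of variables. Tracking the factors $\beta_t$, $\sigma_t^2$, $\sqrt{\bar\alpha_t}$ and $\ell$ should give $c_t$ of the form $\sigma_t^2/(\sqrt{\bar\alpha_t}\,\ell\,\tilde\sigma_t^{2}\cdots)$. This is exactly what packages the error into $\bar\varrho(t) = \frac{\sigma_t^2}{\sqrt{\bar\alpha_t}}\varrho(t)$ and yields $|G_t(\xi)|\le 2\bar\varrho(t)/\ell$ uniformly on the tube, using \cref{asm:bounded_score}. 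The factor $2$ is whatever the normalization produces; I will fix it once the rescaling is written out. Likewise \cref{asm:lipschitz_score} gives $|G_t(\xi)-G_t(\xi')|\le c_t' L_\psi(t)|\xi-\xi'|$, with $c_t'$ proportional to $\ell$ times the same scaling.

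The perturbed parallel drift is $F_{ij,t}+G_t$. Note that $F_{ij,t}$ is smooth with $F_{ij,t}(\xi^*_{ij})=0$ and $F'_{ij,t}(\xi^*_{ij})=\lambda_t>0$. By continuity of $F'_{ij,t}$, whose second derivative was already bounded in the proof of \cref{prop:true_ddim_persistence}, there is an interval $I=[\xi^*_{ij}-r,\xi^*_{ij}+r]$ on which $F'_{ij,t}\ge \lambda_t - Cr$. Take $r = \frac{4\bar\varrho(t)}{\ell\lambda_t}$, which is small because $\bar\varrho$ is small. At the endpoints, $F_{ij,t}(\xi^*_{ij}\pm r)$ has sign $\pm$ and magnitude at least $(\lambda_t - Cr)r > 2\bar\varrho/\ell \ge |G_t|$. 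The intermediate value theorem therefore gives a root $\xi^*_\theta(t)\in I$. If $c_t'L_\psi < \lambda_t - Cr$, which is where ``$L_\psi$ sufficiently small'' enters, then $F_{ij,t}+G_t$ is strictly increasing on $I$. So the root is unique there, and it remains hyperbolic unstable.

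For the quantitative bound, apply the mean value theorem: $0 = F_{ij,t}(\xi^*_\theta)+G_t(\xi^*_\theta)$ gives $|\xi^*_\theta-\xi^*_{ij}|\,\min_I F'_{ij,t} \le |G_t(\xi^*_\theta)| \le 2\bar\varrho/\ell$. Then $\min_I F'\ge \lambda_t - Cr = \lambda_t(1-O(\bar\varrho))$, and expanding $1/(\lambda_t(1-O(\bar\varrho)))$ yields $\frac{2\bar\varrho}{\ell\lambda_t}+O(\bar\varrho^2)$, as claimed in \cref{eq:saddle_displacement}.

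The main obstacle is the first step. I must verify that the projected error term really scales as $\bar\varrho(t)/\ell$ in the nondimensionalized coordinate $\xi$, with the stated constant. I also need to handle the orthogonal component: $\psi$ also perturbs $\bm{w}_t$, so I must confirm that the trajectory remains in the tube, or else simply evaluate the parallel drift at the tube points where \cref{asm:bounded_score} applies. I will first carry out the change of variables carefully, with $\bm{w}_t$ treated as a fixed parameter inside the tube. The $\varepsilon$-contribution from the other $N-2$ modes can be absorbed as in \cref{prop:saddle_loc}; since the statement concerns the approximate dynamics, I will ignore it.
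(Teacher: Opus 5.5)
Your argument is correct and takes a genuinely different route from the paper's. The step you deferred resolves in your favor, though not in the way you guessed. Carrying out the change of variables of \cref{lemma:time_rescaling} on \cref{eq:perturbed_pf_ode}, the factor $\beta_t$ cancels against $du/dt=-\beta_t/(2\sigma_t^2)$, giving
\[
\frac{d\bm{y}}{du}=\hat{\bm{\mu}}(\bm{y})-\bm{y}+\frac{\sigma_t^2}{\sqrt{\bar{\alpha}_t}}\,\psi\bigl(\sqrt{\bar{\alpha}_t}\,\bm{y},t\bigr).
\]
The projected perturbation is therefore $G_t(\xi)=\frac{\sigma_t^2}{\sqrt{\bar\alpha_t}\,\ell}\langle\psi(\sqrt{\bar\alpha_t}\bm{y}(\xi),t),\bm{u}\rangle$, with a plus sign and no $\tilde\sigma_t^2$. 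By Cauchy--Schwarz, $|G_t|\le\bar\varrho(t)/\ell$ with constant $1$, not $2$. Its Lipschitz constant in $\xi$ is $\sigma_t^2L_\psi(t)$, because $d\bm{x}/d\xi=\sqrt{\bar\alpha_t}\,\ell\,\bm{u}$ cancels the prefactor. This $G_t$ is exactly the paper's $e_t$.

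Feeding it into your mean-value step gives $|\xi^*_\theta-\xi^*_{ij}|\le\bar\varrho/(\ell(\lambda_t-Cr))=\bar\varrho/(\ell\lambda_t)+O(\bar\varrho^2)$, with $C=a_t^2/4$. That is half the stated leading term, so it implies \cref{eq:saddle_displacement}. You should therefore not build the $2$ into $|G_t|$; it is not a normalization constant.

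In the paper, the $2$ comes from a different place. The paper invokes the implicit function theorem for the full perturbed drift $F_{ij,t}+e_t$ at $\xi^*_{ij}$. It lower-bounds that slope by $\lambda_t/2$ under $L_\psi(t)\le\lambda_t/(2\sigma_t^2)$, Taylor-expands, and divides $|e_t(\xi^*_{ij})|$ by $\lambda_t/2$.

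You instead bracket the root explicitly with the intermediate value theorem. You apply the mean value theorem only to the smooth unperturbed $F_{ij,t}$, whose second derivative is bounded by $a_t^2/4$ (with $a_t=\ell^2/\tilde\sigma_t^2$) in the proof of \cref{prop:true_ddim_persistence}. The perturbation enters only through $\sup|G_t|$.

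This has two consequences. First, your existence and displacement bound need only boundedness and continuity of $\psi$ on the tube; \cref{asm:lipschitz_score} is used solely for uniqueness on the bracket. Second, the existence step is explicit rather than delegated to the implicit function theorem, where $\bar\varrho$ is not really a smooth parameter, and you get the better constant. The paper's route is shorter and ties the displacement directly to the perturbed slope $\lambda_t+O(\sigma_t^2L_\psi)$, the object studied next in \cref{prop:eigenvalue_perturbation}.

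Your plan for $\bm{w}_t$, evaluating at tube points with $\bm{w}$ frozen, is fine; the paper simply takes $\bm{w}=0$ and evaluates on the segment itself.
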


\textit{Proof Sketch:} Under score error $\psi$, the perturbed parallel drift becomes $\tilde{F}_{ij,t}(\xi) = F_{ij,t}(\xi) + e_t(\xi)$, where $|e_t(\xi)| \leq \bar{\varrho}(t)/\ell$ by Cauchy-Schwarz. Applying the implicit function theorem at the unperturbed equilibrium and Taylor expanding yields the displacement bound. A full proof is provided in \cref{subsec:proof_saddle_perturbation}.

\textbf{Remark}: The displacement bound reveals a critical sensitivity: when $\lambda_t$ is small, even small score errors induce large saddle displacements. This explains why hallucination locations can be unpredictable in trained models.

\begin{proposition}[Eigenvalue Perturbation]
\label{prop:eigenvalue_perturbation}
Suppose \cref{assump:two_mode_dominance}, \cref{assump:modes_far_apart}, \cref{asm:bounded_score}, and \cref{asm:lipschitz_score} hold. The perturbed parallel dynamics have eigenvalue at $\xi^*_\theta(t)$:
\begin{equation}
    \lambda_\theta(t) = \lambda_t + \sigma_t^2 \bm{u}^\top \nabla_{\bm{x}} \psi(\sqrt{\bar{\alpha}_t}\bm{y}(\xi^*_\theta), t) \, \bm{u} + O(\bar{\varrho}(t)).
    \label{eq:eigenvalue_perturbation}
\end{equation}
In particular, if $\sigma_t^2 \bm{u}^\top \nabla_{\bm{x}} \psi(\sqrt{\bar{\alpha}_t}\bm{y}(\xi^*_\theta), t) \, \bm{u} < -\lambda_t - C\bar{\varrho}(t)$ for some constant $C > 0$, then $\lambda_\theta(t) < 0$, and the perturbed saddle becomes instantaneously stable.
\end{proposition}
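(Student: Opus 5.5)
We mirror the structure used for \cref{prop:saddle_perturbation}. Write $\bm{y}(\xi) = \bm{\mu}^{(i)} + \xi(\bm{\mu}^{(j)}-\bm{\mu}^{(i)})$ for the rescaled point on the segment, and project the perturbed drift \cref{eq:perturbed_pf_ode} onto $\bm{u}$. In the rescaled parallel coordinate of the proof of \cref{prop:true_ddim_persistence}, the perturbed drift is
\begin{equation*}
\tilde F_{ij,t}(\xi) = F_{ij,t}(\xi) + e_t(\xi), \qquad e_t(\xi) := c_t\,\big\langle \psi(\sqrt{\bar\alpha_t}\bm{y}(\xi),t),\,\bm{u}\big\rangle ,
\end{equation*}
with $c_t \propto \sigma_t^2/(\sqrt{\bar\alpha_t}\,\ell)$. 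The normalization is fixed by the bound $|e_t|\le \bar\varrho(t)/\ell$ already used in \cref{prop:saddle_perturbation}. The eigenvalue at the perturbed equilibrium is then $\lambda_\theta(t) = \tilde F'_{ij,t}(\xi^*_\theta) = F'_{ij,t}(\xi^*_\theta) + e_t'(\xi^*_\theta)$. The proof consists of expanding each of these two terms.

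\textbf{Derivative of the error term.} By the chain rule, $\frac{d}{d\xi}\bm{y}(\xi) = \ell\bm{u}$ and the argument is scaled by $\sqrt{\bar\alpha_t}$. Hence
\begin{equation*}
e_t'(\xi) = c_t\,\ell\sqrt{\bar\alpha_t}\;\bm{u}^\top \nabla_{\bm{x}}\psi\,\bm{u}.
\end{equation*}
Substituting $c_t$, the prefactor collapses to $\sigma_t^2$. This gives exactly the term $\sigma_t^2\bm{u}^\top\nabla_{\bm{x}}\psi\,\bm{u}$ in \cref{eq:eigenvalue_perturbation}. Here I use that $\psi$ is differentiable along the line, which is justified almost everywhere by \cref{asm:lipschitz_score}; the derivative is bounded by $\sigma_t^2 L_\psi$. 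The main bookkeeping risk is matching the exact constant $c_t$ to the paper's rescaling, so I would recompute it directly from the definition of $\xi_t$ in \cref{eq:simplified_parallel}.

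\textbf{Shift of the unperturbed slope.} Next I compare $F'_{ij,t}(\xi^*_\theta)$ with $F'_{ij,t}(\xi^*_{ij}) = \lambda_t$. The slope formula \cref{eq:pairwise_saddle_slope_short} gives $F'_{ij,t}(\xi) = \frac{\ell^2}{\tilde\sigma_t^2}\,r(\xi)(1-r(\xi)) - 1$, with $r$ the logistic responsibility. The map $F'_{ij,t}$ is therefore smooth, and $|F''_{ij,t}|$ is bounded on $[0,1]$ by a constant $C_t$ depending on $\ell^2/\tilde\sigma_t^2$; this is the same bound on $F''$ used in the proof of \cref{prop:true_ddim_persistence}. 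By the mean value theorem, $|F'_{ij,t}(\xi^*_\theta) - \lambda_t| \le C_t\,|\xi^*_\theta-\xi^*_{ij}|$. Applying \cref{prop:saddle_perturbation}, this is at most $2C_t\bar\varrho/(\ell\lambda_t) + O(\bar\varrho^2) = O(\bar\varrho(t))$. Combining the two pieces yields \cref{eq:eigenvalue_perturbation}, with the constant $C := 2C_t/(\ell\lambda_t)$ made explicit.

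\textbf{The ``in particular'' clause.} Write the remainder as $R$ with $|R|\le C\bar\varrho(t)$. If the Hessian term is below $-\lambda_t - C\bar\varrho(t)$, then $\lambda_\theta < \lambda_t - \lambda_t - C\bar\varrho + C\bar\varrho = 0$. A negative slope at an equilibrium of a one-dimensional flow means it is instantaneously attracting. I expect the main obstacle to be making the $O(\bar\varrho)$ constant uniform: $C_t$ grows with $\ell^2/\tilde\sigma_t^2$, and $1/\lambda_t$ blows up as $\lambda_t\to 0$. I would handle this by restricting to $t\le\hat\tau$, where \cref{assump:modes_far_apart} keeps $\lambda_t \ge \kappa\varpi - 1 > 0$, and by stating the constant as depending on $t$.
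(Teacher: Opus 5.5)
Your proposal is correct and follows the paper's proof essentially step for step. You split $\lambda_\theta = F'_{ij,t}(\xi^*_\theta) + e'_t(\xi^*_\theta)$, obtain $e'_t = \sigma_t^2\bm{u}^\top\nabla_{\bm{x}}\psi\,\bm{u}$ from the chain rule with $d\bm{y}/d\xi = \ell\bm{u}$, and control the slope shift by the mean value theorem using $\sup|F''_{ij,t}|$ together with the displacement bound of \cref{prop:saddle_perturbation}. One small fix: your explicit constant $C = 2C_t/(\ell\lambda_t)$ must be enlarged slightly (for small $\bar\varrho(t)$) to absorb the $O(\bar\varrho(t)^2)$ remainder inherited from that displacement bound, but this does not affect the argument.
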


\textit{Proof Sketch:} Differentiating the perturbed drift $\tilde{F}_{ij,t}(\xi) = F_{ij,t}(\xi) + e_t(\xi)$ and using $e'_t(\xi) = \sigma_t^2 \bm{u}^\top \nabla_{\bm{x}} \psi(\sqrt{\bar{\alpha}_t}\bm{y}(\xi^*_\theta), t) \, \bm{u} $ yields the eigenvalue perturbation. A full proof is provided in \cref{subsec:proof_eigenvalue_perturbation}.

\textbf{Remark}: 
\citet{aithal2024modeinterp} argue that hallucinations arise from score smoothing due to neural network approximation. \cref{prop:eigenvalue_perturbation} is complementary: we show that the \emph{directional} effect of score error  determines whether slowdown is amplified or mitigated. Score error can either increase or decrease hallucination propensity depending on its directional structure.

We conclude by showing that DDPM's noise mechanism provides robustness against eigenvalue perturbation.

\begin{proposition}[DDPM Robustness to Score Error]
\label{prop:ddpm_robustness}
Consider the perturbed DDPM dynamics
\begin{equation}
    dA_t = b_\theta(A_t, t) \, dt + \sqrt{2\eta(t)} \, dB_t,
    \label{eq:perturbed_ddpm}
\end{equation}
where $b_\theta(A_t, t) = b(A_t, t) + \tilde{e}_t(A_t)$ so that $|\tilde{e}_t(a)| \leq r_A(t)$ where $r_A(t) := \beta_t \varrho(t)$. 

Define $r_{A, \max} := \sup_{t \in [0, \tau_3]} r_A(t)$. Suppose the exact drift satisfies the escape condition in \cref{prop:ddpm_terminal_midpoint} with constant $\lambda_{\mathrm{rep}} > 0$. If
\begin{equation}
    r_{A, \max} < \lambda_{\mathrm{rep}} \vartheta,
    \label{eq:robustness_condition}
\end{equation}
then the perturbed drift satisfies an escape condition with constant $\tilde{\lambda}_{\mathrm{rep}} := \lambda_{\mathrm{rep}} - \frac{r_{A, \max}}{\vartheta} > 0$, and the escape bound holds with $\lambda_{\mathrm{rep}}$ replaced by $\tilde{\lambda}_{\mathrm{rep}}$ when $b_\theta$ satisfies the assumption on $K$. 
\end{proposition}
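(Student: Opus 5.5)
The plan is to reduce everything to \cref{prop:ddpm_terminal_midpoint} applied to the perturbed drift $b_\theta$. That proposition uses only two structural hypotheses on the drift, the local linear bound \cref{eq:drift_local_linear_bound_2v} and the repulsion condition \cref{eq:repulsion_assumption_merged}; its proof otherwise relies only on the diffusion coefficient $\sqrt{2\eta(t)}$, which is unchanged by the score error. The statement explicitly grants the $K$-type bound for $b_\theta$. It therefore suffices to show that $b_\theta$ satisfies \cref{eq:repulsion_assumption_merged} with $\lambda_{\mathrm{rep}}$ replaced by $\tilde{\lambda}_{\mathrm{rep}} = \lambda_{\mathrm{rep}} - r_{A,\max}/\vartheta$, and that this constant is positive.

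\textbf{Step 1 (the perturbation is uniformly bounded).} The error term enters the $A_t$ drift through the inner product with the unit vector $\bm{u}$: the SDE drift contains $-\beta_t\nabla\log p_t$, so its perturbation is $-\beta_t\langle\psi(\bm{x}_t,t),\bm{u}\rangle$. Cauchy--Schwarz together with \cref{asm:bounded_score} gives $|\tilde e_t(a)|\le \beta_t\varrho(t)=r_A(t)\le r_{A,\max}$ for all $t\in[0,\tau_3]$. This is precisely the bound posited in the statement, so I take it as given.

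\textbf{Step 2 (repulsion transfers).} For $a\ge\vartheta$,
\[
b_\theta(a,t)\le -\lambda_{\mathrm{rep}}a + r_{A,\max}.
\]
Since $a/\vartheta\ge 1$, we have $r_{A,\max}\le (r_{A,\max}/\vartheta)\,a$, and hence $b_\theta(a,t)\le -\tilde\lambda_{\mathrm{rep}}a$. For $a\le-\vartheta$, symmetrically, $b_\theta(a,t)\ge -\lambda_{\mathrm{rep}}a - r_{A,\max}$. Here $|a|/\vartheta\ge1$, so $-r_{A,\max}\ge (r_{A,\max}/\vartheta)\,a$, which gives $b_\theta(a,t)\ge -\tilde\lambda_{\mathrm{rep}}a$. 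Positivity $\tilde\lambda_{\mathrm{rep}}>0$ is exactly the condition \cref{eq:robustness_condition}.

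\textbf{Step 3 (conclude).} Invoke \cref{prop:ddpm_terminal_midpoint} with drift $b_\theta$ and constants $K$ and $\tilde\lambda_{\mathrm{rep}}$. This yields \cref{eq:ddpm_terminal_midpoint_bound} with $\lambda_{\mathrm{rep}}\mapsto\tilde\lambda_{\mathrm{rep}}$.

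\textbf{Main obstacle.} The only delicate point is checking that the proof of \cref{prop:ddpm_terminal_midpoint} does not use any property of $b$ beyond the two stated inequalities, such as the Gaussian-mixture form of the drift. If it does not, the confinement term and the exponential-supermartingale return argument carry over verbatim. A secondary subtlety is that \cref{asm:bounded_score} holds only on the tube. This is harmless because after $\tau$ the trajectory lies in $\mathrm{Tube}^{(i,j)}_{t,\varepsilon}$ by \cref{cor:epsilon_tube}, and the one-dimensional reduction to $A_t$ is made there.
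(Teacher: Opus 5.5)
Your proposal is correct and follows the paper's proof essentially step for step: you bound the perturbation uniformly by $r_{A,\max}$, absorb that constant into a linear term using $a/\vartheta \ge 1$ (resp.\ $a/\vartheta \le -1$) to obtain repulsion with $\tilde{\lambda}_{\mathrm{rep}}$, and then invoke \cref{prop:ddpm_terminal_midpoint}. The only cosmetic difference is the sign convention: the paper does the estimate after the time relabeling $\theta = T - t$, where the condition reads $b(a,t)\ge\lambda_{\mathrm{rep}}a$ for $a\ge\vartheta$, whereas you work directly in the reverse-time convention of the proposition's statement.
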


\textit{Proof:} A full proof is provided in \cref{subsec:proof_ddpm_robustness}.

\textbf{Remark}: 
DDIM also has an escape mechanism via the unstable eigenvalue $\lambda_t > 0$. However, by \cref{prop:eigenvalue_perturbation}, score error can reduce $\lambda_t$ or make it negative, converting the saddle into a stable equilibrium. In contrast, DDPM's escape mechanism is additive (noise-driven) rather than relying solely on eigenvalue structure, providing robustness even when score error reduces $\lambda_t$.

\section{Interpolation on Mode Diagonals}\label{section:diagonals}

In what follows, we provide a mathematical justification for why trajectories rarely mode interpolate on diagonals across grids:

\begin{proposition}[Diagonal Avoidance on a Grid]
\label{prop:diagonal_avoidance}
Suppose \cref{assump:two_mode_dominance} holds with the same notation as above, with $\pi_i = \pi_j$, and assume the true modes lie on an axis-aligned rectangular grid. Then, if the $(i,j)$ in \cref{assump:two_mode_dominance} corresponds to a diagonal pair of some grid cell, then for every $t \leq \tau_1$: 

\begin{equation}
     \min\{\tilde{\gamma}_{i,t}, \tilde{\gamma}_{j,t}\} \leq \exp(-\kappa), 
     \label{eq:min_resp}
\end{equation} 
where $\tilde{\gamma}_{i,t}$, $\tilde{\gamma}_{j,t}$ denotes the rescaled responsibilities of modes $i, j$ per \cref{lemma:time_rescaling}. 
\end{proposition}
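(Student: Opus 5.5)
We prove the claim by a direct geometric argument on the grid, combined with the definition of the rescaled responsibilities. Per \cref{lemma:time_rescaling}, the rescaled responsibilities take the softmax form
\begin{equation*}
\tilde{\gamma}_{k,t} = \frac{\exp\!\big(-\|\bm{y}_t-\bm{\mu}^{(k)}\|_2^2/(2\tilde{\sigma}_t^2)\big)}{\sum_{m}\exp\!\big(-\|\bm{y}_t-\bm{\mu}^{(m)}\|_2^2/(2\tilde{\sigma}_t^2)\big)},
\end{equation*}
where $\bm{y}_t = \bm{x}_t/\sqrt{\bar{\alpha}_t}$ and the weights are equal. Rescaling by $\sqrt{\bar{\alpha}_t}$ turns the margin condition of \cref{assump:two_mode_dominance} into a margin $\Delta_t/\bar{\alpha}_t \ge 2\tilde{\sigma}_t^2\kappa$ in $\bm{y}$-coordinates. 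Any mode $k\notin\{i,j\}$ therefore satisfies
\begin{equation*}
\frac{\tilde{\gamma}_{k,t}}{\max(\tilde{\gamma}_{i,t},\tilde{\gamma}_{j,t})} \le e^{-\kappa}.
\end{equation*}
The rest of the plan is to show that, for a diagonal pair, one of the other two corners of the same grid cell always beats the farther of $i$ and $j$ by at least the same margin. This forces $\min\{\tilde{\gamma}_{i,t},\tilde{\gamma}_{j,t}\}$ to be at most $e^{-\kappa}$.

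\textbf{The key geometric step.} Let $\bm{\mu}^{(i)}=(a_1,a_2)$ and $\bm{\mu}^{(j)}=(b_1,b_2)$ be opposite corners of a cell; in higher dimensions we restrict to the 2D plane of the cell, since all four corners share the remaining coordinates. The other two corners are $\bm{\mu}^{(k)}=(a_1,b_2)$ and $\bm{\mu}^{(k')}=(b_1,a_2)$. Write $f_m(y)=(y_m-a_m)^2$ and $g_m(y)=(y_m-b_m)^2$. Then
\begin{align*}
\|\bm{y}-\bm{\mu}^{(i)}\|^2 &= f_1+f_2, & \|\bm{y}-\bm{\mu}^{(j)}\|^2 &= g_1+g_2,\\
\|\bm{y}-\bm{\mu}^{(k)}\|^2 &= f_1+g_2, & \|\bm{y}-\bm{\mu}^{(k')}\|^2 &= g_1+f_2.
\end{align*}
Hence $\min(d_k,d_{k'}) \le \min(f_1,g_1)+\min(f_2,g_2)$, which is at most $\min(d_i,d_j)$. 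In addition,
\begin{equation*}
d_k+d_{k'} = d_i+d_j \le 2\max(d_i,d_j),
\end{equation*}
so the better of $k,k'$ lies no farther than the worse of $i,j$.

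\textbf{Combining with the assumption.} The assumption requires every $k\notin\{i,j\}$ to satisfy $d_k \ge \min(d_i,d_j)+2\tilde{\sigma}_t^2\kappa$, and the grid corners $k,k'$ are indeed distinct from $i,j$. For the better corner we then have
\begin{equation*}
\min(d_i,d_j)+2\tilde{\sigma}_t^2\kappa \le \min(d_k,d_{k'}) \le \min(d_i,d_j),
\end{equation*}
which is impossible. So the regime in which a diagonal pair is selected is, strictly, vacuous. The paper's intended reading appears to be quantitative: a diagonal pair can only be dominant when one endpoint is negligible. The conclusion then follows from
\begin{equation*}
\tilde{\gamma}_{\text{worse of } i,j} \le \frac{\tilde{\gamma}_{\text{worse}}}{\tilde{\gamma}_{\text{worse}}+\tilde{\gamma}_{k}}
\end{equation*}
together with a comparison of $d_{\text{worse}}$ to $d_k$.

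\textbf{Main obstacle.} The hard step is extracting exactly $e^{-\kappa}$ rather than just "contradiction". The plan is to treat the case $\min(d_i,d_j)=d_i$ and use $d_{k}-d_i = g_2-f_2$ and $d_{k'}-d_i = g_1-f_1$. The assumption forces both differences to be at least $2\tilde{\sigma}_t^2\kappa$, so
\begin{equation*}
d_j-d_i = (g_1-f_1)+(g_2-f_2) \ge 4\tilde{\sigma}_t^2\kappa.
\end{equation*}
This gives
\begin{equation*}
\tilde{\gamma}_{j,t} \le \frac{\tilde{\gamma}_{j,t}}{\tilde{\gamma}_{i,t}} = \exp\!\big(-(d_j-d_i)/(2\tilde{\sigma}_t^2)\big) \le e^{-2\kappa} \le e^{-\kappa}.
\end{equation*}
This is the same additive decomposition, used constructively, and it yields \cref{eq:min_resp} regardless of whether the assumption is vacuous.
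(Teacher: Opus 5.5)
Your final paragraph is a correct proof and is essentially the paper's argument. The paper uses the rectangle identity $d_i^2+d_j^2=d_k^2+d_{k'}^2$ and bounds $\min\{d_k^2,d_{k'}^2\}$ by the average, which gives $\Delta_t\le\tfrac12|d_i^2-d_j^2|$. It then turns this gap into a log-odds bound, $\min\{\tilde\gamma_{i,t},\tilde\gamma_{j,t}\}\le C_\pi e^{-2\kappa}$. Your decomposition $d_j-d_i=(g_1-f_1)+(g_2-f_2)\ge 4\tilde\sigma_t^2\kappa$ is the same identity, applied to both off-diagonal corners. It is followed by the same ratio bound $\tilde\gamma_{j,t}\le\tilde\gamma_{j,t}/\tilde\gamma_{i,t}=e^{-(d_j-d_i)/(2\tilde\sigma_t^2)}$, which correctly needs only $\pi_i=\pi_j$.

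The paragraph you call the key geometric step, however, is wrong and should be deleted. The inequality $\min(d_k,d_{k'})\le\min(f_1,g_1)+\min(f_2,g_2)$ fails. The right-hand side equals $\min\{d_i,d_j,d_k,d_{k'}\}$ (it is the minimum over the four coordinate choices), so the inequality runs the other way. For example, at $\bm y_t=\bm\mu^{(i)}$ the right-hand side is $0$, while $\min(d_k,d_{k'})$ is the smaller squared side length of the cell.

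Consequently the hypothesis is \emph{not} vacuous. At $\bm y_t=\bm\mu^{(i)}$ we have $\min(d_i,d_j)=0$. So the diagonal pair $(i,j)$ satisfies \cref{assump:two_mode_dominance} as soon as every other mode lies at squared distance at least $2\tilde\sigma_t^2\kappa$ from $\bm y_t$ (grid spacing large relative to $\tilde\sigma_t$). This is exactly the situation the proposition and the remark after \cref{assump:two_mode_dominance} describe. A diagonal pair can be ``dominant'' only because $\bm y_t$ sits near one endpoint, so the other endpoint's responsibility is exponentially small.

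Your closing argument never uses the vacuity claim or the claimed contradiction, so the proof stands once that paragraph is removed.
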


\textit{Proof Sketch.} We can reduce the problem to a single rectangular cell using the implications of \cref{lemma:par_orth_dynamics}. Suppose, for contradiction, that the dominant pair $(i,j)$ corresponds to a diagonal of this cell. Then, using the geometric identity for a rectangle together with \cref{assump:two_mode_dominance}, we obtain an upper bound on $\rho$ of order $\mathcal{O}(e^{-\kappa})$. A full proof is given in \cref{subsec:proof_diagonal_impossible}.

\textbf{Discussion}: This proposition is purely geometric, and does not rely on the parallel dynamics. The responsibility of a mode is its probability density proportional to that of all other modes; in particular, if the minimum of $\tilde{\gamma}_{i,t}$, $\tilde{\gamma}_{j,t}$ is exponentially small, then the trajectory must be very close to the other mode. In particular, by \cref{corollary:exponential_tracking_stability}, the trajectory \textit{must converge to the other mode}, since it is confined to Tube$_{t, \varepsilon}^{(i,j)}$. For example, if mode $i$'s responsibility is small, then the trajectory must converge to mode $j$.

\section{Additional Experiments}\label{sec:addtnl_exp}

\subsection{Verifying Assumptions: \cref{assump:two_mode_dominance} and \cref{assump:modes_far_apart}} \label{sec:validating_assump}

In \cref{fig:ddim_ddpm_asms}, for the Gaussian dataset with $\varpi = 2$, we provide results demonstrating that for $\kappa >> 1$, $\tau_1$ and $\tau_2$ exist and are valid. In \cref{sec:higher_dim_exp}, we demonstrate that this holds similarly for higher dimensions. Hence, \cref{assump:two_mode_dominance} and \cref{assump:modes_far_apart} hold for $\kappa >> 1$  Note that, consistently, $\tau_2 < \tau_1$; in \cref{fig:ddim_ddpm_exp}, we show that $\tau_2$ is well before the time that the instance hits the $\varepsilon$-tube, as predicted by \cref{thm:core_gaussian_mixture}. Additionally, in $u$-time per \cref{lemma:time_rescaling}, \cref{assump:two_mode_dominance} and \cref{assump:modes_far_apart} hold similarly. %

\begin{figure}[t]
    \centering
    \begin{subfigure}{0.48\columnwidth}
        \centering
        \includegraphics[width=\linewidth]{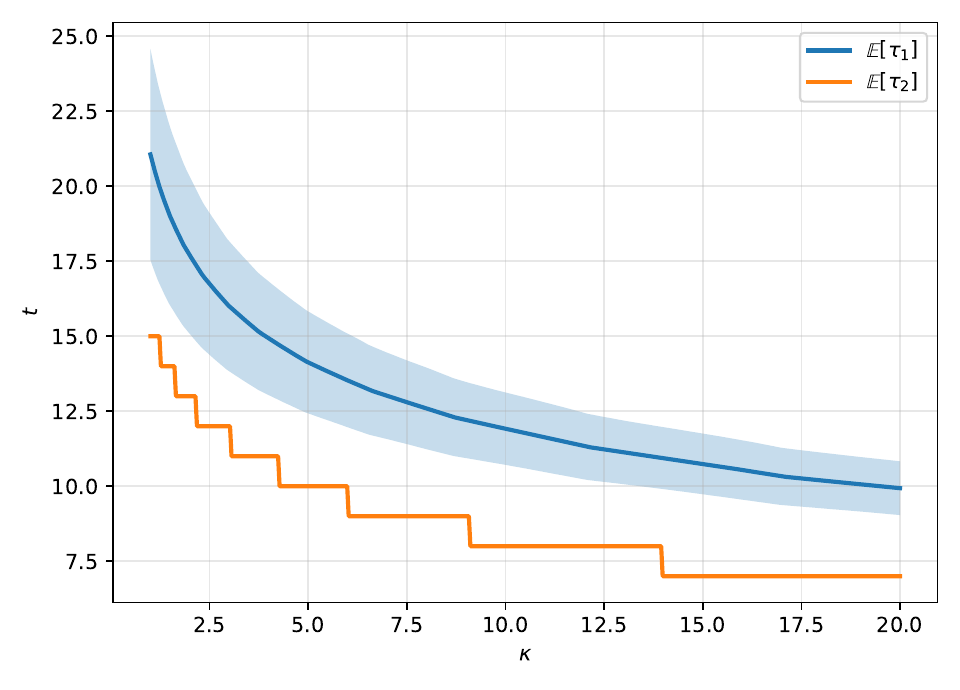}
        \caption{DDIM $\tau_1$ and $\tau_2$ vs. $\kappa$.}
        \label{fig:ddim_asms}
    \end{subfigure}
    \hfill
    \begin{subfigure}{0.48\columnwidth}
        \centering
        \includegraphics[width=\linewidth]{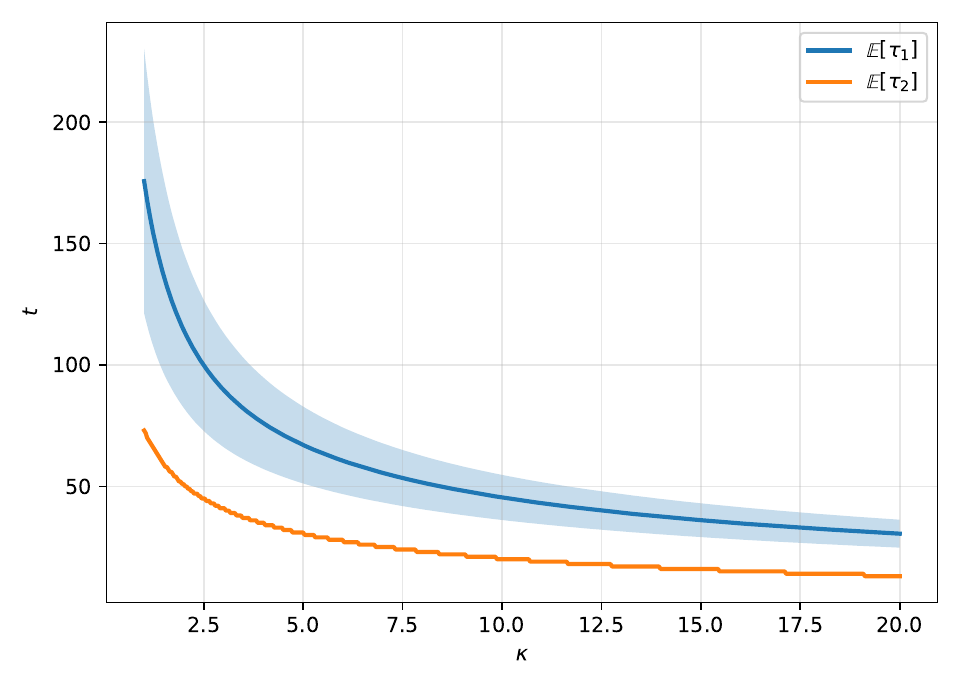}
        \caption{DDPM $\tau_1$ and $\tau_2$ vs. $\kappa$.}
        \label{fig:ddpm_asms}
    \end{subfigure}

    \caption{For both DDIM (\cref{fig:ddim_asms}) and DDPM (\cref{fig:ddpm_asms}), we plot how $\tau_1$ and $\tau_2$ change vs. $\kappa$ over 100,000 sampled trajectories, across 500 equidistant $\kappa$ increments ranging from 1 to 20. As expected, for large $\kappa$, \cref{assump:two_mode_dominance,assump:modes_far_apart} hold; this yields that $\varepsilon$ is small as well. Furthermore, as expected, as $\kappa$ increases, both of these decrease. Note that $\tau_2$ is the same across all trajectories, since $\ell^2$ is the same across all mode $i, j$ pairs, so the standard deviation is 0. Furthermore, $\mathbb{E}[\tau_2]$ is much smoother for DDPM than DDIM, since DDPM has many more sampled timesteps to evaluate $\tau_2$ at; however, since $\mathbb{E}[\tau_1]$ average 100,000 different $\tau_1$, it is smooth across DDPM and DDIM.}
    \label{fig:ddim_ddpm_asms}
\end{figure}

\subsection{Higher Dimensional Gaussian Mixtures}\label{sec:higher_dim_exp}
Throughout, we use $25$ modes and furthest point sampling, keeping $\ell$ roughly the same as in the $\varpi = 2$ case. Notably, however, some modes are slightly more or slightly less apart than $\ell$, hence adding uncertainty to $\tau_2$ across trajectories as well. In \cref{fig:tau_1_dim,fig:tau_2_dim}, we validate \cref{assump:two_mode_dominance,assump:modes_far_apart} for $\varpi = 2, 4, 8, 16, 32, 64$. We find that, as dimension increases, $\tau_1$ becomes larger (i.e., nearer to the beginning of the reverse process) and $\tau_2$ becomes smaller (i.e., nearer to the end of the reverse process) for both DDIM and DDPM, hence corresponding to different $\kappa$ (and thus different $\varepsilon$). Still, our two core assumptions still hold for large $\kappa$. In \cref{fig:exp_dim}, we valuate \cref{thm:core_gaussian_mixture} for both DDIM and DDPM across the same $\varpi$ increments. For both DDIM and DDPM, we find that exponential convergence still holds. We also find that $\varepsilon$ increases for DDIM as the dimension increases. Furthermore, $\varepsilon$ increases for DDPM as well, albeit at a much slower rate.

\begin{figure}[t]
    \centering
    \begin{subfigure}{0.48\columnwidth}
        \centering
        \includegraphics[width=\linewidth]{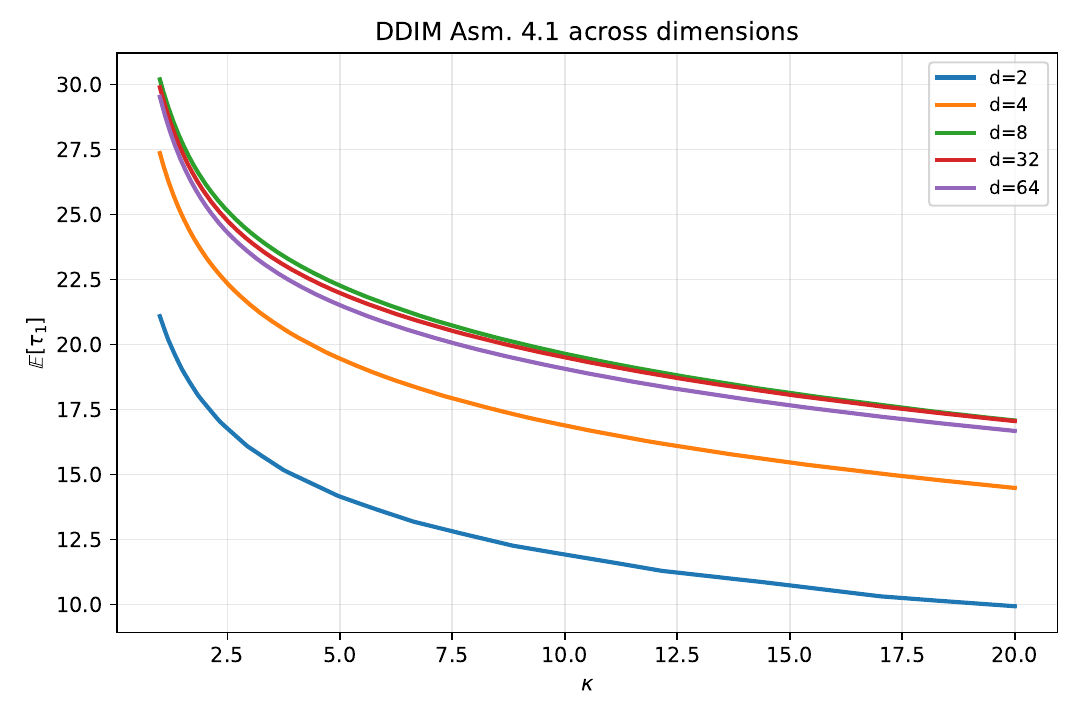}
        \caption{DDIM}
        \label{fig:ddim_tau_1_dim}
    \end{subfigure}
    \hfill
    \begin{subfigure}{0.48\columnwidth}
        \centering
        \includegraphics[width=\linewidth]{asm_4_1_tau1_ddim.pdf}
        \caption{DDPM}
        \label{fig:ddpm_tau_1_dim}
    \end{subfigure}
    \caption{$\tau_1$ vs. $\kappa$ for varying dimensions. We find that $\tau_1$ increases as dimension increases across both DDIM and DDPM.} 
    \label{fig:tau_1_dim}
\end{figure}

\begin{figure}[t]
    \centering
    \begin{subfigure}{0.48\columnwidth}
        \centering
        \includegraphics[width=\linewidth]{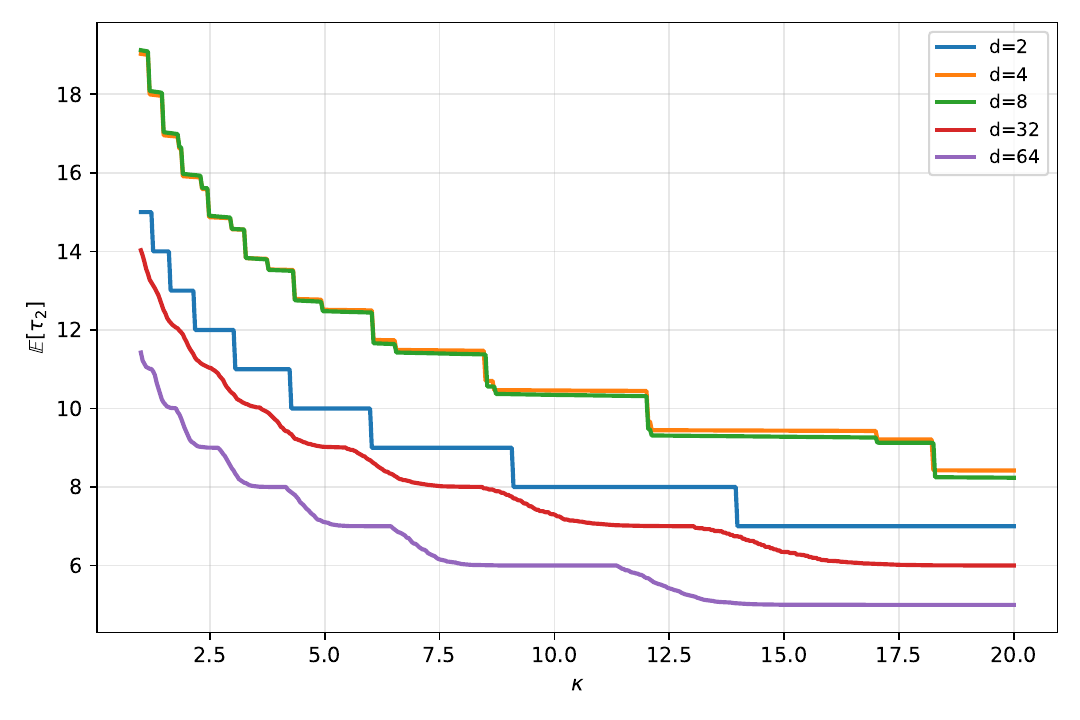}
        \caption{DDIM}
        \label{fig:ddim_tau_2_dim}
    \end{subfigure}
    \hfill
    \begin{subfigure}{0.48\columnwidth}
        \centering
        \includegraphics[width=\linewidth]{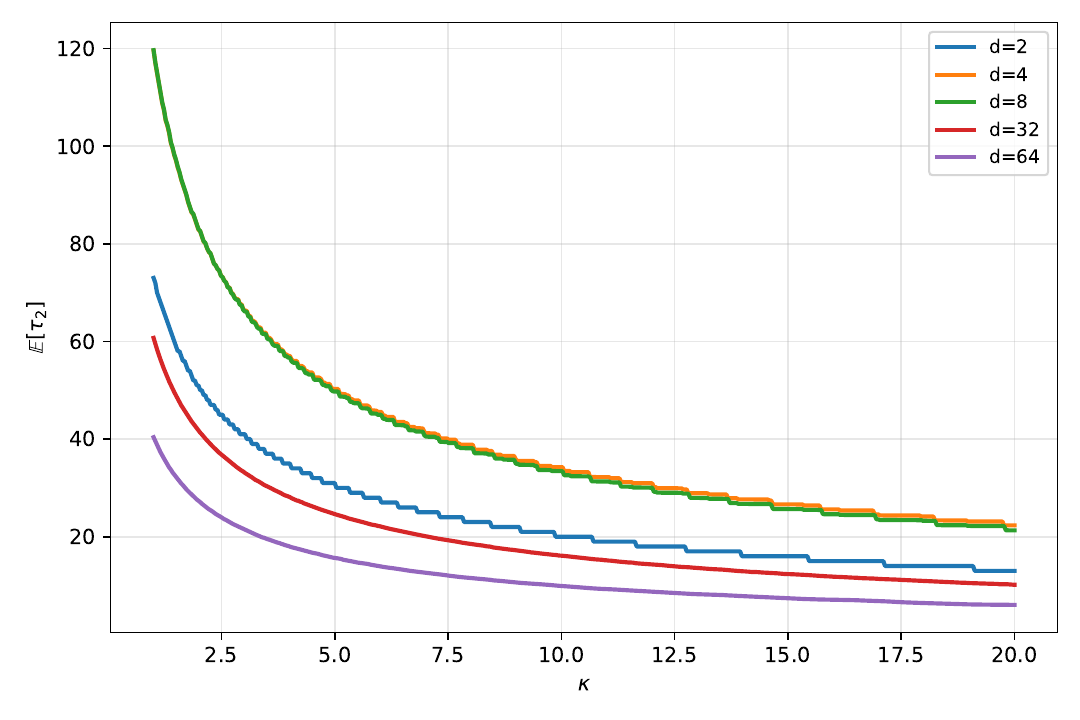}
        \caption{DDPM}
        \label{fig:ddpm_tau_2_dim}
    \end{subfigure}
    \caption{$\tau_2$ vs. $\kappa$ for varying dimensions. We find that $\tau_2$ decreases as dimension increases across both DDIM and DDPM.} 
    \label{fig:tau_2_dim}
\end{figure}

\begin{figure}[t]
    \centering
    \begin{subfigure}{0.48\columnwidth}
        \centering
        \includegraphics[width=\linewidth]{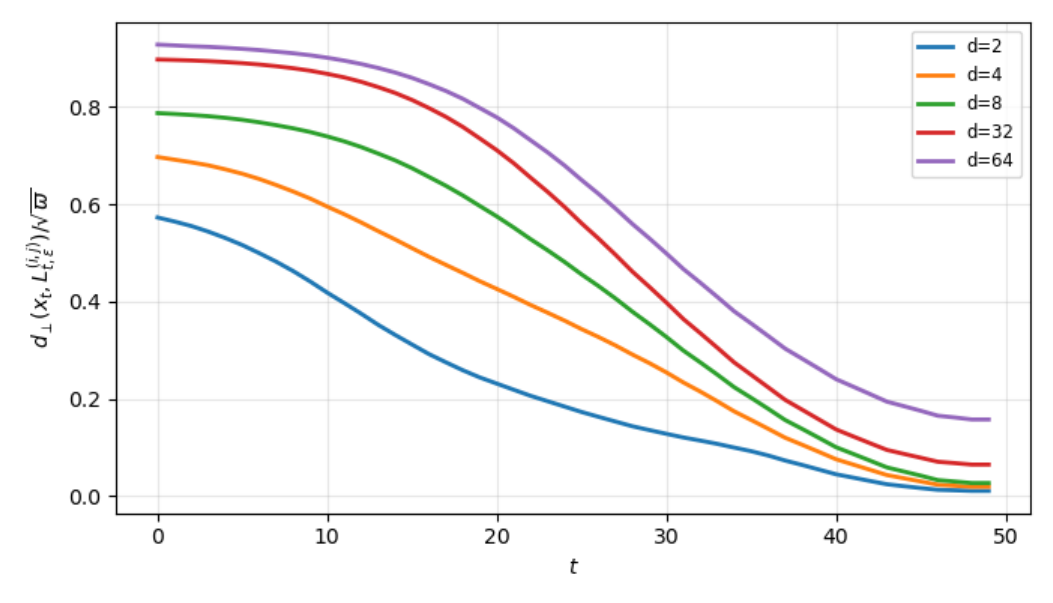}
        \caption{DDIM}
        \label{fig:ddim_exp_dim}
    \end{subfigure}
    \hfill
    \begin{subfigure}{0.48\columnwidth}
        \centering
        \includegraphics[width=\linewidth]{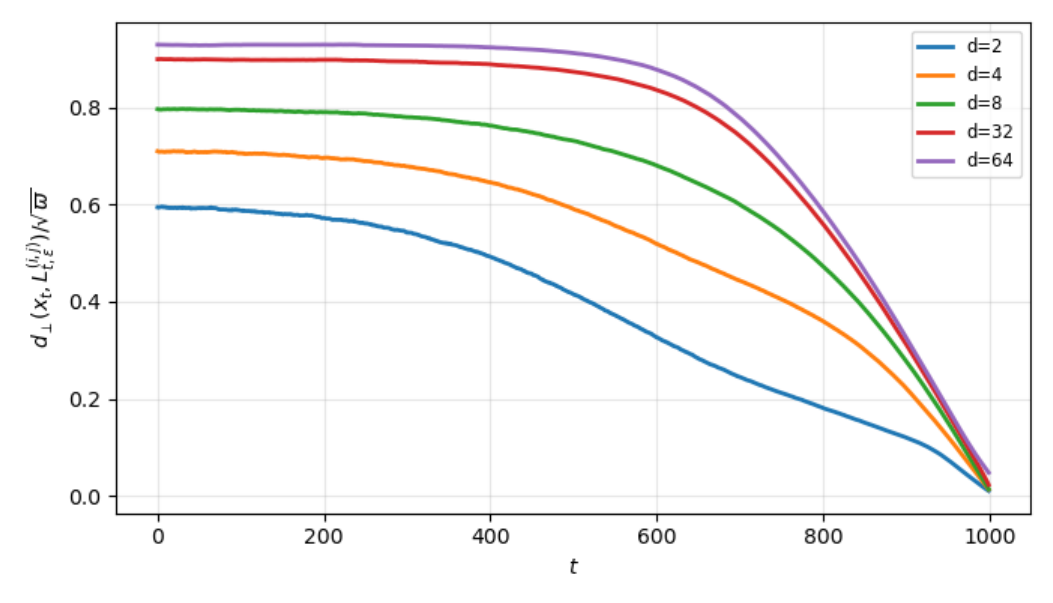}
        \caption{DDPM}
        \label{fig:ddpm_exp_dim}
    \end{subfigure}
    \caption{Orthogonal distance to $\varepsilon$-extended $i, j$-mode segment for DDIM. We find that, as dimension increases, $\varepsilon$ increases as well. Furthermore, the difference in final distances for DDIM and DDPM become more exacerbated as dimension increases.} 
    \label{fig:exp_dim}
\end{figure}

\subsection{DDIM Hallucinates More than DDPM For Images}\label{sec:ddim-ddpm-image}

In \cref{fig:ddpm-ddim-image}, we find that for a UNet denoiser pretrained on our 3 triangle dataset described in \cref{sec:exp_details}, across various choices of training epochs, DDIM consistently hallucinates more than DDPM. 

\begin{figure}[t]
    \centering
    \includegraphics[width=0.8\linewidth]{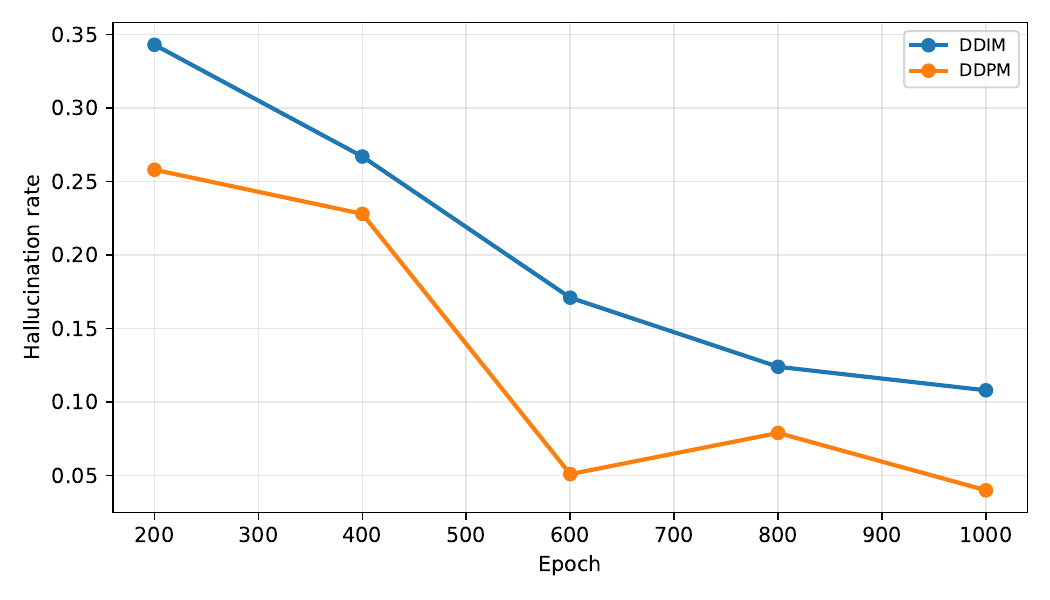}
    \caption{We compare the image hallucination rate of DDIM and DDPM for 1000 epochs of training using a UNet architecture on the 3 triangle dataset. The evaluation is conducted on 1,000 samples uniformly sampled from $\mathcal{N}(0, 1)$ Note that the hallucination rate of DDPM is consistently lower than DDIM in this case. }
    \label{fig:ddpm-ddim-image}
\end{figure}

\subsection{Ablation on $\tau_3$}\label{sec:tau_3_ablation}

In \cref{fig:grid_tau_3}, we find that as $\tau_3$ increases across samplers, hallucination rate drops, as expected per \cref{prop:true_ddim_persistence}. However, note that a large $\tau_3$ is unrealistic; empirically, as demonstrated in \cref{fig:ddim_ddpm_exp}, $\tau_3$ is less than 15. Note that $\tau_3$ is written in DDIM steps systematically. In the case of DDPM, we omit the higher $\tau_3$ levels since this is very early in the reverse process in DDPM steps, resulting in divergence due to the DDPM noise term. 

\begin{figure}[t]
\centering

\begin{subfigure}[b]{0.48\linewidth}
  \centering
  \includegraphics[width=\linewidth]{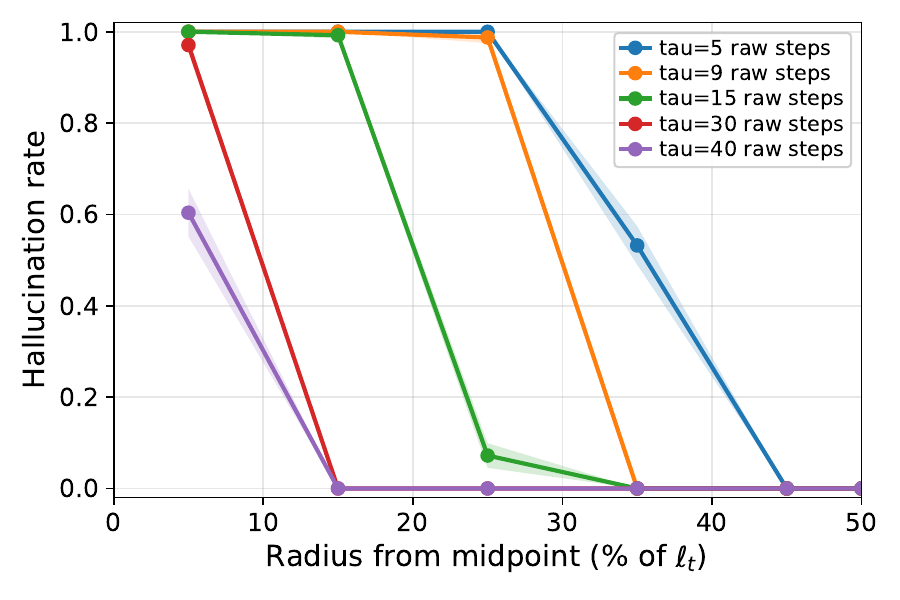}
  \caption{DDIM}
\end{subfigure}\hfill
\begin{subfigure}[b]{0.48\linewidth}
  \centering
  \includegraphics[width=\linewidth]{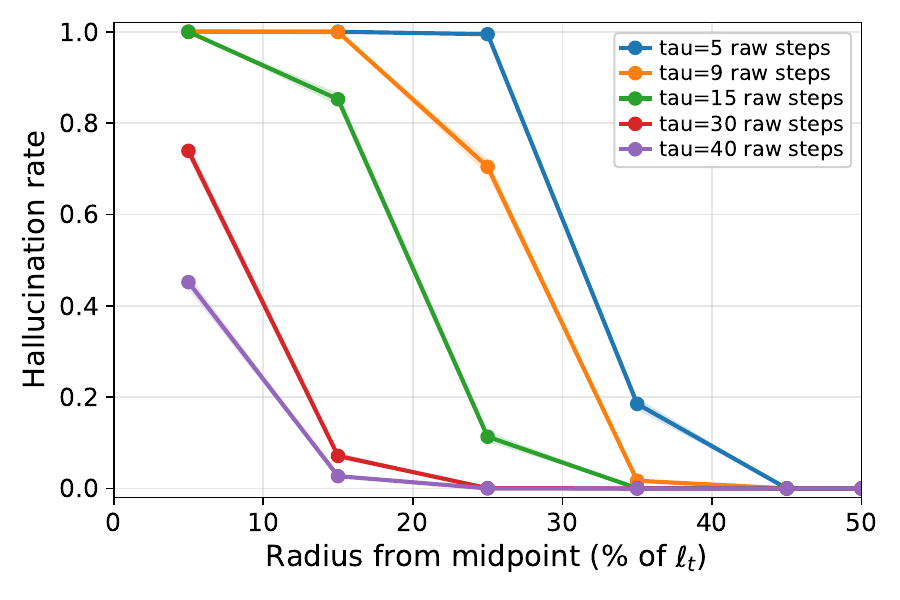}
  \caption{DDIM + $z = 2$ DDPM}
\end{subfigure}

\vspace{0.5em}

\begin{subfigure}[b]{0.48\linewidth}
  \centering
  \includegraphics[width=\linewidth]{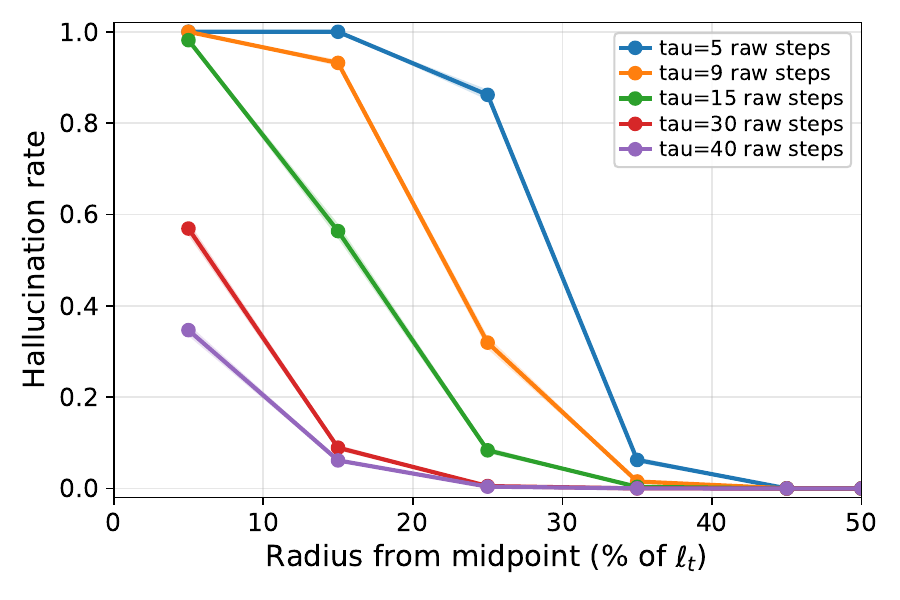}
  \caption{DDIM + $z = 5$ DDPM}
\end{subfigure}\hfill
\begin{subfigure}[b]{0.48\linewidth}
  \centering
  \includegraphics[width=\linewidth]{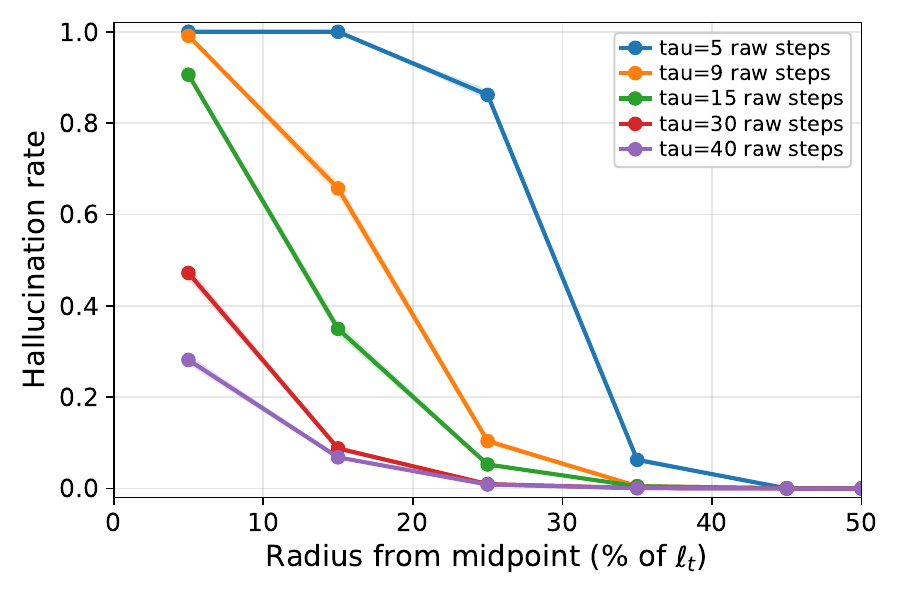}
  \caption{DDIM + $z = 8$ DDPM}
\end{subfigure}

\vspace{0.5em}

\begin{subfigure}[b]{0.48\linewidth} %
  \centering
  \includegraphics[width=\linewidth]{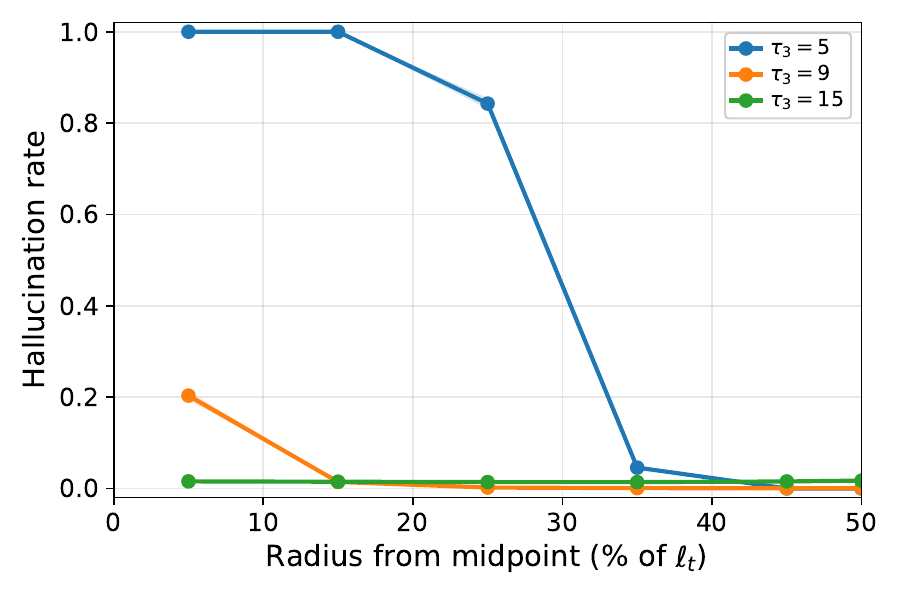}
  \caption{DDPM}
\end{subfigure}

\caption{Ablation on $\tau_3$ across various DDIM-DDPM hybrid samplers, starting from the midpoint neighborhood and evaluating hallucination rate. Uncertainty is quantified by the standard error across mode pairs.}
\label{fig:grid_tau_3}
\end{figure}

\subsection{Jacobian Eigenvalues}\label{sec:neg_eig}

In \cref{fig:2d_eigs}, we show that there exists a positive ($\lambda_t$) and negative eigenvalue. These two eigenvalues are consistently positive (negative), demonstrating that the midpoint demonstrating that the midpoint is indeed a saddle.  Thus, we can conclude that score error and studying the approximate parallel dynamics (rather than the exact parallel dynamics) only has a small effect on the position of the slowdown point. 

\begin{figure}
    \centering
    \includegraphics[width=0.5\columnwidth]{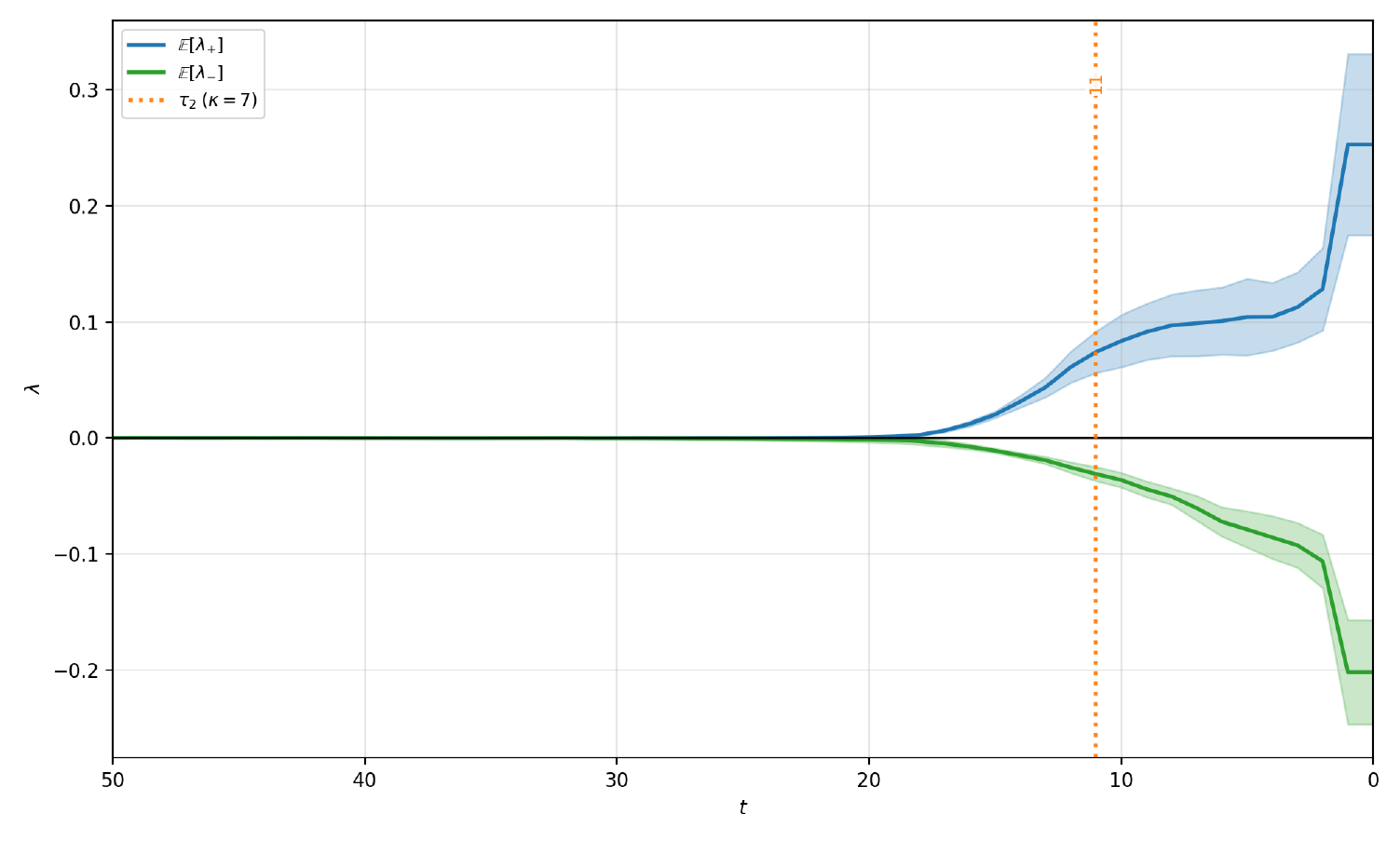}
    \caption{Average DDIM Jacobian eigenvalues evaluated at the midpoint joining mode segments, $\lambda_t$, grouped into positive (blue) and negative (green) groups, during the reverse process. A vertical dotted orange line represents $\tau_2$. We find that there exists both a positive and corresponding negative eigenvalue after time $\tau_2$. Notably, there were no eigenvalues excluded from this plot; all midpoints had one positive eigenvalue and one negative eigenvalue. In particular, the midpoint remains a hyperbolic saddle equilibria with appropriate eigenvalues. This indicates that despite score error and our study of the approximate dynamics, the slowdown region does not greatly shift for a well-trained diffusion model. Finally, note that the Jacobian eigenvalues are identical (up to numerical error) for DDIM and DDPM, since \cref{eq:full_pf_ode} and \cref{eq:anderson_reverse_sde} have the same Jacobian.}
    \label{fig:2d_eigs}
\end{figure}

\subsection{Varying Noise Level $\eta$ in DDIM}\label{sec:vary_noise_level}

\begin{figure}[t]
    \centering
    \begin{subfigure}{0.48\columnwidth}
        \centering
        \includegraphics[width=\linewidth]{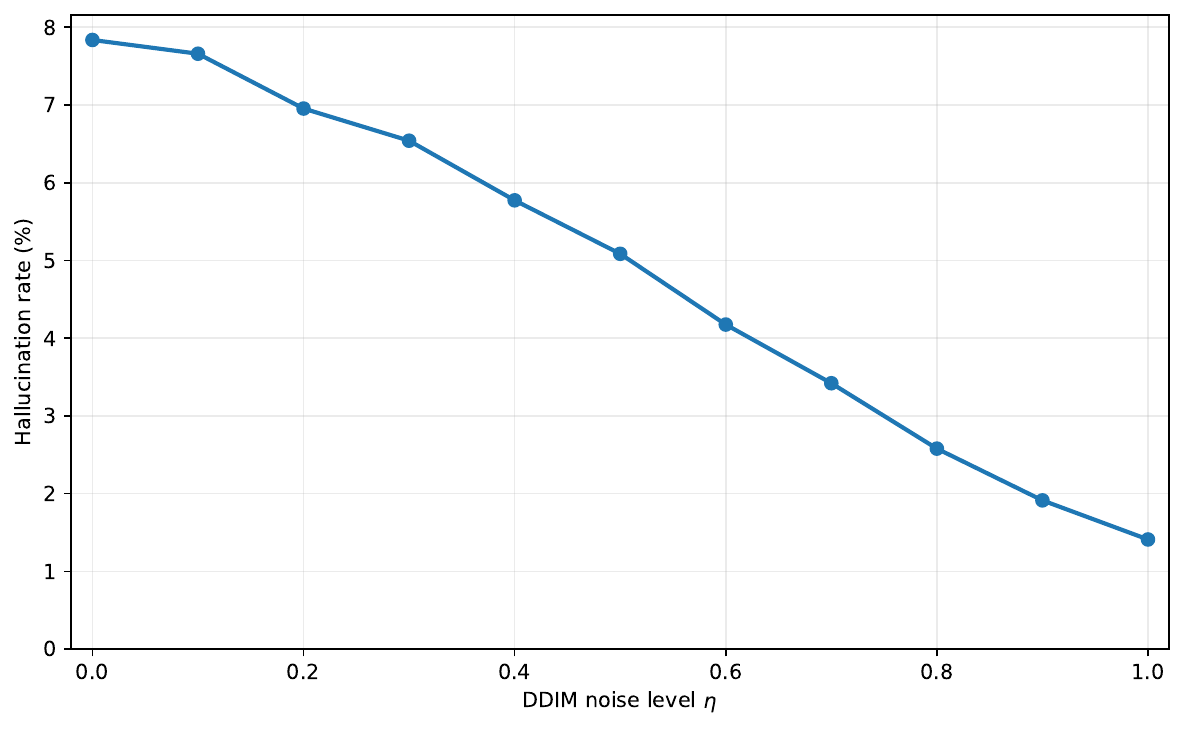}
        \caption{Change in hallucination rate while varying DDIM noise level $\eta$}
        \label{fig:ddim_eta_hallucs}
    \end{subfigure}
    \hfill
    \begin{subfigure}{0.48\columnwidth}
        \centering
        \includegraphics[width=\linewidth]{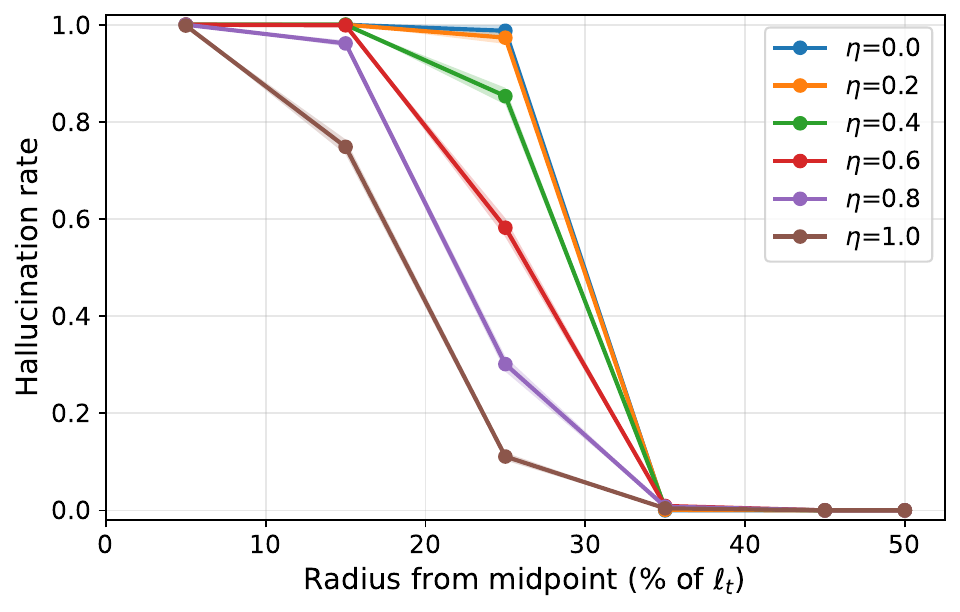}
        \caption{Change in midpoint neighborhood escape while varying DDIM noise level $\eta$}
        \label{fig:ddim_eta_midpt}
    \end{subfigure}
    \caption{Increasing DDIM noise level $\eta$ decreases mode interpolation hallucination rate (\cref{fig:ddim_eta_hallucs}) while helping DDIM trajectories escape the midpoint trapping region (\cref{fig:ddim_eta_midpt}), demonstrating that SDE noise helps with reducing hallucinations, in line with our theoretical results.} 
    \label{fig:ddim_eta_vary}
\end{figure}

In \cref{fig:ddim_eta_vary}, we provide a first characterization of how different noise levels $\eta$ \citep{song2020ddim} applied to the DDIM steps affect hallucination rate and escape from the midpoint region (contrasting to adding additional DDPM steps in \cref{fig:ddim_ddpm_hall_with_radius}). In line with our results in noise being beneficial for reducing hallucinations, we find that the highest level of noise results in the lowest amount of mode interpolations. We also provide an analogue of \cref{fig:ddim_ddpm_hall_with_radius} for this setting, demonstrating that adding noise to DDIM helps escape the midpoint neighborhood, in line with our result in \cref{prop:ddpm_terminal_midpoint}. 

\subsection{Verifying Assumptions: \cref{prop:ddpm_terminal_midpoint}}\label{section:verif_ddpm_term}

\begin{figure}[t]
    \centering
    \begin{subfigure}{0.48\columnwidth}
        \centering
        \includegraphics[width=\linewidth]{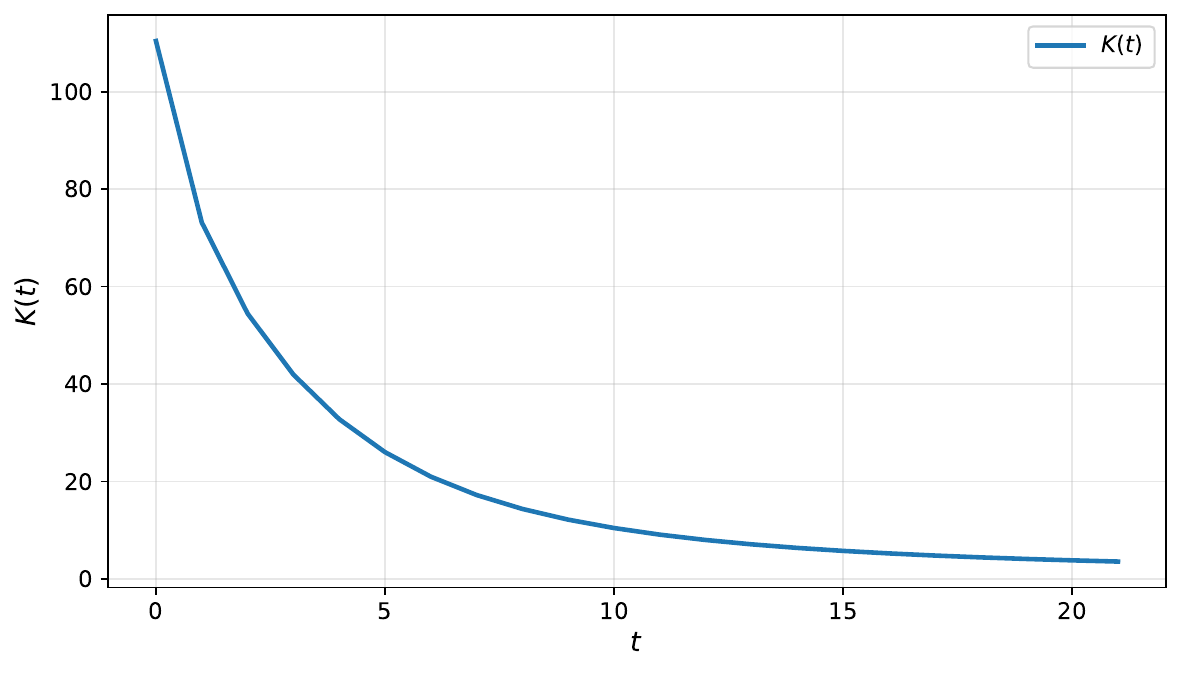}
        \caption{$K(t)$ in \cref{prop:ddpm_terminal_midpoint}}
        \label{fig:K_worst}
    \end{subfigure}
    \hfill
    \begin{subfigure}{0.48\columnwidth}
        \centering
        \includegraphics[width=\linewidth]{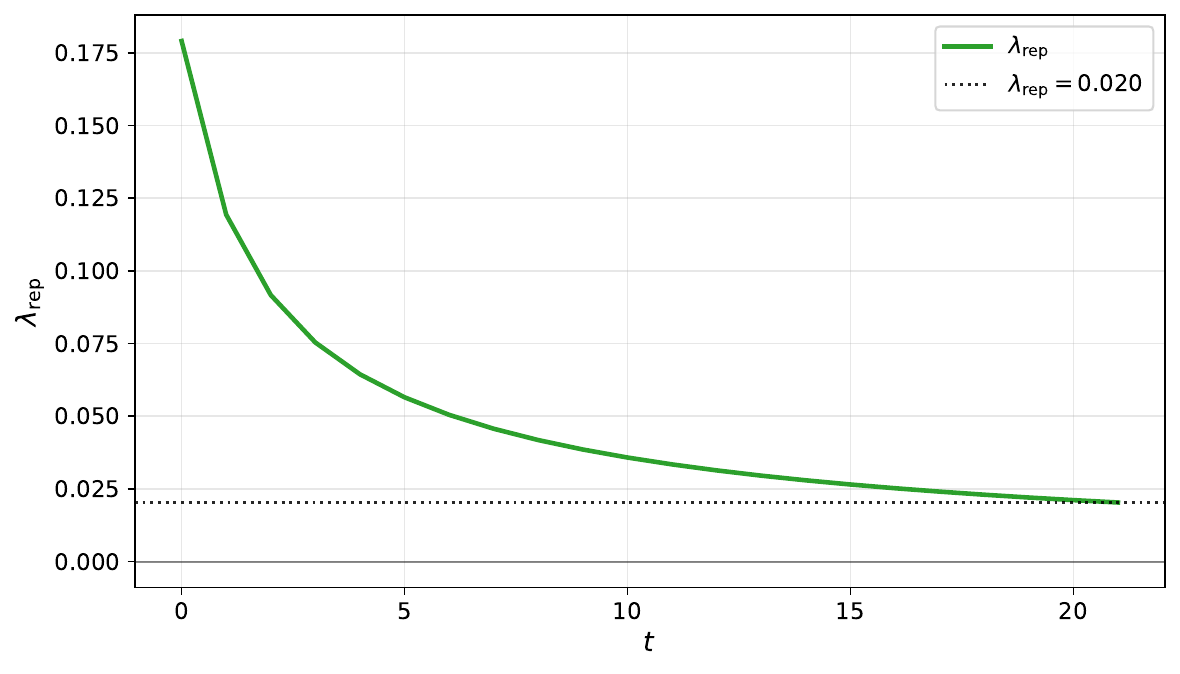}
        \caption{$\lambda_{\mathrm{rep}}$ in \cref{prop:ddpm_terminal_midpoint}}
        \label{fig:lambda_rep_worst}
    \end{subfigure}
    \caption{$K(t)$ (\cref{fig:K_worst}) and $\lambda_{\mathrm{rep}}$ (\cref{fig:lambda_rep_worst}) computed across a total of 400k samples (10k samples per line segment) under the exact score. We find that $K(t)$ exists and $\lambda_{\mathrm{rep}} > 0$ as desired. Details on how to compute these are included in \cref{sec:exp_details}.  } 
    \label{fig:ddpm_prop_asms}
\end{figure}

In \cref{fig:ddpm_prop_asms}, we verify the assumptions on $b(a,t)$ in \cref{prop:ddpm_terminal_midpoint} across 400k samples (10k per each line segment). We find that both assumptions hold, using the exact score presented in our \cref{prop:ddpm_terminal_midpoint}, across all 40 line segments for a total of 400,000 samples. Details on how $b(a,t), \; K(t), $ and $\lambda_{\mathrm{rep}}$ are computed are provided in \cref{sec:exp_details}. Note that $\tau_3 = 9$ in DDIM steps corresponds to $\tau_3 = 21$ in DDPM steps.

\subsection{Midpoint Trapping under the Exact Score}

\begin{figure}
    \centering
    \includegraphics[width=0.75\linewidth]{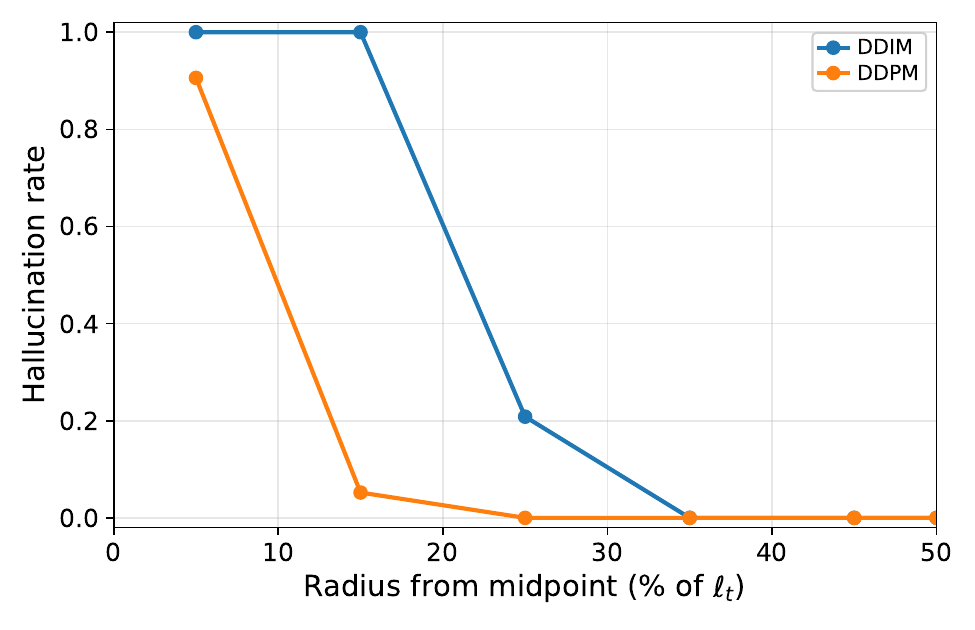}
    \caption{Exact score analogue of \cref{fig:ddim_ddpm_hall_with_radius}. We find that at $\vartheta = 0.15\ell_t$ and $\tau_3 = 3$, DDIM gets stuck in the midpoint neighborhood while DDPM escapes, confirming our theoretical results in \cref{prop:true_ddim_persistence} and \cref{prop:ddpm_terminal_midpoint}.}
    \label{fig:exact_score_hall_with_radis}
\end{figure}

In \cref{fig:exact_score_hall_with_radis}, we provide an analogue of \cref{fig:ddim_ddpm_hall_with_radius} in the exact score case to demonstrate that the midpoint trapping mechanism we extract theoretically in \cref{prop:true_ddim_persistence} and \cref{prop:ddpm_terminal_midpoint} holds, as expected, under the exact score. As discussed in \cref{section:exact_score_use}, this then becomes useful and predictive in the learned score case as demonstrated by \cref{fig:ddim_ddpm_hall_with_radius} and \cref{tab:learned_score_hallucination_decomposition}. 

\subsection{Hallucination Rate Ablations on $T$, $\sigma$, and $\ell$}

\begin{figure}
    \centering
    \includegraphics[width=0.75\linewidth]{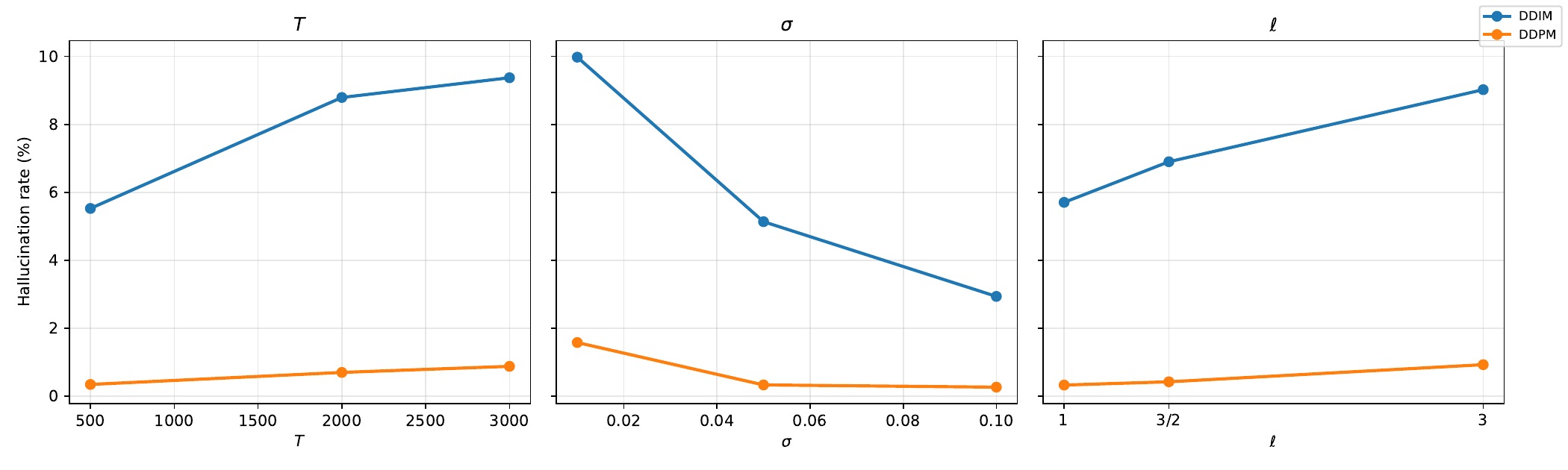}
    \caption{Ablations on hallucination rate gap between DDIM and DDPM across various choices of $\sigma$, $T$, and $\ell$. We find that the hallucination rate gap between DDIM and DDPM persists across several different hyperparameter configurations.}
    \label{fig:ablations}
\end{figure}

In \cref{fig:ablations}, we demonstrate that the gap in mode interpolation hallucination rate persists across various choices of $T$, $\sigma$, and $\ell$. In general, hallucination rates remain roughly stable. As $T$ increases, the gap increases: while discretization error does not explain the gap between DDIM and DDPM hallucinations, DDIM becomes a prohibitively coarse discretization of the PF ODE and thus has a higher hallucination rate. As $\sigma$ increases, the gap decreases. This is because regions of high probability mass begin to overlap, so feasibly classifying samples as hallucinations becomes rare: this is precisely why \cref{assump:two_mode_dominance,assump:modes_far_apart} are required, since they ensure  well-separated modes where is a useful definition of hallucinations. For similar reasons, as $\ell$ decreases, the gap  decreases.

\section{Experimental Details}\label{sec:exp_details}

\subsection{Gaussian Mixture}
We train our experiments 25-mode gaussian with varying dimensions. Our setup contains 25 modes with an inter-mode separation of $2$ with each mode having a standard deviation $\sigma$ of $0.02$. Lengths and standard deviations are then normalized by $2\sqrt{\varpi}$ during sampling. We sample $100,000$ samples uniformly across all modes and train a 3-layer MLP with a hidden size of 64 using a batch size of $10,000$ for $10,000$ epochs. We train with Adam \citep{baadam2015} with a learning rate of $1$e$-4$ which decays linearly to $1$e$-5$. For higher dimensions, modes are sampled from a lattice of $5^{\varphi}$. In higher dimensions ($\varphi > 2$) sampling of the 25 modes is done using farthest point sampling. This is done on set of candidates given by $\operatorname{min}(M, 5^{\varphi})$, with a value of $M = 500,000$ as the maximum number of candidate points. Additionally, in higher dimensions, we use a stronger training pipeline of Fourier features,  embedding the log signal-to-noise ratio, and varying depth to ensure approximation of the score.

During sampling, we choose $100,000$ sampled freshly from a standard normal distribution $\mathcal{N} (0, 1)$ and run inference on those samples in the reverse process. Furthermore, in $2D$, we eliminate invalid samples by removing those trajectories for which the final distance to the line joining two modes is greater than $5\sigma$. These are those trajectories for which the outcome is neither a mode interpolation nor a convergence to one of the true modes. In the 2D Gaussian, this amounts to $<0.01\%$ of the total number of samples used. Mode interpolations in $2D$ are thus samples which are within $5\sigma$ of a nearby line but further than $5\sigma$ from any mode. A lower threshold would result in false true samples with probability $\approx 1$ across 100,000 samples. For higher dimensional Gaussian datasets $(>10)$, to handle concentration of measure, we use a threshold of $\sigma(\sqrt{\varpi} + 4)$ instead. Additionally, for the analysis of exponential convergence, we find that this threshold is hard to tune, since initial distances scale with $\sqrt{\varpi}$; thus, any instance which is initially further than $\sqrt{\varpi}$ at the beginning of the reverse process is removed. This also amounts to a negligible amount of samples. 

\subsection{Counting Triangles}

We create a dataset of 10,000 images of size $64 \times 64$ with 3 triangles placed randomly in the image. The locations are chosen such that the images are non-overlapping. The goal is to faithfully generate images that maintain the number of triangles in the generated image with no constraint on the position of the triangles in the images generated. Example images are provided in \cref{fig:triangles_three}. 

\begin{figure}[t]
    \centering

    \begin{subfigure}{0.30\linewidth}
        \centering
        \includegraphics[width=0.9\linewidth]{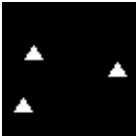}
        \caption*{}
        \label{fig:triangle1}
    \end{subfigure}
    \hfill
    \begin{subfigure}{0.30\linewidth}
        \centering
        \includegraphics[width=0.9\linewidth]{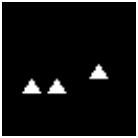}
        \caption*{}
        \label{fig:triangle2}
    \end{subfigure}
    \hfill
    \begin{subfigure}{0.30\linewidth}
        \centering
        \includegraphics[width=0.9\linewidth]{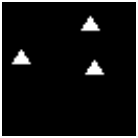}
        \caption*{}
        \label{fig:triangle3}
    \end{subfigure}

    \caption{Samples chosen from the dataset of $10,000$ elements containing 3 triangles. The positions are chosen such that the triangles are non-overlapping.  }
    \label{fig:triangles_three}
\end{figure}

\subsection{Computing \cref{prop:ddpm_terminal_midpoint} Objects}

Note that we can write $\bm{x}_t = \bm{m} + A_t \bm{u} + \bm{z}_t$ where $\langle \bm{z}_t, \bm{u} \rangle = 0$ and $||\bm{z}_t||_2\leq \varepsilon$, since the trajectory is restricted to the $\varepsilon$-tube. This thus allows us to compute $b(A_t, t)$ with respect to $A_t$. Note that these require $b(a,t)$ to be negative (positive) for $a \geq \vartheta$ ($a \leq -\vartheta$), since $b(a,t)$ is moving in reverse time in the statement of \cref{prop:ddpm_terminal_midpoint}. Next, we check the two assumptions of \cref{prop:ddpm_terminal_midpoint} across all line segments and across 50 intervals of $a$ on each side of the line segment until the endpoints of $a$. We also sample $\bm{z}_t$ uniformly in $[-\varepsilon, \varepsilon]$ at each evaluation of $a$. For the assumption with respect to $K(t)$, the endpoints are $[-2\vartheta, 2\vartheta]$. For the assumption with respect to $\lambda_{\mathrm{rep}}$, from the proof of \cref{prop:ddpm_terminal_midpoint}, note that the assumption with $K(t)$ bounds the probability that the trajectory stays in a $2\vartheta$-neighborhood and the $\lambda_{\mathrm{rep}}$ assumption bounds the probability that the trajectory returns after exit. Thus, in \cref{section:verif_ddpm_term}, we verify the assumption with $\lambda_{\mathrm{rep}}$ for $|a| \in [\vartheta, a^*]$, where $a^*$ is when return after exit happens $0\%$ of the time and DDPM trajectories interpolate with rate $<0.001$ with $\tau_3$ steps remaining.  $[\vartheta, a^*]$ is thus the region where return after exit becomes infeasible, which is precisely where the assumption on $\lambda_{\mathrm{rep}}$ must hold. In \cref{section:verif_ddpm_term}, $a^* = 0.280$ where $\vartheta = 0.15$ (nondimensionalized).

\subsection{Additional Figures}\label{sec:addtl_figs}

In \cref{fig:bisector_reduction}, we illustrate the reduction to the DDPM dynamics along the bisector normal which is integral for our theory in \cref{sec:Hallucination_DDPM}.

\begin{figure}[t]
    \centering
    \includegraphics[width=0.85\linewidth]{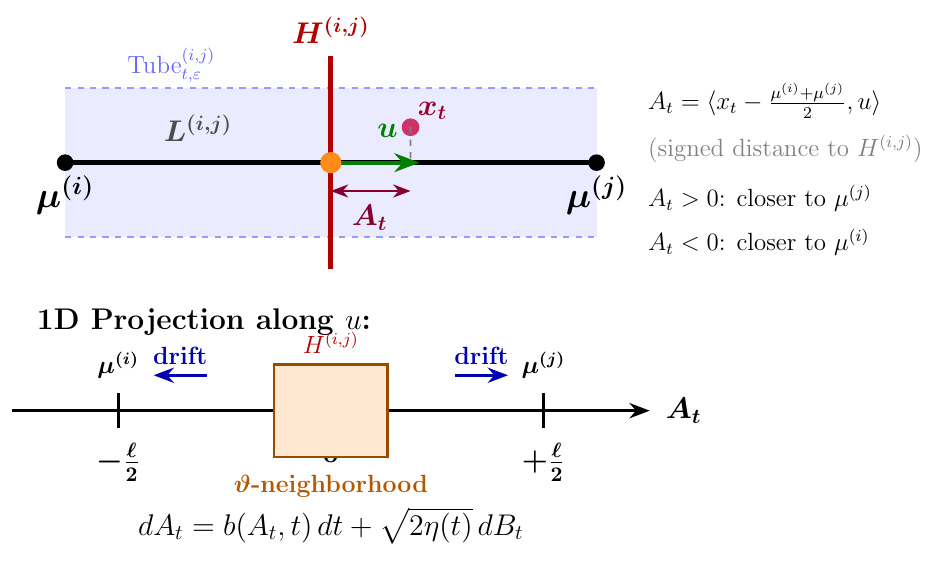}
    \caption{Reduction to 1D dynamics along the bisector normal. \textbf{Top}: Inside $\mathrm{Tube}^{(i,j)}_{t,\varepsilon}$, the bisector hyperplane $H^{(i,j)}$ (red) is orthogonal to $L^{(i,j)}$. The signed 1D coordinate $A_t = \langle x_t - \frac{\mu^{(i)}+\mu^{(j)}}{2}, u \rangle$ 
    measures displacement from $H^{(i,j)}$ along the unit normal $u$. \textbf{Bottom}: The 1D reduction with modes at $\pm\frac{\ell}{2}$, the $\vartheta$-neighborhood around the midpoint, and unstable drift directions away from the bisector.}
    \label{fig:bisector_reduction}
\end{figure}

\section{Helpful Lemmas}
\label{sec:lemmas}

\begin{lemma}
    Given the diffusion marginals as defined in \cref{eq:anderson_reverse_sde}, we can rewrite the score as: 
    \begin{equation}
         \nabla_{\bm{x}_t} \log p_t(\bm{x}_t) = -\frac{1}{\sigma_t^2} \left(\bm{x}_t - \sum_{k = 1}^N \gamma_k(\bm{x}_t)\bm{\mu}_t^{(k)}\right)
    \end{equation}

    for the responsibilities $\gamma_k(\bm{x}_t) = \frac{\pi_k \mathcal{N}(\bm{x}_t ; \bm{\mu}_t^{(k)}, \sigma_t^2 \bm{I})}{\sum_{k = 1}^N \pi_k \mathcal{N}(\bm{x}_t ; \bm{\mu}_t^{(k)}, \sigma_t^2 \bm{I})}$, where $\sigma_t^2 = \sigma^2 \bar{\alpha}_t  + (1 - \bar{\alpha}_t )$. 
    \label{lemma:cvx_comb_of_repsonsibilities}

\end{lemma}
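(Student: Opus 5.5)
We compute the score of the Gaussian-mixture marginal directly; the identity follows from differentiating the logarithm of a finite sum of Gaussian densities. By linearity of convolution (as noted in \cref{section:problem_setting}), $p_t(\bm{x}) = \sum_{k=1}^N \pi_k \mathcal{N}(\bm{x}; \bm{\mu}_t^{(k)}, \sigma_t^2\bm{I})$ with $\bm{\mu}_t^{(k)} = \sqrt{\bar{\alpha}_t}\bm{\mu}^{(k)}$ and $\sigma_t^2 = \sigma^2\bar{\alpha}_t + (1-\bar{\alpha}_t)$. The variance formula comes from adding the scaled data covariance $\bar{\alpha}_t\sigma^2\bm{I}$ and the forward-noise covariance $(1-\bar{\alpha}_t)\bm{I}$. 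The density $p_t$ is strictly positive and smooth, so $\nabla \log p_t = \nabla p_t / p_t$ is well defined everywhere.

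Next, we differentiate each component. For an isotropic Gaussian,
\[
\nabla_{\bm{x}} \mathcal{N}(\bm{x}; \bm{\mu}_t^{(k)}, \sigma_t^2\bm{I}) = -\frac{1}{\sigma_t^2}\bigl(\bm{x} - \bm{\mu}_t^{(k)}\bigr)\mathcal{N}(\bm{x}; \bm{\mu}_t^{(k)}, \sigma_t^2\bm{I}).
\]
Summing with weights $\pi_k$ and dividing by $p_t(\bm{x})$ gives
\[
\nabla_{\bm{x}}\log p_t(\bm{x}) = -\frac{1}{\sigma_t^2}\sum_{k=1}^N \gamma_k(\bm{x})\bigl(\bm{x} - \bm{\mu}_t^{(k)}\bigr),
\]
where $\gamma_k$ are exactly the responsibilities in the statement.

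Finally, since the responsibilities form a convex combination, $\sum_k \gamma_k(\bm{x}) = 1$. The $\bm{x}$ terms therefore collapse to $\bm{x}$, which yields
\[
-\frac{1}{\sigma_t^2}\Bigl(\bm{x} - \sum_{k} \gamma_k(\bm{x})\bm{\mu}_t^{(k)}\Bigr).
\]
There is no real obstacle here. The only point needing care is the convolution step for the marginal: if one wants to be fully rigorous, the Gaussian integral over $\bm{y}$ should be verified, for instance through characteristic functions.
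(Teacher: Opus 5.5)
Your proposal is correct and follows essentially the same route as the paper's proof. Both differentiate each isotropic Gaussian component to get $-\frac{1}{\sigma_t^2}(\bm{x}_t-\bm{\mu}_t^{(k)})\mathcal{N}(\cdot)$, divide by $p_t$ to obtain the responsibility weights $\gamma_k$, and collapse the $\bm{x}_t$ terms using $\sum_k \gamma_k(\bm{x}_t)=1$.
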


\begin{proof}
    Firstly, we compute the gradient of the diffusion marginal as: 
    \begin{align}
        \nabla_{\bm{x}_t} p_t(\bm{x}_t) &= \sum_{k = 1}^N \pi_k \nabla_{\bm{x}_t} \mathcal{N}(\bm{x}_t ; \bm{\mu}_t^{(k)}, \sigma_t^2 \bm{I}) \\ 
        &= \sum_{k = 1}^N \pi_k \nabla_{\bm{x}_t} \mathcal{N}(\bm{x}_t ; \bm{\mu}_t^{(k)}, \sigma_t^2 \bm{I}) \nabla_{\bm{x}_t} \left( -\frac{||\bm{x}_t - \bm{\mu}_t^{(k)}||_2^2}{2\sigma_t^2}\right) \\ 
        &= \sum_{k = 1}^N \pi_k \nabla_{\bm{x}_t} \mathcal{N}(\bm{x}_t ; \bm{\mu}_t^{(k)}, \sigma_t^2 \bm{I}) \left( -\frac{\bm{x}_t - \bm{\mu}_t^{(k)}}{\sigma_t^2}\right) 
        \label{eq:log_marginal}
    \end{align}

    Then, incorporating \cref{eq:log_marginal} into the definition of the score yields: 
    \begin{align}
        \nabla_{\bm{x}_t} \log p_t (\bm{x}_t) &= \frac{\nabla_{\bm{x_t}}p_t(\bm{x}_t)}{p_t(\bm{x}_t)} \\
        &= -\frac{1}{\sigma_t^2} \sum_{k = 1}^N \frac{\pi_k \mathcal{N}(\bm{x}_t ; \bm{\mu}_t^{(k)}, \sigma_t^2  \bm{I})}{p_t(\bm{x}_t)}(\bm{x}_t - \bm{\mu}_t^{(k)}) \\ 
        &= -\frac{1}{\sigma_t^2} \sum_{k = 1}^N \gamma_k(\bm{x}_t) (\bm{x}_t - \bm{\mu}_t^{(k)}), \text{ definition of $\gamma_k$} \\ 
        &= -\frac{1}{\sigma_t^2} \left(\sum_{k = 1}^N \gamma_k(\bm{x}_t) (\bm{x}_t) - \sum_{k = 1}^N \gamma_k(\bm{x}_t) \bm{\mu}_t^{(k)}\right) \\ 
        &= -\frac{1}{\sigma_t^2}\left(\bm{x}_t - \sum_{k = 1}^N \gamma_k(\bm{x}_t)\bm{\mu}_t^{(k)}\right), \; \text{since $\sum_{k = 1}^N \gamma_k(\bm{x}_t) = 1$}
    \end{align}

    as desired. 
\end{proof} 

\begin{lemma}
    Let $\bm{y}_t = \frac{\bm{x}_t}{\sqrt{\bar{\alpha}_t }}$, where $\bm{x}_t$ is governed by \cref{eq:full_pf_ode}, and let $\hat{\bm{\mu}}(\bm{y}_t) = \sum_{i = 1}^N \tilde{\gamma}_i(\bm{y}_t)\bm{\mu}^{(i)}$ with new responsibilities: 
    \begin{equation}
        \tilde{\gamma}_k(\bm{y}_t) = \frac{\pi_k \mathcal{N}(\bm{y}_t ; \bm{\mu^{(k)}}, \tilde{\sigma}_t^2 \bm{I})}{\sum_{m = 1}^N \pi_m \mathcal{N}(\bm{y}_t ; \bm{\mu^{(m)}}, \tilde{\sigma}_t^2 \bm{I})}
    \end{equation}

    and $\tilde{\sigma}_t^2 = \frac{\sigma_t^2}{\bar{\alpha}_t }$.
    
    Next, define the time-rescaling $u$ as: 
    \begin{equation}
        u(t) = \int_t^T \frac{\beta_s}{2\sigma_s^2} ds 
    \end{equation}

    Then, letting $t = u$ WLOG, we have the rescaled dynamics:  
    
    \begin{equation}
        \dot{\bm{y}_t} = \bm{\hat{\mu}}(\bm{y}_t) - \bm{y}_t
        \label{eq:time_rescaled_dynamics}
    \end{equation}

    \label{lemma:time_rescaling}
\end{lemma}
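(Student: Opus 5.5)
The plan is a direct computation: change variables, use \cref{lemma:cvx_comb_of_repsonsibilities} to rewrite the score, and then change the clock. First I would rewrite the score in \cref{eq:full_pf_ode} in terms of $\bm{y}_t=\bm{x}_t/\sqrt{\bar{\alpha}_t}$. \cref{lemma:cvx_comb_of_repsonsibilities} gives $\nabla\log p_t(\bm{x}_t)=-\sigma_t^{-2}(\bm{x}_t-\sum_k\gamma_k(\bm{x}_t)\bm{\mu}_t^{(k)})$ with $\bm{\mu}_t^{(k)}=\sqrt{\bar{\alpha}_t}\bm{\mu}^{(k)}$. The first thing to check is that $\gamma_k(\bm{x}_t)=\tilde{\gamma}_k(\bm{y}_t)$. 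For this, note that
\[
\|\bm{x}_t-\sqrt{\bar{\alpha}_t}\bm{\mu}^{(k)}\|_2^2/\sigma_t^2=\|\bm{y}_t-\bm{\mu}^{(k)}\|_2^2/\tilde{\sigma}_t^2 .
\]
The Gaussian normalizing constants are common to all components (equal isotropic variances), so they cancel in the responsibility ratio. Hence $\sum_k\gamma_k\bm{\mu}_t^{(k)}=\sqrt{\bar{\alpha}_t}\,\hat{\bm{\mu}}(\bm{y}_t)$, and
\[
\nabla\log p_t=-\frac{\sqrt{\bar{\alpha}_t}}{\sigma_t^2}\bigl(\bm{y}_t-\hat{\bm{\mu}}(\bm{y}_t)\bigr).
\]

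Next I would differentiate $\bm{y}_t$ using the product rule, with $\tfrac{d}{dt}\bar{\alpha}_t^{-1/2}=\tfrac12\beta_t\bar{\alpha}_t^{-1/2}$ (since $\dot{\bar{\alpha}}_t=-\beta_t\bar{\alpha}_t$). This gives
\[
\dot{\bm{y}}_t=\tfrac12\beta_t\bm{y}_t+\bar{\alpha}_t^{-1/2}\dot{\bm{x}}_t .
\]
Substituting \cref{eq:full_pf_ode}, the $-\tfrac12\beta_t\bm{x}_t/\sqrt{\bar{\alpha}_t}=-\tfrac12\beta_t\bm{y}_t$ term exactly cancels the first term. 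What remains is
\[
\dot{\bm{y}}_t=-\tfrac12\beta_t\bar{\alpha}_t^{-1/2}\nabla\log p_t=\frac{\beta_t}{2\sigma_t^2}\bigl(\bm{y}_t-\hat{\bm{\mu}}(\bm{y}_t)\bigr).
\]
This cancellation of the linear drift is the key structural point.

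Finally I would reparametrize time by $u(t)=\int_t^T\frac{\beta_s}{2\sigma_s^2}ds$, which satisfies $du/dt=-\beta_t/(2\sigma_t^2)$. By the chain rule,
\[
\frac{d\bm{y}}{du}=\frac{d\bm{y}/dt}{du/dt}=\hat{\bm{\mu}}(\bm{y})-\bm{y},
\]
which is \cref{eq:time_rescaled_dynamics}. Here $u$ increases as $t$ runs from $T$ down to $0$, consistent with the reverse process.

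The main obstacle is bookkeeping rather than substance. The delicate points are the sign conventions, since time runs in reverse, and the sign of $\dot{\bar{\alpha}}_t$. If the ODE were read literally in forward $t$, the rescaled field would have the opposite sign, so I would make explicit that the $u$-parametrization is where the reverse orientation enters. I would also justify the reparametrization: $u$ is strictly monotone and $C^1$ because $\beta_t>0$ and $\sigma_t^2\ge\min(\sigma^2,1)>0$.
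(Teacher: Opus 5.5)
Your proposal is correct and follows essentially the same route as the paper's proof: the product rule on $\bm{y}_t=\bm{x}_t/\sqrt{\bar{\alpha}_t}$, cancellation of the linear drift, rewriting the score via \cref{lemma:cvx_comb_of_repsonsibilities}, and the chain rule with $du/dt=-\beta_t/(2\sigma_t^2)$. Your explicit check that $\gamma_k(\bm{x}_t)=\tilde{\gamma}_k(\bm{y}_t)$ fills in a step the paper uses without comment; it holds because the normalizing constants cancel and $\|\bm{x}_t-\sqrt{\bar{\alpha}_t}\bm{\mu}^{(k)}\|_2^2/\sigma_t^2=\|\bm{y}_t-\bm{\mu}^{(k)}\|_2^2/\tilde{\sigma}_t^2$.
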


\begin{proof}
    Firstly, let $I(t) = - \int_0^t \beta_s ds$, i.e. $\bar{\alpha}_t  = e^{I(t)}$. This yields that: 
    \begin{align}
        \dot{\bar{\alpha}}_t &= \dot{(e^{I(t)})} \\
        &= e^{I(t)} \dot{I(t)} \\ 
        &= e^{I(t)} (-\beta_t) \\ 
        &= -\beta_t \bar{\alpha}_t 
    \end{align}

    Next, by definition of $\bm{y}_t$, we have that: 
    \begin{align}
        \dot{\bm{y}}_t &= \dot{\bm{x}_t} \bar{\alpha}_t ^{-1/2} + \bm{x}_t \frac{d}{dt} \bar{\alpha}_t ^{-1/2} \\ 
        &= \dot{\bm{x}_t} \bar{\alpha}_t ^{-1/2} + \bm{x}_t\left(\frac{\beta_t}{2\sqrt{\bar{\alpha}_t }}\right) \\ 
        &= \frac{1}{\sqrt{\bar{\alpha}_t }} \left( -\frac{\beta_t}{2}\left(\bm{x}_t + \nabla_{\bm{x}_t} \log p_t(\bm{x}_t)\right)\right) + \frac{\beta_t}{2\sqrt{\bar{\alpha}_t }}\bm{x}_t \\ 
        &= \frac{-\beta_t}{2\sqrt{\bar{\alpha}_t }}\bm{x}_t - \frac{\beta_t}{2\sqrt{\bar{\alpha}_t }} \nabla_{\bm{x}_t} \log p_t(\bm{x}_t) + \frac{\beta_t}{2\sqrt{\bar{\alpha}_t }} \bm{x}_t \\ 
        &= - \frac{\beta_t}{2\sqrt{\bar{\alpha}_t }} \nabla_{\bm{x}_t} \log p_t(\bm{x}_t) \\ 
        &= - \frac{\beta_t}{2\sqrt{\bar{\alpha}_t }} \left( - \frac{1}{\sigma_t^2} (\bm{x}_t - \sum_{k = 1}^N \gamma_k(\bm{x}_t) \bm{\mu}_t^{(k)}) \right), \; \text{\cref{lemma:cvx_comb_of_repsonsibilities}} \\
        &= -\frac{\beta_t}{2\sigma_t^2 \sqrt{\bar{\alpha}_t }} \left(\sum_{k = 1}^N \gamma_k(\bm{x}_t) \bm{\mu}_t^{(k)} - \bm{x}_t\right) \\ 
        &= -\frac{\beta_t}{2\sigma_t^2 \sqrt{\bar{\alpha}_t }} \left( \sqrt{\bar{\alpha}_t } \sum_{k = 1}^N \gamma_k(\bm{x}_t) \bm{\mu}^{(k)} - \sqrt{\bar{\alpha}_t }\bm{y}_t \right) \\ 
        &= -\frac{\beta_t}{2\sigma^2_t}\left( \hat{\bm{\mu}}(\bm{y}_t) - \bm{y}_t \right)
    \end{align}

    The chain rule yields the time rescaling: 
    \begin{align}
        \frac{d\bm{y}_t}{du} &= \frac{d\bm{y}_t}{dt} \frac{dt}{du} \\
        &= \left(-\frac{\beta_t}{2\sigma_t^2} \left( \hat{\bm{\mu}}(\bm{y}_t) - \bm{y}_t \right)\right)(-\frac{2\sigma_t^2}{\beta_t}) \\ 
        &= \hat{\bm{\mu}} (\bm{y}_t) - \bm{y}_t
    \end{align}

    as desired. 
\end{proof}

\textbf{Remark}. This lemma transforms the nonautonomous dynamics in
\cref{eq:full_pf_ode}, where the means evolve with the original diffusion time,
into dynamics with respect to the \textit{static means}. Here \(t\in[0,T]\)
denotes the original diffusion time, and the reverse trajectory is traversed
from \(t=T\) to \(t=0\). The variable \(u\) defined above parameterizes this
same reverse trajectory in the forward direction: \(u\) increases as the
original diffusion time \(t\) decreases. Thus, stability statements in
\(u\)-time are stability statements along the reverse trajectory, matching the
direction in which the sampler evolves. For notational simplicity, after this
change of variables we often relabel \(u\) as \(t\). Note that \cref{assump:two_mode_dominance} and \cref{assump:modes_far_apart} hold in $u$ time as well, as discussed in \cref{sec:validating_assump}.

\begin{lemma}
    Assume \cref{assump:two_mode_dominance} holds. Consider $\tau_1$ from \cref{assump:two_mode_dominance}.. We decompose  $\bm{y}_t$  into parallel and orthogonal components i.e.: 
    \begin{equation}
        \bm{y}_t = \bm{\mu}^{(i)} + \xi_t \ell \bm{u} + \bm{w}_t,
        \label{eq:traj_decomp}
    \end{equation}
    
    where $\ell = ||\bm{\mu}^{(j)} - \bm{\mu}^{(i)}||_2, \bm{u} = \frac{\bm{\mu}^{(j)} - \bm{\mu}^{(i)}}{\ell}$, and $\bm{w}_t$ satisfies $\langle \bm{w}_t, \bm{u} \rangle = 0$.
    
    Then, for $t \leq \tau_1$, we have that: 

    \begin{equation}
        \dot{\bm{y}} = \bm{\hat{\mu}}(\bm{y}_t) - \bm{y}_t
    \end{equation}

    is equivalent to the dynamical system: 

    \begin{align}
        \dot{\xi}_t &= \tilde{\gamma}_j(\bm{y}_t) - \xi_t + \delta_{\xi_t} \label{eq:parallel_dynamics} \\ 
        \dot{\bm{w}_t} &= -\bm{w}_t + \bm{\delta}_{\bm{w}_t}
        \label{eq:decomposed_dynamics}
    \end{align}

    where: 

    \begin{equation}
        |\delta_{\xi}|, ||\bm{\delta}_{\bm{w}}||_2 \leq \mathcal{O}\left(N\exp(-\kappa)\right)
    \end{equation}
    
    \label{lemma:par_orth_dynamics}
\end{lemma}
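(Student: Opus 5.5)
The plan is to substitute the decomposition \cref{eq:traj_decomp} into the rescaled dynamics \cref{eq:time_rescaled_dynamics} of \cref{lemma:time_rescaling} and project onto $\bm{u}$ and onto its orthogonal complement. Since $\bm{\mu}^{(i)}$ and $\bm{u}$ are static, differentiating gives $\dot{\bm{y}}_t = \dot{\xi}_t\ell\bm{u} + \dot{\bm{w}}_t$. On the right-hand side, we write $\hat{\bm{\mu}}(\bm{y}_t) - \bm{y}_t = \sum_k \tilde{\gamma}_k(\bm{\mu}^{(k)} - \bm{\mu}^{(i)}) - \xi_t\ell\bm{u} - \bm{w}_t$, using $\sum_k\tilde{\gamma}_k = 1$. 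We then split the sum into the $k=j$ term, which equals $\tilde{\gamma}_j\ell\bm{u}$ (the $k=i$ term vanishes), and a remainder
\begin{equation*}
\bm{r}_t := \sum_{k\notin\{i,j\}}\tilde{\gamma}_k(\bm{y}_t)\,(\bm{\mu}^{(k)}-\bm{\mu}^{(i)}).
\end{equation*}
Taking the inner product with $\bm{u}$ and dividing by $\ell$ yields \cref{eq:parallel_dynamics} with $\delta_{\xi_t} = \langle \bm{r}_t,\bm{u}\rangle/\ell$. Projecting onto $\bm{u}^\perp$ yields \cref{eq:decomposed_dynamics} with $\bm{\delta}_{\bm{w}_t} = (\bm{I}-\bm{u}\bm{u}^\top)\bm{r}_t$. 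The two systems are equivalent because the decomposition is unique and the projections are invertible jointly.

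The remaining work is to bound $\|\bm{r}_t\|_2$. We will control each $\tilde{\gamma}_k$ for $k\notin\{i,j\}$ by comparing it to the dominant mode. Suppose, without loss of generality, that $i$ attains the minimum in \cref{assump:two_mode_dominance}; the assumption transfers to $\bm{y}$-coordinates and $u$-time per the remark after \cref{lemma:time_rescaling}. With equal weights,
\begin{equation*}
\tilde{\gamma}_k \le \frac{\mathcal{N}(\bm{y}_t;\bm{\mu}^{(k)},\tilde{\sigma}_t^2\bm{I})}{\mathcal{N}(\bm{y}_t;\bm{\mu}^{(i)},\tilde{\sigma}_t^2\bm{I})} = \exp\!\Big(-\frac{\|\bm{y}_t-\bm{\mu}^{(k)}\|^2-\|\bm{y}_t-\bm{\mu}^{(i)}\|^2}{2\tilde{\sigma}_t^2}\Big)\le \exp\!\Big(-\frac{\Delta_t}{2\tilde{\sigma}_t^2}\Big).
\end{equation*}
With the margin $\Delta_t\ge 2\tilde{\sigma}_t^2\kappa$ in the rescaled coordinates, this gives $\tilde{\gamma}_k\le e^{-\kappa}$. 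Summing over at most $N-2$ indices then gives $\sum_{k\notin\{i,j\}}\tilde{\gamma}_k\le N e^{-\kappa}$.

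The main obstacle is that the factors $\|\bm{\mu}^{(k)}-\bm{\mu}^{(i)}\|$ are not uniformly bounded by a constant. They are bounded by the diameter $D$ of the mean set, which is fixed data and can be absorbed into the $\mathcal{O}(\cdot)$ constant. For $\delta_\xi$, dividing by $\ell$ contributes the constant $D/\ell$. If a sharper statement is wanted, the first thing to try is to retain the extra Gaussian decay: for far modes, $\|\bm{y}-\bm{\mu}^{(k)}\|$ large makes $\tilde{\gamma}_k\|\bm{\mu}^{(k)}-\bm{\mu}^{(i)}\|$ still $\lesssim e^{-\kappa}$, using $xe^{-x^2}$-type bounds. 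I expect the constant-absorption route to suffice, giving $|\delta_\xi|,\|\bm{\delta}_{\bm{w}}\|_2\le\mathcal{O}(Ne^{-\kappa})$ for all $t\le\tau_1$. Unequal weights would only add a factor $\max_k\pi_k/\pi_i$ to the constant.
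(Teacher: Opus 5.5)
Your proposal is correct and follows the paper's proof essentially step for step. Both substitute the decomposition into the rescaled ODE and split off the $k=j$ term, leaving the remainder $\bm{E}_t=\sum_{k\notin\{i,j\}}\tilde{\gamma}_k(\bm{y}_t)(\bm{\mu}^{(k)}-\bm{\mu}^{(i)})$ (your $\bm{r}_t$). Both then project onto $\bm{u}$ and $\bm{u}^\perp$, and bound $\tilde{\gamma}_k\le(\pi_k/\pi_{d^*})e^{-\kappa}$ by comparing against whichever of $\bm{\mu}^{(i)},\bm{\mu}^{(j)}$ is closer. Like you, the paper simply absorbs $\max_{k\notin\{i,j\}}\|\bm{\mu}^{(k)}-\bm{\mu}^{(i)}\|_2/\ell$ and the weight ratios into the $\mathcal{O}(Ne^{-\kappa})$ constant, so the Gaussian-decay refinement you mention is not needed.
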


\begin{proof}
    Firstly, we have that: 
    \begin{equation}
        \dot{\bm{y}}_t = \sum_{k = 1}^N \tilde{\gamma}_k(\bm{y}_t)\bm{\mu}^{(k)} - \bm{y}_t
     \end{equation}

     if and only if, expanding with \cref{eq:traj_decomp}: 
     \begin{equation}
         \ell \dot{\xi}_t \bm{u} + \dot{\bm{w}}_t = \sum_{k = 1}^N \tilde{\gamma}_k(\bm{y}_t) \bm{\mu}^{(k)} - (\bm{\mu}^{(i)} + \xi_t \ell \bm{u} + \bm{w}_t).
         \label{eq:temp_par_orth_1}
     \end{equation}

     Next, we have that $\tilde{\gamma}_i(\bm{y}_t) (\bm{\mu}^{(i)} - \bm{\mu}^{(i)}) = 0$; $\tilde{\gamma}_j(\bm{y}_t)(\bm{\mu}^{(j)} - \bm{\mu}^{(i)}) = \tilde{\gamma}_{j}(\bm{y}_t) \ell \bm{u}$; and $\bm{E}_t := \sum_{k \notin \{i, j\}} \tilde{\gamma}_k(\bm{y}_t) (\bm{\mu}^{(k)} - \bm{\mu}^{(i)})$, which when incorporated into \cref{eq:temp_par_orth_1} yields: 
     
     \begin{align}
         \ell \dot{\xi}_t\bm{u} + \dot{\bm{w}}_t &= \tilde{\gamma}_j(\bm{y}_t)\ell \bm{u} + \bm{E}_t - \xi_t \ell \bm{u} - \bm{w}_t \\ 
         &= (\tilde{\gamma}_j(\bm{y}_t) - \xi_t) \ell \bm{u} - \bm{w}_t + \bm{E}_t.
         \label{eq:temp_par_orth_2}
     \end{align}

     Next, noting that $\langle \bm{u}, \bm{u} \rangle = 1$, it follows from \cref{eq:temp_par_orth_2} that: 
     \begin{align}
         \langle \ell \dot{\xi}_t\bm{u}, \bm{u}\rangle = \langle (\tilde{\gamma}_j(\bm{y}_t) - \xi_t) \ell \bm{u} - \bm{w}_t + \bm{E}_t, \bm{u} \rangle &\iff \\ 
        \ell \dot{\xi}_t = (\tilde{\gamma}_j(\bm{y}_t) - \xi_t)\ell + \langle \bm{E}_t, \bm{u}\rangle  &\iff \label{eq:temp_par_orth_3} \\ 
        \dot{\xi}_t = \tilde{\gamma}_j(\bm{y}_t) - \xi_t + \delta_{\xi_t}, \; \delta_{\xi_t} := \frac{1}{\ell} \langle \bm{E}_t, \bm{u}\rangle
     \end{align}

     yielding the first equation in \cref{eq:decomposed_dynamics}, the parallel dynamics along $L_{t, \varepsilon}^{(i,j)}$. Furthermore, by \cref{eq:temp_par_orth_3}, we have that: 
     \begin{equation}
         \ell\dot{\xi}_t \bm{u} = (\tilde{\gamma}_j(\bm{y}_t) - \xi_t) \ell \bm{u} + \langle \bm{E}_t, \bm{u}\rangle \bm{u} 
         \label{eq:temp_par_orth_4}
     \end{equation}

     Thus, subtracting $\ell \dot{\xi}_t \bm{u}$ from the RHS of \cref{eq:temp_par_orth_1} and then plugging in \cref{eq:temp_par_orth_4}  yields: 
     \begin{align}
         \dot{\bm{w}}_t &= (\tilde{\gamma}_j(\bm{y}_t) - \xi_t) \ell \bm{u} - \bm{w}_t + \bm{E}_t - \left((\tilde{\gamma}_j(\bm{y}_t) - \xi_t)\ell \bm{u} + \langle \bm{E}_t, \bm{u}\rangle \bm{u}\right) \\ 
         &= - \bm{w}_t + \bm{E}_t - \langle \bm{E}_t, \bm{u}\rangle \bm{u} 
         \label{eq:temp_par_orth_5}
     \end{align}

     Note that $\langle \bm{E}_t, \bm{u}\rangle \bm{u} = \text{Proj}_{(\bm{u})}(\bm{E}_t) =: \text{Proj}_{\parallel}(\bm{E}_t)$, yielding that $\bm{E}_t - \langle \bm{E}_t, \bm{u}\rangle \bm{u} =: \text{Proj}_{\perp}(\bm{E}_t)$. Incorporating this into \cref{eq:temp_par_orth_5} yields: 
     \begin{equation}
         \dot{\bm{w}}_t = -\bm{w}_t + \bm{\delta}_{\bm{w}_t}, \; \bm{\delta}_{\bm{w}_t} := \text{Proj}_{\perp}(\bm{E}_t).
     \end{equation}

     This yields the second equation in \cref{eq:decomposed_dynamics}, the orthogonal dynamics with respect to $L_{t, \varepsilon}^{(i,j)}$. To prove the lemma, it thus suffices to bound $|\delta_{\xi_t}|$ and $||\bm{\delta}_{\bm{w}_t}||_2$. To do so, we begin by bounding the responsibilities $\tilde{\gamma}_k(\bm{y}_t)$ for $k \notin \{i, j\}$: 

     \begin{align}
         \tilde{\gamma}_k(\bm{y}_t) &= \frac{\pi_k \mathcal{N}(\bm{y}_t; \bm{\mu}^{(k)}, \tilde{\sigma}_t^2\bm{I})}{\sum_{m = 1}^N \pi_m \mathcal{N}(\bm{y}_t ; \bm{\mu}^{(m)}, \tilde{\sigma}_t^2\bm{I})} \\ 
         &\leq \frac{\pi_k \mathcal{N}(\bm{y}_t; \bm{\mu}^{(k)}, \tilde{\sigma}_t^2\bm{I})}{\pi_i \mathcal{N}(\bm{y}_t; \bm{\mu}^{(i)}, \tilde{\sigma}_t^2\bm{I}) + \pi_j \mathcal{N}(\bm{y}_t; \bm{\mu}^{(j)}, \tilde{\sigma}_t^2\bm{I})}. 
         \label{eq:temp_par_orth_6}
     \end{align}

     Let $d^* := \arg\min_{d \in \{i, j\}} ||\bm{y}_t - \bm{\mu}^{(d)}||_2$. Note that: 
     \begin{align}
         \pi_i \mathcal{N}(\bm{y}_t; \bm{\mu}^{(i)}, \tilde{\sigma}_t^2\bm{I}) + \pi_j \mathcal{N}(\bm{y}_t; \bm{\mu}^{(j)}, \tilde{\sigma}_t^2\bm{I}) &\geq \max \{\pi_i \mathcal{N}(\bm{y}_t; \bm{\mu}^{(i)}, \tilde{\sigma}_t^2\bm{I}), \pi_j \mathcal{N}(\bm{y}_t; \bm{\mu}^{(j)}, \tilde{\sigma}_t^2\bm{I})\} \\ 
         &\geq \pi_{d^*}\mathcal{N}(\bm{y}_t; \bm{\mu}^{(d^*)}, \tilde{\sigma}_t^2\bm{I})
         \label{eq:temp_par_orth_7}
     \end{align}

     Incorporating \cref{eq:temp_par_orth_7} into \cref{eq:temp_par_orth_6} yields: 
     \begin{align}
         \tilde{\gamma}_k(\bm{y}_t) &\leq \frac{\pi_k \mathcal{N}(\bm{y}_t; \bm{\mu}^{(k)}, \tilde{\sigma}_t^2\bm{I})}{\pi_{d^*}\mathcal{N}(\bm{y}_t; \bm{\mu}^{(d^*)}, \tilde{\sigma}_t^2\bm{I})} \\ 
         &= \frac{\pi_k}{\pi_{d^*}} \frac{\exp(-\frac{||\bm{y}_t - \bm{\mu}^{(k)}||_2^2}{2\tilde{\sigma}_t^2})}{\exp(-\frac{||\bm{y}_t - \bm{\mu}^{(d^*)}||_2^2}{2\tilde{\sigma}_t^2})} \\ 
         &= \frac{\pi_k}{\pi_{d^*}} \exp\left(-\frac{1}{2\tilde{\sigma}_t^2} (||\bm{y}_t - \bm{\mu}^{(k)}||_2^2 - ||\bm{y}_t - \bm{\mu}^{(d^*)}||_2^2)\right).
         \label{eq:temp_par_orth_8}
     \end{align}

     By \cref{assump:two_mode_dominance}, we have that $||\bm{y}_t - \bm{\mu}^{(k)}||_2^2 \geq ||\bm{y}_t - \bm{\mu}^{(d^*)}||_2^2 + \frac{\Delta_t}{\bar{\alpha}_t}$, which holds if and only if $||\bm{y}_t  - \bm{\mu}^{(k)}||_2^2 - ||\bm{y}_t - \bm{\mu}^{(d^*)}||_2^2 \geq \frac{\Delta_t}{\bar{\alpha}_t}$. Incorporating this into \cref{eq:temp_par_orth_8} yields: 
     \begin{align}
         \tilde{\gamma}_k(\bm{y}_t) &\leq \frac{\pi_k}{\pi_{d^*}} \exp(-\frac{\Delta_t}{2\bar{\alpha}_t \tilde{\sigma}_t^2}) \\ 
         &\leq \frac{\pi_k}{\pi_{d^*}} \exp(-\kappa), \text{ \cref{assump:two_mode_dominance}}
         \label{eq:bound_on_resp}
     \end{align}

     Next, we have that: 
     \begin{align}
         |\delta_{\xi_t}| = |\frac{1}{\ell} \langle \bm{E}_t, \bm{u}\rangle | \\ 
         &\leq \frac{1}{\ell} ||\bm{E}_t||_2 ||\bm{u}||_2, \text{ Cauchy-Schwartz} \\ 
         &= \frac{1}{\ell} ||\bm{E}_t||_2 \\ 
         &= \frac{1}{\ell} ||\sum_{k \notin \{i, j\}} \tilde{\gamma}_k(\bm{y}_t) (\bm{\mu}^{(k)} - \bm{\mu}^{(i)}) ||_2 \\ 
         &\leq \frac{\max_{k \notin \{i, j\}} ||\bm{\mu}^{(k)} - \bm{\mu}^{(i)}||_2}{\ell} \sum_{k \notin \{i, j\}} \tilde{\gamma}_k(\bm{y}_t) \\ 
         &\leq \frac{\max_{k \notin \{i, j\}} ||\bm{\mu}^{(k)} - \bm{\mu}^{(i)}||_2}{\ell} (N - 2) \frac{\pi_k}{\pi_{d^*}} \exp(-\kappa), \text{ \cref{eq:bound_on_resp}} \\ 
         &\leq \mathcal{O}(N \exp(-\kappa))
     \end{align}

     Similarly, we have that: 
     \begin{align}
         ||\bm{\delta}_{\bm{w}_t}||_2 &\leq ||\text{Proj}_{\perp} (\bm{E})||_2 \\ 
         &\leq ||\bm{E}||_2, \text{ property of orth. projection} \\ 
         &\leq \sum_{k \notin \{i,j\}} \tilde{\gamma}_k(\bm{y}_t) ||\bm{\mu}^{(k)} - \bm{\mu}^{(i)}||_2 \\ 
         &\leq (N-2) \max_{k \notin \{i, j\}} ||\bm{\mu}^{(k)} - \bm{\mu}^{(i)}||_2 \frac{\pi_i}{\pi_d} \exp(-\kappa), \text{ \cref{eq:bound_on_resp}} \\ 
         &\leq \mathcal{O}(N \exp(-\kappa))
     \end{align}

    as desired. 
     
\end{proof}

\begin{lemma}
    Assume \cref{assump:two_mode_dominance} holds.    Then, there exists a time $\tau \leq \tau_1$ such that $\xi_t$, as defined by \cref{eq:decomposed_dynamics}, lies in $[-\epsilon, 1+\epsilon]$, where $\epsilon \in O(N \exp(-\kappa))$, for all time $t \leq \tau$. 
    \label{lemma:parallel_restriction}
\end{lemma}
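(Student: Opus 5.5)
Work in the rescaled $u$-time of \cref{lemma:time_rescaling}, where \cref{lemma:par_orth_dynamics} gives, for $t\le\tau_1$,
\begin{equation*}
\dot{\xi}_t = \tilde{\gamma}_j(\bm{y}_t) - \xi_t + \delta_{\xi_t}, \qquad |\delta_{\xi_t}| \le C N \exp(-\kappa) =: \epsilon_0 .
\end{equation*}
The key observation is that $\tilde{\gamma}_j(\bm{y}_t)\in[0,1]$. Hence the drift of $\xi$ satisfies the two-sided comparison bounds
\begin{equation*}
-\xi_t - \epsilon_0 \;\le\; \dot{\xi}_t \;\le\; 1 - \xi_t + \epsilon_0 .
\end{equation*}
I will run a barrier (comparison) argument on the interval $[-\epsilon, 1+\epsilon]$ with $\epsilon := \epsilon_0$, or $2\epsilon_0$ to get strict inequalities. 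This choice makes $\epsilon \in \mathcal{O}(N\exp(-\kappa))$ as required.

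\textbf{Step 1 (invariance).} Treat the interval as forward-invariant in $u$-time, which is reverse diffusion time.
\begin{itemize}
\item At the lower boundary $\xi=-\epsilon$: $\dot\xi \ge \epsilon - \epsilon_0 \ge 0$, strictly positive if $\epsilon = 2\epsilon_0$.
\item At the upper boundary $\xi = 1+\epsilon$: $\dot\xi \le -\epsilon + \epsilon_0 \le 0$.
\end{itemize}
A standard first-exit contradiction then shows that a trajectory inside the interval at some $u$-time stays inside for all later times while $t\le\tau_1$. Take the first time it would leave and compare signs of the derivative there.

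\textbf{Step 2 (entry).} Show the trajectory actually enters the interval. Solve the linear comparison ODEs with integrating factor $e^{u}$:
\begin{equation*}
\xi_{u} \ge \xi_{u_1} e^{-(u-u_1)} - \epsilon_0\bigl(1-e^{-(u-u_1)}\bigr),
\qquad
\xi_u \le 1 + \epsilon_0 + (\xi_{u_1} - 1 - \epsilon_0)e^{-(u-u_1)} .
\end{equation*}
Here $u_1 = u(\tau_1)$. So the distance from $\xi_u$ to $[-\epsilon_0, 1+\epsilon_0]$ decays like $|\xi_{u_1}|e^{-(u-u_1)}$, up to the constant shift. Choosing $\epsilon=2\epsilon_0$, it becomes at most $\epsilon$ after a finite additional $u$-time
\begin{equation*}
\Delta u = \log\bigl(\max(|\xi_{u_1}|,\,|\xi_{u_1}-1|)/\epsilon_0\bigr).
\end{equation*}
Define $\tau\le\tau_1$ as the original diffusion time corresponding to $u_1+\Delta u$. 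Since $u$ is monotone in $t$, this is well defined provided $u(0)$ is large enough, which I will state as an implicit requirement. Alternatively, I will absorb it by noting that if $\xi_{u_1}$ already lies in the interval, then $\tau=\tau_1$.

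\textbf{Main obstacle.} The delicate point is that Step 2 needs the remaining $u$-time budget $u(0)-u_1$ to exceed $\Delta u$. Since $u(t)=\int_t^T \beta_s/(2\sigma_s^2)\,ds$ and $\sigma_s^2\to\sigma^2$ is small near $t=0$, $u(0)$ is large, so this is plausible. I will handle it by defining $\tau$ as the first time of entry, which exists under this condition, and invoking Step 1 afterwards.

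A second subtlety is that the pair $(i,j)$ must stay fixed on $[0,\tau_1]$ for the bounds of \cref{lemma:par_orth_dynamics} to apply uniformly. I will take that as part of \cref{assump:two_mode_dominance}.
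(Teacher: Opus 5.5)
Your proposal is correct and is essentially the paper's own argument. The paper also enlarges the constant so that $|\delta_{\xi_t}|\le\varepsilon/2$, uses $\tilde{\gamma}_j\in(0,1)$ to show $\dot r_t\le -r_t-\varepsilon/2$ for the distance $r_t$ from $\xi_t$ to $[-\varepsilon,1+\varepsilon]$, and concludes finite-time entry plus invariance; this is your Steps 1--2 combined into one differential inequality. The paper leaves the remaining $u$-time budget $u(0)-u(\tau_1)\ge\Delta u$ implicit in the phrase ``for $t$ sufficiently large,'' so stating it as an explicit requirement, as you do, is a genuine improvement, because the conclusion can fail without it.
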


\begin{proof}
    Recall the definition of an $\varepsilon$-extended line segment in \cref{sec:Nmode}. By enlarging the constant in $\varepsilon \in O(N\exp(-\kappa))$, we may assume that $|\delta_{\xi_t}| \leq \varepsilon/2$ by \cref{lemma:par_orth_dynamics}. Since $\tilde{\gamma}_j(\bm{y}_t) \in (0,1)$, if $\xi_t > 1+\varepsilon$ and $r_t := \xi_t-(1+\varepsilon)$, then $\dot r_t = \dot \xi_t \leq -r_t-\varepsilon/2$; similarly, if $\xi_t < -\varepsilon$ and $r_t := -\varepsilon-\xi_t$, then $\dot r_t = -\dot \xi_t \leq -r_t-\varepsilon/2$. Thus, for $t$ sufficiently large, $r_t$ vanishes and $\xi_t$ is trapped in $[-\varepsilon, 1 + \varepsilon]$. 
\end{proof}

\begin{lemma}
    For any $\phi \in (0,1)$, with probability at least $1 - 2\exp(-(\varpi)\phi^2/8)$, we have that $||\bm{w}_T||_2$, where $\bm{w}_t$ is defined as in \cref{eq:decomposed_dynamics}, satisfies: 

    \begin{equation}
        ||\bm{w}_T||_2 \leq \sqrt{(\varpi)(1 + \phi)}
    \end{equation}
    \label{lemma:init_orth_distance}
\end{lemma}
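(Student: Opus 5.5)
The plan is to identify $\bm{w}_T$ with a projection of a standard Gaussian vector and then apply a standard chi-square concentration bound. At the start of the reverse process, $\bm{x}_T \sim \mathcal{N}(0,\bm{I})$ in $\mathbb{R}^{\varpi}$. We use the decomposition in \cref{eq:traj_decomp}, rescaled so that at time $T$ the trajectory is measured relative to the segment. Then $\bm{w}_T$ is the component of the starting point orthogonal to $\bm{u}$, after subtracting the (fixed, deterministic) orthogonal component of the anchor $\bm{\mu}^{(i)}$. Consistent with how the lemma is used in \cref{thm:core_gaussian_mixture}, we treat $\bm{w}_T$ as the orthogonal projection of the noise vector. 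Concretely, we take $\bm{w}_T = \bm{P}_\perp \bm{x}_T$ with $\bm{P}_\perp = \bm{I} - \bm{u}\bm{u}^\top$, absorbing the deterministic offset into the $\mathcal{O}(\cdot)$ of the theorem, since $\bar{\alpha}_T = \gamma \approx 0$ makes the scaled means negligible. Because an orthogonal projection is a contraction, $\|\bm{w}_T\|_2 \le \|\bm{x}_T\|_2$. It therefore suffices to control $\|\bm{x}_T\|_2$.

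Next, note that $\|\bm{x}_T\|_2^2 \sim \chi^2_{\varpi}$. We apply a standard concentration inequality, either Laurent--Massart or the sub-exponential Bernstein bound for sums of $Z_k^2 - 1$. For $\phi \in (0,1)$ this gives
\begin{equation*}
\mathbb{P}\big(\,|\|\bm{x}_T\|_2^2 - \varpi| \ge \varpi\phi\,\big) \le 2\exp(-\varpi\phi^2/8).
\end{equation*}
Concretely, we bound the moment generating function $\mathbb{E}[e^{\lambda(Z^2-1)}] \le e^{2\lambda^2}$ for $|\lambda|\le 1/4$ and optimize with $\lambda = \phi/4$. This yields the exponent $\varpi\phi^2/8$ on each tail, and a union bound over the two tails gives the factor $2$. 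On the complementary event we get $\|\bm{x}_T\|_2^2 \le \varpi(1+\phi)$, and hence $\|\bm{w}_T\|_2 \le \sqrt{\varpi(1+\phi)}$.

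The main obstacle is the bookkeeping in the first step. We must justify that $\bm{w}_T$ is (up to the negligible deterministic offset from the means) exactly a projection of the Gaussian initialization. We also have to handle the time rescaling of \cref{lemma:time_rescaling}, which divides by $\sqrt{\bar{\alpha}_T}$. Each factor and offset then has to be placed correctly so that it is carried into the $\mathcal{O}$ in \cref{thm:core_gaussian_mixture} rather than into the lemma itself. If a sharper statement is desired, we can instead note that $\bm{P}_\perp \bm{x}_T \sim \mathcal{N}(0,\bm{P}_\perp)$. Its squared norm is then $\chi^2_{\varpi-1}$, which gives the same bound a fortiori.
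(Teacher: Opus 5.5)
Your proposal is correct and is essentially the paper's argument. The paper applies the Laurent--Massart $\chi^2$ tail bound to the Gaussian initialization $\bm{x}_T\sim\mathcal{N}(0,\bm{I})$, treating $\bm{w}_T$ as the orthogonal part of that initialization and deferring the $1/\sqrt{\gamma}$ rescaling to the $\mathcal{O}$ in \cref{thm:core_gaussian_mixture}, as you do; you simply rederive the same $2\exp(-\varpi\phi^2/8)$ bound via the sub-exponential Chernoff computation. Your contraction step $\|\bm{P}_\perp\bm{x}_T\|_2\le\|\bm{x}_T\|_2$ is slightly more careful than the paper's assertion that $\|\bm{w}_T\|_2^2\sim\chi^2_{\varpi}$, since it holds pathwise even though $\bm{u}$ depends on the trajectory through the choice of $(i,j)$. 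For the same reason, your optional $\chi^2_{\varpi-1}$ refinement, which needs $\bm{u}$ independent of $\bm{x}_T$, should not be relied on.
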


\begin{proof}
    Apply standard tail bound for $\bm{x}_T \sim \mathcal{N}(0, \bm{I})$, since  $\|\bm w_T\|_2^2 \sim \chi^2_{\varpi}$ (Lemma 1 in \citet{laurent2000adaptive}). 
\end{proof}

\begin{lemma}[Brownian confinement in an interval]
\label{lem:brownian_confinement}
Let $(B_t)_{t\ge 0}$ be standard one-dimensional Brownian motion. Then for any $V>0$ and $a>0$,
\begin{equation}
\mathbb{P}\Big(\sup_{0\le t\le V}|B_t|\le a\Big)
\ \le\ \frac{4}{\pi}\exp\!\Big(-\frac{\pi^2 V}{8a^2}\Big).
\label{eq:brownian_confinement}
\end{equation}
\end{lemma}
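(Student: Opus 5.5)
We prove the bound through the classical eigenfunction expansion of Brownian motion killed on leaving $(-a,a)$. Let $\tau_a:=\inf\{t\ge 0: |B_t|\ge a\}$ and note that $\{\sup_{0\le t\le V}|B_t|\le a\}$ differs from $\{\tau_a > V\}$ only by a null event. Path continuity gives $\{\sup_{t\le V}|B_t|\le a\}\subseteq\{\tau_a\ge V\}$, and the event that the sup equals $a$ exactly has probability zero. So it suffices to bound $\mathbb{P}(\tau_a>V)$. By Brownian scaling, $B_{a^2 s}/a$ is again a standard Brownian motion, so $\mathbb{P}(\tau_a>V)=\mathbb{P}(\tau_1>V/a^2)$. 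We may therefore take $a=1$ and $V$ replaced by $s:=V/a^2$.

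\textbf{Main route.} The function $f(x)=\cos(\pi x/2)$ satisfies $\tfrac12 f''=-\tfrac{\pi^2}{8}f$, vanishes at $\pm1$, and is positive on $(-1,1)$ with $f(0)=1$. By Itô's formula, $M_t:=e^{\pi^2 t/8}f(B_{t\wedge\tau_1})$ is a bounded martingale up to any finite horizon. Apply optional stopping at $t\wedge\tau_1$:
\begin{equation*}
1=f(0)=\mathbb{E}\big[e^{\pi^2 (t\wedge\tau_1)/8}f(B_{t\wedge\tau_1})\big].
\end{equation*}
On $\{\tau_1\le t\}$ the integrand is zero because $f(\pm1)=0$. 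Hence
\begin{equation*}
1=e^{\pi^2 t/8}\,\mathbb{E}\big[f(B_t);\tau_1>t\big].
\end{equation*}
This yields $\mathbb{E}[\cos(\pi B_t/2);\tau_1>t]=e^{-\pi^2 t/8}$. It is not yet the probability bound, since $\cos$ can be small near the boundary.

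\textbf{Converting to a probability: the main step.} Lower bounding $\cos$ on the survival set loses too much, so we use the full Fourier sine/cosine series of the killed heat kernel instead. The survival probability starting from $0$ is
\begin{equation*}
\mathbb{P}(\tau_1>t)=\frac{4}{\pi}\sum_{k\ge0}\frac{(-1)^k}{2k+1}\exp\!\Big(-\frac{(2k+1)^2\pi^2 t}{8}\Big).
\end{equation*}
This expansion comes from expanding the indicator $1$ on $(-1,1)$ in the Dirichlet eigenbasis $\cos((2k+1)\pi x/2)$, with coefficients $\tfrac{4(-1)^k}{(2k+1)\pi}$, and evolving each eigenmode by $e^{-\lambda_k t}$ with $\lambda_k=(2k+1)^2\pi^2/8$. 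The mode-wise evolution is justified by the same martingale identity as above, applied to each eigenfunction.

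The series is alternating, and its terms decrease in absolute value in $k$: both $1/(2k+1)$ and the exponential factor decrease. So the sum is bounded by its first term, giving $\mathbb{P}(\tau_1>t)\le \tfrac{4}{\pi}e^{-\pi^2 t/8}$. The only delicacy is the convergence of the series at $t=0$. For $t>0$ the exponential factors give absolute convergence. For $t=0$ the bound $4/\pi>1$ holds trivially.

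Finally, undoing the scaling with $t=V/a^2$ gives
\begin{equation*}
\mathbb{P}\Big(\sup_{0\le t\le V}|B_t|\le a\Big)\le \frac{4}{\pi}\exp\!\Big(-\frac{\pi^2 V}{8a^2}\Big),
\end{equation*}
which is exactly the claimed bound.
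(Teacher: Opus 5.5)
Your proposal is correct and follows the same route as the paper, which cites the eigenfunction expansion of the survival probability in M\"orters--Peres (Eq.~7.15), evaluates it at the midpoint, and keeps the leading eigenvalue term. Your alternating-series argument (the terms $\frac{4}{\pi(2k+1)}e^{-(2k+1)^2\pi^2 t/8}$ decrease in $k$) supplies the justification, left implicit in the paper, that truncating at the first term is an upper bound; the one step you gloss is summing the per-eigenfunction martingale identities into the series for $\mathbb{P}(\tau_1>t)$, which needs a bounded-convergence argument (for example, the uniformly bounded partial sums of the cosine series of $1$), because that series converges only conditionally.
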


\begin{proof}
    See Eq 7.15 in \citep{Morters_Peres_2010} and evaluate at the midpoint keeping the first eigenvalue term.
\end{proof}

\section{Proofs}
\label{sec:proofs}

\subsection{Proof of Theorem \labelcref{thm:core_gaussian_mixture}} \label{section:proof_of_core_gaussian_mixture}

\begin{proof}
Consider $\tau \leq \tau_1$ such that \cref{lemma:parallel_restriction} holds, and then fix $t \leq \tau$. Consider the dynamics in \cref{lemma:time_rescaling}, which are equivalent to those in \cref{eq:full_pf_ode} and are defined with respect to the rescaled time: 
\begin{equation}
    u(t) := \int_{t}^{T} \frac{\beta_s}{2\sigma_s^2}\, ds.
\end{equation}

Furthermore, consider the parallel and orthogonal dynamics as defined in \cref{lemma:par_orth_dynamics}, except parameterized by the $u(t)$ as they were defined originally per \cref{lemma:par_orth_dynamics}. Note that $u(t)$ is monotonically \textit{decreasing} for forward $t$, since: 

\begin{equation}
    \frac{d}{dt}u(t) =- \frac{\beta_t}{2\sigma_t^2} < 0,
\end{equation}

by definition of $\beta_t$ and $\sigma_t^2$; in particular, it is monotonically \textit{increasing} at our studied reverse $t$. 

Accordingly, the dynamics in \cref{lemma:time_rescaling} are: 
\begin{align}
    \frac{d}{du}\xi_{u(t)} &= \gamma_j(\bm{y}_{u(t)}) - \xi_{u(t)} + \delta_{\xi_{u(t)}}, \\
    \frac{d}{du}\bm{w}_{u(t)} &= -\bm{w}_{u(t)} + \bm{\delta}_{\bm{w}_{u(t)}},
\end{align}
where $|\delta_{\xi_u}|, \|\bm{\delta}_{\bm{w}_u}\|_2 \le \varepsilon$.

Then, by definition of the orthogonal distance (and since, by \cref{lemma:time_rescaling}, in $\bm{y}$-space the $(i,j)$-mode segment is static, i.e. $L_{u(t)} = L_{0}$ for all $t$:),
\begin{equation}
    d_{\perp}(\bm{y}_{u(t)}, L_{0,\varepsilon}^{(i,j)}) = \|\bm{w}_{u(t)}\|_2,
\end{equation}

since the closest point on the extended line segment is the orthogonal projection.
Note that the following inequality follows vacuously if $\|\bm{w}_{u(t)} \|_2 = 0$, hence we consider $\|\bm{w}_{u(t)} \|_2 \neq 0$. Define $g(u):=\|\bm{w}_{u}\|_2$, where $u := u(t)$. 

In the first case that $g(u)\neq 0$, we have
\begin{equation}
    g(u)^2 = \bm{w}_u^\top \bm{w}_u
    \quad\Longrightarrow\quad
    \frac{d}{du}g(u)^2
    = \frac{d}{du}\big(\bm{w}_u^\top \bm{w}_u\big)
    = 2\,\bm{w}_u^\top \frac{d\bm{w}_u}{du}.
\end{equation}
Using the orthogonal dynamics $\frac{d\bm{w}_u}{du} = -\bm{w}_u + \bm{\delta}_{\bm{w}_u}$ and $\|\bm{\delta}_{\bm{w}_u}\|_2\le \varepsilon$, we obtain
\begin{align}
    \frac{d}{du}g(u)^2
    &= 2\,\bm{w}_u^\top(-\bm{w}_u+\bm{\delta}_{\bm{w}_u})
     = -2\|\bm{w}_u\|_2^2 + 2\,\bm{w}_u^\top \bm{\delta}_{\bm{w}_u} \\
    &\le -2 g(u)^2 + 2\|\bm{w}_u\|_2 \|\bm{\delta}_{\bm{w}_u}\|_2
     \le -2 g(u)^2 + 2\varepsilon g(u).
\end{align}
Since $\frac{d}{du}g(u)^2 = 2 g(u) g'(u)$ for $g(u)\neq 0$, dividing by $2g(u)$ yields
\begin{equation}
    g'(u)\le -g(u)+\varepsilon.
\end{equation}

In the other case that $g(u) = 0$, the inequality holds trivially. Multiplying by $e^u$ then yields: 
\begin{equation}
    \frac{d}{du}(e^u g(u))\le \varepsilon e^u. 
\end{equation}

Integrating from $0$ to $u$ then 
gives: 
\begin{equation}
    e^{u}g(u) - g(0) \le \varepsilon(e^{u}-1),
\end{equation}

hence: 
\begin{equation}
    g(u)\le e^{-u}g(0)+\varepsilon(1-e^{-u})\le e^{-u}g(0)+\varepsilon. 
\end{equation} 

In particular, denoting $z : u(z) = 0$ to mean the time $z \in [0,T]$ s.t. $u(z) = 0$, we have that: 
\begin{equation}
    \|\bm{w}_{u(t)}\|_2 \leq e^{-u(t)} \|\bm{w}_{z : u(z) = 0}\|_2 + \varepsilon(1 - e^{-u(t)})
    \;\;\le\;\; e^{-u(t)} \|\bm{w}_{T}\|_2 + \varepsilon.
    \label{eq:gronwall_style_u}
\end{equation}

In particular, by definition of $u$, $z = T$, yielding: 
\begin{equation}
    \|\bm{w}_{u(t)}\|_2 \leq e^{-u(t)} \|\bm{w}_{T}\|_2 + \varepsilon(1 - e^{-u(t)})
    \;\;\le\;\; e^{-u(t)} \|\bm{w}_{T}\|_2 + \varepsilon.
    \label{eq:gronwall_style_T}
\end{equation}

By definition of $\bm{y}_{u(t)}$ in \cref{lemma:time_rescaling}, we have that  
$d_\perp(\bm{x}_t, L_{t,\varepsilon}^{(i,j)}) = \sqrt{\bar{\alpha}_t}\, d_\perp(\bm{y}_{u(t)}, L_{0,\varepsilon}^{(i,j)})$. Applying this to \cref{eq:gronwall_style_T}, we obtain that: 
\begin{align}
    d_\perp(\bm{x}_t, L_{t,\varepsilon}^{(i,j)})
    &= \sqrt{\bar{\alpha}_t}\, d_\perp(\bm{y}_{u(t)}, L_{0,\varepsilon}^{(i,j)})\\
    &= \sqrt{\bar{\alpha}_t}\, \|\bm{w}_{u(t)}\|_2 \\
    &\le \sqrt{\bar{\alpha}_t}\Big( e^{-u(t)} \|\bm{w}_{T}\|_2 + \varepsilon \Big) \\
    &= \sqrt{\bar{\alpha}_t}\exp(-u(t))d_{\perp}(\bm{y}_{T}, L_{T,\varepsilon}^{(i,j)}) + \sqrt{\bar\alpha_t}\varepsilon \\
    &= \frac{\sqrt{\bar{\alpha}_t}}{\sqrt{\bar{\alpha}_{T}}}\exp(-u(t))d_\perp(\bm{x}_{T}, L_{T,\varepsilon}^{(i,j)}) + \sqrt{\bar\alpha_t}\varepsilon \\
     &= \sqrt\frac{\bar{\alpha}_t}{\gamma}\exp(-u(t))d_\perp(\bm{x}_T, L_{T,\varepsilon}^{(i,j)}) + \sqrt{\bar\alpha_t}\varepsilon,  \\
    &\leq \mathcal{O}(\exp(-u(t))d_\perp(\bm{x}_T, L_{T,\varepsilon}^{(i,j)}) + \varepsilon) 
    \label{eq:still_in_forward}
\end{align}

by definition of $\bar{\alpha}_T$ and $u(t)$. To obtain the result without the initial distance, note that $||\bm{w}_T||_2 \leq ||\bm{y}_T||_2 \leq \frac{\sqrt{\varpi(1 + \phi)}}{\sqrt{\bar{\alpha}_T}} = \frac{\sqrt{\varpi(1+ \phi)}}{\sqrt{\gamma}}$ since $\bm{w}_T$ is a component of $\bm{y}_T$, and by applying \cref{lemma:init_orth_distance}; note that the extra $\sqrt{\gamma}$ in the denominator is again absorbed by the $\mathcal{O}$. Finally, since $t$ was arbitrary, this holds for all $t \leq \tau$ as desired. 

\end{proof}

\subsection{Proof of Corollary \labelcref{cor:epsilon_tube}} \label{section:proof_of_epsilon_tube}

\begin{proof}
   From the proof of \cref{thm:core_gaussian_mixture}, we have that the dynamics of $||\bm{w}_u||_2$ satisfy the differential inequality: 
    \begin{equation}
        \frac{d}{du} ||\bm{w}_u||_2 \leq - ||\bm{w}_u||_2 + \varepsilon
    \end{equation}

    At the boundary of the tube, we thus have: 
    \begin{equation}
        \frac{d}{du} ||\bm{w}_u||_2\Big|_{||\bm{w}_u||_2 = \varepsilon} \leq - \varepsilon + \varepsilon = 0. 
    \end{equation}

    In particular, if $||\bm{w}_u||_2 \leq \varepsilon$, it cannot cross the boundary to the outside, because doing so would require a strictly positive derivative. Note that since $\sqrt{\bar{\alpha}_u}$ and its reciprocal are uniformly bounded on $[T-\tau_1, T]$, the rescaling only changes the tube radius by a multiplicative constant which we absorb into $\varepsilon$. Thus, together with \cref{lemma:parallel_restriction}, the \(\varepsilon\)-tube around the line segment in \cref{thm:core_gaussian_mixture} is invariant along the reverse trajectory; if a
trajectory enters this set during the interval \(t\le \tau\), it stays there.
\end{proof}

\subsection{Proof of Proposition \labelcref{prop:stability_of_modes}} \label{section:proof_of_stability_of_modes}

\begin{proof}
     We work with the rescaled reverse-time dynamics from
\cref{lemma:time_rescaling} and \cref{lemma:par_orth_dynamics}; as in
\cref{lemma:time_rescaling}, we relabel the time variable
\(u\) as \(t\). We begin by showing existence of an instantaneously stable
equilibrium \(\xi^{(i),*}_t\) in the \(\varepsilon\)-ball around \(\bm{\mu}^{(i)}\),
without loss of generality. The proof is symmetric for \(\bm{\mu}^{(j)}\).

    By \cref{lemma:par_orth_dynamics}, we have the parallel dynamics: 

    \begin{equation}
        \dot{\xi}_t = \tilde{\gamma}_j(\bm{y}_t) - \xi_t + \delta_{\xi_t},
    \end{equation}

    where: 
    \begin{equation}
        \bm{y}_t = \bm{\mu}^{(i)} + \xi_t \ell \bm{u} + \bm{w}_t,
    \end{equation}

    where $\ell = ||\bm{\mu}^{(j)} - \bm{\mu}^{(i)}||_2, \; \bm{u} = \frac{\bm{\mu}^{(j)} - \bm{\mu}^{(i)}}{\ell}$, and $\bm{w}_t$ satisfies $\langle \bm{w}_t, \bm{u}\rangle = 0$. Furthermore, since the trajectory is in the $\varepsilon$-tube around $L_t^{(i,j)}$, we have that $||\bm{w}_t||_2 \leq \varepsilon$. 

    Fix $\xi := \xi_t$ for $t \leq \hat{\tau}$ arbitrary. Then, we have that: 
    \begin{align}
        ||\bm{y}_t - \bm{\mu}^{(i)}||_2^2 &= ||\bm{\mu}^{(i)} - \bm{\mu}^{(i)} + \xi \ell \bm{u} + \bm{w}_t||_2^2 \\
        &= ||\xi \ell \bm{u} + \bm{w}_t||_2^2  \\
        &= ||\xi \ell \bm{u}||_2^2 + ||\bm{w}_t||_2^2 + 2 \langle \xi \ell \bm{u}, \bm{w}_t \rangle \\ 
        &= \xi^2 \ell^2 ||\bm{u}||_2^2 + ||\bm{w}_t||_2^2, \text{ $\langle \bm{u}, \bm{w}_t \rangle = 0$} \\
        &= \xi^2 \ell^2 ||\bm{u}||_2^2 + ||\bm{w}_t||_2^2, \; ||\bm{u}||_2 = 1.
        \label{eq:temp_stable_1}
    \end{align}

    Next, we have that: 
    \begin{align}
        ||\bm{y}_t - \bm{\mu}^{(j)}||_2^2 &= ||\bm{\mu}^{(i)} - \bm{\mu}^{(j)} + \xi \ell \bm{u} + \bm{w}_t||_2^2 \\ 
        &= ||\bm{\mu}^{(i)} - (\bm{\mu}^{(i)} + \ell \bm{u})  + \xi \ell \bm{u} + \bm{w}_t||_2^2 \\ 
        &= ||(\xi - 1) \ell \bm{u} + \bm{w}_t||_2^2 \\ 
        &= ||(\xi- 1)\ell||_2^2 + ||\bm{w}_t||_2^2 + 2 \langle (\xi - 1) \ell \bm{u}, \bm{w}_t\rangle \\ 
        &= (1-\xi)^2 \ell^2 + ||\bm{w}_t||^2_2, 
        \label{eq:temp_stable_2}
    \end{align}

    by the same token, since $||\bm{u}||_2^2 = 1$ and $\langle \bm{u}, \bm{w}_t \rangle = 0$. 

    Incorporating \cref{eq:temp_stable_1} and \cref{eq:temp_stable_2}, we have that: 
    \begin{align}
        ||\bm{y}_t - \bm{\mu}^{(j)}||_2^2 - ||\bm{y}_t - \bm{\mu}^{(i)}||_2^2 &= (1-\xi)^2 \ell^2 + ||\bm{w}_t||^2_2 - (\xi^2 \ell^2  + ||\bm{w}_t||_2^2) \\ 
        &= (1-\xi)^2 \ell^2 - \xi^2\ell^2 \\ 
        &= ((1-\xi)^2-\xi^2)\ell^2 \\ 
        &= (1-2\xi)\ell^2.
        \label{eq:temp_stable_3}
    \end{align}

    We then bound the responsibility of mode $j$ as:
    \begin{align}
        \tilde{\gamma}_j(\bm{y}_t) &= \frac{\pi_j\mathcal{N}(\bm{y}_t; \bm{\mu}^{(j)}, \tilde{\sigma}_t^2 \bm{I})}{\sum_{m = 1}^N \pi_m\mathcal{N}(\bm{y}_t; \bm{\mu}^{(m)}, \tilde{\sigma}_t^2 \bm{I})} \\
        &\leq \frac{\pi_j\mathcal{N}(\bm{y}_t; \bm{\mu}^{(j)}, \tilde{\sigma}_t^2 \bm{I})}{\pi_i\mathcal{N}(\bm{y}_t; \bm{\mu}^{(i)}, \tilde{\sigma}_t^2 \bm{I}) + \pi_j\mathcal{N}(\bm{y}_t; \bm{\mu}^{(j)}, \tilde{\sigma}_t^2 \bm{I})} \\ 
        &= \frac{\pi_j \exp\left( -\frac{||\bm{y}_t - \bm{\mu}^{(j)}||_2^2}{2\tilde{\sigma}_t^2}\right)}{\pi_i \exp\left( -\frac{||\bm{y}_t - \bm{\mu}^{(i)}||_2^2}{2\tilde{\sigma}_t^2}\right) + \pi_j \exp\left( -\frac{||\bm{y}_t - \bm{\mu}^{(j)}||_2^2}{2\tilde{\sigma}_t^2}\right)} \\ 
        &= \frac{1}{1 + \frac{\pi_i}{\pi_j}(\exp\left( \frac{||\bm{y}_t - \bm{\mu}^{(j)}||_2^2 - ||\bm{y}_t - \bm{\mu}^{(i)}||_2^2}{2\tilde{\sigma}_t^2} \right)}, \text{ dividing numerator and denominator} \\
        &\leq \frac{\pi_j}{\pi_i} \exp\left( -\frac{||\bm{y}_t - \bm{\mu}^{(j)}||_2^2 - ||\bm{y}_t - \bm{\mu}^{(i)}||_2^2}{2\tilde{\sigma}_t^2} \right) \\
        &\leq \frac{\pi_j}{\pi_i} \exp\left( -\frac{(1-2\xi)\ell^2}{2\tilde{\sigma}_t^2} \right) 
        \label{eq:temp_stable_4}
    \end{align}

    Next, suppose that $\xi\in [-a, a]$ for $a \in \mathcal{O}(N\exp(-\kappa))$ satisfying $a \leq \frac{1}{4},$ to be chosen later. Then, $1 - 2a \geq \frac{1}{2}$, and thus: 

    \begin{align}
        \varepsilon_{\tilde{\gamma}} &= \sup_{\xi \in [-a,a]} \tilde{\gamma}_j(\bm{y}_t) \\ 
        &\leq \frac{\pi_j}{\pi_i} \exp\left( -\frac{(1-2a)\ell^2}{2\tilde{\sigma}_t^2} \right)  \\
        &\leq \frac{\pi_j}{\pi_i} \exp\left( -\frac{(1-2a)\ell^2}{2\tilde{\sigma}_t^2} \right) \\ 
        &\leq \frac{\pi_j}{\pi_i} \exp\left( -(1-2a)2\kappa \right), \; \text{\cref{assump:modes_far_apart}} 
        \label{eq:bound_for_ratio}
        \\ 
        &\in \mathcal{O}( \exp(-\kappa)), \; 1 - 2a \geq \frac{1}{2}. 
    \end{align}

     Then, by the above, $\varepsilon_{\tilde{\gamma}} \in \mathcal{O}(N\exp(-\kappa))$. Furthermore, by \cref{lemma:par_orth_dynamics}, we have that $\varepsilon_{\delta_\xi} := \sup_{\xi \in [-a, a]} |\delta_{\xi}| \leq  \sup_{\xi \in [- a, 1+a]} |\delta_{\xi}| \in \mathcal{O}(N\exp(-\kappa))$. Thus, let $a := 2(\varepsilon_{\delta_\xi} + \varepsilon_{\tilde{\gamma}})$, which satisfies $ a \in \mathcal{O}(N\exp(-\kappa))$. Since $a \in \mathcal{O}(N\exp(-\kappa))$ there exists a $C_a > 0$ s.t. $a \leq C_a N \exp(-\kappa)$ for $N$ sufficiently large; choosing $\kappa \geq \log N + \log (4C_a)$ ensures that $a \leq \frac{1}{4}$. 

    We then evaluate the behavior of $\xi$ at the endpoints. At $\xi = -a$, we have that: 
    \begin{align}
        F_t(-a) &= \tilde{\gamma}_j(\bm{y}_t(-a)) + a + \delta_{-a} \\ 
        &\geq a + \delta_{-a}, \text{  $\tilde{\gamma}_j \geq 0$} \\ 
        &\geq a - |\delta_a|, \; \text{ $x \geq - |x|$ for $x \in \mathbb{R}$} \\ 
        &\geq a - \varepsilon_{\delta_a}, \text{$\delta_{a} \leq \varepsilon_{\delta_a}$} \\ 
        &= 2\varepsilon_{\delta_{a}} + 2\varepsilon_{\tilde{\gamma}} - \varepsilon_{\delta_a} \\ 
        &\geq \varepsilon_{\delta_{a}} + \varepsilon_{\tilde{\gamma}} \\
        &> 0. 
    \end{align}

    Next, at $\xi = a$, we have that: 
    \begin{align}
        F_t(a) &= \tilde{\gamma}_j(\bm{y}_t(a)) - a + \delta_{a} \\ 
        &\leq \varepsilon_{\tilde{\gamma}} - a + \varepsilon_{\delta_a}, \; \tilde{\gamma}_j \leq \varepsilon_{\tilde{\gamma}} \text{ and} \; \delta_a \leq \varepsilon_{\delta_a} \\ 
        &= \varepsilon_{\tilde{\gamma}} - 2\varepsilon_{\delta_a} -2\varepsilon_{\tilde{\gamma}} + \varepsilon_{\delta_a} \\ 
        &= -(\varepsilon_{\delta_a} + \varepsilon_{\tilde{\gamma}}) \\
        &< 0. 
    \end{align}

    Since $F_t$ is continuous in $\xi$, since it is a composition of continuous functions (responsibilities are compositions of continuous Gaussian p.d.f.s, and hence continuous), by the intermediate value theorem there exists a $\xi_t^{*, i} \in (-a,a)$ s.t. $F_t(\xi^{*,i}_t) = 0$. Furthermore: 

    \begin{align}
        ||\bm{y}_t(\xi_t^{*, i}) - \bm{\mu}^{(i)}||_2 &\leq |\xi_t^{*, i}| \ell + ||\bm{w}||_2, \text{ triangle inequality} \\
        &\leq a\ell + \varepsilon \\
        &\in \mathcal{O}(N\exp(-\kappa)), 
    \end{align}

    i.e. there exists an instantaneous equilibria of \cref{eq:parallel_dynamics} in a $\varepsilon$-ball of $\bm{\mu}^{(i)}$, where $\varepsilon \in \mathcal{O}(N\exp(-\kappa))$. Since $\xi$ was arbitrary, this holds for all $t \leq \hat{\tau}$. 

    Next, we show the (uniform) instantaneous stability of $\xi^{*, i}_t$, without loss of generality; the proof is symmetric for $\xi^{*, j}_t$. Fix $t \leq \hat{\tau}$. Then, we have that: 

    \begin{align}
        \frac{d}{d\xi_t} F_t(\xi_t^{*, i}) = \left(\frac{d}{d\xi_t} \tilde{\gamma}_j(\bm{y}_t) + \frac{d}{d\xi_t} \delta_{\xi_t} - 1\right)\Huge|_{\xi_t = \xi_t^{*, i}}. 
        \label{eq:to_bound_for_stable}
    \end{align}

    We demonstrate that $F_t'(\xi_t^{*, i}) < 0$. Fix time $t \leq \hat{\tau}$. We have that: 

    \begin{equation}
        \nabla_{\bm{y}_t} \tilde{\gamma}_j(\bm{y}_t) = \frac{\tilde{\gamma}_j(\bm{y}_t)}{\tilde{\sigma}_t^2} (\bm{\mu}^{(j)} - \hat{\bm{\mu}})
        \label{eq:nabla_y}
    \end{equation}

    where $\hat{\bm{\mu}}$ is the same as that in \cref{lemma:time_rescaling}. Hence, we have that: 

    \begin{align}
        \frac{d}{d\xi_t} \tilde{\gamma}_j(\bm{y}_t) &= \langle \nabla_{\bm{y}_t} \tilde{\gamma}_j(\bm{y}_t), \frac{d\bm{y}_t}{d\xi_t} \rangle, \text{ chain rule} \\ 
        &= \frac{\ell\tilde{\gamma}_j(\bm{y}_t)}{\tilde{\sigma}_t^2} \langle \bm{\mu}^{(j)} - \hat{\bm{\mu}}(\bm{y})), \bm{u} \rangle \; \text{\cref{eq:nabla_y}} 
        \label{eq:temp_stable_5}
    \end{align}

    To bound the inner product, note that, by linearity of the inner product: 
    \begin{align}
        \langle \bm{\mu}^{(j)} - \hat{\bm{\mu}}(\bm{y}_t), \bm{u} \rangle = \langle \bm{\mu}^{(j)} - \bm{\mu}^{(i)}, \bm{u}\rangle + \langle \bm{\mu}^{(i)} - \hat{\bm{\mu}}(\bm{y}_t), \bm{u}\rangle 
    \end{align}

    This implies: 
    \begin{align}
        |\langle \bm{\mu}^{(j)} - \hat{\bm{\mu}}(\bm{y}_t), \bm{u} \rangle| &\leq |\langle \bm{\mu}^{(j)} - \bm{\mu}^{(i)}, \bm{u}\rangle| + |\langle \bm{\mu}^{(i)} - \hat{\bm{\mu}}(\bm{y}_t), \bm{u}\rangle|, \; \text{triangle inequality} \\ 
        &\leq ||\bm{\mu}^{(j)} - \bm{\mu}^{(i)}||_2 ||\bm{u}||_2 + ||\bm{\mu}^{(i)} - \hat{\bm{\mu}}(\bm{y}_t)||_2||\bm{u}||_2, \text{ Cauchy-Schwartz} \\ 
        &\leq \ell + ||\sum_{k \in [N]} \tilde{\gamma}_k(\bm{y}_t) (\bm{\mu}^{(k)} - \bm{\mu}^{(i)}||_2, \; ||\bm{u}||_2 = 1 \\ 
        &\leq \ell + \tilde{\gamma}_j(\bm{y}_t)||\bm{\mu}^{(j)} - \bm{\mu}^{(i)}||_2 + ||\sum_{k \in [N]} \tilde{\gamma}_k(\bm{y}_t) (\bm{\mu}^{(k)} - \bm{\mu}^{(i)}||_2, \text{triangle inequality} \\
        &\leq \ell + \tilde{\gamma}_j \ell + \max_{k \notin \{i, j\}} ||\bm{\mu}^{(k)} - \bm{\mu}^{(i)}||_2 \sum_{k \notin \{i, j\}} \tilde{\gamma}_k(\bm{y}_t) \\
        &\leq 2\ell + \max_{k \notin \{i, j\}} ||\bm{\mu}^{(k)} - \bm{\mu}^{(i)}||_2 \sum_{k \notin \{i, j\}} \tilde{\gamma}_k(\bm{y}_t) 
        \label{eq:temp_stable_6}
    \end{align}

    Incorporating \cref{eq:temp_stable_6} into \cref{eq:temp_stable_5} yields: 

    \begin{align}
        |\frac{d}{d\xi_t} \tilde{\gamma}_j(\bm{y}_t)|  &\leq \frac{2\ell^2}{\tilde{\sigma}_t^2} \tilde{\gamma}_j(\bm{y}_t) + \frac{max_{k \notin \{i, j\}} ||\bm{\mu}^{(k)} - \bm{\mu}^{(i)}||_2 \ell}{\tilde{\sigma}_t^2} \tilde{\gamma}_j(\bm{y}_t)\sum_{k \notin \{i, j\}} \tilde{\gamma}_k(\bm{y}_t) \\
        &\leq \frac{2\ell^2}{\tilde{\sigma}_t^2} \tilde{\gamma}_j(\bm{y}_t) + \frac{max_{k \notin \{i, j\}} ||\bm{\mu}^{(k)} - \bm{\mu}^{(i)}||_2 \ell^2}{\tilde{\sigma}_t^2} \tilde{\gamma}_j(\bm{y}_t)\sum_{k \notin \{i, j\}} \tilde{\gamma}_k(\bm{y}_t).  
        \label{eq:temp_stable_7}
    \end{align}

    Recall that $\xi_t^{*, i} \in (-a,a)$ where $a \leq \frac{1}{4}$, and thus we have $(1-2a) \geq \frac{1}{2}$. Let $x := \frac{\ell^2}{2\tilde{\sigma}_t^2}$. We then have, by \cref{eq:bound_for_ratio}, that: 
    \begin{equation}
        \tilde{\gamma}_j(\bm{y}_t) \leq \frac{\pi_j}{\pi_i}(\exp(-(1-2a) x) \leq \frac{\pi_j}{\pi_i}\exp(-x/2).  
    \end{equation}

    Thus: 
    \begin{equation}
        \frac{2\ell^2}{\tilde{\sigma}_t^2} \tilde{\gamma_j}(\bm{y}_t) = 4x\tilde{\gamma}_j(\bm{y}_t) \leq 4\frac{\pi_j}{\pi_i} x\exp(-x/2). 
        \label{eq:temp_stable_8}
    \end{equation}

    Furthermore, by \cref{assump:modes_far_apart}, we have that $\frac{\ell^2}{2\tilde{\sigma}_t^2} \geq 2\kappa$, i.e. $x \geq 2\kappa$. Therefore, we have that  $x\exp(-x/2) \leq \sup_{x \geq 2\kappa} x\exp(-x/2) = (2\kappa) \exp (-2\kappa / 2) = 2\kappa \exp(-\kappa)$. Then, incorporating this and \cref{eq:temp_stable_8} into \cref{eq:temp_stable_7}, we have that: 
    \begin{align}
        \sup_{\xi \in [-a,a]} |\frac{d}{d\xi_t} \tilde{\gamma}_j(\bm{y}_t)|  &\leq  \sup_{\xi \in [-a,a]} \left( \frac{2\ell^2}{\tilde{\sigma}_t^2} \tilde{\gamma}_j(\bm{y}_t) + \frac{max_{k \notin \{i, j\}} ||\bm{\mu}^{(k)} - \bm{\mu}^{(i)}||_2 \ell^2}{\tilde{\sigma}_t^2} \tilde{\gamma}_j(\bm{y}_t)\sum_{k \notin \{i, j\}} \tilde{\gamma}_k(\bm{y}_t) \right) \\
        &\leq 2 \sup_{\xi \in [-a,a]} \sup_{\frac{\ell^2}{2\tilde{\sigma_t^2}} \geq \kappa} \frac{\ell^2}{\tilde{\sigma_t^2}} \tilde{\gamma_j}(\bm{y}_t) +  \max_{k \notin \{i, j\}} ||\bm{\mu}^{(k)} - \bm{\mu}^{(i)}||_2 \sup_{\xi \in [-a,a]} \sup_{\frac{\ell^2}{2\tilde{\sigma_t^2}} \geq \kappa} \frac{\ell^2}{\tilde{\sigma}_t^2} \tilde{\gamma}_j(\bm{y}_t)\sum_{k \notin \{i, j\}} \tilde{\gamma}_k(\bm{y}_t) \\ 
        &\leq 8\frac{\pi_j}{\pi_i}\kappa \exp(-\kappa) + \max_{k \notin \{i, j\}} 2||\bm{\mu}^{(k)} - \bm{\mu}^{(i)}||_2 \kappa \exp(-\kappa) \sum_{k \notin \{i, j\}} \tilde{\gamma}_k(\bm{y}_t) \\
        &\leq 8\frac{\pi_j}{\pi_i}\kappa \exp(-\kappa) + \max_{k \notin \{i, j\}} 2||\bm{\mu}^{(k)} - \bm{\mu}^{(i)}||_2 \kappa \exp(-\kappa) (N-2) \max_{k \notin \{i,j\}} \frac{\pi_k}{\pi_{d^*}}\exp(-\kappa), \text{ \cref{eq:bound_on_resp}}.
        \label{eq:bound_on_first_stable_term}
    \end{align}

    We next bound the derivative of $\delta_{\xi_t}$ for $\xi_t \in (-a,a)$ arbitrary.

    We have that: 
    \begin{equation}
        \frac{d}{d\xi_t} \delta_{\xi_t} = \frac{1}{\ell} \sum_{k \notin \{i,j\}} \frac{d}{d\xi_t} \tilde{\gamma}_k(\bm{y}_t)\langle \bm{\mu}^{(k)} - \bm{\mu}^{(i)}, \bm{u}\rangle 
    \end{equation}

    which yields that: 
    \begin{align}
        |\frac{d}{d\xi_t} \delta_{\xi_t}| &\leq \frac{1}{\ell} \sum_{k \notin \{i,j\}} |\frac{d}{d\xi_t} \tilde{\gamma}_k(\bm{y}_t)| |\langle \bm{\mu}^{(k)} - \bm{\mu}^{(i)}, \bm{u}\rangle|, \text{ triangle inequality} \\
        &\leq \frac{1}{\ell} \sum_{k \notin \{i,j\}} |\frac{d}{d\xi_t} \tilde{\gamma}_k(\bm{y}_t)| || \bm{\mu}^{(k)} - \bm{\mu}^{(i)}||_2 ||\bm{u}||_2 \\
        &\leq \frac{\max_{k \notin \{i, j\}} ||\bm{\mu}^{(k)} - \bm{\mu}^{(i)}||_2}{\ell} (N-2)\max_{k \notin \{i,j\}}|\frac{d}{d\xi_t} \tilde{\gamma_k}(\bm{y}_t)|. 
        \label{eq:temp_stable_9}
    \end{align}
     
    We then have that: 
    \begin{align}
        |\frac{d}{d\xi_t} \tilde{\gamma}_k(\bm{y}_t)| &= |\langle \nabla_{\bm{y}_t} \tilde{\gamma}_k(\bm{y}_t), \frac{d\bm{y}_t}{d\xi_t} \rangle |, \text{ chain rule} \\
        &= \frac{\ell\tilde{\gamma}_k(\bm{y}_t)}{\tilde{\sigma_t}^2} |\langle \bm{\mu}^{(k)} - \hat{\bm{\mu}}(\bm{y}_t), \bm{u}\rangle|, \\ 
        &\leq \frac{\ell\tilde{\gamma}_k(\bm{y}_t)}{\tilde{\sigma_t}^2} ||\bm{\mu}^{(k)} - \hat{\bm{\mu}}(\bm{y}_t)||_2, \text{ Cauchy-Schwartz, $||\bm{u}||_2 =1 $}.  \\
        &\leq \frac{\ell}{\tilde{\sigma}_t^2} \frac{\pi_k}{\pi_{d^*}} \exp(-\kappa) ||\bm{\mu}^{(k)} - \hat{\bm{\mu}}(\bm{y}_t)||_2, \text{ \cref{eq:bound_on_resp}}
        \label{eq:temp_stable_10}
    \end{align}

    Then, we have that: 
    \begin{align}
        ||\bm{\mu}^{(k)} - \hat{\bm{\mu}}(\bm{y}_t)||_2 &= ||\bm{\mu}^{(k)} - \hat{\bm{\mu}}(\bm{y}_t) - \bm{\mu}^{(i)} + \bm{\mu}^{(i)}||_2 \\ 
        &\leq ||\bm{\mu}^{(k)} - \bm{\mu}^{(i)}||_2 + ||\hat{\bm{\mu}}(\bm{y}_t) - \bm{\mu}^{(i)}||_2 \\ 
        &\leq \max_{k \notin \{i,j\}} ||\bm{\mu}^{(k)} - \bm{\mu}^{(i)}||_2 + ||\sum_{m \in [N]} \tilde{\gamma}_m(\bm{y}_t) \bm{\mu}^{(m)} - \bm{\mu}^{(i)}||_2 \\  
        &\leq \max_{k \notin \{i,j\}} ||\bm{\mu}^{(k)} - \bm{\mu}^{(i)}||_2 + \tilde{\gamma}_j(\bm{y}_t) ||\bm{\mu}^{(j)} - \bm{\mu}^{(i)}||_2 + ||\sum_{m \notin \{i, j\}} \tilde{\gamma}_m(\bm{y}_t) \bm{\mu}^{(m)} - \bm{\mu}^{(i)}||_2  \\ 
        &\leq \max_{k \notin \{i,j\}} ||\bm{\mu}^{(k)} - \bm{\mu}^{(i)}||_2 + \frac{\pi_j}{\pi_i}\exp(-\frac{(1-2\xi)\ell^2}{2\tilde{\sigma}_t^2})  ||\bm{\mu}^{(j)} - \bm{\mu}^{(i)}||_2 + ||\sum_{m \notin \{i, j\}} \tilde{\gamma}_m(\bm{y}_t) \bm{\mu}^{(m)} - \bm{\mu}^{(i)}||_2, \text{ \cref{eq:temp_stable_4}} \\
        &\leq \max_{k \notin \{i,j\}} ||\bm{\mu}^{(k)} - \bm{\mu}^{(i)}||_2 + \frac{\pi_j}{\pi_i} \ell \exp(-\kappa) + ||\sum_{m \notin \{i, j\}} \tilde{\gamma}_m(\bm{y}_t) \bm{\mu}^{(m)} - \bm{\mu}^{(i)}||_2, \text{ $\xi_t \in (-a,a)$, $1-2a \geq \frac{1}{2}$} \\
        &\leq \max_{k \notin \{i,j\}} ||\bm{\mu}^{(k)} - \bm{\mu}^{(i)}||_2 + \frac{\pi_j}{\pi_i} \ell \exp(-\kappa) + \max_{m \notin \{i,j\}} ||\bm{\mu}^{(m)} - \bm{\mu}^{(i)}||_2 (N-2) \exp(-\kappa), \text{ \cref{eq:bound_on_resp}}. 
        \label{eq:temp_stable_11}
    \end{align}

    Incorporating \cref{eq:temp_stable_11} into \cref{eq:temp_stable_10} yields: 

    \begin{align}
        |\frac{d}{d\xi_t} \tilde{\gamma}_k(\bm{y}_t)| 
        &\leq |\sup_{\xi_t \in (-a,a)} \frac{d}{d\xi_t} \tilde{\gamma}_k(\bm{y}_t)|\\ 
        &\leq |\sup_{\xi_t \in (-a,a)}  \frac{\ell}{\tilde{\sigma}_t^2} \frac{\pi_k}{\pi_{d^*}} (\max_{k \notin \{i,j\}} ||\bm{\mu}^{(k)} - \bm{\mu}^{(i)}||_2)| \\
        &\leq  \frac{\ell}{\tilde{\sigma}_t^2} \frac{\pi_k}{\pi_{d^*}} \exp(-\kappa) ( \max_{k \notin \{i,j\}} ||\bm{\mu}^{(k)} - \bm{\mu}^{(i)}||_2 + \frac{\pi_j}{\pi_i} \ell \exp(-\kappa) \\
        &+ \max_{m \notin \{i,j\}} ||\bm{\mu}^{(m)} - \bm{\mu}^{(i)}||_2 (N-2) \exp(-\kappa)) 
        \label{eq:temp_stable_12}
    \end{align}

    Then, incorporating \cref{eq:temp_stable_12} into \cref{eq:temp_stable_9} yields: 
    \begin{align}
         |\frac{d}{d\xi_t} \delta_{\xi_t}| &\leq \frac{\max_{k \notin \{i, j\}} ||\bm{\mu}^{(k)} - \bm{\mu}^{(i)}||_2}{\ell} (N-2)\max_{k \notin \{i,j\}}|\frac{d}{d\xi_t} \tilde{\gamma_k}(\bm{y}_t)| \\
         &\leq \frac{\max_{k \notin \{i, j\}} ||\bm{\mu}^{(k)} - \bm{\mu}^{(i)}||_2}{\ell} (N-2)\frac{\ell}{\tilde{\sigma}_t^2} \max_{k \notin \{i,j\}}\frac{\pi_k}{\pi_{d^*}} \exp(-\kappa) ( \max_{k \notin \{i,j\}} ||\bm{\mu}^{(k)} - \bm{\mu}^{(i)}||_2 + \frac{\pi_j}{\pi_i} \ell \exp(-\kappa) \\
        &+ \max_{m \notin \{i,j\}} ||\bm{\mu}^{(m)} - \bm{\mu}^{(i)}||_2 (N-2) \exp(-\kappa)) 
        \label{eq:bound_on_second_stable_term}
    \end{align}

    Incorporating \cref{eq:bound_on_first_stable_term} and \cref{eq:bound_on_second_stable_term} into \cref{eq:to_bound_for_stable} yields: 

    \begin{align}
        \frac{d}{d\xi_t} F_t(\xi_t^{*, i}) &\leq |\frac{d}{d\xi_t} F_t(\xi_t^{*, i})| \\
        &= \left| \left(\frac{d}{d\xi_t} \tilde{\gamma}_j(\bm{y}_t) + \frac{d}{d\xi_t} \delta_{\xi_t} - 1\right)\Large|_{\xi_t = \xi_t^{*, i}} \right| \\
        &\leq \sup_{\xi_t \in (-a,a)}\ \left| \left(\frac{d}{d\xi_t} \tilde{\gamma}_j(\bm{y}_t) + \frac{d}{d\xi_t} \delta_{\xi_t} - 1\right) \right| \\ 
        &\leq \sup_{\xi_t \in (-a,a)}\ | \frac{d}{d\xi_t} \tilde{\gamma}_j(\bm{y}_t)| + \sup_{\xi_t \in (-a,a)}\ |\frac{d}{d\xi_t} \delta_{\xi_t}| - 1 \\
        &\leq 8\frac{\pi_j}{\pi_i}\kappa \exp(-\kappa) + \max_{k \notin \{i, j\}} 2||\bm{\mu}^{(k)} - \bm{\mu}^{(i)}||_2 \kappa \exp(-\kappa) (N-2) \max_{k \notin \{i,j\}}\frac{\pi_k}{\pi_{d^*}}\exp(-\kappa) \\
        &+ \frac{\max_{k \notin \{i, j\}} ||\bm{\mu}^{(k)} - \bm{\mu}^{(i)}||_2}{\ell} (N-2)\frac{\ell}{\sigma^2} \max_{k \notin \{i,j\}} \frac{\pi_k}{\pi_{d^*}} \exp(-\kappa) ( \max_{k \notin \{i,j\}} ||\bm{\mu}^{(k)} - \bm{\mu}^{(i)}||_2 + \frac{\pi_j}{\pi_i} \ell \exp(-\kappa) \\
        &+ \max_{m \notin \{i,j\}} ||\bm{\mu}^{(m)} - \bm{\mu}^{(i)}||_2 (N-2) \exp(-\kappa)) - 1 
    \end{align}

    since $\frac{1}{\tilde{\sigma}_t^2} \leq \frac{1}{\min_{t \in [t, \hat{\tau})} \tilde{\sigma}_t^2} = \frac{1}{\sigma^2}$.

    Define $W_{i,j} := \frac{\pi_j}{\pi_i}$, $M := \max_{k \notin \{i,j\}} ||\bm{\mu}^{(k)} - \bm{\mu}^{(i)}||_2$, and $R := \max_{k \notin \{i,j\}} W_{k, d^{*}}$. We then must choose $\kappa$ s.t. 

    \begin{align}
        8W_{i,j}\kappa \exp(-\kappa) + 2M\kappa \exp(-\kappa) (N-2) R \exp(-\kappa) 
        \\ + \frac{M(N-2) R }{\sigma^2}  ( M\exp(-\kappa) + W_{i,j} \ell \exp(-2\kappa) 
        + M (N-2) \exp(-2\kappa)) < 1
    \end{align}

    Denote: 

    \begin{equation}
        T_1(\kappa) = 8W_{i,j}\kappa \exp(-\kappa),
        \label{eq:T_1}
    \end{equation}

    \begin{equation}
        T_2(\kappa) = 2M\kappa \exp(-\kappa) (N-2) R \exp(-\kappa),
        \label{eq:T_2}
    \end{equation}

    and, 

    \begin{equation}
        T_3(\kappa) = \frac{M(N-2) R }{\sigma^2}  ( M\exp(-\kappa) + W_{i,j} \ell \exp(-2\kappa) 
        + M (N-2) \exp(-2\kappa))
        \label{eq:T_3}
    \end{equation}

    We first bound $T_1$: 

    \begin{align}
        T_1(\kappa) &= 8W_{i,j}\kappa \exp(-\kappa) \\
        &\leq 8W_{i,j} \exp(-\kappa /2) < \frac{1}{3}\\
        &\iff \exp(-\kappa/2) < \frac{1}{24W_{i,j}} \\
        &\iff \kappa > 2 \log (24W_{i,j}) \\
        &\in 2\log N + \Theta(1) 
    \end{align}

    Next, we bound $T_2$. Since $\kappa > 1$, we have that $\kappa \exp(-\kappa) \leq \exp(-1)$, hence $\kappa \exp(-2\kappa) = \exp(-\kappa) (\kappa \exp(-\kappa)) \leq \exp(-1) \exp(-\kappa)$. Thus: 

    \begin{align}
        T_2(\kappa) &= 2M \kappa\exp(-2\kappa)(N-2)R \\
        &\leq 2M\exp(-1) \exp(-\kappa)(N-2)R < \frac{1}{3} \\
        &\iff \exp(-\kappa) < \frac{\exp(1)}{6MR(N-2)} \\
        &\iff \kappa > \log  (6\exp(-1) MR(N-2)) \\
        &\in \log N + \Theta(1) 
    \end{align}

    Finally, we bound $T_3$. Since the inverse exponential is monotonically decreasing, $\exp(-2\kappa) \leq \exp(-\kappa)$. Hence, $M\exp(-\kappa) + W_{i,j}\ell(\exp(-2\kappa) + M(n_2)\exp(-2\kappa) \leq (M+W_{i,j}\ell + M(N-2))\exp(-\kappa)$. Define the constant: 

    \begin{equation}
        A := \frac{M(N-2)R}{\sigma^2} (M + W_{i,j}\ell + M(N-2)) 
    \end{equation}

    Then: 

    \begin{align}
        T_3(\kappa) &= \frac{M(N-2) R }{\sigma^2}  ( M\exp(-\kappa) + W_{i,j} \ell \exp(-2\kappa) 
        + M (N-2) \exp(-2\kappa)) \\
        &\leq A \exp(-\kappa)  < \frac{1}{3} \\
        &\iff \exp(-\kappa) < \frac{1}{3A} \\
        &\iff \kappa > \log(3A) \\
        &\iff \kappa > \log (3\frac{M(N-2)R}{\sigma^2} (M + W_{i,j}\ell + M(N-2)) ) \\
        &\in 2 \log N + \Theta(1)  
    \end{align}

    Furthermore, $\kappa \geq \log N + \log (4C_{a}) \in 2\log N + \Theta(1)$. Thus, the maximum over all these constraints is also in $2 \log N + \Theta(1)$, as desired. Finally, note that $t \in [0, \hat{\tau})$ was arbitrary, and the bound of the derivative of $F_t(\xi_t)$ evaluated at the equilibria does not depend on time, so $\xi_t^{*, i}$ is uniformly instantaneously stable for all $t \in [0, \hat{\tau})$, as desired.    
\end{proof}

\subsection{Proof of Corollary \labelcref{corollary:exponential_tracking_stability}} \label{section:proof_of_exponential_tracking_stability}

\begin{proof}
Fix $\hat{\tau}=\min\{\tau,\tau_2\}$ as in \cref{prop:true_ddim_persistence}.
By \cref{prop:stability_of_modes}, there exists an instantaneously stable equilibrium
$\xi^{\ast,i}_t$ of \cref{eq:parallel_dynamics} within a $\varepsilon$-ball of mode $\mu^{(i)}$, for all $t < \hat{\tau}$. Moreover, the proof of \cref{prop:stability_of_modes} shows that the stability is \emph{uniform} over $t\in[0,\hat{\tau}]$, in the sense that there exists a constant $m_0>0$ (independent of $t$) such that
\begin{equation}
    F_t'(\xi^{\ast,i}_t)\le -m_0
    \qquad\forall t\in[0,\hat{\tau}].
    \label{eq:F4_uniform_instability_bound}
\end{equation}

Since $\tilde{\gamma}_j$ is a smooth composition of Gaussian p.d.f.s and $\varDelta_{\xi_t}$ is differentiable
in $\xi$ (as used throughout the \cref{prop:stability_of_modes} proof), we have that $F_t$ is $C^1$ in $\xi$ for each fixed $t$. Since $F_t(\xi_t^{\ast,i})=0$ and
$\partial_\xi F_t(\xi_t^{\ast,i})\le -m_0<0$, the implicit function theorem
implies that the equilibria can be chosen as a continuous branch
$t\mapsto \xi_t^{\ast,i}$ on $[0,\hat{\tau}]$.

Hence, by continuity of $F_t'$ in $\xi$ for each fixed $t\in[0,\hat{\tau}]$ there exists a radius
$\varkappa_t>0$ such that
\begin{equation}
    F_t'(\xi)\le -\frac{m_0}{2}
    \qquad\forall \xi\in[\xi^{\ast,i}_t-\varkappa_t,\;\xi^{\ast,i}_t+\varkappa_t].
    \label{eq:F4_local_negative_derivative}
\end{equation}
By joint continuity in $(t,\xi)$ and compactness of $[0,\hat{\tau}]$, the radii may be chosen so that $\varkappa:=\inf_{t\in[0,\hat{\tau}]}\varkappa_t>0$. Recall from the proof of \cref{prop:stability_of_modes} that, for
$a:=2(\varepsilon_{\delta_\xi}+\varepsilon_{\tilde\gamma})$, we have $ F_t(-a)>0, F_t(a)<0, |\xi_t^{\ast,i}|\le a, \forall t\in[0,\hat{\tau}]$. 
Since $a\in O(N\exp(-\kappa))$, there exists $C>0$ such that
$a\le CN\exp(-\kappa)$. Thus, to ensure $2a\le\varkappa$, it suffices to take $ \kappa \ge \log\!\left(\frac{2CN}{\varkappa}\right),$, which is satisfied by the condition $\kappa\ge 2\log N+\Theta(1)$. Therefore, $ [-a,a]\subseteq
    [\xi_t^{\ast,i}-\varkappa_t,\xi_t^{\ast,i}+\varkappa_t],
    \; \forall t\in[0,\hat{\tau}]$.  

Let $\xi_t^{(1)}$ and $\xi_t^{(2)}$ be two solutions of \cref{eq:parallel_dynamics}
with $\xi_{t_0}^{(1)},\xi_{t_0}^{(2)}\in[-a,a]$. Since $F_t(-a)>0$ and
$F_t(a)<0$ uniformly in $t$, the interval $[-a,a]$ is invariant. Hence, for
every $t<t_0$,
\begin{equation}
    \xi^{(1)}_t,\ \xi^{(2)}_t\in[-a,a]\subseteq
    [\xi^{\ast,i}_t-\varkappa_t,\;\xi^{\ast,i}_t+\varkappa_t].
    \label{eq:F4_ttay_in_nbhd}
\end{equation}
Define $\varDelta_t:=\xi^{(1)}_t-\xi^{(2)}_t$. Then $\varDelta_t$ is $C^1$ in $t$, and: 
\begin{equation}
    \dot{\varDelta}_t
    \;=\;
    F_t(\xi^{(1)}_t)-F_t(\xi^{(2)}_t).
    \label{eq:F4_delta_dyn}
\end{equation}

 Now, fix any $t < t_0 \leq \hat{\tau}$. By the mean value theorem applied to the $C^1$ function $F_t(\cdot)$,
there exists $\theta_t$ lying between $\xi^{(1)}_t$ and $\xi^{(2)}_t$ such that: 
\begin{equation}
    F_t(\xi^{(1)}_t)-F_t(\xi^{(2)}_t)
    \;=\;
    F_t'(\theta_t)\,(\xi^{(1)}_t-\xi^{(2)}_t)
    \;=\;
    F_t'(\theta_t)\,\varDelta_t. 
    \label{eq:F4_mvt}
\end{equation}
By \cref{eq:F4_ttay_in_nbhd}, we have $\theta_t\in[\xi^{\ast,i}_t -\varkappa_t, \xi^{\ast,i}_t+\varkappa_t]$,
so \cref{eq:F4_local_negative_derivative} yields $F_t'(\theta_t)\le -m_0/2$.
Incorporating this into \cref{eq:F4_delta_dyn} and \cref{eq:F4_mvt} yields: 
\begin{equation}
    \dot{\varDelta}_t \;=\; F_t'(\theta_t)\,\varDelta_t,
    \qquad\text{with } F_t'(\theta_t)\le -\frac{m_0}{2}.
    \label{eq:F4_delta_linear}
\end{equation}

Next, consider the absolutely continuous function $t\mapsto |\varDelta_t|$. For $t$ such that $\varDelta_t\neq 0$,
\begin{equation}
    \frac{d}{dt}|\varDelta_t|
    \;=\;
    \mathrm{sign}(\varDelta_t)\,\dot{\varDelta}_t
    \;=\;
    \mathrm{sign}(\varDelta_t)\,F_t'(\theta_t)\,\varDelta_t
    \;=\;
    F_t'(\theta_t)\,|\varDelta_t|
    \;\le\;
    -\frac{m_0}{2}\,|\varDelta_t|.
    \label{eq:F4_abs_delta_ineq}
\end{equation}
Integrating \cref{eq:F4_abs_delta_ineq} and applying Gr\"onwall's inequality yields for all $t < t_0$:
\begin{equation}
    |\xi^{(1)}_t-\xi^{(2)}_t|
    \;=\;
    |\varDelta_t|
    \;\le\;
    \exp\!\Big(-\frac{m_0}{2}\big(u(t)-u(t_0)\big)\Big)\,|\varDelta_{t_0}|
    \;=\;
    \exp\!\Big(-\frac{m_0}{2}\big(u(t)-u(t_0)\big)\Big)\,|\xi^{(1)}_{t_0}-\xi^{(2)}_{t_0}|.
    \label{eq:F4_contraction}
\end{equation}

Taking $t_0=\hat{\tau}$ and $\xi^{(2)}=\bar{\xi}$, where
$\bar{\xi}_{\hat{\tau}}=0$, gives a solution
$\bar{\xi}_t\in[-a,a]$ s.t. for all $t\in[0,\hat{\tau}]$, every 
trajectory in $[-a,a]$ contracts exponentially fast to $\bar{\xi}_t$ by \cref{eq:F4_contraction}.
Note that this follows symmetrically in a neighborhood of $\xi_t^{\ast,j}$.

\end{proof}

\subsection{Proof of Proposition \labelcref{prop:true_ddim_persistence}} \label{section:proof_of_true_ddim_persistence}

Before stating the proof, we firstly recall the parallel dynamics from \cref{lemma:par_orth_dynamics}: 

    \begin{align}
        \dot{\xi}_t &= \tilde{\gamma}_j(\bm{y}_t) - \xi_t + \delta_{\xi_t}, 
    \end{align}

    where: 
    \begin{equation}
        |\delta_{\xi}|, ||\bm{\delta}_{\bm{w}}||_2 \leq \varepsilon \in \mathcal{O}\left(N\exp(-\kappa)\right),
    \end{equation}

    for: 

    \begin{equation}
        \bm{y}_t = \bm{\mu}^{(i)} + \xi_t \ell \bm{u} + \bm{w}_t, 
    \end{equation}

    where $\ell = ||\bm{\mu}^{(j)} - \bm{\mu}^{(i)}||_2, \bm{u} = \frac{\bm{\mu}^{(j)} - \bm{\mu}^{(i)}}{\ell}$, and $\bm{w}_t$ satisfies $\langle \bm{w}_t, \bm{u} \rangle = 0$, and we also consider the responsibility of mode $j$: 

    \begin{equation}
        \tilde{\gamma}_j(\bm{y}_t) = \frac{\pi_j \mathcal{N}(\bm{y}_t; \bm{\mu}^{(j)}, \tilde{\sigma}_t^2 \bm{I})}{\sum_{m \in [N] \pi_m }\mathcal{N}(\bm{y}_t; \bm{\mu}^{(m)}, \tilde{\sigma}_t^2 \bm{I})}. 
    \end{equation}

We simplify the parallel dynamics by dropping the term bounded by $\varepsilon$, and additionally consider only the dynamics with respect to the two-mode responsibility, i.e. we consider: 

\begin{equation}
    \dot{\xi}_t = \tilde{\gamma}_{j}^{(i,j)}(\bm{y}_t) - \xi_t,
    \label{eq:simplified_parallel}
\end{equation}

where: 

\begin{equation}
     \tilde{\gamma}_{j}^{(i,j)}(\bm{y}_t) := \frac{\pi_j \mathcal{N}(\bm{y}_t; \bm{\mu}^{(j)}, \tilde{\sigma}_t^2 \bm{I})}{\pi_i \mathcal{N}(\bm{y}_t; \bm{\mu}^{(i)}, \tilde{\sigma}_t^2 \bm{I}) + \pi_j \mathcal{N}(\bm{y}_t; \bm{\mu}^{(j)}, \tilde{\sigma}_t^2 \bm{I})}. 
     \label{eq:two_mode_resp_defn}
\end{equation}

Throughout the statement and proof, we work in the rescaled reverse-time
parameter \(u\) defined in \cref{lemma:time_rescaling}. This parameter increases as the original diffusion time $t$ decreases
from \(T\) to \(0\). For notational simplicity, we relabel \(u\) as \(t\).
Thus, \(\xi_{\tau_3}\) denotes the initial value of the approximate parallel
dynamics over a remaining reverse-time interval of length \(\tau_3\), while $\xi_{0}$ denotes the terminal value. 
 Then, we prove the result: 

\begin{proof}
    \textbf{(1)}: Since the instance is within the $\varepsilon$-tube around $L_t^{(i,j)}$, we have that $||\bm{w}_t|| \leq \varepsilon$, just as we do \cref{prop:stability_of_modes}. Thus, by the same token as in the proof of \cref{prop:stability_of_modes}, we have that: 

    \begin{equation}
        ||\bm{y}_t - \bm{\mu}^{(i)}||_2^2 = \xi_t^2 \ell^2 + ||\bm{w}_t||_2^2, 
    \end{equation}

    and: 

    \begin{equation}
        ||\bm{y}_t - \bm{\mu}^{(j)}||_2^2 = ||(\xi_t - 1) \ell \bm{u} + \bm{w}_t||_2^2 = (1-\xi_t)^2\ell^2 + ||\bm{w}_t||_2^2, 
    \end{equation}

    yielding: 

    \begin{equation}
        ||\bm{y}_t - \bm{\mu}^{(j)}||_2^2 - ||\bm{y}_t - \bm{\mu}^{(i)}||_2^2 = (1-2\xi_t) \ell^2. 
        \label{eq:temp_slowdown_1}
    \end{equation}

    From \cref{eq:two_mode_resp_defn}, we have that: 

    \begin{align}
        \frac{\tilde{\gamma}^{(i,j)}_j(\bm{y}_t)}{1 - \tilde{\gamma}_{j}^{(i,j)}(\bm{y}_t)} &= \frac{\pi_j \mathcal{N}(\bm{y}_t ; \bm{\mu}^{(j)}, \tilde{\sigma}_t^2 \bm{I})}{\pi_i \mathcal{N}(\bm{y}_t ; \bm{\mu}^{(i)}, \tilde{\sigma}_t^2 \bm{I})} \\
        &= \frac{\pi_j}{\pi_i} \exp(-\frac{||\bm{y}_t - \bm{\mu}^{(j)}||_2^2 - ||\bm{y}_t - \bm{\mu}^{(i)}||_2^2}{2\tilde{\sigma}_t^2}) \\ 
        &= \frac{\pi_j}{\pi_i} \exp(\frac{\ell^2}{\tilde{\sigma}_t^2}(\xi - \frac{1}{2})), \text{ \cref{eq:temp_slowdown_1}} 
        \label{eq:temp_slowdown_2}
    \end{align}

    Since $\pi_i = \pi_j$, at $\xi = \frac{1}{2}$, \cref{eq:temp_slowdown_2} evaluates to: 

    \begin{equation}
        \frac{\tilde{\gamma}_j^{(i,j)}(\bm{y}_t)}{ 1- \tilde{\gamma}_j^{(i,j)}(\bm{y}_t)}\big|_{\xi_t = 1/2} = 1
    \end{equation}

    yielding: 

    \begin{equation}
        \tilde{\gamma}_j^{(i,j)}(\bm{y}_t)_{\xi_t = 1/2} = 1/2
    \end{equation}

    Define $F_t(\xi_t) = \tilde{\gamma}_j^{(i,j)}(\bm{y}_t) - \xi_t$. Then, $F_t(1/2) = 0$, so $\xi^* = \frac{1}{2}$ is an (instantaneous) equilibrium of \cref{eq:simplified_parallel}. 

    The corresponding point in $\mathbb{R}^d$ is thus: 

    \begin{align}
        \bm{y}_t^* &:= \bm{y}_t\big|_{\xi = 1/2} \\
        & = \bm{\mu}^{(i)} + \frac{1}{2}\ell \bm{u} + \bm{w}_t \\
        &= \frac{\bm{\mu}^{(i)} + \bm{\mu}^{(j)}}{2} + \bm{w}_t. 
    \end{align}

    Next, we demonstrate that this is a hyperbolic unstable (instantaneous) equilibria. We have that: 

    \begin{align}
        \frac{d}{d\xi_t}F_t(\xi_t) &= \frac{d}{d\xi_t} \tilde{\gamma}_j^{(i,j)}(\bm{y}_t) - 1 \\ 
        &= \frac{\ell^2}{\tilde{\sigma}_t^2} \tilde{\gamma}_j^{(i,j)}(\bm{y}_t) ( 1 - \tilde{\gamma}_j^{(i,j)}(\bm{y}_t)) - 1,
        \label{eq:temp_slowdown_3}
    \end{align}

    We thus have that: 
    \begin{equation}
        \frac{d}{d\xi_t}F_t(\xi_t) \big|_{\xi_t = 1/2}
        = \frac{\ell^2}{\tilde{\sigma}_t^2} \frac{1}{2} \frac{1}{2} - 1
        = \frac{\ell^2}{4\tilde{\sigma}_t^2} -1. 
    \end{equation}

    By \cref{assump:modes_far_apart}, we thus have that: 

    \begin{equation}
        \lambda_t := \frac{\ell^2}{4\tilde{\sigma}_t^2} - 1 \geq \kappa - 1 > 0
    \end{equation}

    since $\kappa > 1$. Since $\lambda_t$ has no complex components, $\xi_t = 1/2$ thus corresponds to a hyperbolic unstable (instantaneous) equilibria. 

\textbf{(2)}: Consider $t \in [0,\tau_3]$ and recall the approximate parallel dynamics in \cref{eq:simplified_parallel}. Note that $\underline{\lambda}$ and $\bar{\lambda}$ exist by the extreme value theorem, since they are defined on a compact domain with respect to continuous $\lambda_t$ (since $\tilde{\sigma}_t^2$ is continuous). Within Tube$_{\varepsilon, t}^{(i,j)}$, by \cref{eq:temp_slowdown_2}, we have the identity:
\begin{equation}
    \frac{\tilde{\gamma}_{j}^{(i,j)}(\xi_t)}{1 - \tilde{\gamma}_{j}^{(i,j)}(\xi_t)}
    = \frac{\pi_j}{\pi_i}\exp\!\left(\frac{\ell^2}{\tilde{\sigma}_t^2}\Bigl(\xi_t - \frac12\Bigr)\right).
    \label{eq:temp_persistence_1}
\end{equation}
Noting that $\pi_i=\pi_j$, denoting $a_t := \frac{\ell^2}{\tilde{\sigma}_t^2}$ and solving the log-odds equation obtains the logistic form:
\begin{equation}
    \tilde{\gamma}_{j}^{(i,j)}
    = \frac{1}{1 + \exp(-a_t(\xi_t - \frac{1}{2}))}
    =: \varsigma\!\left(a_t\Bigl(\xi_t - \frac12\Bigr)\right),
    \label{eq:temp_persistence_2}
\end{equation}
where $\varsigma(z) = \frac{1}{1+e^{-z}}$ is the sigmoid function. Since $\varsigma'(z)=\varsigma(z)(1-\varsigma(z))$ and $\varsigma''(z)=\varsigma'(z)(1-2\varsigma(z))$, and since $0\le \varsigma\le 1$ implies $0\le \varsigma'\le \frac14$ and $|1-2\varsigma|\le 1$, we have $|\varsigma''(z)|\le \frac14$. Hence,
\begin{align}
    |F_t^{\prime\prime}(\xi_t)|
    = \bigl|\tilde{\gamma}_{j}^{(i,j)\prime\prime}(\xi_t)\bigr|
    \le \frac{a_t^2}{4}.
    \label{eq:temp_persistence_3}
\end{align}

Next, for an $a$ to be chosen, consider $0<|a|\le \vartheta$. Since $F_t\in C^2$, Taylor's theorem with Lagrange remainder at $\xi=\frac12$ gives:
\begin{equation}
    F_t\!\left(\frac12+a\right)
    = F_t\!\left(\frac12\right) + F_t'\!\left(\frac12\right)a
      + \frac12 F_t''\!\left(\frac12+r a\right)a^2
\end{equation}
for some $r\in(0,1)$. Using $F_t(\frac12)=0$ and $F_t'(\frac12)=\lambda_t$, we obtain
\begin{equation}
    F_t\!\left(\frac12+a\right)
    = \lambda_t a + \frac12 F_t''\!\left(\frac12+r a\right)a^2.
\end{equation}
By \cref{eq:temp_persistence_3},
\begin{align}
    \left|F_t\!\left(\frac12+a\right) - \lambda_t a\right|
    &\le \frac12\cdot \frac{a_t^2}{4}|a|^2
    = \frac{a_t^2}{8}|a|^2.
\end{align}
Dividing by $|a|$ yields
\begin{align}
    \left|\frac{F_t(\frac12+a)}{a} - \lambda_t\right|
    \le \frac{a_t^2}{8}|a|
    \le \frac{a_t^2}{8}\vartheta,
\end{align}
and thus
\begin{equation}
    \lambda_t - \frac{a_t^2}{8}\vartheta
    \le \frac{F_t(\frac12+a)}{a}
    \le \lambda_t + \frac{a_t^2}{8}\vartheta.
    \label{eq:temp_persistence_4}
\end{equation}
Since $\lambda_t=\frac{\ell^2}{4\tilde{\sigma}_t^2}-1$, we have $a_t=\frac{\ell^2}{\tilde{\sigma}_t^2}=4(1+\lambda_t)$, hence $\frac{a_t^2}{8}=2(1+\lambda_t)^2$. Plugging into \cref{eq:temp_persistence_4} gives
\begin{equation}
    \frac{F_t(\frac12+a)}{a}
    \le \lambda_t + 2(1+\lambda_t)^2 \vartheta
    \le \bar{\lambda} + 2(1+\bar{\lambda})^2 \vartheta
    =: \Lambda_{+},
    \label{eq:temp_persistence_5}
\end{equation}
and similarly
\begin{equation}
    \frac{F_t(\frac12+a)}{a}
    \ge \lambda_t - 2(1+\lambda_t)^2 \vartheta
    \ge \underline{\lambda} - 2(1+\bar{\lambda})^2 \vartheta.
    \label{eq:temp_persistence_6}
\end{equation}
By our choice of radius $\vartheta$ in the statement of \cref{prop:true_ddim_persistence}, we have
$\underline{\lambda} - 2(1+\bar{\lambda})^2 \vartheta > 0$, hence $\frac{F_t(\frac12+a)}{a}>0$, so
\[
\mathrm{sign}\!\left(F_t\!\left(\tfrac12+a\right)\right) = \mathrm{sign}(a)
\qquad\text{whenever }0<|a|\le \vartheta,
\]
uniformly for all $t\in[0,\tau_3]$.

Next, letting $t$ now be the original diffusion time, let $s := u(t) - u(\tau_3)$, so that $t(s) =u^{-1}(u(\tau_3)+s)$. Define
\begin{equation}
    \zeta_s:=\xi_{t(s)},\qquad
    \widetilde F_s:=F_{t(s)},\qquad
    A_s:=\zeta_s-\frac12.
\end{equation}
    
This yields that 
\begin{equation}
    \zeta_0=\xi_{\tau_3},\qquad
    \zeta_{u(0) - u(\tau_3)}=\xi_0,\qquad
    \dot{\zeta}_s=\widetilde F_s(\zeta_s).
\end{equation}

As discussed at the beginning of \cref{section:proof_of_true_ddim_persistence}, we relabel $\tau_3$ as this remaining interval $u(0) - u(\tau_3)$, yielding $\zeta_0=\xi_{\tau_3} $ and $ \zeta_{\tau_3}=\xi_0$. 

Now, for \(0<|A_s|\le \vartheta\), by the preceding sign condition,
\begin{align}
    \frac{d}{ds}|A_s|
    &= \mathrm{sign}(A_s)\dot A_s \\
    &= \mathrm{sign}(A_s)\widetilde F_s\!\left(\frac12+A_s\right) \\
    &= \left|\widetilde F_s\!\left(\frac12+A_s\right)\right| \\
    &\le \Lambda_+ |A_s|.
    \label{eq:temp_persistence_9}
\end{align}

Next, suppose that 
\begin{equation}
    0<|A_0|
    =
    \left|\xi_{\tau_3}-\frac12\right|
    <
    \vartheta \exp(-\Lambda_+\tau_3).
\end{equation}

Let $\varrho := \inf\{s\in[0,\tau_3]: |A_s|\ge \vartheta\}$
with \(\inf\emptyset=+\infty\). Suppose \(\varrho\le \tau_3\). Then
$|A_s|<\vartheta \; \forall s\in[0,\varrho)$ and $|A_\varrho|=\vartheta$. By Gr\"{o}nwall's inequality, for \(s\in[0,\varrho)\),
\begin{equation}
    |A_s| \le |A_0|\exp(\Lambda_+s).
    \label{eq:temp_persistence_10}
\end{equation}
Taking \(s\to\varrho\),
\begin{align}
    |A_\varrho|
    &\le |A_0|\exp(\Lambda_+\varrho) \\
    &\le |A_0|\exp(\Lambda_+\tau_3) \\
    &=
    \left|\xi_{\tau_3}-\frac12\right|\exp(\Lambda_+\tau_3) \\
    &< \vartheta,
    \label{eq:temp_persistence_11}
\end{align}
contradicting $|A_\varrho|=\vartheta$. Hence $\varrho=+\infty$, so $|A_s|<\vartheta \; \forall s\in[0,\tau_3]$. Equivalently,
\begin{equation}
    \left|\xi_t-\frac12\right|<\vartheta
    \qquad \forall t\in[0,\tau_3].
\end{equation}

By contrapositive, if exit occurs by terminal time \(t=0\), then
\begin{equation}
    \left|\xi_{\tau_3}-\frac12\right|
    \ge \vartheta\exp(-\Lambda_+\tau_3),
\end{equation}
so a necessary condition for exit is
\begin{equation}
    \tau_3
    \ge
    \frac{1}{\Lambda_+}
    \log\!\left(
        \frac{\vartheta}{|\xi_{\tau_3}-\frac12|}
    \right).
\end{equation}

\end{proof}

\subsection{Proof of Proposition \labelcref{prop:ddpm_terminal_midpoint}}
\label{subsec:proof_ddpm_terminal_midpoint}

\begin{proof}
Set $T:=\tau_3$. Under the time relabeling $\theta:=T-t$, the proposition's event $\{|A_0|\le\vartheta\}$ corresponds to $\{|A_{T}|\le\vartheta\}$;  $A_T = 0$ in \cref{eq:At_SDE} corresponds to $A_0 = 0$; and the assumption with  $\lambda_{\mathrm{rep}}$ is negated in the resulting forward-time process, which we analyze below. 

Define the continuous martingale
\begin{equation}
M_t := \int_0^t \sqrt{2\eta(s)}\,dB_s,
\qquad t\in[0,T].
\label{eq:def_Mt_appx}
\end{equation}
Integrating \cref{eq:At_SDE} yields the decomposition
\begin{equation}
A_t = \int_0^t b(A_s,s)\,ds + M_t,
\qquad t\in[0,T].
\label{eq:At_decomp_appx}
\end{equation}

We first bound the confinement event
\begin{equation}
E_{2\vartheta} := \Big\{\sup_{0\le t\le T}|A_t|\le 2\vartheta\Big\}.
\label{eq:def_event_E2_appx}
\end{equation}
On $E_{2\vartheta}$ we have $|A_s|\le 2\vartheta$ for all $s\in[0,T]$, hence by the local drift bound,
\begin{equation}
|b(A_s,s)| \le K(s)\,|A_s| \le 2K(s)\vartheta,
\qquad s\in[0,T].
\label{eq:drift_pointwise_on_E2_appx}
\end{equation}
Therefore, for each $t\in[0,T]$ and on $E_{2\vartheta}$,
\begin{equation}
\Big|\int_0^t b(A_s,s)\,ds\Big|
\le \int_0^t |b(A_s,s)|\,ds
\le 2\vartheta\int_0^{T} K(s)\,ds.
\label{eq:drift_integral_bound_E2_appx}
\end{equation}
Combining \cref{eq:At_decomp_appx,eq:drift_integral_bound_E2_appx}, for all $t\in[0,T]$ and on $E_{2\vartheta}$,
\begin{equation}
|M_t|
=\Big|A_t-\int_0^t b(A_s,s)\,ds\Big|
\le |A_t|+\Big|\int_0^t b(A_s,s)\,ds\Big|
\le 2\vartheta\Big(1+\int_0^{T} K(s)\,ds\Big).
\label{eq:Mt_bound_on_E2_appx}
\end{equation}
Hence,
\begin{equation}
E_{2\vartheta}\subseteq
\Big\{\sup_{0\le t\le {T}}|M_t|\le 2\Big(1+\int_0^{T} K(s)\,ds\Big)\vartheta\Big\}.
\label{eq:conf_inclusion_E2_appx}
\end{equation}

We next identify the quadratic variation of $M$. Fix $t\in[0,T]$ and let $\pi=\{0=t_0<t_1<\cdots<t_n=t\}$ be a partition of $[0,t]$ with mesh
\begin{equation}
|\pi| := \max_{0\le k\le n-1}(t_{k+1}-t_k).
\label{eq:mesh_def_appx}
\end{equation}
Write increments
\begin{equation}
\Delta M_k := M_{t_{k+1}}-M_{t_k}
=\int_{t_k}^{t_{k+1}}\sqrt{2\eta(s)}\,dB_s.
\label{eq:DeltaMk_def_appx}
\end{equation}
Define
\begin{equation}
Q(\pi):=\sum_{k=0}^{n-1}(\Delta M_k)^2.
\label{eq:Qpi_def_appx}
\end{equation}
By It\^o isometry applied to \cref{eq:DeltaMk_def_appx},
\begin{equation}
\mathbb{E}\big[(\Delta M_k)^2\big]
= \int_{t_k}^{t_{k+1}} 2\eta(s)\,ds
=: v_k.
\label{eq:ito_isometry_var_appx}
\end{equation}
Since $\Delta M_k$ is Gaussian with mean $0$ and variance $v_k$,
\begin{equation}
\mathbb{E}\big[(\Delta M_k)^4\big]=3v_k^2,
\qquad
\mathrm{Var}\big((\Delta M_k)^2\big)=2v_k^2.
\label{eq:fourth_moment_appx}
\end{equation}
Using independence of Brownian increments,
\begin{equation}
\mathbb{E}[Q(\pi)]
= \sum_{k=0}^{n-1} v_k
= \int_0^t 2\eta(s)\,ds,
\label{eq:EQpi_appx}
\end{equation}
and
\begin{equation}
\mathrm{Var}(Q(\pi))
= 2\sum_{k=0}^{n-1} v_k^2
\le 2\Big(\max_{0\le k\le n-1} v_k\Big)\sum_{k=0}^{n-1} v_k.
\label{eq:VarQpi_appx}
\end{equation}
Since $\eta\in L^1([0,T])$, absolute continuity of the integral implies $\max_k v_k \to 0$ as $|\pi|\to 0$.
Hence,
\begin{equation}
\mathbb{E}\Big[\big(Q(\pi)-\int_0^t 2\eta(s)\,ds\big)^2\Big]
=\mathrm{Var}(Q(\pi))
\to 0
\qquad\text{as }|\pi|\to 0.
\label{eq:L2_conv_QV_appx}
\end{equation}
Therefore $Q(\pi)\to \int_0^t 2\eta(s)\,ds$ in probability. In particular, the quadratic variation
\begin{equation}
[M]_t := \lim_{|\pi|\to 0} Q(\pi)
\label{eq:QV_def_appx}
\end{equation}
satisfies
\begin{equation}
[M]_t=\int_0^t 2\eta(s)\,ds,
\qquad t\in[0,T].
\label{eq:quadratic_variation_M_appx}
\end{equation}

Next, by Dambis--Dubins--Schwarz \citep{DubinsSchwarz1965}, there exists a standard Brownian motion $(W_r)_{r\ge 0}$ such that
\begin{equation}
M_t = W_{[M]_t},
\qquad t\in[0,T].
\label{eq:DDS_time_change_appx}
\end{equation}
Since $[M]_t$ is nondecreasing,
\begin{equation}
\sup_{0\le t\le T}|M_t|
=
\sup_{0\le r\le [M]_{T}}|W_r|,
\qquad
[M]_{T} = 2\int_0^{T}\eta(s)\,ds.
\label{eq:sup_time_change_appx}
\end{equation}
Combining \cref{eq:conf_inclusion_E2_appx,eq:sup_time_change_appx} gives
\begin{equation}
\mathbb{P}(E_{2\vartheta})
\le
\mathbb{P}\Big(\sup_{0\le r\le [M]_{T}}|W_r|\le 2\Big(1+\int_0^{T} K(s)\,ds\Big)\vartheta\Big).
\label{eq:PE2_reduce_appx}
\end{equation}
Applying Lemma~\ref{lem:brownian_confinement} with
\begin{equation}
V = [M]_{T} = 2\int_0^{T}\eta(s)\,ds,
\qquad
a = 2\Big(1+\int_0^{T} K(s)\,ds\Big)\vartheta,
\label{eq:V_and_a_subst_appx}
\end{equation}
yields the confinement bound used in \cref{eq:ddpm_terminal_midpoint_bound}.

\medskip

We now bound the terminal event. Let $(\mathcal F_t)_{t\in[0,T]}$ be the natural filtration of $(A_t)$. Define
\[
H_\vartheta := \{|A_{T}|\le \vartheta\}.
\]
Then
\begin{equation}
\mathbb P(H_\vartheta)
=
\mathbb P(H_\vartheta\cap E_{2\vartheta})
+
\mathbb P(H_\vartheta\cap E_{2\vartheta}^c)
\le
\mathbb P(E_{2\vartheta})
+
\mathbb P(H_\vartheta\cap E_{2\vartheta}^c).
\label{eq:terminal_decomp_appx}
\end{equation}

Next, let
\begin{equation}
\tau := \inf\{t\in[0,T]: |A_t|\ge 2\vartheta\}.
\label{eq:tau_def_appx}
\end{equation}
By convention, $\tau=+\infty$ if no such time exists. By continuity,
\[
E_{2\vartheta}=\{\tau>T\},
\qquad
E_{2\vartheta}^c=\{\tau\le T\}.
\]
Hence
\begin{equation}
\mathbb P(H_\vartheta\cap E_{2\vartheta}^c)
=
\mathbb P(|A_{T}|\le \vartheta,\ \tau\le T).
\label{eq:terminal_exit_event_appx}
\end{equation}

Using conditional expectation at $\tau$ and the strong Markov property,
\begin{align}
\mathbb P(|A_{T}|\le \vartheta,\ \tau\le T)
&=
\mathbb E\Big[\mathbf 1_{\{\tau\le T\}}\,\mathbb P_{\tau,A_\tau}(|A_{T}|\le \vartheta)\Big] \nonumber\\
&\le
\sup_{s\in[0,T]}\ \sup_{a\in\{\pm 2\vartheta\}}\ \mathbb P_{s,a}(|A_{T}|\le \vartheta).
\label{eq:terminal_markov_sup_appx}
\end{align}

Fix $s\in[0,T]$ and consider $A_s=2\vartheta$. Define the first return time
\begin{equation}
\sigma := \inf\{t\in[s,T]: A_t=\vartheta\}.
\label{eq:sigma_def_appx}
\end{equation}
By continuity,
\begin{equation}
\mathbb P_{s,2\vartheta}(|A_{T}|\le \vartheta)
\le
\mathbb P_{s,2\vartheta}(\sigma\le T).
\label{eq:terminal_to_hit_pos_appx}
\end{equation}

By \cref{eq:repulsion_assumption_merged}, for all $t\in[0,T]$,
\begin{equation}
b(a,t)\ge \lambda_{\mathrm{rep}}\,a
\qquad \forall a\ge \vartheta,
\label{eq:repulsion_pos_appx_strong}
\end{equation}
and recall $\eta(t)\le \eta_{\max}$ on $[0,T]$. For $r>0$ define $Z_t:=e^{-rA_t}$. Applying It\^o's formula on $[s,\sigma\wedge T]$ gives
\begin{equation}
dZ_t
=
Z_t\big(-r\,b(A_t,t)+r^2\eta(t)\big)\,dt
-
rZ_t\sqrt{2\eta(t)}\,dB_t.
\label{eq:ito_Z_appx}
\end{equation}
On $[s,\sigma)$, using \eqref{eq:repulsion_pos_appx_strong} and $\eta(t)\le \eta_{\max}$,
\begin{equation}
-r\,b(A_t,t)+r^2\eta(t)
\le
-r(\lambda_{\mathrm{rep}}\vartheta)+r^2\eta_{\max}.
\label{eq:drift_bound_appx}
\end{equation}
Choose
\begin{equation}
r:=\frac{\lambda_{\mathrm{rep}}\vartheta}{2\eta_{\max}}.
\label{eq:r_choice_appx}
\end{equation}
Then
\begin{equation}
-r(\lambda_{\mathrm{rep}}\vartheta)+r^2\eta_{\max}
=
-\frac{\lambda_{\mathrm{rep}}^{\,2}\vartheta^2}{4\eta_{\max}}<0,
\label{eq:drift_negative_appx}
\end{equation}
so $(Z_{t\wedge\sigma})_{t\in[s,T]}$ is a supermartingale. By optional stopping,
\begin{equation}
\mathbb E[Z_{\sigma\wedge T}]
\le
Z_s
=
e^{-r\cdot 2\vartheta}.
\label{eq:optstop_upper_appx}
\end{equation}
On $\{\sigma\le T\}$ we have $Z_{\sigma\wedge T}=e^{-r\vartheta}$, hence
\begin{equation}
\mathbb P_{s,2\vartheta}(\sigma\le T)
\le
e^{-r\vartheta}
=
\exp\!\Big(-\frac{\lambda_{\mathrm{rep}}\vartheta^2}{2\eta_{\max}}\Big).
\label{eq:hit_bound_pos_appx}
\end{equation}

An identical argument with $\widetilde Z_t:=e^{rA_t}$ yields the same bound starting from $A_s=-2\vartheta$. Therefore,
\begin{equation}
\sup_{s\in[0,T]}\ \sup_{a\in\{\pm 2\vartheta\}}\ \mathbb P_{s,a}(|A_{T}|\le \vartheta)
\le
2\exp\!\Big(-\frac{\lambda_{\mathrm{rep}}\vartheta^2}{2\eta_{\max}}\Big).
\label{eq:sup_return_appx}
\end{equation}
Plugging this into \eqref{eq:terminal_markov_sup_appx} and then into \eqref{eq:terminal_decomp_appx} yields
\begin{equation}
\mathbb P(H_\vartheta)
\le
\mathbb P(E_{2\vartheta})
+
2\exp\!\Big(-\frac{\lambda_{\mathrm{rep}}\vartheta^2}{2\eta_{\max}}\Big).
\label{eq:terminal_ext_appx}
\end{equation}

Finally, substitute the bound on $\mathbb P(E_{2\vartheta})$ from \cref{eq:PE2_reduce_appx} (via \cref{lem:brownian_confinement})
into \cref{eq:terminal_ext_appx} to obtain \cref{eq:ddpm_terminal_midpoint_bound}.
\end{proof}

\subsection{Proof of Proposition \labelcref{prop:saddle_loc}} \label{subsec:saddle_pos_unequal}

\begin{proof}

\textbf{Unequal Weights}

We consider \cref{eq:simplified_parallel} and $t \leq \hat{\tau}$, where $\hat{\tau}$ is defined in \cref{assump:modes_far_apart}. By \cref{eq:temp_slowdown_2}, we have that: 

\begin{align}
\frac{\tilde{\gamma}^{(ij)}_{j,t}(\xi)}{1-\tilde{\gamma}^{(ij)}_{j,t}(\xi)}
&=
\frac{\pi_j}{\pi_i}
\exp\!\left(
-\frac{\|\bm{y}(\xi)-\bm{\mu}^{(j)}\|_2^2-\|\bm{y}(\xi)-\bm{\mu}^{(i)}\|_2^2}{2\tilde{\sigma}_t^2}
\right) \notag\\
&=
\frac{\pi_j}{\pi_i}
\exp\!\left(
\frac{\ell^2}{\tilde{\sigma}_t^2}\Big(\xi-\frac12\Big)
\right),
\label{eq:pair_odds_appx}
\end{align}

Define the pairwise parallel drift
\begin{equation}
F_{ij,t}(\xi) := \tilde{\gamma}^{(ij)}_{j,t}(\xi)-\xi.
\label{eq:Fij_def_appx}
\end{equation}
Let $\xi^\star_{ij}(t)\in(0,1)$ be an interior equilibrium, i.e. $F_{ij,t}(\xi^\star_{ij}(t))=0$, so
$\tilde{\gamma}^{(ij)}_{j,t}(\xi^\star_{ij}(t))=\xi^\star_{ij}(t)$. Taking log-odds in
\cref{eq:pair_odds_appx} and substituting $\tilde{\gamma}^{(ij)}_{j,t}(\xi^\star_{ij}(t))=\xi^\star_{ij}(t)$
yields
\begin{equation}
\log\!\Big(\frac{\xi^\star_{ij}(t)}{1-\xi^\star_{ij}(t)}\Big)
=
\log\!\Big(\frac{\pi_j}{\pi_i}\Big)
+
\frac{\ell^2}{\tilde{\sigma}_t^2}\Big(\xi^\star_{ij}(t)-\frac12\Big),
\label{eq:pairwise_logit_appx}
\end{equation}
which is \cref{eq:pairwise_saddle_logodds_short}. In particular, if $\pi_i=\pi_j$ then $\xi^\star_{ij}(t)=\tfrac12$. If $\pi_j<\pi_i$,
then $\log(\pi_j/\pi_i)<0$ and \cref{eq:pairwise_logit_appx} implies $\xi^\star_{ij}(t)>\tfrac12$
(and similarily $\pi_j>\pi_i$ implies $\xi^\star_{ij}(t)<\tfrac12$).

Next, differentiating \cref{eq:pair_odds_appx} yields: 
\begin{equation}
\tilde{\gamma}^{(ij)\,\prime}_{j,t}(\xi)
=
\frac{\ell^2}{\tilde{\sigma}_t^2}\,
\tilde{\gamma}^{(ij)}_{j,t}(\xi)\big(1-\tilde{\gamma}^{(ij)}_{j,t}(\xi)\big),
\label{eq:pair_gamma_prime_appx}
\end{equation}
so
\begin{equation}
F'_{ij,t}(\xi)=\tilde{\gamma}^{(ij)\,\prime}_{j,t}(\xi)-1.
\label{eq:Fij_prime_appx}
\end{equation}
Evaluating at $\xi^\star_{ij}(t)$ (where $\tilde{\gamma}^{(ij)}_{j,t}(\xi^\star_{ij}(t))=\xi^\star_{ij}(t)$)
yields
\begin{equation}
F'_{ij,t}\!\big(\xi^\star_{ij}(t)\big)
=
\frac{\ell^2}{\tilde{\sigma}_t^2}\,
\xi^\star_{ij}(t)\big(1-\xi^\star_{ij}(t)\big)-1,
\label{eq:Fij_prime_at_root_appx}
\end{equation}
which is \cref{eq:pairwise_saddle_slope_short}. Thus $\xi^\star_{ij}(t)$ is hyperbolic unstable whenever
$F'_{ij,t}(\xi^\star_{ij}(t))>0$.

\smallskip
\textbf{Exact Parallel Dynamics}
By \cref{lemma:par_orth_dynamics}, the exact parallel dynamics in \cref{eq:parallel_dynamics} differs from \cref{eq:simplified_parallel} by an error term $\delta_{\xi_t}$. By \cref{lemma:parallel_restriction}, we can thus write: 

\begin{equation}
F_{N,t}(\xi)=F_{ij,t}(\xi)+e_t(\xi),
\qquad
\sup_{\xi\in[-\varepsilon,1+\varepsilon]}|e_t(\xi)|\le \varepsilon,
\qquad
\varepsilon\in\mathcal{O}(N \exp({-\kappa})).
\label{eq:FN_decomp_appx}
\end{equation}

We first show existence of an interior equilibrium for $F_{N,t}$, the exact parallel dynamics in \cref{eq:parallel_dynamics}. We denote $F_{ij, t}$ as the drift of the approximate parallel dynamics in \cref{eq:simplified_parallel}. By \cref{prop:stability_of_modes}, the exact parallel dynamics has instantaneously stable equilibria near $\xi=0$ and $\xi=1$,
denoted $\xi^{*,i}_t$ and $\xi^{*,j}_t$, respectively. Instantaneous stability implies that the drift points toward each
equilibrium. Hence we can pick points $\xi_L,\xi_R$ with
\begin{equation}
\xi^{*,i}_t < \xi_t^{(L)} < \xi_t^{(R)} < \xi^{*,j}_t,
\qquad
F_{N,t}(\xi_L)<0,
\qquad
F_{N,t}(\xi_R)>0.
\label{eq:sign_change_appx}
\end{equation}
(For example, take $\xi_L$ slightly to the right of $\xi^{*,i}_t$ and $\xi_R$ slightly to the left of
$\xi^{*,j}_t$.) Since $F_{N,t}$ is continuous in $\xi$, the intermediate value theorem applied to
\cref{eq:sign_change_appx} yields an interior root $\xi^\star_N(t)\in(\xi_t^{(L)},\xi_t^{(R)})$ with
$F_{N,t}(\xi^\star_N(t))=0$.

Assume there exists an interval $I$ containing $\xi^\star_{ij}(t)$ such that
\begin{equation}
F'_{ij,t}(\xi)\ge m>0
\qquad \forall \xi\in I,
\label{eq:slope_lb_appx}
\end{equation}
which is \cref{eq:pairwise_lower_slope_short}. Define
\begin{equation}
\rho := \sup\Big\{r>0:\ \big[\xi^\star_{ij}(t)-r,\ \xi^\star_{ij}(t)+r\big]\subset I\Big\}.
\end{equation}
Since $\rho>0$, recalling that $\varepsilon \leq C_1 \exp(-\kappa)$, we choose $\kappa$ sufficiently large such that: 
\begin{equation}
\frac{\varepsilon}{m}< \rho.
\label{eq:eps_over_m_le_rho_appx}
\end{equation}
Now set $\xi_\pm := \xi^\star_{ij}(t)\pm \varepsilon/m$, so that \cref{eq:eps_over_m_le_rho_appx} implies $[\xi_-,\xi_+]\subset I$.
Since $F_{ij,t}(\xi^\star_{ij}(t))=0$ and $F'_{ij,t}\ge m$ on $[\xi_-,\xi_+]\subset I$, the mean value theorem gives
\begin{equation}
F_{ij,t}(\xi_+)\ge \varepsilon,
\qquad
F_{ij,t}(\xi_-)\le -\varepsilon,
\label{eq:Fij_at_bracket_appx}
\end{equation}
Combining \cref{eq:FN_decomp_appx,eq:Fij_at_bracket_appx} gives
\[
F_{N,t}(\xi_+)\ge \varepsilon-\varepsilon\ge 0,
\qquad
F_{N,t}(\xi_-)\le -\varepsilon+\varepsilon\le 0.
\]
By continuity, there exists a root $\xi^\star_N(t)\in[\xi_-,\xi_+]$, and therefore
\begin{equation}
\big|\xi^\star_N(t)-\xi^\star_{ij}(t)\big|
\le \frac{\varepsilon}{m},
\label{eq:root_displacement_appx}
\end{equation}
which is \cref{eq:pairwise_eps_over_m_short}.

This completes the proof.
\end{proof}

\subsection{Proof of \cref{prop:saddle_perturbation}}
\label{subsec:proof_saddle_perturbation}

\begin{proof}
The approximate parallel drift along $L^{(i,j)}_t$ is $F_{ij,t}(\xi) = \tilde{\gamma}^{(ij)}_j(\bm{y}(\xi)) - \xi$, where $\bm{y}(\xi) = \bm{\mu}^{(i)} + \xi \ell \bm{u}$ and $\bm{u} = (\bm{\mu}^{(j)} - \bm{\mu}^{(i)})/\ell$. Under score error $\psi$, the perturbed PF ODE \eqref{eq:perturbed_pf_ode} induces perturbed parallel dynamics. Projecting onto $\bm{u}$, the drift becomes
\begin{equation}
    \tilde{F}_{ij,t}(\xi) = F_{ij,t}(\xi) + e_t(\xi), \quad \text{where} \quad e_t(\xi) := \frac{\sigma_t^2}{\sqrt{\bar{\alpha}_t}\ell} \langle \psi(\sqrt{\bar{\alpha}_t} \bm{y}(\xi), t), \bm{u} \rangle.
\end{equation}
By Cauchy-Schwarz and \cref{asm:bounded_score},
\begin{equation}
    |e_t(\xi)| \leq \frac{\sigma_t^2\|\psi(\sqrt{\bar{\alpha}}_t\bm{y}(\xi), t)\|_2 \cdot \|\bm{u}\|_2}{\sqrt{\bar{\alpha}_t}\ell} \leq \frac{\bar{\varrho}(t)}{\ell}.
    \label{eq:error_bound_proof}
\end{equation}

Define $G : [0,T] \times [0,1] \to \mathbb{R}$ by $G(t, \xi) := \tilde{F}_{ij,t}(\xi)$. At the unperturbed equilibrium with $\varrho = 0$, we have $G(t, \xi^*_{ij}(t))|_{\varrho=0} = F_{ij,t}(\xi^*_{ij}(t)) = 0$. The partial derivative satisfies
\begin{equation}
    \frac{\partial G}{\partial \xi}\bigg|_{\xi = \xi^*_{ij}(t)} = F'_{ij,t}(\xi^*_{ij}(t)) + e'_t(\xi^*_{ij}(t)) = \lambda_t + O(\sigma_t^2 L_\psi(t)),
\end{equation}
where $|e'_t(\xi)| = |\sigma_t^2\bm{u}^\top \nabla_{\bm{x}} \psi(\sqrt{\bar{\alpha}}_t\bm{y}(\xi), t) \, \bm{u}| \leq \sigma_t^2 L_\psi(t)$ by \cref{asm:lipschitz_score}. For $L_\psi(t) \leq \frac{\lambda_t}{2\sigma_t^2}$, we have $\partial G / \partial \xi \geq \lambda_t/2 > 0$, so by the implicit function theorem there exists a unique $\xi^*_\theta(t)$ near $\xi^*_{ij}(t)$ with $G(t, \xi^*_\theta(t)) = 0$.

Taylor expanding $G(t, \xi)$ around $\xi^*_{ij}(t)$:
\begin{equation}
    0 = G(t, \xi^*_\theta(t)) = \underbrace{F_{ij,t}(\xi^*_{ij}(t))}_{=0} + e_t(\xi^*_{ij}(t)) + (\lambda_t + O(\sigma_t^2 L_\psi(t)))(\xi^*_\theta - \xi^*_{ij}) + O(|\xi^*_\theta - \xi^*_{ij}|^2).
\end{equation}
Rearranging and using \eqref{eq:error_bound_proof}:
\begin{equation}
    |\xi^*_\theta(t) - \xi^*_{ij}(t)| \leq \frac{2|e_t(\xi^*_{ij}(t))|}{\lambda_t} + O(\bar{\varrho}(t)^2) \leq \frac{2\bar{\varrho}(t)}{\ell \lambda_t} + O(\bar{\varrho}(t)^2). \qedhere
\end{equation}
\end{proof}

\subsection{Proof of \cref{prop:eigenvalue_perturbation}}
\label{subsec:proof_eigenvalue_perturbation}

\begin{proof}
The perturbed parallel drift is $\tilde{F}_{ij,t}(\xi) = F_{ij,t}(\xi) + e_t(\xi)$, with derivative
\begin{equation}
    \tilde{F}'_{ij,t}(\xi) = F'_{ij,t}(\xi) + e'_t(\xi).
\end{equation}
Since $e_t(\xi) = \frac{\sigma_t}{\sqrt{\bar{\alpha}_t}\ell} \langle \psi(\sqrt{\bar{\alpha}_t}\bm{y}(\xi), t), \bm{u} \rangle$ and $\frac{d\bm{y}}{d\xi} = \ell \bm{u}$,
\begin{equation}
    e'_t(\xi) = \sigma_t^2 \bm{u}^\top \nabla_{\bm{x}} \psi(\sqrt{\bar{\alpha}_t}\bm{y}(\xi), t) \, \bm{u} 
\end{equation}

At the perturbed equilibrium $\xi^*_\theta(t)$:
\begin{align}
    \lambda_\theta(t) := \tilde{F}'_{ij,t}(\xi^*_\theta(t)) &= F'_{ij,t}(\xi^*_\theta(t)) + \sigma_t^2 \bm{u}^\top \nabla_{\bm{x}} \psi(\sqrt{\bar{\alpha}_t}\bm{y}(\xi^*_\theta), t) \, \bm{u} \\
    &= F'_{ij,t}(\xi^*_{ij}(t)) + O(|\xi^*_\theta - \xi^*_{ij}|) + \sigma_t^2 \bm{u}^\top \nabla_{\bm{x}} \psi(\sqrt{\bar{\alpha}_t}\bm{y}(\xi^*_\theta), t) \, \bm{u} \\
    &= \lambda_t + \sigma_t^2 \bm{u}^\top \nabla_{\bm{x}} \psi(\sqrt{\bar{\alpha}_t}\bm{y}(\xi^*_\theta), t) \, \bm{u} + O(\bar{\varrho}(t)),
\end{align}
where the second line uses Lipschitz continuity of $F'_{ij,t}(\xi)$: by the mean value theorem,
\begin{equation}
    |F'_{ij,t}(\xi^*_\theta) - F'_{ij,t}(\xi^*_{ij})| \leq \sup_{\xi \in [0,1]} |F''_{ij,t}(\xi)| \cdot |\xi^*_\theta - \xi^*_{ij}| = O(|\xi^*_\theta - \xi^*_{ij}|),
\end{equation}
and the last line uses \cref{prop:saddle_perturbation}. The two items follow from the sign of $\sigma_t^2 \bm{u}^\top \nabla_{\bm{x}} \psi(\sqrt{\bar{\alpha}_t}\bm{y}(\xi^*_\theta), t) \, \bm{u}$.
\end{proof}

\subsection{Proof of \cref{prop:ddpm_robustness}}
\label{subsec:proof_ddpm_robustness}

\begin{proof}
We use the same relabeling as the proof of \cref{prop:ddpm_terminal_midpoint}. For $a \geq \vartheta$, the drift satisfies $b(a, t) \geq \lambda_{\mathrm{rep}} a \geq \lambda_{\mathrm{rep}} \vartheta$. The perturbed drift satisfies
\begin{equation}
    b_\theta(a, t) = b(a, t) + \tilde{e}_t(a) \geq \lambda_{\mathrm{rep}} a - |\tilde{e}_t(a)| \geq \lambda_{\mathrm{rep}} a - r_A(t) \geq \lambda_{\mathrm{rep}} a - r_{A, \max}.
\end{equation}

Under the condition $r_{A, \max} < \lambda_{\mathrm{rep}} \vartheta$, since $a \geq \vartheta$:
\begin{equation}
    b_\theta(a, t) \geq \left( \lambda_{\mathrm{rep}} - \frac{r_{A, \max}}{\vartheta} \right) a = \tilde{\lambda}_{\mathrm{rep}} \, a,
\end{equation}
where $\tilde{\lambda}_{\mathrm{rep}} > 0$ by assumption. The symmetric argument holds for $a \leq -\vartheta$, so the escape mechanism of \cref{prop:ddpm_terminal_midpoint} applies with $\lambda_{\mathrm{rep}}$ replaced by $\tilde{\lambda}_{\mathrm{rep}}$.
\end{proof}

\subsection{Proof of Proposition \labelcref{prop:diagonal_avoidance}}
\label{subsec:proof_diagonal_impossible}

\begin{proof}
Fix any $t\le \tau_1$ and write $\bm{p}:=\bm{y}_t$ for the state at time $t$.
We first prove the claim for a single rectangular cell.

Consider a single grid cell with corners (true modes)
$\bm{\mu}^{(1)},\bm{\mu}^{(2)},\bm{\mu}^{(3)},\bm{\mu}^{(4)}$,
where the diagonal pairs are $(1,4)$ and $(2,3)$.
Define $d_a^2 := \|\bm{\mu}^{(a)}-\bm{p}\|_2^2$.
For any rectangle, one has the identity
\begin{equation}
d_1^2+d_4^2=d_2^2+d_3^2.
\label{eq:equidist_diag}
\end{equation}
(For example, place the rectangle at $(0,0),(L,0),(0,H),(L,H)$ and expand the
four squared distances to verify the equality for all $\bm{p}$.)

Assume the dominant pair $(i,j)$ from \cref{assump:two_mode_dominance} corresponds
to the diagonal $(1,4)$ of this cell.
Let
\[
m:=\min\{d_1^2,d_4^2\},\qquad \delta:=|d_1^2-d_4^2|.
\]
Then (wlog, say $d_1^2=m$ and $d_4^2=m+\delta$) we have
\begin{equation}
\frac{d_1^2+d_4^2}{2}=m+\frac{\delta}{2}.
\label{eq:avg_rewrite}
\end{equation}
Also, using $\min\{a,b\}\le (a+b)/2$ and \cref{eq:equidist_diag},
\begin{align}
\min\{d_2^2,d_3^2\}
&\le \frac{d_2^2+d_3^2}{2}
= \frac{d_1^2+d_4^2}{2}
= m+\frac{\delta}{2}.
\label{eq:diag_lb}
\end{align}

By \cref{assump:two_mode_dominance} for the pair $(1,4)$, every non-dominant mode
$k\notin\{1,4\}$ satisfies
\[
d_k^2 \ge m+\Delta_t.
\]
In particular,
\begin{equation}
d_2^2\ge m+\Delta_t,\qquad d_3^2\ge m+\Delta_t.
\label{eq:other_corners_lb}
\end{equation}
Combining \cref{eq:other_corners_lb} with \cref{eq:diag_lb} yields
\begin{equation}
m+\Delta_t \le m+\frac{\delta}{2}
\quad\Longrightarrow\quad
\Delta_t \le \frac{\delta}{2}.
\label{eq:Delta_le_delta}
\end{equation}

Let $\tilde{\gamma}_{a,t}$ denote the (rescaled) responsibilities as in
\cref{lemma:time_rescaling}; for brevity write $\gamma_a:=\tilde{\gamma}_{a,t}$. By \cref{eq:temp_slowdown_2}, for any $a, b$, we have that: 

\[
\frac{\gamma_a}{\gamma_b}
=\frac{\pi_a}{\pi_b}\exp\!\left(\frac{d_b^2-d_a^2}{2\tilde{\sigma}_t^2}\right).
\]
Applying this with $(a,b)=(1,4)$ and taking logs gives
\[
d_4^2-d_1^2
=2\tilde{\sigma}_t^2\left(\log\frac{\gamma_1}{\gamma_4}-\log\frac{\pi_1}{\pi_4}\right),
\]
hence
\begin{equation}
\delta=|d_4^2-d_1^2|
\le 2\tilde{\sigma}_t^2\left(\left|\log\frac{\gamma_1}{\gamma_4}\right|
+\left|\log\frac{\pi_1}{\pi_4}\right|\right).
\label{eq:diag_deltabound}
\end{equation}

Plug \cref{eq:diag_deltabound} into \cref{eq:Delta_le_delta}:
\begin{equation}
\Delta_t
\le \tilde{\sigma}_t^2\left(\left|\log\frac{\gamma_1}{\gamma_4}\right|
+\left|\log\frac{\pi_1}{\pi_4}\right|\right).
\label{eq:Delta_log_bound}
\end{equation}
By \cref{assump:two_mode_dominance}, $\Delta_t\ge 2\tilde{\sigma}_t^2\kappa$, so
\[
2\kappa
\le \left|\log\frac{\gamma_1}{\gamma_4}\right|+\left|\log\frac{\pi_1}{\pi_4}\right|.
\]
Exponentiating and using $\exp(|\log x|)=\max\{x,1/x\}$ yields
\begin{equation}
e^{2\kappa}
\le \max\left\{\frac{\gamma_1}{\gamma_4},\frac{\gamma_4}{\gamma_1}\right\}
\cdot
\max\left\{\frac{\pi_1}{\pi_4},\frac{\pi_4}{\pi_1}\right\}.
\label{eq:exp_bound_max}
\end{equation}

Define $\rho_t:=\min\{\gamma_1,\gamma_4\}$.
Since $\gamma_a\in(0,1)$ and $\gamma_1+\gamma_4\le 1$, we have
$\max\{\gamma_1,\gamma_4\}\le 1-\rho_t$, hence
\begin{equation}
\max\left\{\frac{\gamma_1}{\gamma_4},\frac{\gamma_4}{\gamma_1}\right\}
=\frac{\max\{\gamma_1,\gamma_4\}}{\min\{\gamma_1,\gamma_4\}}
\le \frac{1-\rho_t}{\rho_t}.
\label{eq:ratio_rho}
\end{equation}
Let $C_\pi:=\max\{\pi_1/\pi_4,\pi_4/\pi_1\}$.
Combining \cref{eq:exp_bound_max,eq:ratio_rho} gives
\[
e^{2\kappa}\le C_\pi\frac{1-\rho_t}{\rho_t}
\quad\Longrightarrow\quad
\rho_t \le \frac{C_\pi}{C_\pi+e^{2\kappa}}
\le C_\pi e^{-2\kappa}.
\]
Since $\rho_t=\min\{\gamma_1,\gamma_4\}$ and $\gamma_a=\tilde{\gamma}_{a,t}$, we have shown
\[
\min\{\tilde{\gamma}_{1,t},\tilde{\gamma}_{4,t}\}\le C_\pi e^{-2\kappa}
\qquad\forall\, t\le \tau_1.
\]
This proves the desired $\mathcal{O}(e^{-2\kappa})$ bound for a single diagonal pair.

Next, we extend this to the rest of the cell. 
If $(i,j)$ is a diagonal pair of some grid cell, let $(k,\ell)$ be the other two corners of that cell.
Then the rectangle identity \cref{eq:equidist_diag} holds for the quadruple $(i,k,\ell,j)$
for the same point $\bm{p}=\bm{y}_t$, and \cref{assump:two_mode_dominance} for $(i,j)$
implies $d_k^2,d_\ell^2\ge m+\Delta_t$ just as in \cref{eq:other_corners_lb}.
Repeating the above argument with the relabeled indices gives
$\min\{\tilde{\gamma}_{i,t},\tilde{\gamma}_{j,t}\}\le C_{\pi,ij}e^{-2\kappa}$
where $C_{\pi,ij}:=\max\{\pi_i/\pi_j,\pi_j/\pi_i\}$. Lastly, since the cell chosen was arbitrary, this holds for all (i,j) in the grid, and we are done. 
\end{proof}

\section{Use of the Exact Score} 
\label{section:exact_score_use}

In what follows, we detail why we use the exact score in our theoretical analysis and how this connects to the realistic learned score setting which we aim to study. Since the marginals under the exact score agree for the ODE and SDE as discussed in \cref{section:problem_setting}, the hallucination rate for the ODE and SDE agree under the exact score (in particular, it is 0). However, we can still leverage the ODE/SDE under the exact score to obtain a \textit{mechanism} which becomes useful and predictive in the learned score setting. In particular, assuming the exact score as regularity, we show that $\bm{x}_t$ under the ODE can get stuck (\cref{prop:true_ddim_persistence}) while $\bm{x}_t$ under the SDE can escape (\cref{prop:ddpm_terminal_midpoint}).   We study the exact score scenario in our main paper because it provides the cleanest presentation of this vulnerability. Our experiments then demonstrate that the trapping region in \cref{prop:true_ddim_persistence} characterizes hallucination differences between DDIM and DDPM in the realistic learned score scenario (e.g., in \cref{fig:ddim_ddpm_hall_with_radius}). 

To make this mathematically precise, we have $p_t^{\mathrm{DDIM,exact}} = p_t^{\mathrm{DDPM,exact}}$. Let $H := \{\text{mode interpolation occurs}\}$. Then, we have $\Pr^{\mathrm{DDIM,exact}}(H) = \Pr^{\mathrm{DDPM,exact}}(H)$. However, we are interested in explaining the observed differences in hallucination rates in the realistic, practical learned score setting where $\Pr^{\mathrm{DDIM,learned}}(H) \gg \Pr^{\mathrm{DDPM,learned}}(H)$ (\cref{fig:row_three}). To bridge this gap, we must look beyond unconditional marginals $p_t$ and examine the \emph{conditional trajectory dynamics} $p(\bm{x}_0 \mid \bm{x}_{\tau_3})$, which differ entirely for DDIM and DDPM. We find that we can extract a difference in DDIM (\cref{prop:true_ddim_persistence}) and DDPM (\cref{prop:ddpm_terminal_midpoint}) under the exact score which uncovers the trapping mechanism driving $\Pr^{\mathrm{DDIM,learned}}(H) \gg \Pr^{\mathrm{DDPM,learned}}(H)$. Concretely, let $M_{\vartheta,\tau_3} := \{\text{trajectory enters } \vartheta \text{-midpoint neighborhood during final } \tau_3 \text{ steps}\}$. Then:
\begin{equation}
\Pr(H) = \Pr(H \mid M_{\vartheta,\tau_3}) \Pr(M_{\vartheta,\tau_3}) + \Pr(H \mid M_{\vartheta,\tau_3}^c) \Pr(M_{\vartheta,\tau_3}^c)
    \label{eq:halluc_decomp}
\end{equation}

The contributions of our exact score theory in \cref{prop:true_ddim_persistence} and \cref{prop:ddpm_terminal_midpoint} are twofold:
\begin{enumerate}
    \item The midpoint neighborhood and $M_{\vartheta,\tau_3}$ are driving the differences in hallucination rates between DDIM and DDPM, i.e. $\Pr(H \mid M_{\vartheta,\tau_3}^c)$ is small and the first term dominates so that  $\Pr(H) \approx \Pr(H \mid M_{\vartheta,\tau_3}) \Pr(M_{\vartheta,\tau_3})$. This is done in \cref{prop:true_ddim_persistence}. 
    \item Demonstrate that $\Pr^{\mathrm{DDIM,exact}}(H \mid M_{\vartheta,\tau_3}) \gg \Pr^{\mathrm{DDPM,exact}}(H \mid M_{\vartheta,\tau_3})$. This is done in \cref{prop:ddpm_terminal_midpoint}.
\end{enumerate}
\raggedbottom
These results arise due to the differences in (conditional) dynamics, even though the marginals are the same under the exact score. The exact score assumption in our theory allows us to demonstrate this cleanly. Still, $\Pr^{\mathrm{DDIM,exact}}(M_{\vartheta,\tau_3}) = \Pr^{\mathrm{DDPM,exact}}(M_{\vartheta,\tau_3}) = 0$.

But, in the practical learned score setting where $\Pr^{\mathrm{learned}}(M_{\vartheta,\tau_3}) \neq 0$ for both samplers, the fundamental difference $\Pr^{\mathrm{DDIM,learned}}(H \mid M_{\vartheta,\tau_3}) \gg \Pr^{\mathrm{DDPM,learned}}(H \mid M_{\vartheta,\tau_3})$, which persists in the learned score case but is uncovered via our exact score analysis, helps explain $\Pr^{\mathrm{DDIM,learned}}(H) \gg \Pr^{\mathrm{DDPM,learned}}(H)$. Empirically, in \cref{tab:learned_score_hallucination_decomposition}, we find that the first term in \cref{eq:halluc_decomp} remains dominant while $\Pr^{\mathrm{DDIM,learned}}(H \mid M_{\vartheta,\tau_3}) \gg \Pr^{\mathrm{DDPM,learned}}(H \mid M_{\vartheta,\tau_3})$. 

\begin{table}[t]
\centering
\small
\setlength{\tabcolsep}{4pt}
\begin{tabular}{l c c c c c c}
\toprule
sampler
& $P(H)$
& $P(H \mid M)$
& $P(H \mid M^c)$
& $P(M)$
& $P(H \mid M)P(M)$
& $P(H \mid M^c)P(M^c)$ \\
\midrule
DDIM
& $0.078$
& $0.735$
& $0.006$
& $0.099$
& $0.073$
& $0.005$ \\
DDPM
& $0.005$
& $0.103$
& $0.001$
& $0.042$
& $0.004$
& $0.001$ \\
\bottomrule
\end{tabular}
\caption{Hallucination probabilities in the learned score case, averaged across 100k trajectories, with $\tau_3=11$ and $\vartheta=0.35\ell_t$. Here, $H$ denotes the event that the final generated sample is a mode interpolation, and $M := M_{\vartheta,\tau_3}$ denotes the event that the trajectory enters the $\vartheta$-midpoint neighborhood of the nearest $i,j$-mode segment during the final $\tau_3$ steps. We observe that $\Pr^{\mathrm{DDIM}}(H \mid M) \gg \Pr^{\mathrm{DDPM}}(H \mid M)$, while $\Pr(H \mid M^c)$ remains small for both samplers. Thus, the dominant contribution comes from the first decomposition term $\Pr(H \mid M)\Pr(M)$, supporting the conclusion that midpoint-neighborhood entry is the key event relevant for hallucinations in the learned score case.}
\label{tab:learned_score_hallucination_decomposition}
\end{table}

\section{Symbol Table}\label{appendix:symbol_table}

\begin{tcolorbox}[colback=blue!4,colframe=green!60!blue]
\centerline{\bf Symbols}
\def\arraystretch{1.5}
\begin{tabular}{p{1in}p{3.25in}}

$\varpi$ & Dimension of all vectors throughout. \\
$\displaystyle p_{\text{base}}$ & The distribution which the diffusion sampling process starts from. Throughout, we take this to be $\mathcal{N}(0, \bm{I})$, the standard Gaussian. \\
$p_{\text{data}}$ & The distribution which the diffusion process ends at and aims to sample from. Throughout, we take this to be a $N$-component Gaussian mixture. \\ 
$\bm{x}_t$ & A trajectory satisfying the PF ODE given in \cref{eq:full_pf_ode} for DDIM or \cref{eq:anderson_reverse_sde} for DDPM, given in reverse time from $t = T$ to $t = 0$ unless otherwise specified. \\
$\beta(t)$ & Diffusion noise schedule. \\
$\bm{W}_t$ & Standard $\varpi$-dimensional Brownian motion \\ 
$\bar{\alpha}_t$ & Amount of signal from data distribution retained in forward or reverse process. Given by $\exp(- \int_0^t \beta(s) \; ds)$ \\ 
$p_t$ & Marginal distribution of diffusion process instance $\bm{x}_t$ at time $t$; equivalent in forward and reverse process. \\
$\nabla_{\bm{x}_t} \log p_t(\bm{x}_t)$ & Exact score function of marginal $p_t$. \\ 
$\bm{I}$ & Identity matrix. \\ 
$\mathcal{N}(\bm{x}; \bm{\mu}, \sigma^2\bm{I})$ & Probability density function of Gaussian distribution centered at $\bm{\mu}$ with isotropic covariance matrix $\sigma^2 \bm{I}$. \\ 
$\bar{\bm{W}}_t$ & Standard Brownian motion in reverse time. \\
$s_{\bm{\theta}}(\bm{x}_t)$ & Learned score function. \\ 
$\pi_i$ & Coefficient of Gaussian mixture. \\ 
$N$ & Number of components in Gaussian mixture. \\ 
$\bm{\mu}^{(i)}$ & $i$th mode of Gaussian mixture, $i \in [N]$. \\
$\sigma^2$ & Variance of Gaussian mixture, i.e. each component has covariance $\sigma^2 \bm{I}$. \\ 
$\bm{\mu}^{(i)}_t $ & Time-dependent modes of $p_t$ given Gaussian mixture target. \\ 
$\sigma_t^2$ & Time-dependent variance of $p_t$ given Gaussian target. \\
$\tilde{\sigma}_t^2$ & Rescaled time-dependent variance of $p_t$. \\ 
$L^{(i,j)}$ & Line segment joining mode $i$ and mode $j$. \\ 
$L_t^{(i,j)}$ & Line segment joining time-dependent mode $i$ and time-dependent mode $j$. \\
$L_{t,\varepsilon}^{(i,j)}$ & $\varepsilon$-extended line segment joining time-dependent mode $i$ and time-dependent mode $j$. \\ 
$d_\perp$ & Orthogonal distance between trajectory and (nearby) line segment. 
\end{tabular}
\end{tcolorbox}

\begin{tcolorbox}[colback=blue!4,colframe=green!60!blue]
\centerline{\bf Symbols}
\def\arraystretch{1.5}
\begin{tabular}{p{1in}p{3.25in}}
Tube$_{t, \zeta}^{(i,j)}$ & Tube around $L_t^{(i,j)}$ of radius $\zeta$. Throughout, $\zeta = \varepsilon$, unless specified otherwise. \\ 
$\Delta_t$ & Time-dependent margin between modes. Must be sufficiently large after time in reverse process w.r.t. $\tilde{\sigma}_t^2$. \\ 
$\kappa$ & Margin coefficient, defining gap between $\tilde{\sigma}_t^2$ and $\Delta_t$. Also used for margin w.r.t. $\ell^2$. \\ 
$\tau_1$ & Critical time for \cref{assump:two_mode_dominance} to hold.  \\
$\tau$ & Time which is less than $\tau_1$; critical time for exponential convergence in \cref{thm:core_gaussian_mixture} to hold.\\
$\tau_2$ & Critical time for \cref{assump:modes_far_apart} to hold. Empirically strictly less than $\tau_1$. \\
$\hat{\tau}$ & Minimum of $\tau$ and $\tau_2$. Critical time for \cref{prop:stability_of_modes} to hold. \\
$\tau_3$ & Critical time less than $\hat{\tau}$  for \cref{prop:true_ddim_persistence} to hold. \\
$\phi$ & Used to define probabilities in concentration inequalities throughout. \\ 
$\varepsilon$ & Size by which nearby line segment is extended; radius of containment tube; and radius of ball around modes of mixture where stable equilibria exist. \\ 
$\xi_t$ & Used to denote parallel dynamics to $L_t^{(i,j)}$. $\xi^*$ is an equilibrium of some kind, stated specifically in each proposition in which it appears. \\
$\bm{w}_t$ & Used to denote orthogonal dynamics to $L_t^{(i,j)}$. \\ 
$\bm{y}_t^*$ & Unstable equilibria at midpoint along $L_t^{(i,j)}$ (w.r.t. approximate dynamics) \\ 
$\bm{y}_t$ & Used for time-rescaled dynamics in \cref{lemma:time_rescaling}, where modes become fixed and effective variance $\sigma_t^2$ is rescaled to $\tilde{\sigma}_t^2$.  \\ 
$\lambda_t$ & Unstable eigenvalue on midpoint, or unstable equilibria on $L_t^{(i,j)}$ more broadly. \\ 
$\vartheta$ & Radius from midpoint for which, given $\tau_3$ time remaining. \\ 
$\tilde{\lambda}_t$ & Integral of $\lambda_t$ from time $t_{\nu_1}$ to time $t_{\nu_2}$. \\
$\ell^2$ & Squared distance between (fixed, time-independent) modes. \\
$H^{(i,j)}$ & Bisector hyperplane to $L^{(i,j)}$, i.e. the plane orthogonal to the line which passes through the midpoint of the line. \\
$\bm{u}$ & The coordinate normal to $H^{(i,j)}$. \\
$A_t$ & One dimensional dynamics of distance from midpoint along $\bm{u}$, with respect to DDPM. \\ 
$b(A_t, t)$ & Deterministic dynamics (drift) of $A_t$. \\ 
$\pi$ & Without a subscript, defines the usual scalar $\pi$. 
\end{tabular}
\end{tcolorbox}

\begin{tcolorbox}[colback=blue!4,colframe=green!60!blue]
\centerline{\bf Symbols}
\def\arraystretch{1.5}
\begin{tabular}{p{1in}p{3.25in}}
$K(t)$ & Controls how fast $b(A_t, t)$ is. \\ 
$\eta(t)$ & $1/2 \beta(t)$. \\ 
$\delta$ & In the main paper, used to define the terminal confinement events. Used to define residual terms due to $N-2$ components in appendix proofs. \\
$H_\delta$ & Event that at the end of the reverse process, trajectory stays near the bisector hyperplane. \\
$E_{2\delta}$ & Event that at throughout the reverse process, one enters the midpoint neighborhood. \\
$\lambda_{\text{rep}}$ & Coefficient which controls speed of drift away from bisector hyperplane. \\
$\gamma_k$ & Gaussian  responsibilities under \cref{eq:full_pf_ode}. \\ 
$\tilde{\gamma}_k$ & Gaussian responsibilities under time-rescaled dynamics in \cref{lemma:time_rescaling}. \\ 
$\hat{\bm{\mu}}^{(i)}$ & Convex combination of static modes w.r.t. time-rescaped Gaussian responsibilities. \\ 
$I(t)$ & Integral of $\beta(s)$ without exponential \\
$\delta_{\xi}$ & Effect of $N-2$ modes on parallel dynamics to nearby $L_t^{(i,j)}$. \\ 
$\bm{E}_t$ & Convex combination of distances from mode $i$, given time-rescaled modes, excluding modes $i$ and $j$. \\ 
$\text{Proj}_{||}(\bm{v})$ & Projection onto vector parallel to $\bm{v}$. \\ 
$\text{Proj}_{\perp}(\bm{v})$ & Projection onto vector orthogonal to $\bm{v}$. \\ 
$\bm{\delta}_{\bm{w}}$ & Effect of $N-2$ modes on orthogonal dynamics to nearby $L_t^{(i,j)}$. \\
$d^*$ & Mode which for which the trajectory has minimum distance from one of the two selected (nearby) modes $i$ and $j$. \\ 
$\mathcal{O}$, $\Omega$, $\Theta$ & Used for standard asymptotic notation. \\ 
$M$ & Defines maximum non-selected mode distance. \\ 
$R$ & Defines maximum Gaussian mixture weights w.r.t. $d^*$, excluding $\pi_i$, $\pi_j$. \\ 
$\theta$ & Used to rescale reverse time into forward time, i.e. $\theta = T - t$. \\ 
$M, Q$ & For martingale $M$, $Q$ is its quadratic variation.

\end{tabular}
\end{tcolorbox}

\end{document}